\definecolor{tabgray}{gray}{0.95}
\newcommand\DoToC{%
  \startcontents
  \printcontents{}{1}{\textbf{Contents}\vskip3pt\hrule\vskip5pt}
  \vskip3pt\hrule\vskip5pt
}
\newcommand{\approachbold}{\textbf{D}ouble \textbf{S}ampling \textbf{R}andomized \textbf{S}moothing\xspace}
\newcommand{\shortApproach}{DSRS\xspace}
\newcommand{\ext}{\textup{ext}}
\newcommand{\sgn}{\mathrm{sgn}}
\newcommand{\dif}{\mathrm{d}}
\newcommand{\wF}{\widetilde{F}}
\def\rNP{r_{\mathsf{N-P}}}
\def\rDSRS{r_{\mathsf{DSRS}}}
\def\rDSRSmulti{r_{\mathsf{DSRS}}^{\mathsf{multi}}}
\def\multi{{\mathsf{multi}}}
\def\gNg{\gN^{\mathsf{g}}}
\def\gNtrunc{\gN_{\mathsf{trunc}}}
\def\gNgtrunc{\gNg_{\mathsf{trunc}}}
\def\rtight{r_{\mathsf{tight}}}
\def\Pcon{P_{\mathsf{con}}}
\def\GammaCDF{\Gamma\mathrm{CDF}}
\def\BetaCDF{\mathrm{BetaCDF}}
\def\Beta{\mathrm{Beta}}
\crefname{algorithm}{Alg.}{Algs.}
\crefname{algocf}{Alg.}{Algs.}
\crefname{equation}{Eqn.}{Eqns.}
\definecolor{darkgreen}{rgb}{0,0.5,0}
\definecolor{darkblue}{rgb}{0,0,0.5}
\definecolor{purple}{rgb}{1,0,1}
\definecolor{gray}{rgb}{0.5,0.5,0.5}
\definecolor{darkred}{rgb}{0.756,0.055,0.0}
\newcommand{\kibitz}[2]{\ifnum\Comments=0\textcolor{#1}{#2}\fi}
\begin{document}

\twocolumn[
\icmltitle{Double Sampling Randomized Smoothing}



\icmlsetsymbol{equal}{*}

\begin{icmlauthorlist}
\icmlauthor{Linyi Li}{uiuc}
\icmlauthor{Jiawei Zhang}{uiuc}
\icmlauthor{Tao Xie}{pku}
\icmlauthor{Bo Li}{uiuc}
\end{icmlauthorlist}

\icmlaffiliation{uiuc}{University of Illinois Urbana-Champaign, Illinois, USA}
\icmlaffiliation{pku}{Peking University, Beijing, China}

\icmlcorrespondingauthor{Linyi Li}{linyi2@illinois.edu}
\icmlcorrespondingauthor{Tao Xie}{taoxie@pku.edu.cn}
\icmlcorrespondingauthor{Bo Li}{lbo@illinois.edu}

\icmlkeywords{Certified Deep Learning, Robustness, Verification}

\vskip 0.3in
]



\printAffiliationsAndNotice{}  

\begin{abstract}
    Neural networks~(NNs) are known to be vulnerable against adversarial perturbations, and thus there is a line of work aiming to provide robustness certification for NNs, such as randomized smoothing, which  samples smoothing noises from a certain distribution to certify the robustness for a smoothed classifier. 
    However, as shown by previous work, the certified robust radius in randomized smoothing suffers from scaling to large datasets~(``curse of dimensionality'').
    To overcome this hurdle, we propose a \approachbold~(\textbf{\shortApproach}) framework, which exploits the sampled probability from an \textit{additional smoothing distribution} to tighten the robustness certification of the previous smoothed classifier.
    Theoretically, 
    under mild assumptions, we prove that \shortApproach can certify $\Theta(\sqrt{d})$ robust radius under $\ell_2$ norm where $d$ is the input dimension, implying  that
    \emph{\shortApproach may be able to break the
    curse of dimensionality of randomized smoothing}.
    We instantiate \shortApproach for a generalized family of Gaussian smoothing and propose an efficient and sound computing method based on customized dual optimization considering sampling error.
    Extensive experiments on MNIST, CIFAR-10, and ImageNet verify our theory and show that \shortApproach certifies larger robust radii than existing baselines consistently under different settings. 
    Code is available at \url{https://github.com/llylly/DSRS}.
\end{abstract}

\part*{}
\vspace{-3em}

\section{Introduction}

    \label{sec:intro}
    
    Neural networks~(NNs) have achieved great advances on a wide range of tasks, but 
    have been shown  
    vulnerable against adversarial examples~\cite{szegedy2013intriguing,goodfellow2014explaining,eykholt2018robust,wang2021adversarial,qiu2020semanticadv,li2020qeba,zhang2021progressive}. 
    A plethora of empirical defenses are proposed to improve the robustness; however, most of these are broken by  strong adversaries again~\citep{carlini2017towards,athalye2018obfuscated,tramer2020adaptive}.
    Recently, there are great efforts in developing certified defenses for NNs under certain adversarial constraints~\citep{wong2018provable,raghunathan2018certified,li2020sok,xu2020automatic,li2019robustra}.
    
    \begin{figure}[!t]
        \centering
        \includegraphics[width=\linewidth]{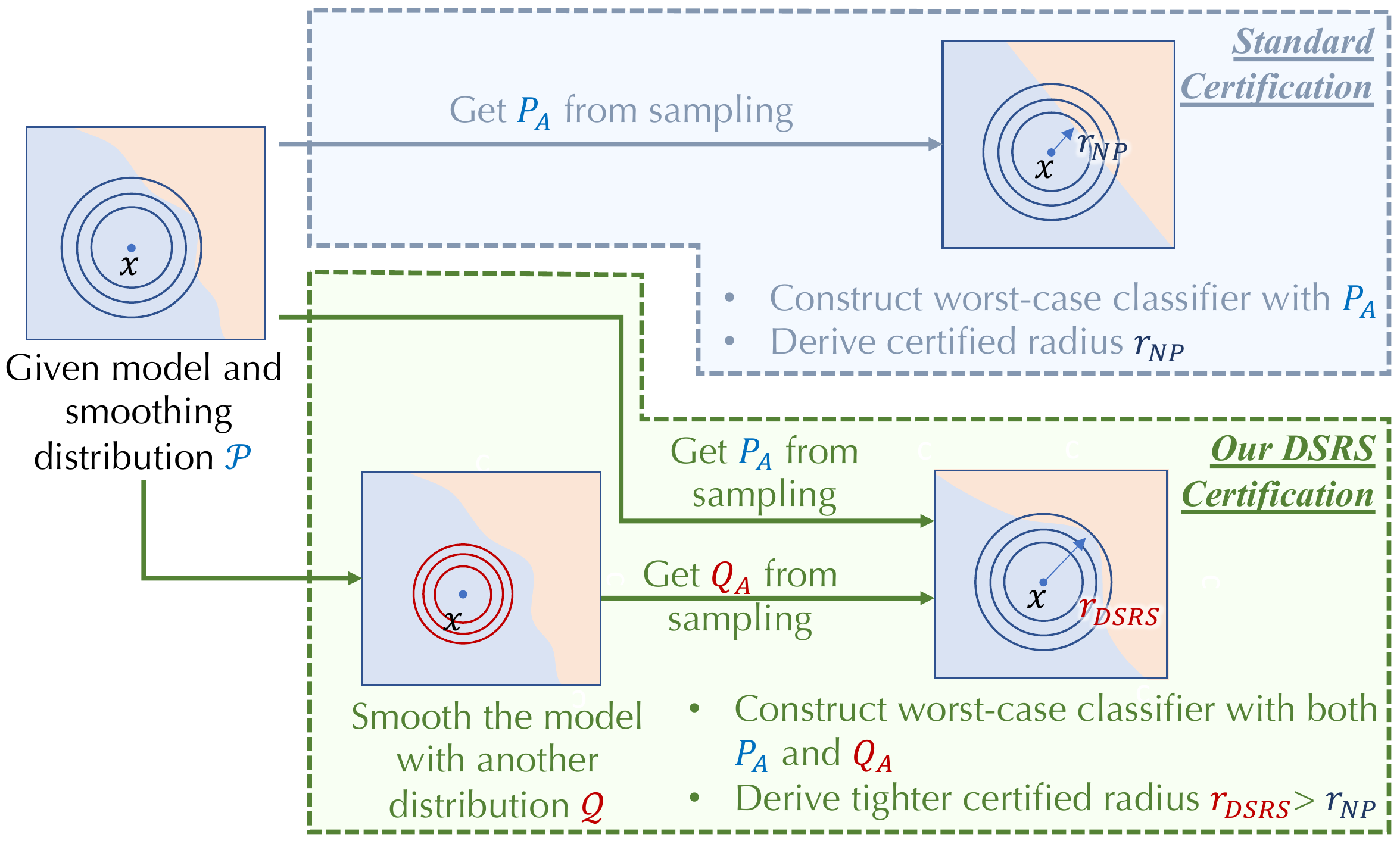}
        \vspace{-2.5em}
        \label{fig:1}
        \caption{\textbf{Upper}: Standard certification for randomized smoothing leverages information from only one distribution~(\textcolor{darkblue}{$P_A$}) to compute robustness certification.
        \textbf{Lower}:
        \shortApproach leverages information from two distributions~(\textcolor{darkblue}{$P_A$} and \textcolor{darkred}{$Q_A$}) to compute certification for the smoothed classifier, yielding significantly larger certified radius.}
        \vspace{-1em}
    \end{figure}
    
    Randomized smoothing~\citep{cohen2019certified,li2019certified} has emerged as a popular technique to provide certified robustness for large-scale datasets.
    Concretely, it samples noise from a certain smoothing distribution to construct a smoothed classifier, and thus certifies the robust radius for the smoothed classifier.
    Compared to other techniques~\citep{wong2018provable,mirman2018differentiable,gowal2019scalable,zhang2019towards},
    randomized smoothing is efficient and agnostic to the model, and is applicable to a wide range of ML models, including large ResNet~\citep{he2016deep} on the ImageNet dataset.
    
    To improve the certified robust radius, existing studies~\citep{cohen2019certified,lee2019tight,li2019certified,yang2020randomized} have explored different smoothing distributions.
    However, the improvement is limited.
    For example, $\ell_2$ certified robust radius does not increase on large datasets despite that the input dimension $d$ increases~\citep{cohen2019certified}, resulting in a low $\ell_\infty$ certified radius on large datasets, theoretically shown as an intrinsic barrier of randomized smoothing~(``curse of dimensionality'' or ``$\ell_\infty$ barrier'')~\citep{yang2020randomized,blum2020random,kumar2020curse,hayes2020extensions,wu2021completing}.
    
    Given these challenges toward tight robustness certification, a natural question arises:
    \emph{\textbf{Q1}: Is it possible to circumvent the barrier of randomized smoothing by certifying with additional ``information"?}
    \emph{\textbf{Q2}: What type of information is needed to provide tight robustness certification?}
    To answer these questions, we propose a  \approachbold~(\shortApproach) framework to leverage the sampled noises from an \textit{additional smoothing distribution} as additional information to tighten the robust certification.
    In theory, we show that
    (1)~ideally, if the decision region of the base classifier is known, \shortApproach can provide tight robustness certification;
    (2)~more practically, if the inputs, which can be correctly classified by the base classifier, satisfy the concentration property within an input-centered ball with constant mass under standard Gaussian measure, the standard Neyman-Pearson-based certification~\citep{li2019certified,cohen2019certified,salman2019provably,yang2020randomized} can  certify only a dimension-independent $\ell_2$ radius, whereas \shortApproach with generalized Gaussian smoothing can certify radius $\Omega(\sqrt d)$ (under $\ell_2$ norm),  which would increase with the dimension $d$, leading to tighter certification.
    Under more general conditions, we provide numerical simulations to verify our theory.
    Our results provide a positive answer to Q1 and sufficient conditions for Q2, i.e., \shortApproach may be able to circumvent the
    barrier of randomized smoothing.

    Motivated by the theory, we leverage a type of generalized Gaussian~\citep{zhang2020black} as the smoothing distribution and truncated generalized Gaussian as an additional distribution.
    For this type of concretization, we propose an efficient and sound computation method to compute the certifiably robust radius for practical classifiers considering sampling error.
    Our method formulates the certification problem given additional information as a constrained optimization problem and  leverages specific properties of the dual problem to decompose the effects of different dual variables to solve it.
    \shortApproach is fully scalable since the computational time is nearly independent of the size of the dataset, model, or  sampling.
    Our extensive experimental evaluation on MNIST, CIFAR-10, and ImageNet shows that
    (1)~under large sampling size~($2\times 10^5 - 8\times 10^5$),
    the certified radius of \shortApproach consistently increases as suggested by our theory;
    (2)~under practical sampling size~($10^5$), DSRS can certify consistently higher robust radii than existing baselines, including standard Neyman-Pearson-based certification.
    
    As further discussed in \Cref{newadx:general}, we believe that \shortApproach as a framework opens a wide range of future directions for selecting or optimizing different forms of \textit{additional information} to tighten the certification of randomized smoothing. 

    We summarize the main technical \emph{contributions} as follows:
    \begin{itemize}[leftmargin=5mm,itemsep=-0.5mm]
        \vspace{-0.75em}
        
        \item We propose a general robustness certification framework \shortApproach,  which leverages additional information by sampling from another smoothing distribution.
        
        \item We prove that under practical concentration assumptions, \shortApproach certifies $\Omega(\sqrt d)$ radius under $\ell_2$ norm with $d$ the input dimension, suggesting a possible way to circumvent the intrinsic barrier of randomized smoothing.
        
        \item We concretize \shortApproach by generalized Gaussian smoothing mechanisms and propose a method to efficiently compute the certified radius for given classifiers.
        
        \item We conduct extensive experiments, showing that \shortApproach provides consistently tighter robustness certification than existing baselines, including standard Neyman-Pearson-based certification across different models on MNIST, CIFAR-10, and ImageNet.
        
    \end{itemize}

\noindent\textbf{Related Work.}
    For the certification method of randomized smoothing, most existing methods leverage only the true-class prediction probability to certify.
    In this case, the tightest possible robustness certification is based on the Neyman-Pearson lemma~\cite{neyman1933ix} as first proposed by \citet{cohen2019certified} for certifying $\ell_2$ radius under Gaussian smoothing.
    Several methods extend this certification to accommodate different smoothing distributions and different $\ell_p$ norms~\citep{dvijotham2020framework,yang2020randomized,zhang2020black,levine2021improved}.
    In randomized smoothing,
    the $\ell_2$ certified robust radius $r$ is similar across datasets of different scales, resulting in the vanishing $\ell_\infty$ certified radius $r/{\sqrt d}$ when input dimension increases.
    This limitation of existing certification methods of randomized smoothing is formally proved~\citep{yang2020randomized,blum2020random,kumar2020curse,hayes2020extensions,wu2021completing} and named ``$\ell_\infty$ barrier'' or ``curse of dimensionality''.
    
    Recent work tries to incorporate additional information besides true-class prediction probability to tighten the certification and bypass the barrier.
    For $\ell_2$ and $\ell_\infty$ certification, to the best of our knowledge, gradient magnitude is the only exploited additional information~\citep{mohapatra2020higherorder,levine2020tight}.
    However, in practice, the improvement is relatively marginal and requires a large number of samples~(see \Cref{newadx:sub:compare-higher-order}).
    Some other methods provide tighter certification given specific model structures~\citep{kumar2020certifying,chiang2020detection,lee2019tight,awasthi2020adversarial}.
    \shortApproach
    instead focuses on leveraging model-structure-agnostic additional information.
    More discussion on related work is in \Cref{newadx:related}. 

\section{Preliminaries and Background}
    \label{sec:prelim}

    Define $[C] := \{1,\,\dots\,C\}$.
    Let $\vDelta^C$ be the $C$-dimensional probability simplex.
    We consider a multiclass classification model $F: \sR^d \to [C]$ as the \emph{base classifier}, where $d$ is the input dimension, and the model outputs hard-label class prediction within $[C]$.
    The \emph{original smoothing distribution} $\gP$ and \emph{additional smoothing distribution} $\gQ$ are both supported on $\sR^d$.
    We let $p(\cdot)$ and $q(\cdot)$ be their density functions respectively.
    We assume that both $p$ and $q$ are positive and differentiable almost everywhere, i.e., the set of singular points has zero measure under either $\gP$ or $\gQ$.
    These assumptions hold for common smoothing distributions used in the literature such as Gaussian distribution~\citep{lecuyer2019certified,li2019certified,cohen2019certified,yang2020randomized}.
    
    \textbf{Randomized smoothing} constructs a smoothed classifier from a given base classifier by adding input noise following \emph{original} smoothing distribution $\gP$.
    For input $\vx \in \sR^d$,
    we define \emph{prediction probability under $\gP$} by function $f^\gP: \sR^d \to \vDelta^C$:
    \vspace{-0.5em}
    \begin{equation}
        f^\gP(\vx)_c := \Pr_{\vepsilon\sim\gP} [F(\vx + \vepsilon) = c] \quad \text{where} \quad c\in [C].
        \label{eq:f}
        \vspace{-0.5em}
    \end{equation}
    The \emph{smoothed classifier} $\wF^\gP: \sR \to [C]$~(or $\wF$ when $\gP$ is clear from the context) predicts the class with the highest confidence after smoothing with $\gP$: 
    \vspace{-0.5em}
    \begin{equation}
        \wF^\gP(\vx) := \argmax_{c\in [C]} f^\gP(\vx)_c.
        \label{eq:F}
    \end{equation}
    \vspace{-1.5em}
    
    We focus on robustness certification against $\ell_p$-bounded perturbations for smoothed classifier $\wF$, where the standard certification method is called Neyman-Pearson-based certification~\citep{cohen2019certified}~(details in \Cref{newadx:detail-n-p-cert}, certified radius from it denoted by $\rNP$).
    Concretely, certification methods compute robust radius $r$ defined as below.
    \begin{definition}[Certified Robust Radius]
        \label{def:certified-robust-radius}
        Under $\ell_p$ norm~($p \in \sR_+ \cup \{+\infty\}$), for given smoothed classifier $\wF^\gP$ and input $\vx_0 \in \sR^d$ with true label $y_0\in [C]$, 
        a radius $r \ge 0$ is called \emph{certified (robust) radius} for $\wF^\gP$ if $\wF^\gP$ always predicts $y_0$ for any input within the $r$-radius ball centered at $\vx_0$:
        \vspace{-1mm}
        \begin{equation}
            \forall \vdelta \in \sR^d, \|\vdelta\|_p < r,
            \wF^\gP(\vx_0 + \vdelta) = y_0.
        \end{equation}
        \vspace{-8mm}
    \end{definition}

\section{\shortApproach Overview}

    We propose \approachbold (\shortApproach), which leverages the prediction probability from an \textit{additional} smoothing distribution $\gQ$~(formally $Q_A := f^\gQ(\vx_0)_{y_0} = \Pr_{\vepsilon\sim\gQ}[F(\vx+\vepsilon)=y_0]$), along with the prediction probability from the original smoothing distribution $\gP$~(formally $P_A := f^\gP(\vx_0)_{y_0}$ as in \Cref{eq:f}, also used in Neyman-Pearson-based certification), to provide robustness certification for $\gP$-smoothed classifier $\wF^\gP$.
    Note that both $P_A$ and $Q_A$ can be obtained from Monte-Carlo sampling~(see \Cref{subsec:cons-opt-formulation,subsec:preprocessing}).
    Formally, we let $\rDSRS$ denote the tightest possible certified radius with prediction probability from $\gQ$, then $\rDSRS$ can be defined as below.
    
    \begin{definition}[$\rDSRS$]
        \label{def:r-dsrs}
        Given $P_A$ and $Q_A$, 
        \vspace{-1mm}
        \begin{equation}
            \label{eq:r-dsrs}
            \begin{aligned}
                & \rDSRS := \max r \quad \mathrm{s.t.} \\
                & \forall F: \sR \to [C], f^\gP(\vx_0)_{y_0} = P_A, f^\gQ(\vx_0)_{y_0} = Q_A \\
                \Rightarrow & \forall \vx, \|\vx - \vx_0\|_p < r, \wF^\gP(\vx) = y_0.
            \end{aligned}
        \end{equation}
    \end{definition}
    
    Intuitively, $\rDSRS$ is the maximum possible radius, such that any smoothed classifier constructed from base classifier satisfying $P_A$ and $Q_A$ constraints cannot predict other labels when the perturbation magnitude is within the radius.
    
    In \Cref{sec:theory}, we will analyze the theoretical properties of \shortApproach, including comparing $\rDSRS$ and $\rNP$ under the concentration assumption.
    Computing $\rDSRS$ is nontrivial, so in \Cref{sec:dsrs}, we will introduce a practical computational method that exactly solves $\rDSRS$ when $\gP$ and $\gQ$ are standard and generalized (truncated) Gaussian.
    In \Cref{newadx:det-computional-method}, we will show method variants to deal with other forms of $\gP$ and $\gQ$ distributions.
    In \Cref{newadx:general}, we will further generalize the \shortApproach framework.
    
    \noindent\textbf{Smoothing Distributions.}
    Now we formally define the smoothing distributions used in \shortApproach.
    We mainly consider standard Gaussian $\gN$~\citep{cohen2019certified,yang2020randomized} and generalized Gaussian $\gNg$~\citep{zhang2020black}.
    Let $\gN(\sigma)$ to represent standard Gaussian distribution with covariance matrix $\sigma^2\mI_d$ that has density function $\propto \exp(-\|\vepsilon\|_2^2/(2\sigma^2))$.\footnote{In this paper, $\gN(\sigma)$ is a shorthand of $\gN(0,\sigma^2\mI_d)$.}
    For $k\in\sN$, we let $\gNg(k,\sigma)$ to represent generalized Gaussian whose density function $\propto \|\vepsilon\|_2^{-2k} \exp(-\|\vepsilon\|_2^2/(2\sigma'^2))$ where $\sigma' = \sqrt{d/(d-2k)}\sigma$.
    Here we use $\sigma'$ instead of $\sigma$ to ensure that the expected noise $\sqrt{\E \|\vepsilon\|_2^2}$ of $\gNg(k,\sigma)$ is the same as $\gN(\sigma)$.
    The generalized Gaussian as the smoothing distribution overcomes the ``thin shell'' problem of standard Gaussian and improves certified robustness~\citep{zhang2020black}; and we will reveal more of its theoretical advantages in \Cref{sec:theory}.
    
    As the additional smoothing distribution $\gQ$, we will mainly consider truncated distributions within a small $\ell_2$ radius ball.
    Specially, truncated standard Gaussian is denoted by $\gNtrunc(T,\sigma)$ with density function $\propto \exp(-\|\vepsilon\|_2^2/(2\sigma^2)) \cdot \1[\|\vepsilon\|_2 \le T]$;
    and truncated generalized Gaussian is denoted by $\gNgtrunc(k,T,\sigma)$ with density function $\propto \|\vepsilon\|_2^{-2k}\exp(-\|\vepsilon\|_2^2/(2\sigma'^2)) \cdot \1[\|\vepsilon\|_2 \le T]$.
    
    In \Cref{tab:smooth-dist-def}, we summarize these distribution definitions.
    
    \begin{table}[!t]
        \caption{Definitions of smoothing distributions in this paper. In the table, $k\in \sN$, $\sigma' = \sqrt{d/(d-2k)}\sigma$.}
        \label{tab:smooth-dist-def}
        \centering
        \resizebox{\linewidth}{!}{
        \begin{tabular}{ccc}
            \toprule
            Name & Notation & Density Function \\
            \midrule
            Standard Gaussian & $\gN(\sigma)$ & $\propto \exp\left(-\frac{\|\vepsilon\|_2^2}{2\sigma^2}\right)$ \\
            Generalized Gaussian & $\gNg(k,\sigma)$ & $\propto \|\vepsilon\|_2^{-2k} \exp\left(-\frac{\|\vepsilon\|_2^2}{2\sigma'^2}\right)$ \\
            Truncated Standard Gaussian & $\gNtrunc(T,\sigma)$ & $\propto \exp\left(-\frac{\|\vepsilon\|_2^2}{2\sigma^2}\right)\cdot \1[\|\vepsilon\|_2\le T]$ \\
            Truncated Generalized Gaussian & $\gNgtrunc(k,T,\sigma)$ & $\propto \|\vepsilon\|_2^{-2k} \exp\left(-\frac{\|\vepsilon\|_2^2}{2\sigma'^2}\right)\cdot \1[\|\vepsilon\|_2\le T]$ \\
            \bottomrule
        \end{tabular}
        }
        \vspace{-1em}
    \end{table}

\section{Theoretical Analysis of \shortApproach}
    \label{sec:theory}
    \label{sec:tightness-barrier}
    In this section, we theoretically analyze \shortApproach to answer the following core question:
    \emph{Does $Q_A$, the prediction probability under additional smoothing distribution, provide sufficient information for tightening the robustness certification?}
    We first show that if the support of $\gQ$ is the decision region of true class, \shortApproach can certify the smoothed classifier's maximum possible robust radius.
    Then, under concentration assumption, we show the $\ell_2$ certified radius of \shortApproach can be $\Omega(\sqrt d)$ that is asymptotically optimal for bounded inputs.
    Finally, under more general conditions, we conduct both numerical simulations and real-data experiments to verify that the certified radius of \shortApproach increases with data dimension $d$.
    These analyses provide a positive answer to the above core question.
    
    \vspace{-2mm}
    \subsection*{\shortApproach can certify the tightest possible robust radius.}
    \vspace{-2mm}
        
        Given an original smoothing distribution $\gP$ and a base classifier $F_0$.
        At input point $\vx_0 \in \sR^d$ with true label $y_0$, we define the tightest possible certified robust radius $\rtight$ to be the largest $\ell_p$ ball that contains no adversarial example for \emph{smoothed} classifier $\wF_0^\gP$:
        \vspace{-2mm}
        \begin{equation*}
            \rtight := \max r \, \mathrm{s.t.} \, \forall \vdelta\in\sR^d, \|\vdelta\|_p < r, \wF_0^\gP(\vx_0+\vdelta) = y_0.
            \vspace{-2mm}
        \end{equation*}
        Then, for binary classification, if we choose an additional smoothing distribution $\gQ$ whose support is the decision region or its complement, then \shortApproach can certify robust radius $\rtight$.
        
        \begin{theorem}
            \label{thm:suffices-binary-classification}
            Suppose the original smoothing distribution $\gP$ has non-zero density everywhere, i.e., $p(\cdot) > 0$.
            For binary classification with base classifier $F_0$, 
            at point $\vx_0\in\sR^d$,
            let $\gQ$ be an additional distribution that satisfies: (1)~its support is the decision region of an arbitrary class $c\in [C]$ shifted by $\vx_0$: $\supp(\gQ) = \{\vx - \vx_0: F_0(\vx) = c\}$;
            (2)~for any $\vx \in \supp(\gQ)$, $0 < q(\vx)/p(\vx) < +\infty$.
            Then, plugging $P_A = f_0^\gP(\vx_0)_c$ and $Q_A = f_0^\gQ(\vx_0)_c$~(see \Cref{eq:f}) into \Cref{def:r-dsrs}, we have $\rDSRS = \rtight$ under any $\ell_p$~($p \ge 1$).
        \end{theorem}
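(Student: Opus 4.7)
The plan is to establish both inequalities $\rDSRS \le \rtight$ and $\rDSRS \ge \rtight$. The upper bound is immediate because $F_0$ itself lies in the feasible set of the optimization in \Cref{def:r-dsrs}: the values $P_A$ and $Q_A$ are by construction realized by $F_0$, so the worst-case feasible $F$ in that optimization cannot certify anything larger than what $F_0$'s own smoothed classifier certifies, which by definition is $\rtight$. The substantive direction is the reverse inequality, for which I aim to show that the combined $P_A$ and $Q_A$ constraints force every feasible $F$ to agree with $F_0$ Lebesgue-almost everywhere, so that every feasible smoothed classifier coincides pointwise with $\wF_0^\gP$ and inherits the same $\rtight$-ball of correctness.

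To execute this I would proceed in three steps; write $D_c := \{\vx : F_0(\vx) = c\}$ for the decision region. First, because $\supp(\gQ) = D_c - \vx_0$, the indicator $\1[F_0(\vx_0 + \vepsilon) = c]$ is identically one $\gQ$-almost surely, so $Q_A = f_0^\gQ(\vx_0)_c = 1$. Second, any feasible $F$ satisfies $\Pr_{\vepsilon \sim \gQ}[F(\vx_0 + \vepsilon) = c] = 1$, forcing $F(\vx_0 + \vepsilon) = c$ for $\gQ$-a.e.\ $\vepsilon$; assumption (2) that $0 < q/p < \infty$ on $\supp(\gQ)$ upgrades this to a Lebesgue-a.e.\ statement on $D_c - \vx_0$, since $\gQ$ and Lebesgue measure are mutually absolutely continuous there. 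Third, from the $P_A$ constraint $f^\gP(\vx_0)_c = P_A = \int_{D_c - \vx_0} p(\vepsilon)\, d\vepsilon$ and the decomposition
\begin{equation*}
f^\gP(\vx_0)_c = \int_{D_c - \vx_0} \1[F(\vx_0+\vepsilon)=c]\, p(\vepsilon)\, d\vepsilon + \int_{D_c^c - \vx_0} \1[F(\vx_0 + \vepsilon) = c]\, p(\vepsilon)\, d\vepsilon,
\end{equation*}
Step 2 already makes the first term equal to $P_A$, so matching $P_A$ forces the second term to vanish; together with $p > 0$ everywhere this yields $F(\vx_0 + \vepsilon) \ne c$ Lebesgue-a.e.\ on $D_c^c - \vx_0$.

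Combining these two conclusions and invoking that the classification is \emph{binary} (so only one non-$c$ label exists and $F \ne c$ determines $F$), I get $F = F_0$ Lebesgue-a.e. Since $p$ is bounded and positive, modifying $F$ on a Lebesgue-null set does not change the integral defining $f^\gP$, hence $f^\gP(\vx) = f_0^\gP(\vx)$ for every $\vx \in \sR^d$, so $\wF^\gP \equiv \wF_0^\gP$ pointwise and every feasible $F$ certifies the same $\ell_p$-ball, yielding $\rDSRS \ge \rtight$. The main conceptual obstacle, and the reason the hypothesis restricts to the binary setting, is precisely this last step: in the multiclass case ``$F \ne c$ on $D_c^c$'' still leaves the specific non-$c$ label free, which could change which class is the argmax of $f^\gP$ at some perturbed point and break the identification $\wF^\gP \equiv \wF_0^\gP$. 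The remaining work is routine measure-theoretic bookkeeping to pass between $\gP$-null, $\gQ$-null, and Lebesgue-null sets, enabled by assumption (2) together with $p > 0$.
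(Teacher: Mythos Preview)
Your proposal is correct and follows essentially the same approach as the paper: both arguments show $Q_A=1$, use the $q/p$ bounds to transfer null sets, then use the $P_A$ constraint to pin down the complement of $D_c$, and finally invoke the binary assumption to conclude that any feasible $F$ agrees with $F_0$ almost everywhere so that $\wF^\gP\equiv\wF_0^\gP$. The only cosmetic differences are that the paper phrases the argument as ``the certification method can determine the decision region up to a $\gP$-null set'' rather than your equivalent ``every feasible $F$ coincides with $F_0$ a.e.'', and that you make the easy direction $\rDSRS\le\rtight$ explicit while the paper leaves it implicit; your remark that $p$ is ``bounded'' is unnecessary (absolute continuity of $\gP$ with respect to Lebesgue measure suffices) but harmless.
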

        
        \begin{proof}[Proof sketch.]
            We defer the proof to \Cref{newadx:sub:proof-suffices-binary-classification}.
            At a high level, with this type of $\gQ$, we have $Q_A = 1$ or $Q_A = 0$.
            Then, from the mass of the $\gQ$'s support on $\gP$ and $P_A$, we can conclude that the $\gQ$'s support is exactly the decision region of label $c$ or its complement.
            Thus, the DSRS constraints~(in \Cref{eq:r-dsrs}) are satisfied iff $F$ differs from $F_0$ in a zero-measure set, and thus we exactly compute the smoothed classifier $\wF_0^\gP$'s maximum certified robust radius in \shortApproach.
            An extension to multiclass setting is in \Cref{newadx:sub:cor-suffices-multiclass-classification}.
        \end{proof}
        
        \begin{remarkbox}
            For any base classifier $F_0$, $\gQ$ that satisfies conditions in \Cref{thm:suffices-binary-classification} exists, implying that with \shortApproach, certifying a strictly tight robust radius is possible.
            In contrast, Neyman-Pearson-based is proved to certify tight robust radius for linear base classifiers~\citep[Section 3.1]{cohen2019certified}, but for arbitrary base classifiers, its tightness is not guaranteed.
            This result suggests that, to certify a tight radius, just one additional smoothing distribution $\gQ$ is sufficient rather than multiple ones.
            
            On the other hand, it is challenging to find $\gQ$ whose support~(or its complement) exactly matches the decision region of an NN classifier.
            In the following, we analyze the tightness of \shortApproach under weaker assumptions.
        \end{remarkbox}
    
    \vspace{-2mm}
    \subsection*{\shortApproach can certify $\Omega(\sqrt d)$ $\ell_2$ radius under concentration assumption.}
    \vspace{-2mm}
    
        We begin by defining the concentration property.
    
        \begin{definition}[($\sigma,\Pcon$)-Concentration]
            \label{def:concentration}
            Given a base classifier $F_0$, at input $\vx_0 \in \sR$ with true label $y_0$, we call $F_0$ satisfies ($\sigma,\Pcon$)-concentration property, if for within $\Pcon$-percentile of small $\ell_2$ magnitude Gaussian $\gN(\sigma)$ noise, the adversarial example occupies zero measure.
            Formally, ($\sigma,\Pcon$)-concentration means
            \vspace{-0.5em}
            \begin{subequations}
                \begin{align}
                    & \Pr_{\vepsilon \sim \gN(\sigma)} [F_0(\vx_0 + \vepsilon) = y_0 \,|\, \|\vepsilon_0\|_2 \le T] = 1 
                    \label{eq:concentration}  \\
                    \mathrm{where} & \quad T \, \mathrm{satisfies} \, \Pr_{\vepsilon \sim \gN(\sigma)} [\|\vepsilon\|_2 \le T] = \Pcon.
                \end{align}
            \vspace{-1.5em}
            \end{subequations}
        \end{definition}
        
        Intuitively, ($\sigma,\Pcon$)-concentration implies that the base classifier has few adversarial examples for small magnitude noises during standard Gaussian smoothing.
        In \Cref{fig:smooth-model-landscape-and-magnitude-density}~(in \Cref{newadx:illustration-concentration-fig}), we empirically verified that a well-trained base classifier on ImageNet may satisfy this property for a significant portion of inputs.
        Furthermore, \citet{salman2019provably} show that promoting this concentration property by adversarially training the smoothed classifier improves the certified robustness.
        With this concentration property, \shortApproach certifies the radius $\Omega(\sqrt{d})$ under $\ell_2$ norm, as the following theorem shows.
        
        \begin{theorem}
            \label{thm:concentration-sqrt-d}
            Let $d$ be the input dimension and $F_0$ be the base classifier.
            For an input point $\vx_0 \in \sR^d$ with true class $y_0$, suppose $F_0$ satisfies $(\sigma,\Pcon)$-Concentration property.
            Then, for any sufficiently large $d$,
            for the classifier $\wF_0^{\gP'}$ smoothed by generalized Gaussian $\gP'=\gNg(k,\sigma)$ with $d/2 - 15 \le k < d/2$, \shortApproach with additional smoothing distribution $\gQ = \gNgtrunc(k,T,\sigma)$ can certified $\ell_2$ radius 
            \vspace{-0.5em}
            \begin{equation}
                \rDSRS \ge 0.02\sigma\sqrt{d}
                \vspace{-0.5em}
            \end{equation}
            where $T = \sigma\sqrt{2\GammaCDF_{d/2}^{-1}(\Pcon)}$ and $\GammaCDF_{d/2}$ is the CDF of gamma distribution $\Gamma(d/2,1)$.
        \end{theorem}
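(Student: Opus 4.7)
The plan has two steps: first convert the concentration hypothesis into the strong constraint $Q_A=1$, which pins down the adversarial decision region to contain the ball $B:=\{\vepsilon:\|\vepsilon\|_2\le T\}$; second, lower-bound the shifted mass $\Pr_{\vepsilon\sim\gP'}[\|\vepsilon+\vdelta\|_2\le T]$ directly, using the radial distribution of the generalized Gaussian $\gP'=\gNg(k,\sigma)$. Since showing $f^{\gP'}(\vx_0+\vdelta)_{y_0}>\tfrac{1}{2}$ for every admissible base classifier and every $\vdelta$ with $\|\vdelta\|_2<r$ certifies $\rDSRS\ge r$, it suffices to produce such a lower bound that exceeds $\tfrac{1}{2}$ when $r=0.02\sigma\sqrt{d}$.

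For step one, both $\gN(\sigma)$ and $\gQ=\gNgtrunc(k,T,\sigma)$ have densities that are positive Lebesgue-almost everywhere on $B$, so \Cref{eq:concentration} forces $F_0(\vx_0+\vepsilon)=y_0$ for Lebesgue-almost every $\vepsilon\in B$, which transfers to $Q_A=1$. Plugging into \Cref{def:r-dsrs} forces every admissible $F$ to satisfy $\{\vepsilon:F(\vx_0+\vepsilon)=y_0\}\supseteq B$ up to a null set, so for any $\vdelta$ with $\|\vdelta\|_2\le r$, dropping the non-negative contribution of the region outside $B$ yields
\[
  f^{\gP'}(\vx_0+\vdelta)_{y_0}\;\ge\;\Pr_{\vepsilon\sim\gP'}\bigl[\|\vepsilon+\vdelta\|_2\le T\bigr].
\]
Note that this bound no longer depends on the specific value of $P_A$ beyond the automatic lower bound $P_A\ge\gP'(B)$.

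For step two, I decompose $\vepsilon=R\,\widehat u$ with $R=\|\vepsilon\|_2$ and $\widehat u$ uniform on $S^{d-1}$. Under $\gP'$ the two are independent and the scaled radius satisfies $R^2/(2\sigma'^2)\sim\Gamma(m,1)$ with $m:=(d-2k)/2\in[\tfrac{1}{2},15]$. Substituting the definition of $T$ gives $T^2/(2\sigma'^2)=(2m/d)\,\GammaCDF_{d/2}^{-1}(\Pcon)\to m$ as $d\to\infty$, so $\Pr[R\le T]\to\GammaCDF_m(m)$; right-skewness of $\Gamma(m,1)$ implies $\GammaCDF_m(m)-\tfrac{1}{2}\ge c$ for an absolute constant $c>0$, uniformly over the compact range $m\in[\tfrac{1}{2},15]$. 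The event $\|\vepsilon+\vdelta\|_2\le T$ rewrites as $\langle\widehat u,\vdelta\rangle\le(T^2-R^2-r^2)/(2R)$; using that $\langle\widehat u,\vdelta\rangle$ has mean $0$ and variance $r^2/d$ on $S^{d-1}$, a second-order expansion shows that the shifted probability differs from $\Pr[R\le T]$ by at most $O(r^2/(d\sigma^2))=O(4\times 10^{-4})$, which is strictly smaller than $c$ for large $d$, leaving the bound above $\tfrac{1}{2}$.

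The main obstacle will be making the perturbation estimate in step two fully rigorous. Since $\Gamma(m,1)$ has a heavy right tail for small $m$, I plan to split the $R$-integral into a window around $T$, where a normal approximation for $\langle\widehat u,\vdelta\rangle/r$ on the sphere yields clean uniform control, and complementary tail regions where the shifted event is nearly deterministic and can be handled by Chernoff-type bounds for $\Gamma(m,1)$. Verifying that the $\GammaCDF_m(m)-\tfrac{1}{2}$ margin absorbs all the slack terms uniformly in $m\in[\tfrac{1}{2},15]$ at the conservative constant $0.02$ is where most of the bookkeeping lives.
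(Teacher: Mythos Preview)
Your proposal is correct and tracks the paper's argument closely: both establish $Q_A=1$ from the concentration hypothesis, drop the $P_A$ constraint to reduce the DSRS bound to $\Pr_{\vepsilon\sim\gP'}[\|\vepsilon+\vdelta\|_2\le T]$, decompose $\vepsilon$ radially so that $R^2/(2\sigma'^2)\sim\Gamma(m,1)$ with $m=d/2-k\in[\tfrac12,15]$, observe $T^2/(2\sigma'^2)\to m$, and rely on the right-skewness margin $\GammaCDF_m(m)-\tfrac12>0$ uniformly over this finite parameter range.

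The one place you diverge is the perturbation step you flag as the main obstacle. The paper avoids your Taylor/normal-approximation program entirely by a sharper discrete reduction: it uses Chebyshev to get $\BetaCDF_{(d-1)/2}(0.6)\ge 0.999$ for large $d$, which lets it replace the angular probability by the indicator $\1\bigl[\tfrac{T^2-(\sigma'\sqrt{2t}-u)^2}{4u\sigma'\sqrt{2t}}\ge 0.6\bigr]$; this inequality is then solved explicitly as a quadratic in $\sigma'\sqrt{2t}$ (using $\GammaCDF_{d/2}^{-1}(\Pcon)\ge 0.99\,d/2$) to show it holds whenever $t\le 0.98\,m$, and finally $\GammaCDF_m(0.98\,m)\ge 0.5/0.999$ is verified by direct enumeration over the thirty half-integers $m\in\{0.5,1,\dots,15\}$. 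This buys explicit constants with no tail splitting or CLT on the sphere. Your route would also work---the dominant correction is the deterministic shift $T\mapsto\sqrt{T^2-r^2}$, of size $f_R(T)\cdot r^2/(2T)=O(m\cdot r^2/(d\sigma^2))$, which is indeed far below the skewness margin---but be aware that the ``second-order expansion in $\langle\widehat u,\vdelta\rangle$'' you invoke is not quite the right picture: the $r^2/d$ variance term is subdominant to this deterministic $r^2/(2T)$ shift, so the bookkeeping should be organized around the latter.
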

        \vspace{-0.5em}
        \begin{proof}[Proof sketch.]
            We defer the proof to \Cref{newadx:sub:concentration-sqrt-d}.
            At high level, 
            based on the standard Gaussian distribution's property~(\Cref{prop:thm-2-1}), we find $Q_A = 1$ under concentration property~(\Cref{lem:thm-2-2}).
            With $Q_A = 1$, we derive a lower bound of $\rDSRS$ in \Cref{lem:thm-2-3}.
            We then use: (1)~the concentration of beta distribution $\Beta(\frac{d-1}{2},\frac{d-1}{2})$~(see \Cref{lem:thm-2-4}) for large $d$;
            (2)~the relative concentration of gamma $\Gamma(d/2,1)$ distribution around mean for large $d$~(see \Cref{prop:thm-2-5} and resulting \Cref{fact:thm-2-7});
            and (3)~the misalignment of gamma distribution $\Gamma(d/2-k,1)$'s mean and median for small $(d/2-k)$~(see \Cref{prop:thm-2-6}) to lower bound the quantity in \Cref{lem:thm-2-3} and show it is large or equal to $0.5$.
            Then, using the conclusion in \Cref{sec:dsrs} we conclude that $\rDSRS \ge 0.02\sigma\sqrt{d}$.
        \end{proof}
        
        \begin{remarkbox}
            (1)~For standard Neyman-Pearson based certificaton, $\rNP = \sigma\Phi^{-1}(f^\gP(\vx_0)_{y_0})$.
            Along with the increase of input dimension $d$, to achieve growing $\ell_2$ certified radius, one needs the prediction probability of true class under $\gP$, namely $f^\gP(\vx_0)_{y_0}$, to grow simultaneously, which is challenging.
            Indeed, across different datasets, $f^\gP(\vx_0)_{y_0}$ is almost a constant, which leads to a constant $\ell_2$ certified radius and shrinking $\ell_\infty$ radius for large $d$.
            We further empirically illustrate this property in \Cref{newadx:unchanged-pa}.
            
            (2)~In contrast, as long as the model satisfies concentration property, which may be almost true on large datasets as reflected by \Cref{fig:smooth-model-landscape-and-magnitude-density},
            with our specific choices of $\gP$ and $\gQ$,
            \shortApproach can achieve $\Omega(\sigma\sqrt d)$ $\ell_2$ radius on large datasets.
            This rate translates to a constant $\Omega(\sigma)$ $\ell_\infty$ radius on large datasets and thus breaks the curse of dimensionality of randomized smoothing.
            We remark that this $\sqrt d$ rate is optimal when dataset input is bounded such as images~(otherwise, the $\omega(1)$ $\ell_\infty$ radius leads the radius to exceed the constant $\ell_\infty$ diameter for large $d$).
            Therefore, \emph{under the assumption of concentration property, \shortApproach provides asymptotically optimal certification for randomized smoothing}. 
            
            (3)~Smoothing with generalized Gaussian distribution and choosing a parameter $k$ that is close to $d/2$ play an essential role in proving the $\Omega(\sigma\sqrt d)$ certified radius. Otherwise, in \Cref{adxsec:concentration-constant-k-forbid-sqrt-d} we have \Cref{thm:concentration-constant-k-forbid-sqrt-d} that shows any certification methods cannot certify an $\ell_2$ radius $c\sqrt d$ for any $c > 0$.
            This adds another theoretical evidence for the superiority of generalized Gaussian that is cross-validated by \citet{zhang2020black}.
        \end{remarkbox}

    \begin{figure}[t]
        \centering
        \subfigure[When holding probability in \Cref{eq:concentration} is obtained from sampling $N$ times and confidence level $1-\alpha=99.9\%$, relation between certified radius~($y$-axis) and input data dimension $d$~($x$-axis).
            Different curves correspond to different $N$s.]
            {
            \label{fig:simulation-sampling}
            \includegraphics[width=\linewidth]{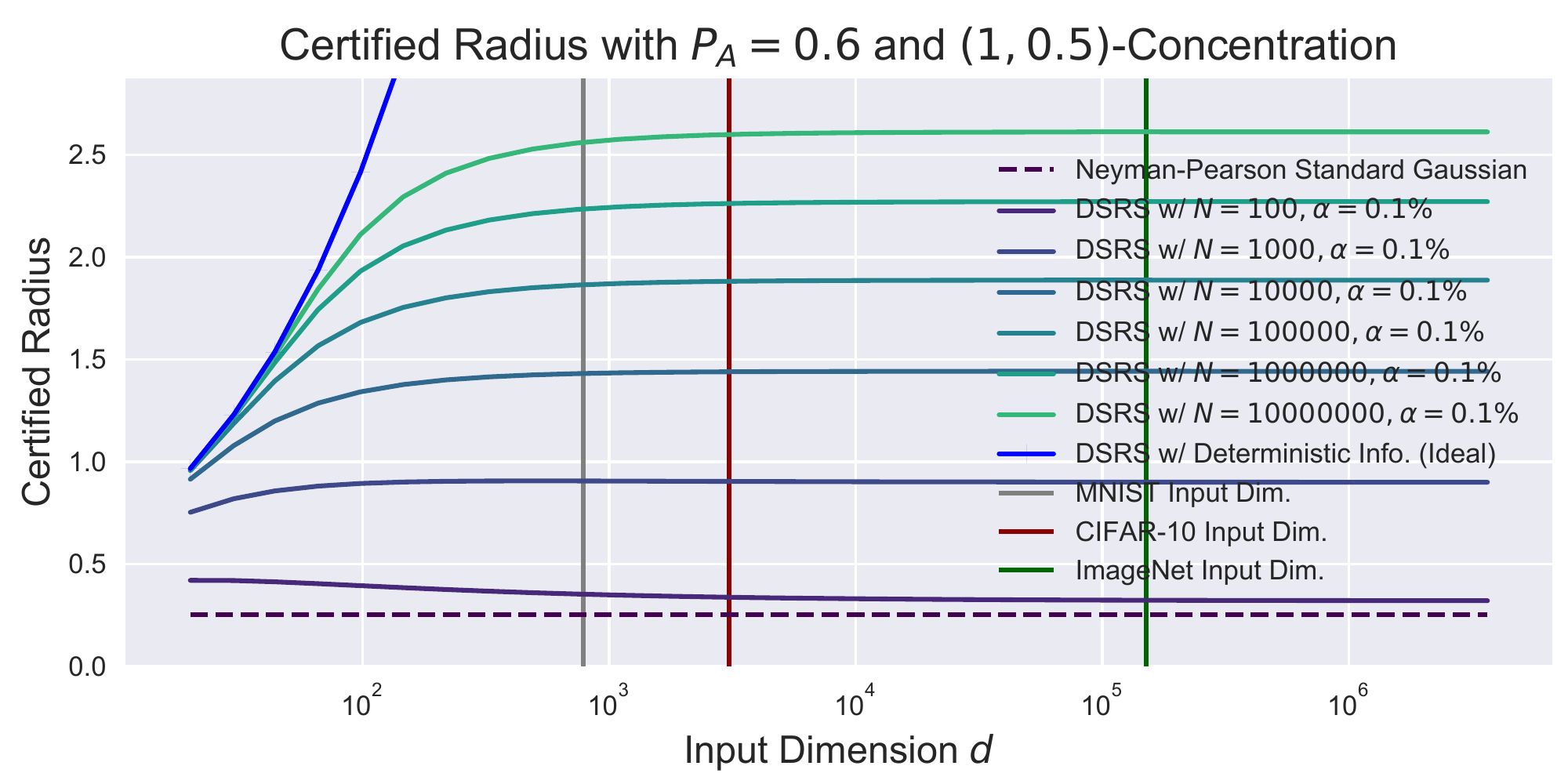}
        }
        \vspace{-1em} \\
        \subfigure[Relation between certified radius~($x$-axis) and certified accuracy~($y$-axis) on ImageNet models.
            Different curves correspond to Neyman-Pearson and \shortApproach certification with different $N$s. Sampling error considered, confidence level $=99.9\%$.]{
            \label{fig:real-data-sampling}
            \includegraphics[width=\linewidth]{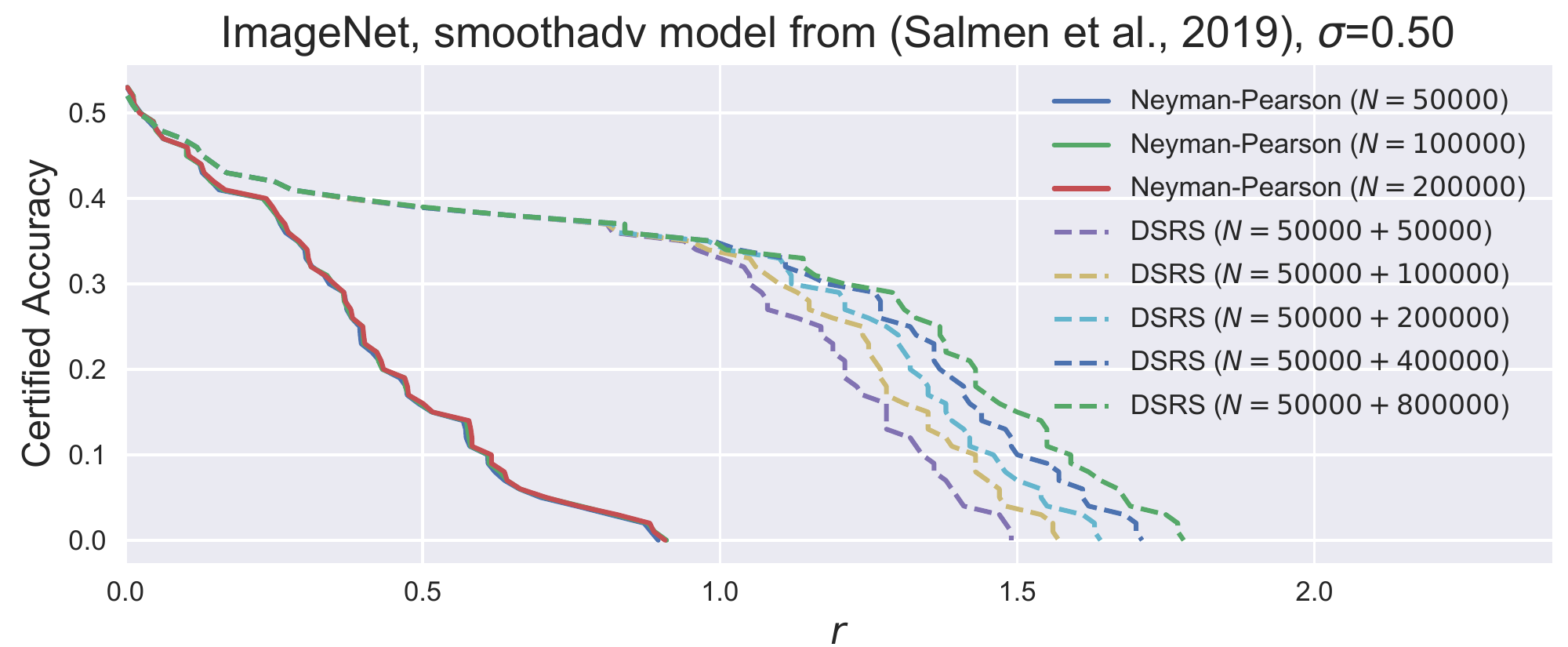}
        }
        \vspace{-1.2em}
        \caption{\small Tendency of \shortApproach certified robust radius considering sampling error.
        In both \textbf{(a)} and \textbf{(b)}, \shortApproach certified radius grows along with the increase of sampling number $N$ but Neyman-Pearson radius is almost fixed.
        }
        \label{fig:general-tendency-sampling}
        \vspace{-4mm}
    \end{figure}
        
    \vspace{-2mm}
    \subsection*{\shortApproach certifies tighter radius under general scenarios. }
    \vspace{-2mm}
    
        When the concentration property does not absolutely hold, a rigorous theoretical analysis becomes challenging, since the impact of a noninfinite dual variable needs to be taken into account. 
        This dual variable is inside a Lambert $W$ function where typical approximation bounds are too loose to provide non-trivial convergence rates.
        Thus, we leverage the numerical computational method introduced in \Cref{sec:dsrs} to provide numerical simulations and real-data experiments.
        We generalize the concentration assumption by changing the holding probability in \Cref{eq:concentration} from $1$ to $\alpha^{1/N}$, which corresponds to $(1-\alpha)$-confident lower bound of $Q_A$ given $N$ times of Monte-Carlo sampling, where we set $\alpha=0.1\%$ following the convention~\citep{cohen2019certified}.
        In this scenario, we compare \shortApproach certification with Neyman-Pearson certification numerically in \Cref{fig:general-tendency-sampling}~(numerical simulations in \Cref{fig:simulation-sampling} and ImageNet experiments in \Cref{fig:real-data-sampling}).
        
        In \Cref{fig:simulation-sampling}, we assume $(\sigma,\Pcon)$-concentration with $\sigma=1$, $\Pcon=0.5$ and different sampling number $N$s.
        We further assume $P_A = f^\gP(\vx_0)_{y_0} = 0.6$ as the true-class prediction probability under $\gP$.
        In \Cref{fig:real-data-sampling}, we take the model weights trained by \citet{salman2019provably} on ImageNet and apply generalized Gaussian smoothing with $d/2 - k = 4$ and $\sigma = 0.50$.
        We uniformly pick $100$ samples from the test set and compute $(1-\alpha)$-confident certified radius for each sample.
        We report certified accuracy~(under different $\ell_2$ radius $r$) that is the fraction of certifiably correctly classified samples by the smoothed classifier.
        
        \begin{remarkbox}
            When the sampling error and confidence interval come into play, they quickly suppress the $\Omega(\sqrt d)$ growth rate of \shortApproach certified radius~(\textcolor{blue}{blue} curve)  as shown in \Cref{fig:simulation-sampling}.
            Nonetheless, \shortApproach still certifies a larger radius than the standard Neyman-Pearson method and increasing the sampling number further enlarges the gap.
        \end{remarkbox}
        
        We consider another relaxed version of concentration property in \Cref{newadx:dsrs-relaxed-concentration}, where \shortApproach still provides significantly tighter robustness certification than Neyman-Pearson.

\section{\shortApproach Computational Method}

    \label{sec:dsrs}
    
    \begin{figure}[!t]
        \includegraphics[width=\linewidth]{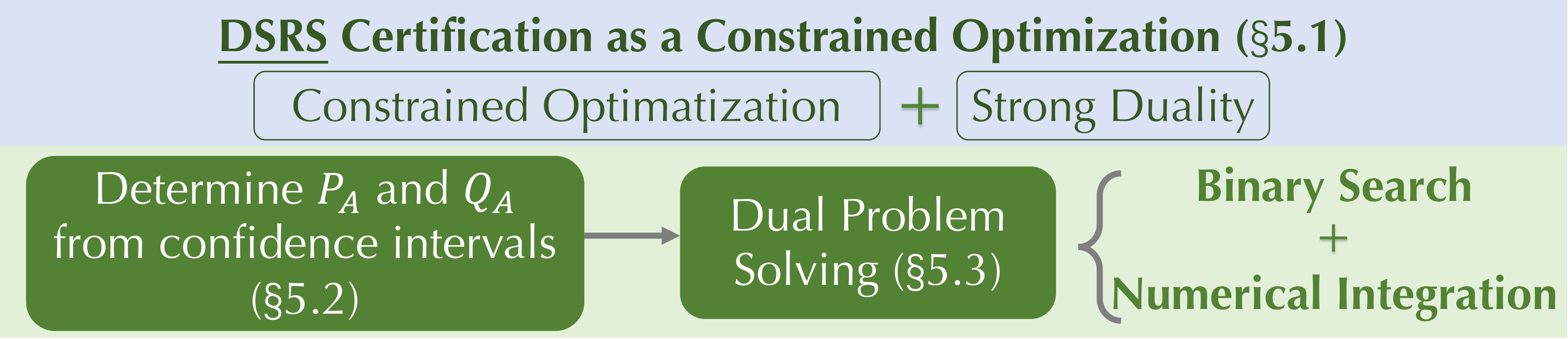}
        \vspace{-1.5em}
        \caption{\small Overview of \shortApproach computational method.}
        \label{fig:dsrs-structure}
        \vspace{-1.5em}
    \end{figure}

    The theoretical analysis in \Cref{sec:tightness-barrier} implies that \textit{additional smoothing distribution} $\gQ$ helps to tighten the robustness certification over standard Neyman-Pearson-based certification significantly.
    In this section, we propose an efficient computational method to compute this tight certified robust radius $\rDSRS$~(see \Cref{def:r-dsrs}) when $\gP$ is generalized Gaussian and $\gQ$ is truncated $\gP$ as suggested by \Cref{thm:concentration-sqrt-d,thm:concentration-constant-k-forbid-sqrt-d}.
    
    Compared with the classical certification for randomized smoothing or its variants~(cf. \citep{kumar2020certifying}), incorporating additional information raises a big challenge:
    the Neyman-Pearson lemma~(\citeyear{neyman1933ix}) can no longer be served as the foundation of the certification algorithm due to its incapability to handle the additional information.

        Thus, we propose a novel \shortApproach computational method by formalizing robustness certification as a constrained optimization problem and proving its strong duality~($\S$\ref{subsec:cons-opt-formulation}).
        Then, we propose an efficient algorithm to solve this specific dual optimization problem considering sampling error.
        The detailed algorithm can be found in \Cref{alg:DSRS-pipeline} in \Cref{newadx:sub:dsrs-core-algorithm}:
        1) we first perform a binary search on the certified radius $r$ to determine the maximum radius that we can certify; 
        2) 
        for current $r$, we determine the smoothed prediction confidence $P_A$ and $Q_A$ from the confidence intervals of predicting the true class~($\S$\ref{subsec:preprocessing}); 
        3) then, for current $r$ we solve the dual problem by quick binary search for dual variables $\lambda_1$ and $\lambda_2$~(see \Cref{eq:dual}) along with numerical integration~($\S$\ref{subsec:joint-binary-search}).
        To guarantee the soundness of numerical-integration-based certification, we take the maximum possible error into account during the binary search.
        We will discuss further extensions in $\S$\ref{subsec:extensions}.

    \subsection{\shortApproach as Constrained Optimization}
        \label{subsec:cons-opt-formulation}
        \vspace{-2mm}
        We first formulate the robustness certification as a constrained optimization problem and then show several foundational properties of the problem.
        
        Following the notation of \Cref{def:r-dsrs}, from the given base classifier $F_0$, we can use Monte-Carlo sampling to obtain 
        \begin{equation}
            \label{eq:first-appear-p-a-q-a}
            P_A = f_0^\gP(\vx_0)_{y_0},\quad Q_A = f_0^\gQ(\vx_0)_{y_0}. 
        \end{equation}
        In $\S$\ref{subsec:preprocessing} we will discuss how to handle confidence intervals of $P_A$ and $Q_A$.
        For now, we assume $P_A$ and $Q_A$ are fixed.
        
        Given perturbation vector $\vdelta\in \sR^d$, to test whether smoothed classifier $\wF_0^\gP$ still predicts true label $y_0$, we only need to check whether the prediction probability $f_0^\gP(\vx_0 + \vdelta)_{y_0} > 0.5$. This can be formulated as a constrained optimization problem $(\tC)$:
        \begin{subequations}
            \allowdisplaybreaks
            \label{eq:primal}
            \begin{align}
                \hspace{-0.9em} \underset{f}{\mathrm{minimize}} \quad & \E_{\vepsilon\sim\gP} [f(\vepsilon + \vdelta)] \label{eq:primal-opt} \\
                \mathrm{s.t.}\quad & \E_{\vepsilon\sim\gP} [f(\vepsilon)] = P_A,  \quad 
                \E_{\vepsilon\sim\gQ} [f(\vepsilon)] = Q_A, \label{eq:primal-pa-qa} \\
                & 0 \le f(\vepsilon) \le 1 \quad \forall \vepsilon\in\sR^d. \label{eq:primal-f-cons}
            \end{align}
        \end{subequations}
        \begin{remark}
            $(\tC)$ seeks for the minimum possible $f^\gP(\vx_0 + \vdelta)_{y_0}$ given \Cref{eq:first-appear-p-a-q-a}'s constraint.
            Concretely, we let $f$ represent whether the base classifier predicts label $y_0$: $f(\cdot) = \1[F(\cdot\, + \vx_0) = y_0]$, and accordingly impose $f \in [0,\,1]$ in \Cref{eq:primal-f-cons}.
            Then, \Cref{eq:primal-opt,eq:primal-pa-qa} unfold $f^\gP(\vx_0 + \vdelta)_{y_0}, f^\gP(\vx_0)_{y_0},$ and $f^\gQ(\vx_0)_{y_0}$ respectively and impose \Cref{eq:first-appear-p-a-q-a}'s constraint.
        \end{remark}
        
        We let $\tC_\vdelta(P_A,\,Q_A)$ denote the optimal value of \Cref{eq:primal} when feasible.
        Thus, under norm $p$,  to certify the robustness within radius $r$, we only need to check whether
        \begin{equation}
            \forall \vdelta, \|\vdelta\|_p < r \Rightarrow \tC_\vdelta(P_A,\,Q_A) > 0.5.
            \label{eq:check-cond}
        \end{equation}
        This formulation yields the tightest robustness certification given information from $\gP$ and $\gQ$ under the binary setting.
        Under the multiclass setting, there are efforts towards tighter certification by using ``$>$ maximum over other classes'' instead of ``$> 0.5$'' in \Cref{eq:check-cond}~\cite{dvijotham2020framework}.
        For saving the sampling cost and also to follow the convention~\cite{cohen2019certified,yang2020randomized,jeong2020consistency,zhai2019macer}, we mainly consider ``$> 0.5$'' for multiclass setting, and extension to the other form is straightforward.
        
        Since our choices of $\gP$ and $\gQ$~(standard/generalized (truncated) Gaussian) are isotropic and centered around origin, when certifying radius $r$, for $\ell_2$ certification we only need to test $\tC_\vdelta(P_A,\,Q_A) > 0.5$ with $\vdelta = (r,\,0,\,\dots,\,0)^\T$; 
        and 
        for $\ell_\infty$ we only need to divide $\ell_2$ radius by $\sqrt{d}$.
        This trick can also be extended for $\ell_1$ case~\cite{zhang2020black}.
        
        Directly solving $(\tC)$ is challenging. Thus, we construct the Lagrangian dual problem $(\tD)$:
        \vspace{-0.45em}
        \begin{subequations}
            \label{eq:dual}
            \begin{equation}
                \hspace{-0.2em} \underset{\lambda_1,\,\lambda_2 \in \sR}{\mathrm{maximize}} \, \Pr_{\vepsilon\sim\gP} [p(\vepsilon) < \lambda_1 p(\vepsilon+\vdelta) + \lambda_2 q(\vepsilon+\vdelta)]
                \label{eq:dual-opt}
            \end{equation}
            \vspace{-1.8em}
            \begin{equation}
                \begin{aligned}
                \mathrm{s.t.} & \Pr_{\vepsilon\sim\gP} [p(\vepsilon-\vdelta) < \lambda_1p(\vepsilon) + \lambda_2q(\vepsilon)] = P_A, \\
                & \Pr_{\vepsilon\sim\gQ} [p(\vepsilon-\vdelta) < \lambda_1p(\vepsilon) + \lambda_2q(\vepsilon)] = Q_A.
                \end{aligned}
            \end{equation}
        \end{subequations}
        In \Cref{eq:dual},  $p(\cdot)$ and $q(\cdot)$ are the density functions of distributions $\gP$ and $\gQ$ respectively.
        We let $\tD_\vdelta(P_A,\, Q_A)$ denote the optimal objective value to \Cref{eq:dual-opt} when it is feasible.
        \begin{theorem}
            For given $\vdelta\in \sR^d$, $P_A$, and $Q_A$,
            if $\tC$ and $\tD$ are both feasible, then $\tC_\vdelta(P_A,\, Q_A) = \tD_\vdelta(P_A,\, Q_A)$.
            \label{thm:strong-duality}
        \end{theorem}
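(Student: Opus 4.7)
My plan is to treat $(\tC)$ as an infinite-dimensional linear program with two linear equality constraints and pointwise $[0,1]$ bounds, and to establish strong duality by a direct Neyman--Pearson style comparison rather than invoking an abstract duality theorem. The central observation is that for any $(\lambda_1^*, \lambda_2^*)$ that is feasible for $(\tD)$, the level set $R^* := \{\vepsilon \in \sR^d : p(\vepsilon-\vdelta) < \lambda_1^* p(\vepsilon) + \lambda_2^* q(\vepsilon)\}$ induces an indicator $f^* := \1_{R^*}$ that is simultaneously primal-feasible for $(\tC)$ and the pointwise minimizer of the Lagrangian; this single $f^*$ will witness both optima.

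The easy direction is that $f^*$ is primal-feasible: the two constraints in \Cref{eq:dual} are literally $\E_{\vepsilon\sim\gP}[\1_{R^*}] = P_A$ and $\E_{\vepsilon\sim\gQ}[\1_{R^*}] = Q_A$, and $\1_{R^*} \in [0,1]$ is automatic. Changing variables $\vepsilon \mapsto \vepsilon + \vdelta$ shows that $\tC_\vdelta(P_A,Q_A) \le \int f^*(\vepsilon)\, p(\vepsilon-\vdelta)\, \dif\vepsilon$ equals the $(\tD)$ objective at $(\lambda_1^*,\lambda_2^*)$, so $\tC_\vdelta \le \tD_\vdelta$. For the reverse inequality I would compare any primal-feasible $f$ against $f^*$ via
\[
    I \;:=\; \int (f(\vepsilon) - f^*(\vepsilon))\,\bigl[p(\vepsilon-\vdelta) - \lambda_1^* p(\vepsilon) - \lambda_2^* q(\vepsilon)\bigr]\,\dif\vepsilon.
\]
On $R^*$ the bracket is negative and $f^* = 1 \ge f$, while on the complement the bracket is non-negative and $f^* = 0 \le f$; in both cases the integrand is non-negative, so $I \ge 0$. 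Expanding $I$ and using that $f$ and $f^*$ both match $P_A$ and $Q_A$ on the two equality constraints in \Cref{eq:primal-pa-qa}, the $\lambda_1^*$ and $\lambda_2^*$ cross-terms cancel, leaving $\int f(\vepsilon)\, p(\vepsilon-\vdelta)\,\dif\vepsilon \ge \int f^*(\vepsilon)\, p(\vepsilon-\vdelta)\,\dif\vepsilon$. Minimizing over $f$ gives $\tC_\vdelta \ge \tD_\vdelta$, completing the proof. A by-product of the argument is that every dual-feasible point yields the same value, so the ``maximize'' in \Cref{eq:dual-opt} is in fact immaterial for this particular theorem.

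The hard part will be the zero-measure boundary $B := \{\vepsilon : p(\vepsilon-\vdelta) = \lambda_1^* p(\vepsilon) + \lambda_2^* q(\vepsilon)\}$, on which the strict inequality defining $R^*$ is arbitrary and the pointwise Lagrangian minimizer is not unique; the sign analysis of $I$ above is only meaningful once $B$ is shown to be negligible (otherwise one could rearrange $f$ on $B$ to change the primal objective without changing feasibility). The paper's standing assumption that $p$ and $q$ are positive and differentiable almost everywhere is exactly what lets us discard $B$: the defining relation is a non-trivial smooth equation in $\vepsilon$ (for $\vdelta \ne 0$ the gradients of its two sides generically disagree), so $B$ has Lebesgue measure zero and contributes nothing to any of the integrals above. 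Feasibility of $(\tD)$ is hypothesized in the theorem statement, so we need no separate argument for the existence of Lagrange multipliers.
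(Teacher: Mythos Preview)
Your argument is correct and is essentially the paper's proof repackaged: the paper writes down the Lagrangian $\Lambda(f,\lambda_1,\lambda_2)$, invokes the min--max inequality to obtain $\tC_\vdelta \ge d^*$, and then exhibits the same $f^* = \1_{R^*}$ as a primal-feasible witness for $\tC_\vdelta \le d^*$; your integral $I$ is precisely the Lagrangian comparison that underlies the paper's $\min_f \Lambda$ step, just carried out directly in Neyman--Pearson style without naming it. Your by-product observation that every dual-feasible point yields the same objective value is correct and is implicit in the paper's argument as well.

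One remark: your worry about the boundary set $B$ is unnecessary for this proof. On $B$ the bracket $p(\vepsilon-\vdelta)-\lambda_1^*p(\vepsilon)-\lambda_2^*q(\vepsilon)$ vanishes identically, so that region contributes nothing to $I$ regardless of its measure, and the sign analysis goes through untouched. Moreover, the very relation defining $B$ forces any modification of $f$ on $B$ that preserves both equality constraints to also preserve the primal objective (the change in the objective equals $\lambda_1^*$ times the change in the $P_A$ constraint plus $\lambda_2^*$ times the change in the $Q_A$ constraint), so even a positive-measure $B$ would cause no trouble. The paper accordingly makes no mention of $B$.
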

        The theorem states the strong duality between $(\tC)$ and $(\tD)$.
        We defer the proof to \Cref{newadx:sub:strong-duality-proof}. The proof is based on min-max inequality and feasibility condition of $(\tD)$.
        Intuitively, we can view $(\tC)$, a functional optimization over $f$, as a linear programming~(LP) problem over infinite number of variables $\{f(\vx): \vx\in\R^d\}$ 
        so that the strong duality holds, which
        guarantees the tightness of \shortApproach in the primal space.
    
    \subsection{Dealing with Confidence Intervals}
        \label{subsec:preprocessing}
    
        It is practically intractable to know the exact $P_A$ and $Q_A$ in \Cref{eq:first-appear-p-a-q-a} by only querying the model's prediction for finite times.
        The common practice is using Monte-Carlo sampling, which gives confidence intervals of $P_A$ and $Q_A$ with a predefined confidence level $1 - \alpha$.
        
        Suppose we have confidence intervals $[\underline{P_A},\,\overline{P_A}]$ 
        and $[\underline{Q_A},\,\overline{Q_A}]$. 
        To derive a sound certification, we need to certify that for \emph{any} $P_A \in [\underline{P_A},\,\overline{P_A}]$ and \emph{any} $Q_A \in [\underline{Q_A},\,\overline{Q_A}]$, $\tC_\vdelta(P_A,\,Q_A) > 0.5$.
        Given the infinite number of possible $P_A$ and $Q_A$, the brute-force method is intractable. Here, \emph{without} computing $\tC_\vdelta$, we show how to solve
        \begin{equation}
            \hspace{-0.215em}
            (P_A,\,Q_A) = \argmin_{P_A' \in [\underline{P_A},\,\overline{P_A}],\\ Q_A' \in [\underline{Q_A},\,\overline{Q_A}]} \tC_\vdelta(P_A',\,Q_A').
            \label{eq:preprocess-q-a-q-a}
        \end{equation}
        If solved $P_A$ and $Q_A$ satify $\tC_\vdelta(P_A,\,Q_A) > 0.5$,  
        then for any $P_A$ and $Q_A$ within the confidence intervals, we can certify the robustness against perturbation $\vdelta$.
        We observe the following two properties of $\tC_\vdelta$.
        \begin{proposition}
            $\tC_\vdelta(\cdot,\,\cdot)$ is convex in the feasible region.
            \label{prop:convex}
        \end{proposition}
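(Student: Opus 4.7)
The plan is to prove convexity directly from the definition of $\tC_\vdelta$ by exploiting the fact that the feasible set of $(\tC)$ is convex and the objective is linear in $f$. In other words, $\tC_\vdelta(\cdot,\cdot)$ is the value function of a parametric linear program whose parameters appear only on the right-hand side of the constraints, and such value functions are classically convex.

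First, I would pick two arbitrary feasible pairs $(P_A^{(1)}, Q_A^{(1)})$ and $(P_A^{(2)}, Q_A^{(2)})$ and let $f_1, f_2$ be (near-)optimal solutions attaining objective values $\tC_\vdelta(P_A^{(i)}, Q_A^{(i)})$ (using a minimizing sequence if the infimum is not attained). For any $\lambda \in [0,1]$, define $f_\lambda := \lambda f_1 + (1-\lambda) f_2$. The second step is to verify that $f_\lambda$ is feasible for the problem indexed by the convex combination $(\lambda P_A^{(1)} + (1-\lambda) P_A^{(2)},\, \lambda Q_A^{(1)} + (1-\lambda) Q_A^{(2)})$: the box constraint $0 \le f_\lambda \le 1$ in \Cref{eq:primal-f-cons} is preserved under convex combinations, and by linearity of expectation the two equality constraints in \Cref{eq:primal-pa-qa} are exactly the convex combinations of the original ones.

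Third, I would evaluate the objective at $f_\lambda$. By linearity of $\E_{\vepsilon\sim\gP}[\,\cdot\,]$ applied to $f_\lambda(\vepsilon+\vdelta)$,
\begin{equation*}
\E_{\vepsilon\sim\gP}[f_\lambda(\vepsilon+\vdelta)] \;=\; \lambda\,\tC_\vdelta(P_A^{(1)},Q_A^{(1)}) + (1-\lambda)\,\tC_\vdelta(P_A^{(2)},Q_A^{(2)}).
\end{equation*}
Since $\tC_\vdelta$ at the combined point is the infimum over all feasible $f$, and $f_\lambda$ is one such feasible choice, this value upper-bounds $\tC_\vdelta(\lambda P_A^{(1)} + (1-\lambda) P_A^{(2)},\,\lambda Q_A^{(1)} + (1-\lambda) Q_A^{(2)})$, which is exactly the convexity inequality.

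Because the argument relies only on linearity of the objective and constraints together with convexity of the $[0,1]$ box, essentially no obstacle arises; the only subtlety is the standard measure-theoretic one of handling non-attainment of the infimum, which I would resolve by passing to minimizing sequences $f_1^{(n)}, f_2^{(n)}$ and taking $n\to\infty$ in the resulting inequality. The statement that the claim only holds \emph{in the feasible region} is natural: outside the feasible region $\tC_\vdelta$ is undefined, but the set of achievable $(P_A,Q_A)$ is itself convex (as the image under a linear map of the convex set $\{f:0\le f\le 1\}$), so the domain of convexity is well-posed.
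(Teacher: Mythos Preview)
Your proposal is correct and follows essentially the same approach as the paper: take optimal (or near-optimal) solutions at two parameter points, form their convex combination, verify feasibility at the combined parameters by linearity of the constraints, and bound the objective by linearity of expectation. If anything, your version is slightly more careful than the paper's, which only writes out the midpoint case $\lambda=1/2$ and tacitly assumes attainment of the infimum.
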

        \begin{proposition}
            With respect to $x \in [0,\,1]$,
            functions $x \mapsto \min_{y} \tC_\vdelta(x,\,y)$ and $x \mapsto \argmin_{y} \tC_\vdelta(x,\,y)$ are monotonically non-decreasing.
            Similarly, with respect to $y \in [0,\,1]$,
            functions $y \mapsto \min_{x} \tC_\vdelta(x,\,y)$ and $y \mapsto \argmin_{x} \tC_\vdelta(x,\,y)$ are monotonically non-decreasing.
            \label{prop:mono}
        \end{proposition}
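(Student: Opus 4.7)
The plan is to reduce both monotonicity claims to the standard Neyman--Pearson minimization, by observing that taking $\min_y$ in $\tC_\vdelta(x, y)$ effectively removes the constraint $\E_{\vepsilon\sim\gQ}[f(\vepsilon)] = y$ from the primal problem~(\Cref{eq:primal}). Specifically, since $y$ is itself the value $\E_\gQ[f]$ at the optimizer and is a free decision, marginalizing it out yields
\[
\min_y \tC_\vdelta(x, y) \;=\; \min_f \bigl\{\E_{\vepsilon\sim\gP}[f(\vepsilon+\vdelta)] \,:\, \E_{\vepsilon\sim\gP}[f(\vepsilon)] = x,\; 0 \le f \le 1\bigr\},
\]
and correspondingly $\argmin_y \tC_\vdelta(x, y) = \E_\gQ[f^*_x]$, where $f^*_x$ denotes the optimizer of the reduced problem on the right.

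Next I would invoke the Neyman--Pearson lemma to characterize $f^*_x$ as the indicator of a likelihood-ratio sublevel set, namely $f^*_x(\vepsilon) = \1[p(\vepsilon-\vdelta) \le \lambda_x \, p(\vepsilon)]$, possibly randomized on the boundary. Here $\lambda_x$ is chosen so that $\gP(A_{\lambda_x}) = x$, where $A_\lambda := \{\vepsilon : p(\vepsilon-\vdelta) \le \lambda \, p(\vepsilon)\}$. The key observation is that $\{A_\lambda\}_\lambda$ is monotone under set inclusion, so the three quantities $\gP(A_\lambda)$, $\gQ(A_\lambda)$, and $\int_{A_\lambda} p(\vepsilon-\vdelta)\,\dif\vepsilon$ are all non-decreasing in $\lambda$. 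Increasing $x$ therefore forces $\lambda_x$ to grow weakly, which simultaneously grows the objective $\min_y \tC_\vdelta(x, y) = \int_{A_{\lambda_x}} p(\vepsilon-\vdelta)\,\dif\vepsilon$ and grows $\argmin_y \tC_\vdelta(x, y) = \gQ(A_{\lambda_x})$, delivering both monotonicity claims in $x$.

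For the $y$-direction I would apply the same strategy to a symmetric reduction: taking $\min_x$ drops the $\gP$-constraint, leaving a Neyman--Pearson problem with constraint $\E_{\vepsilon\sim\gQ}[f(\vepsilon)] = y$ and the same objective $\E_{\vepsilon\sim\gP}[f(\vepsilon+\vdelta)]$. Its optimizer takes the form $\1[p(\vepsilon-\vdelta) \le \mu_y \, q(\vepsilon)]$, parameterized by a threshold $\mu_y$ that must be non-decreasing in $y$; both $\min_x \tC_\vdelta(x, y)$ and $\argmin_x \tC_\vdelta(x, y) = \E_\gP[f^*]$ then grow with $\mu_y$, yielding the two remaining monotonicities.

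The main obstacle I anticipate is the boundary set $\{\vepsilon : p(\vepsilon - \vdelta) = \lambda \, p(\vepsilon)\}$, where ties in the likelihood ratio could create ambiguity in the optimizer and hence in the $\argmin$. However, the paper's standing assumption that $p$ and $q$ are positive and differentiable almost everywhere implies this boundary has $\gP$- and $\gQ$-measure zero for the smoothing distributions under consideration, so $f^*_x$ is determined up to null sets and $\argmin_y \tC_\vdelta(x, y) = \gQ(A_{\lambda_x})$ is well-defined; the same remark handles the symmetric case.
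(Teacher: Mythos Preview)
Your proposal is correct and follows essentially the same approach as the paper's proof: reduce $\min_y \tC_\vdelta(x,y)$ to the single-constraint Neyman--Pearson problem $(\tC')$, characterize its optimizer as a likelihood-ratio indicator with threshold $\lambda$ determined by $x$, and read off the monotonicity of both the optimal objective and $\E_\gQ[f^*_x]$ from the monotonicity of $\lambda$ in $x$. Your treatment of the boundary set and the explicit symmetric reduction for the $y$-direction are slightly more detailed than the paper's, but the strategy is identical.
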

        
        These two propositions characterize the landscape of $\tC_\vdelta(\cdot,\cdot)$---convex and monotonically non-decreasing along both $x$ and $y$ axes.
        Thus, desired ($P_A$, $Q_A$)~(location of minima within the bounded box) lies on the box boundary, and we only need to compute the location of boundary-line-sliced minima and compare it with box constraints to solve \Cref{eq:preprocess-q-a-q-a}.
        Formally, we propose an efficient algorithm~(\Cref{alg:determine-p-a-q-a}, omitted to \Cref{newadx:sub:dsrs-core-algorithm}) to solve $(P_A,Q_A)$.
        \begin{theorem}
            If \Cref{eq:preprocess-q-a-q-a} is feasible,
            the $P_A$ and $Q_A$ returned by \Cref{alg:determine-p-a-q-a} solve \Cref{eq:preprocess-q-a-q-a}.
            \label{thm:alg-2-good}
        \end{theorem}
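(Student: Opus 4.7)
The plan is to combine the convexity of $\tC_\vdelta$ (\Cref{prop:convex}) with the two monotonicity statements (\Cref{prop:mono}) to reduce the two-dimensional box minimization in \Cref{eq:preprocess-q-a-q-a} to a small, enumerable set of candidate points, each of which \Cref{alg:determine-p-a-q-a} evaluates by a binary search over the dual variables $\lambda_1,\lambda_2$ in \Cref{eq:dual}. Throughout, write $B := [\underline{P_A},\,\overline{P_A}] \times [\underline{Q_A},\,\overline{Q_A}]$, $g(x) := \min_y \tC_\vdelta(x,y)$, $h(x) := \argmin_y \tC_\vdelta(x,y)$, and define $\tilde g, \tilde h$ analogously with the roles of $x$ and $y$ swapped; by \Cref{prop:mono} all four maps are non-decreasing.

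First, I would prove a ``lower-left localization'' lemma: the minimizer of $\tC_\vdelta$ over $B$ is attained at $(\underline{P_A}, h(\underline{P_A}))$ whenever $h(\underline{P_A}) \in [\underline{Q_A},\,\overline{Q_A}]$, and otherwise at a point of the $L$-shaped boundary $\{\underline{P_A}\}\times[\underline{Q_A},\overline{Q_A}] \cup [\underline{P_A},\overline{P_A}]\times\{\underline{Q_A}\}$. The first part follows from the chain $\tC_\vdelta(x,y) \ge g(x) \ge g(\underline{P_A}) = \tC_\vdelta(\underline{P_A}, h(\underline{P_A}))$, valid for every $(x,y) \in B$ by monotonicity of $g$. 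The second part uses a short case analysis: when $h(\underline{P_A}) \notin [\underline{Q_A},\overline{Q_A}]$, convexity of the 1D restriction $\tC_\vdelta(\underline{P_A},\cdot)$ forces the slice minimum along $\{\underline{P_A}\} \times [\underline{Q_A}, \overline{Q_A}]$ to the closer endpoint, and a symmetric argument using $\tilde h$ handles the bottom slice; the top and right slices of $B$ are dominated by monotonicity of $g$ and $\tilde g$ respectively.

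Second, I would verify that \Cref{alg:determine-p-a-q-a} enumerates exactly these candidates and selects the one with smallest $\tC_\vdelta$ value. Each 1D slice minimum reduces to clipping an unconstrained univariate argmin ($h(\underline{P_A})$ on the left slice, $\tilde h(\underline{Q_A})$ on the bottom slice) into a fixed interval; the unconstrained argmin itself is located by a binary search on $(\lambda_1,\lambda_2)$ in the dual \Cref{eq:dual}. Invoking strong duality from \Cref{thm:strong-duality} at each candidate then closes the loop between what the algorithm computes and the primal value $\tC_\vdelta$, establishing the theorem.

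The main obstacle I anticipate is feasibility: $\tC_\vdelta$ is only defined on the convex subregion of $(P_A', Q_A')$ pairs for which \Cref{eq:primal-pa-qa} is jointly satisfiable, and this subregion may cut $B$ nontrivially so that $h(\underline{P_A})$ or $\tilde h(\underline{Q_A})$ is undefined or escapes to infinity on the feasibility boundary. I plan to address this by (i) observing that the feasible set is convex, inherited from convexity of $\{f : 0 \le f \le 1\}$, so the monotonicity and convexity arguments apply on its intersection with $B$, and (ii) invoking a closure/continuity argument at the feasibility boundary, together with a canonical (e.g.\ smallest) selection of $h,\tilde h$ on any flat regions of $\tC_\vdelta$ to keep the monotonicity statements of \Cref{prop:mono} literally applicable.
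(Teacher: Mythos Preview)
Your high-level plan — use \Cref{prop:convex} and the four monotonicities in \Cref{prop:mono} to localize the box minimizer, then match against \Cref{alg:determine-p-a-q-a} — is the paper's route as well, but you have misread what the algorithm actually does, and this matters for the proof.

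\Cref{alg:determine-p-a-q-a} does \emph{not} ``enumerate candidates and select the one with smallest $\tC_\vdelta$ value''; the paper stresses that no value of $\tC_\vdelta$ is ever evaluated in this subroutine. It computes $\underline{q}=h(\underline{P_A})$ (and, in the else branch, $\underline{p}=\tilde h(\underline{Q_A})$) via the \emph{single-constraint} Neyman--Pearson reduction with one dual variable $\lambda$, not by a search over $(\lambda_1,\lambda_2)$ in \Cref{eq:dual}; \Cref{thm:strong-duality} plays no role here. The algorithm then \emph{branches} on the sign of $\underline{q}-\underline{Q_A}$ and returns a clipped point on either the left edge or the bottom edge. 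So what you must prove is that this branching criterion, \emph{by itself and without comparing any $\tC_\vdelta$ values}, identifies the edge and position of the minimizer — not that the smallest of several evaluated candidates wins.

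Your ``second part'' of the localization lemma also has a genuine gap. Checking the four boundary slices and saying the top/right slices are ``dominated by monotonicity of $g,\tilde g$'' does not localize the minimum of a \emph{convex} function to the boundary, let alone to your $L$-shape. The paper's argument in, e.g., the sub-case $\underline{q}>\overline{Q_A}$ proceeds in three steps: (i)~convexity of $\tC_\vdelta(\underline{P_A},\cdot)$ gives the left-slice minimum at $(\underline{P_A},\overline{Q_A})$; (ii)~a short \emph{contradiction} using monotonicity of $\tilde h$ together with $\tilde h(0)=0$ (global minimum at the origin) shows $\tC_\vdelta(\underline{P_A},\overline{Q_A})\le \tC_\vdelta(x,\overline{Q_A})$ for all $x\in[\underline{P_A},\overline{P_A}]$; (iii)~monotonicity of $h$ (so $h(x)\ge\underline{q}>\overline{Q_A}$ for every $x\ge\underline{P_A}$) plus convexity in $y$ pushes every interior point of $B$ down to the top edge. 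Step~(ii) is the nontrivial ingredient your sketch does not supply; monotonicity of $g$ alone does not force the top-edge minimum to the left endpoint. The remaining branches are handled symmetrically but require the same care.
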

        The above results are proved in \Cref{newadx:sub:preprocessing}.
        On a high level, we prove \Cref{prop:convex} by definition; we prove \Cref{prop:mono} via a reduction to classical Neyman-Pearson-based certification and analysis of this reduced problem; and we prove \Cref{thm:alg-2-good}
        based on \Cref{prop:convex,prop:mono} along with exhaustive and nontrivial analyses of all possible cases.
    
    \subsection{Solving the Dual Problem}
        \label{subsec:joint-binary-search}
    
        After the smoothed prediction confidences $P_A$ and $Q_A$ are determined from the confidence intervals, now we solve the dual problem $\tD_\vdelta(P_A,Q_A)$ as defined in \Cref{eq:dual}.
        We solve the problem based on the following theorem:
        \begin{theorem}[Numerical Integration for \shortApproach with Generalized Gaussian Smoothing]
            \label{thm:concrete-equations}
            In $\tD_\vdelta(P_A,Q_A)$, let $r = \|\vdelta\|_2$, 
            when $\gP = \gNg(k,\sigma)$ and $\gQ = \gNgtrunc(k,T,\sigma)$, let $\sigma' := \sqrt{d/(d-2k)}$ and let $\nu := \GammaCDF_{d/2-k}(T^2/(2\sigma'^2))$,
                \vspace{-1em}
            {
            \allowdisplaybreaks
            \scriptsize
            \begin{align*}
                R(\lambda_1,\lambda_2) & := \Pr_{\vepsilon\sim\gP} [p(\vepsilon) < \lambda_1 p(\vepsilon + \vdelta) + \lambda_2 q(\vepsilon + \vdelta)] \\
                = & \left\{ \begin{array}{lr}
                \E_{t\sim\Gamma(d/2-k,1)} u_1(t), & \lambda_1 \le 0 \\
                \E_{t\sim\Gamma(d/2-k,1)} u_1(t) + u_2(t), & \lambda_1 > 0
                \end{array} \right. \mathrm{where} \\
                u_1(t) = & \BetaCDF_{\frac{d-1}{2}} \left(
                    \dfrac{\min\{T^2, 2\sigma'^2 k W(\frac{t}{k}e^{\frac t k} (\lambda_1 + \nu \lambda_2)^{\frac 1 k}) \}}{4r\sigma'\sqrt{2t}} \right. \\
                    & \hspace{3em} \left. - 
                    \dfrac{(\sigma'\sqrt{2t} - r)^2}
                    {4r\sigma'\sqrt{2t}}
                \right), \\
                u_2(t) = \hspace{1em} & \hspace{-1em} \max\left\{
                    \BetaCDF_{\frac{d-1}{2}} \left(
                    \dfrac{2\sigma'^2k W(\frac{t}{k} e^{\frac t k} \lambda_1^{\frac 1 k}) - (\sigma'\sqrt{2t} - r)^2}{4r\sigma'\sqrt{2t}} 
                    \right) 
                \right. \\
                & \hspace{3em} \left. -\BetaCDF_{\frac{d-1}{2}} \left(
                    \dfrac{T^2 - (\sigma'\sqrt{2t} - r)^2}{4r\sigma'\sqrt{2t}}
                \right), 0 \right\}, \\
                P(\lambda_1,\lambda_2) & := \Pr_{\vepsilon\sim\gP} [p(\vepsilon-\vdelta) < \lambda_1 p(\vepsilon) + \lambda_2 q(\vepsilon)] \\
                & \hspace{-2em} = \E_{t\sim\Gamma(d/2-k,1)} \left\{ \begin{array}{lr}
                u_3(t,\lambda_1), & t \ge T^2 / (2\sigma'^2) \\
                u_3(t,\lambda_1 + \nu\lambda_2), & t < T^2 / (2\sigma'^2).
                \end{array} \right. \mathrm{where} \\
                u_3(t,\lambda) & = \BetaCDF_{\frac{d-1}{2}} \left(
                \dfrac{(r + \sigma'\sqrt{2t})^2}{4r\sigma'\sqrt{2t}}
                - \dfrac{2k\sigma'^2 W(\frac{t}{k}e^{\frac t k}\lambda^{-\frac 1 k})}{4r\sigma'\sqrt{2t}} \right), \\
                Q(\lambda_1,\lambda_2) & := \Pr_{\vepsilon\sim\gQ} [p(\vepsilon-\vdelta) < \lambda_1 p(\vepsilon) + \lambda_2 q(\vepsilon)] \\
                & = \nu \E_{t\sim\Gamma(d/2-k,1)} u_3(t,\lambda_1 + \nu\lambda_2) \cdot \1[t \le T^2/(2\sigma'^2)].
            \end{align*}
            }
            %
            In above equations, $\Gamma(d/2-k,1)$ is gamma distribution and $\GammaCDF_{d/2-k}$ is its CDF, 
            $\BetaCDF_{\frac{d-1}{2}}$ is the CDF of distribution $\Beta(\frac{d-1}{2}, \frac{d-1}{2})$, 
            and $W$ is the principal branch of Lambert $W$ function.
        \end{theorem}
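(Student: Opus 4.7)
The plan is to reduce each of the three probabilities to a one-dimensional integral over a gamma-distributed radial variable by combining rotational symmetry, polar decomposition, and the Lambert $W$ function. Since $\gP$ and $\gQ$ are rotation-invariant, I take $\vdelta = (r, 0, \ldots, 0)^\top$ without loss of generality. Under $\gP = \gNg(k,\sigma)$, switching to polar coordinates shows that the statistic $t := \|\vepsilon\|_2^2/(2\sigma'^2)$ is $\Gamma(d/2 - k, 1)$-distributed and independent of the direction $\vepsilon/\|\vepsilon\|_2 \in S^{d-1}$, which is uniform; under $\gQ$ the same decomposition holds with $t$ conditioned to $[0, T^2/(2\sigma'^2)]$ and the measure rescaled by $1/\nu$. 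Using the identity $q = (1/\nu)\, p$ on the support of $\gQ$, I can absorb $\lambda_2 q$ into a modified coefficient of $p$, so that every defining event inside $R, P, Q$ takes the form $p(\vepsilon_a) < \lambda\, p(\vepsilon_b)$ for an effective $\lambda$ depending on $(\lambda_1, \lambda_2)$ and on a truncation indicator ($\1[\|\vepsilon+\vdelta\|_2 \le T]$ for $R$, or $\1[\|\vepsilon\|_2 \le T]$ for $P, Q$).

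The key computational step is to convert the reduced inequality (for $\lambda > 0$) into a threshold on a squared norm. Because $y \mapsto y^{-k}\exp(-y/(2\sigma'^2))$ is strictly decreasing on $(0,\infty)$, the inequality becomes either $\|\vepsilon+\vdelta\|_2^2 < b^\ast$ (for $R$) or $\|\vepsilon-\vdelta\|_2^2 > b^\ast$ (for $P, Q$), where $u := b^\ast/(2\sigma'^2)$ satisfies the transcendental equation
\[
    u/k + \log(u/k) \;=\; t/k + \log(t/k) \pm (\log \lambda)/k,
\]
the sign depending on which configuration is considered. Strict monotonicity of $u \mapsto u/k + \log(u/k)$ on $(0,\infty)$ guarantees uniqueness; exponentiating yields $(u/k)\, e^{u/k} = (t/k)\, e^{t/k}\, \lambda^{\pm 1/k}$, and then the defining equation of the principal branch of the Lambert $W$ function gives $b^\ast = 2k\sigma'^2\, W\!\bigl((t/k)\, e^{t/k}\, \lambda^{\pm 1/k}\bigr)$. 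This is precisely the origin of $\lambda^{1/k}$ inside $u_1, u_2$ and $\lambda^{-1/k}$ inside $u_3$.

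For the angular integration, fix $s = \|\vepsilon\|_2 = \sigma'\sqrt{2t}$ and let $\theta$ be the angle between $\vepsilon$ and $\vdelta$. Then $\|\vepsilon \pm \vdelta\|_2^2 = s^2 + r^2 \pm 2sr\cos\theta$, and the uniform distribution on $S^{d-1}$ makes $(1+\cos\theta)/2 \sim \Beta((d-1)/2, (d-1)/2)$. Hence, conditionally on $t$, $\Pr[\|\vepsilon+\vdelta\|_2^2 < b] = \BetaCDF_{(d-1)/2}\bigl((b - (s-r)^2)/(4sr)\bigr)$ and $\Pr[\|\vepsilon-\vdelta\|_2^2 > b] = \BetaCDF_{(d-1)/2}\bigl(((s+r)^2 - b)/(4sr)\bigr)$, which are exactly the expressions inside $u_1, u_2, u_3$. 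Intersecting $\{\|\vepsilon+\vdelta\|_2^2 < b^\ast\}$ with the ball $\{\|\vepsilon+\vdelta\|_2 \le T\}$ forces the $\min\{T^2, b^\ast\}$ clipping of $u_1$, while intersecting with its complement gives the BetaCDF difference in $u_2$; the $\max\{\cdot, 0\}$ absorbs the degenerate regime where $b^\ast \le T^2$ (so the complementary sub-level set is empty). Taking expectations over $t \sim \Gamma(d/2-k, 1)$, together with the indicator restriction and $\nu$-rescaling for $Q$, assembles the stated formulas.

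The main obstacle is the case analysis wrapping around the Lambert $W$ substitution. Three regimes must be tracked simultaneously: the sign of $\lambda_1$, which determines whether the outer-ball contribution to $R$ is empty and sources the piecewise ``$\lambda_1 \le 0$'' vs ``$\lambda_1 > 0$'' split; the sign of the effective inner-ball coefficient, which can trivialize $u_1$; and the comparison $b^\ast \lessgtr T^2$, which dictates the $\min$ and $\max$ clippings. Verifying that the BetaCDF arguments always lie in $[0,1]$ on the valid domain, and that the principal branch of $W$ is the correct one (justified by the strict monotonicity of $u/k + \log(u/k)$), completes the proof.
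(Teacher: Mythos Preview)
Your proposal is correct and follows essentially the same route as the paper: polar/level-set decomposition to reduce to a $\Gamma(d/2-k,1)$ radial expectation, the $\Beta((d-1)/2,(d-1)/2)$ law for the first coordinate on a sphere (the paper cites this as Lemma~I.23 of Yang et al.), the identity $q=\nu\,p$ on $\supp(\gQ)$ to collapse $\lambda_1 p+\lambda_2 q$ into a single effective coefficient, and the Lambert $W$ inversion of $y\mapsto y^{-2k}e^{-y^2/(2\sigma'^2)}$, with the same case split on the sign of $\lambda_1$ and the $\min/\max$ clipping from the truncation boundary. One small caveat: the theorem statement writes $\nu=\GammaCDF_{d/2-k}(T^2/(2\sigma'^2))$, but the paper's own proof (and the analogous standard-Gaussian theorem) uses the reciprocal $\nu>1$, so that $q=\nu\,p$ rather than your $q=p/\nu$; this is a typo in the statement, and with the intended $\nu$ your combination should read $(\lambda_1+\nu\lambda_2)p$, matching the displayed formulas.
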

        When $\gP$ is standard Gaussian and $\gQ$ is truncated standard Gaussian, we derive similar expressions as detailed in \Cref{newadx:sub:cert-std-truncated-gaussian}.
        We prove \Cref{thm:concrete-equations} in \Cref{newadx:sub:proof-concrete-equations}. The proof extends the level-set sliced integration and results from \citep{yang2020randomized}.
        With the theorem, we can rewrite the dual problem $\tD_\vdelta(P_A,Q_A)$ as 
        \begin{equation}
            \label{eq:dual-2} 
                \underset{\lambda_1,\,\lambda_2 \in \sR}{\mathrm{max}}\,  R(\lambda_1,\,\lambda_2) 
                \,
                \mathrm{s.t.} \,  P(\lambda_1,\,\lambda_2) = P_A,\, Q(\lambda_1,\,\lambda_2) = Q_A,
        \end{equation}
        Given concrete $\lambda_1$ and $\lambda_2$, from the theorem, these function values $P(\lambda_1,\lambda_2)$, $Q(\lambda_1,\lambda_2)$, and $R(\lambda_1,\lambda_2)$ can be easily computed with one-dimensional numerical integration using \texttt{SciPy} package.
        
        Now, solving $\tD_\vdelta(P_A,Q_A)$ reduces to finding dual variables $\lambda_1$ and $\lambda_2$ such that $P(\lambda_1,\lambda_2) = P_A$ and $Q(\lambda_1,\lambda_2) = Q_A$.
        Generally, we find that there is only one unique feasible pair $(\lambda_1,\lambda_2)$ for \Cref{eq:dual-2}, so finding out such a pair is sufficient.
        We prove the uniqueness and discuss how we deal with edge cases where multiple feasible pairs exist in \Cref{adxsubsec:main-approach-uniqueness}.
        
        Normally, such solving process is expensive.
        However, we find a particularly efficient method to solve $\lambda_1$ and $\lambda_2$ and the algorithm description is in \Cref{alg:dual-binary-search}~(in \Cref{newadx:sub:dsrs-core-algorithm}).
        At a high level, from \Cref{thm:concrete-equations}, we observe that $Q(\lambda_1,\lambda_2)$ is determined only by the sum $(\lambda_1 + \nu\lambda_2)$ and non-decreasing w.r.t. this sum.
        Therefore, we apply binary search to find out $(\lambda_1 + \nu\lambda_2)$ that satisfies $Q(\lambda_1,\lambda_2) = Q_A$.
        Then, we observe that 
        \vspace{-4mm}
        \begin{equation*}
            \scriptsize
            P(\lambda_1,\lambda_2) - \frac{Q(\lambda_1,\lambda_2)}{\nu} = \overbrace{\underset{\rt\sim\Gamma(d/2-k,1)}{\E} u_3(\rt,\lambda_1) \cdot \1\left[\rt \ge \frac{T^2}{2\sigma'^2}\right]}^{:=h(\lambda_1)}.
            \vspace{-2mm}
        \end{equation*}
        Thus, we need to find $\lambda_1$ such that $h(\lambda_1) = P_A - Q_A / \nu$.
        We observe that $h(\lambda_1)$ is non-decreasing w.r.t. $\lambda_1$, and we use binary search to solve $\lambda_1$.
        Combining with the value of $(\lambda_1 + \nu\lambda_2)$, we also obtain $\lambda_2$.
        We lastly leverage numerical integration to compute $R(\lambda_1,\lambda_2)$ following \Cref{thm:concrete-equations} to solve the dual problem $\tD_\vdelta(P_A,Q_A)$.
        
        \vspace{-0.5em}
        \paragraph{Practical Certification Soundness.}
        As a practical certification method, we need to guarantee the certification soundness in the presence of numerical error.
        In \shortApproach, there are two sources of numerical error: numerical integration error when computing $P(\lambda_1,\lambda_2)$, $Q(\lambda_1,\lambda_2)$, and $R(\lambda_1,\lambda_2)$, and the finite precision of binary search on $\lambda_1$ and $\lambda_2$.
        For numerical integration, we notice that typical numerical integration packages such as \texttt{scipy} support setting an absolute error threshold $\Delta$ and raising warnings when such threshold cannot be reached.
        We set the absolute threshold $\Delta=1.5\times 10^{-8}$, and abstain when the threshold cannot be reached~(which never happens in our experimental evaluation).
        Then, when computing $P$, $Q$, and $R$, suppose the numerical value is $v$, we use the lower bound $(v-\Delta)$ and upper bound $(v+\Delta)$ in the corresponding context to guarantee the soundness.
        For the finite precision in binary search, we use the left endpoint or the right endpoint of the final binary search interval to guarantee soundness.
        For example, we use the left endpoint of $\lambda_1$ in $R$ computation, and use the left endpoint of $(\lambda_1 + \nu\lambda_2)$ minus right endpoint of $\lambda_1$ to get the lower bound of $\lambda_2$ to use in $R$ computation.
        As a result, we always get an under-estimation of $R$ so the certification is sound.
        Further discussion is in \Cref{newadxsec:guaranteeing-soundness}.
        
        To this point, we have introduced the \shortApproach computational method.
        Complexity and efficiency analysis is omitted to \Cref{newadx:complexity-efficiency}.
        Implementation details are in \Cref{newadxsec:impl-details}.

    \subsection{Extensions}
        \label{subsec:extensions}
        
        We mainly discussed \shortApproach computational method for generalized Gaussian $\gP$ and truncated generalized Gaussian $\gQ$ under $\ell_2$ norm.
        Can we extend it to other settings?
        Indeed, \shortApproach is a general framework.
        In appendices, we show following extensions:
        (1)~\shortApproach for generalized Gaussian with different variances as $\gP$ and $\gQ$~(in \Cref{newadx:sub:cert-gaussian-diff-var});
        (2)~\shortApproach for other $\ell_p$ norms~(in \Cref{newadx:sub:cert-other-lp});
        and (3)~\shortApproach that leverages other forms of additional information covering gradient magnitude information~\citep{mohapatra2020higherorder,levine2020tight}~(in \Cref{newadx:general}).

\begin{table*}[!t]
    \centering
    \caption{\small Certified robust accuracy under different radii $r$ with different certification approaches.}
    
    
    \resizebox{0.98\linewidth}{!}{
    \begin{tabular}{c|c|c|cccccccccccc}
        \toprule
        \multirow{2}{*}{Dataset} & Training & Certification & \multicolumn{12}{c}{Certified Accuracy under Radius $r$} \\
        & Method & Approach &  0.25  &  0.50  &  0.75  &  1.00  &  1.25  &  1.50  &  1.75  &  2.00  &  2.25  &  2.50  &  2.75  &  3.00  \\
        \hline\hline
        \multirow{6}{*}{MNIST} & Gaussian Aug. & Neyman-Pearson & 
        \bf 97.9\% &	96.9\% &	94.6\% &	88.4\% &	78.7\% &	57.6\% &	41.0\% &	25.5\% &	13.6\% &	6.2\% &	2.1\% &	0.9\% \\
        & \citep{cohen2019certified} & \bf DSRS & 
        \bf 97.9\% &	\bf 97.0\% &	\bf 95.0\% &	\bf 89.8\% &	\bf 83.4\% &	\bf 61.6\% &	\bf 48.4\% &	\bf 34.1\% &	\bf 21.0\% &	\bf 10.6\% &	\bf 4.4\% &	\bf 1.2\% \\
        \cline{2-15}
        & Consistency & Neyman-Pearson &
        \bf 98.4\% &	\bf 97.5\% &	\bf 96.0\% &	92.3\% &	83.8\% &	67.5\% &	49.1\% &	35.6\% &	21.7\% &	10.4\% &	4.1\% &	1.9\% \\
        & \citep{jeong2020consistency} & \bf DSRS & 
        \bf 98.4\% &	\bf 97.5\% &	\bf 96.0\% &	\bf 93.5\% &	\bf 87.1\% &	\bf 71.8\% &	\bf 55.8\% &	\bf 41.9\% &	\bf 31.4\% &	\bf 17.8\% &	\bf 8.6\% &	\bf 2.8\% \\
        \cline{2-15}
        & SmoothMix & Neyman-Pearson & 
        \bf 98.6\% &	97.6\% &	96.5\% &	91.9\% &	85.1\% &	73.0\% &	51.4\% &	40.2\% &	31.5\% &	22.2\% &	12.2\% &	4.9\% \\
        & \citep{jeong2021smoothmix} & \bf DSRS & 
        \bf 98.6\% &	\bf 97.7\% &	\bf 96.8\% &	\bf 93.4\% &	\bf 87.5\% &	\bf 76.6\% &	\bf 54.4\% &	\bf 46.2\% &	\bf 37.6\% &	\bf 29.2\% &	\bf 18.5\% &	\bf 7.2\% \\
        \hline\hline
        
        \multirow{6}{*}{CIFAR-10} & 
        Gaussian Aug. & Neyman-Pearson & 
        56.1\% &	41.3\% &	27.7\% &	18.9\% &	14.9\% &	10.2\% &	7.5\% &	4.1\% &	2.0\% &	0.7\% &	0.1\% &	\bf 0.1\% \\
        & \citep{cohen2019certified} & \bf DSRS &
        \bf 57.4\% &	\bf 42.7\% &	\bf 30.6\% &	\bf 20.6\% &	\bf 16.1\% &	\bf 12.5\% &	\bf 8.4\% &	\bf 6.4\% &	\bf 3.5\% &	\bf 1.8\% &	\bf 0.7\% &	\bf 0.1\% \\
        \cline{2-15}
        & Consistency & Neyman-Pearson &
        61.8\% &	50.9\% &	38.0\% &	32.3\% &	23.8\% &	19.0\% &	16.4\% &	13.8\% &	11.2\% &	9.0\% &	7.1\% &	5.1\% \\
        & \citep{jeong2020consistency} & \bf DSRS &
        \bf 62.5\% &	\bf 52.5\% &	\bf 38.7\% &	\bf 35.2\% &	\bf 28.1\% &	\bf 20.9\% &	\bf 17.6\% &	\bf 15.3\% &	\bf 13.1\% &	\bf 10.9\% &	\bf 8.9\% &	\bf 6.5\% \\
        \cline{2-15}
        & SmoothMix & Neyman-Pearson &
        63.9\% &	53.3\% &	40.2\% &	34.2\% &	26.7\% &	20.4\% &	17.0\% &	13.9\% &	10.3\% &	7.8\% &	4.9\% &	2.3\% \\
        & \citep{jeong2021smoothmix} & \bf DSRS &
        \bf 64.7\% &	\bf 55.5\% &	\bf 42.1\% &	\bf 35.9\% &	\bf 29.4\% &	\bf 22.1\% &	\bf 18.7\% &	\bf 16.1\% &	\bf 13.2\% &	\bf 10.2\% &	\bf 7.1\% &	\bf 3.9\% \\
        \hline\hline
        
        \multirow{6}{*}{ImageNet} &
        Guassian Aug. & Neyman-Pearson &
        57.1\% &	47.0\% &	39.3\% &	33.2\% &	24.8\% &	21.4\% &	17.6\% &	13.7\% &	10.2\% &	7.8\% &	5.7\% &	3.6\% \\
        & \citep{cohen2019certified} & \bf DSRS &
        \bf 58.4\% &	\bf 48.4\% &	\bf 41.4\% &	\bf 35.3\% &	\bf 28.8\% &	\bf 23.3\% &	\bf 21.3\% &	\bf 18.7\% &	\bf 14.2\% &	\bf 11.0\% &	\bf 9.0\% &	\bf 5.7\% \\
        \cline{2-15}
        & Consistency & Neyman-Pearson &
        59.8\% &	49.8\% &	43.3\% &	36.8\% &	31.4\% &	25.6\% &	22.1\% &	19.1\% &	16.1\% &	14.0\% &	10.6\% &	8.5\% \\
        & \citep{jeong2020consistency} & \bf DSRS &
        \bf 60.4\% &	\bf 52.4\% &	\bf 44.7\% &	\bf 39.3\% &	\bf 34.8\% &	\bf 28.1\% &	\bf 25.4\% &	\bf 22.6\% &	\bf 19.6\% &	\bf 17.4\% &	\bf 14.1\% &	\bf 10.4\% \\
        \cline{2-15}
        & SmoothMix & Neyman-Pearson & 
        46.7\% &	38.2\% &	28.8\% &	24.6\% &	18.1\% &	14.2\% &	11.8\% &	10.1\% &	8.9\% &	7.2\% &	6.0\% &	4.6\% \\
        & \citep{jeong2021smoothmix} & \bf DSRS &
        \bf 47.4\% &	\bf 40.0\% &	\bf 30.3\% &	\bf 26.8\% &	\bf 21.6\% &	\bf 15.7\% &	\bf 14.0\% &	\bf 12.1\% &	\bf 9.9\% &	\bf 8.4\% &	\bf 7.2\% &	\bf 5.3\% \\
        \bottomrule
    \end{tabular}
    }
    \label{tab:ell-2}
    \vspace{-1em}
\end{table*}

\vspace{-2mm}

\section{Experimental Evaluation}
    \label{sec:exp}
    
    In this section, we systematically evaluate \shortApproach and demonstrate that it achieves tighter certification than the classical Neyman-Pearson-based certification against $\ell_2$ perturbations on MNIST, CIFAR-10, and ImageNet.
    We focus on $\ell_2$ certification because additive randomized smoothing is not optimal for other norms~(e.g., $\ell_1$~\citep{levine2021improved}) or the certification can be directly translated from $\ell_2$ certification~(e.g., $\ell_\infty$~\citep{yang2020randomized} and semantic transformations~\citep{li2020transformation}).
    
    \subsection{Experimental Setup}
        \label{subsec:exp-setup}
        
        \vspace{-2mm}
        \noindent\textbf{Smoothing Distributions.}
        Following \Cref{thm:concentration-sqrt-d}, we use generalized Gaussian $\gNg(k,\sigma)$ as smoothing distribution $\gP$ where $d/2-15 \le k < d/2$.
        Specifically,
        we set $k$ to be $d/2-12$ on MNIST, $d/2-6$ on CIFAR-10, and $d/2-4$ on ImageNet.
        We use three different $\sigma$'s: $0.25$, $0.50$, and $1.00$. 
        
        In terms of the additional smoothing distribution $\gQ$,
        on MNIST and CIFAR-10, we empirically find that using generalized Gaussian with the same $k$ but different variance yields tighter robustness certification, and therefore we choose $\sigma_g$ to be $0.2$, $0.4$, and $0.8$ corresponding to $\gP$'s $\sigma$ being $0.25$, $0.50$, and $1.0$, respectively.
        On ImageNet, the concentration property~(see \Cref{def:concentration}) is more pronounced~(detail study in \Cref{newadx:sub:study-concentration}) and thus we use truncated generalized Gaussian $\gNgtrunc(k,T,\sigma)$ as $\gQ$.
        We apply a simple but effective algorithm as explained in \Cref{newadx:impl} to determine hyperparameter $T$ in $\gNgtrunc(k,T,\sigma)$.
        
        \vspace{-2mm}
        \noindent\textbf{Models and Training.}
        We consider three commonly-used or state-of-the-art training methods:
        Gaussian augmentation~\citep{cohen2019certified}, 
        Consistency~\citep{jeong2020consistency},
        and SmoothMix~\citep{jeong2021smoothmix}.
        We follow the default model architecture on each dataset respectively.
        We train the models with augmentation noise sampled from the corresponding generalized Gaussian smoothing distribution $\gP$.
        More training details can be found in \Cref{newadx:impl}.

        \vspace{-2mm}
        \noindent\textbf{Baselines.}
        We consider the Neyman-Pearson-based certification method as the baseline.
        This certification is widely used and is the tightest given only prediction probability under $\gP$~\cite{cohen2019certified,yang2020randomized,jeong2020consistency,li2020transformation}. 
        We remark that although there are certification methods that leverage more information, to the best of our knowledge, they are not visibly better than the Neyman-Pearson-based method on $\ell_2$ certification under practical sampling number~($10^5$).
        More comparisons in \Cref{newadx:sub:compare-higher-order} show \shortApproach is also better than these baselines.
        
        For both baseline and \shortApproach,
        following the convention, the certification confidence is $1 - \alpha = 99.9\%$, and we use $10^5$ samples for estimating $P_A$ and $Q_A$.
        Neyman-Pearson certification does not use the information from additional distribution, and all $10^5$ samples are used to estimate the interval of $P_A$.
        In \shortApproach, we use $5\times 10^4$ samples to estimate the interval of $P_A$ with confidence $1-\frac{\alpha}{2} = 99.95\%$ and the rest $5\times 10^4$ samples for $Q_A$ with the same confidence.
        By union bound, the whole certification confidence is $99.9\%$.
        
        \vspace{-2mm}
        \noindent\textbf{Metrics.}
        We uniformly draw $1000$ samples from the test set and report \emph{certified robust accuracy}~(under each $\ell_2$ radius $r$) that is the fraction of samples that are both correctly classified and have certified robust radii larger than or equal to $r$.
        Under each radius $r$, we report the highest certified robust accuracy among the three variances $\sigma \in \{0.25,0.50,1.00\}$ following \citep{cohen2019certified,salman2019provably}.
        We also report evaluation results with ACR metric~\citep{zhai2019macer} in \Cref{adxsubsec:acr}.
        
        
    \subsection{Evaluation Results}
        \vspace{-2mm}
        We show results in \Cref{tab:ell-2}.
        The corresponding curves and separated tables for each variance $\sigma$ are in \Cref{newadx:sub:curve-sepa-tables}.
        
        \textbf{For \emph{almost all} models and radii $r$, \shortApproach yields significantly higher certified accuracy.}
        For example, for Gaussian augmented models, when $r=2.0$, on MNIST the robust accuracy increases from $25.5\%$ to $34.1\%$~($+8.6\%$), 
        on CIFAR-10 from $4.1\%$ to $6.4\%$~($+2.3\%$),
        and on ImageNet from $13.7\%$ to $18.7\%$~($+5.0\%$).
        On average, on MNIST the improvements are around $6\%$ - $9\%$;
        on CIFAR-10, the improvements are around $1.5\%$ - $3\%$;
        and on ImageNet the improvements are around $2\%$ - $5\%$.
        Thus, \shortApproach can be used in conjunction with different training approaches and provides consistently tighter robustness certification.
        
        The improvements in the robust radius are not as substantial as those in \Cref{fig:real-data-sampling}~(which is around $2\times$).
        We investigate the reason in \Cref{newadx:sub:study-concentration}.
        In summary, the model in \Cref{fig:real-data-sampling} is trained with standard Gaussian smoothing augmentation and smoothed with generalized Gaussian.
        The models in this section are trained with generalized Gaussian augmentation. 
        Such training gives higher certified robustness, but in the meantime, gives more advantage to Neyman-Pearson-based certification.
        This finding implies that there may be a large space for exploring training approaches that favor \shortApproach certification since all existing training methods are designed for Neyman-Pearson-based certification.
        Nevertheless, even with these ``unsuitable'' training methods, \shortApproach still achieves significantly tighter robustness certification than the baseline.
        
        On the other hand, all the above results are restricted to generalized Gaussian smoothing.
        We still observe that standard Gaussian smoothing combined with strong training methods~\cite{salman2019provably,jeong2020consistency} and Neyman-Pearson certification~(the SOTA setting) yields similar or slightly higher certified robust accuracy than generalized Gaussian smoothing even with \shortApproach certification.
        Though \shortApproach with its suitable generalized Gaussian smoothing does not achieve SOTA certified robustness yet, given the theoretical advantages, we believe that with future tailored training approaches, \shortApproach with generalized Gaussian smoothing can bring strong certified robustness.
        More discussion is in \Cref{adxsubsec:limitation-future-directions}.
  
        \textbf{Ablation Studies.}
        We present several ablation studies in the appendix. and verify:
        (1)~Effectiveness of our simple heuristic for selecting hyperparameter for $\gQ$:
        We propose a simple heuristic to select the hyperparameter $T$ in smoothing distribution $\gQ = \gNgtrunc(k,T,\sigma)$.
        In \Cref{newadx:sub:t-heuristic-attempt-better-opt}, we propose a gradient-based optimization method to select such $\gQ$.
        We find that our simple heuristic has similar performance compared to the more complex optimization method but is more efficient.
        (2)~Comparison of different types of $\gQ$:
        by choosing different types of $\gQ$ distributions~(truncated Gaussian or Gaussian with different variance), \shortApproach has different performance as mentioned in \Cref{subsec:exp-setup}.
        In \Cref{newadx:sub:diff-var-dsrs}, we investigate the reason.
        In summary,
        when concentration property~(see \Cref{def:concentration}) is better satisfied, using truncated Gaussian as $\gQ$ is better; otherwise, using Gaussian with different variance is better.

\section{Conclusion}

    We propose a general \shortApproach framework that exploits information based on an additional smoothing distribution to tighten the robustness certification.
    We theoretically analyze and compare classical Neyman-Pearson and \shortApproach certification, showing that \shortApproach has the potential to break the curse of dimensionality of randomized smoothing.

\vspace{-0.5em}
\section*{Acknowledgements}
\vspace{-0.5em}
We thank the anonymous reviewers for their constructive feedback. 
This work is partially supported by 
 NSF grant No.1910100, NSF CNS No.2046726, C3 AI, and the Alfred P. Sloan Foundation.


\bibliography{bib}
\bibliographystyle{icml2022}

\newpage
\appendix
\onecolumn
\twocolumn

\allowdisplaybreaks

\ifnum\arxiv=1
\part*{Appendices}

\DoToC
\newpage
\fi

\section{Neyman-Pearson Certification}
    
    \label{newadx:detail-n-p-cert}
    
    The Neyman-Pearson-based robustness certification is the tightest certification given only prediction probability under $\gP$~\citep{cohen2019certified}.
    This certification and its equivalent variants are widely used for randomized smoothing.
    We use $\rNP$ to represent the certified radius from the Neyman-Pearson-based method.
    
    If the smoothing distribution $\gP$ is standard Gaussian, the following proposition gives the closed-form certified robust radius derived from the Neyman-Pearson lemma~\citep{neyman1933ix}.
    
    \begin{proposition}[\cite{cohen2019certified}]
        \label{prop:neyman-pearson}
        Under $\ell_2$ norm, given input $\vx_0 \in \sR^d$ with true label $y_0$.
        Let $\gP = \gN(\sigma)$ be the smoothing distribution, then Neyman-Pearson-based certification yields certified radius $\rNP = \sigma\Phi^{-1}(f^\gP(\vx_0)_{y_0})$, where $\Phi^{-1}$ is the inverse CDF of unit-variance Gaussian.
    \end{proposition}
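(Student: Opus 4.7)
The plan is to reduce the robustness certification question to a worst-case functional optimization, then invoke the Neyman--Pearson lemma to reduce the worst case to a halfspace classifier, and finally compute the resulting probability via the one-dimensional Gaussian marginal. Write $P_A := f^\gP(\vx_0)_{y_0}$. A radius $r$ is certified iff for every $\vdelta\in\sR^d$ with $\|\vdelta\|_2<r$ and every base classifier $F$ consistent with $P_A$, we have $\Pr_{\vepsilon\sim\gN(\sigma)}[F(\vx_0+\vdelta+\vepsilon)=y_0]>1/2$. I would recast this as: minimize $\E_{\vepsilon\sim\gN(\sigma)}[f(\vx_0+\vdelta+\vepsilon)]$ over measurable $f:\sR^d\to[0,1]$ subject to $\E_{\vepsilon\sim\gN(\sigma)}[f(\vx_0+\vepsilon)]=P_A$. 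The relaxation from $\{0,1\}$-valued indicators to $[0,1]$-valued $f$ is without loss, since a standard bang-bang argument shows the minimum is attained by an indicator.

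Next I would apply the Neyman--Pearson lemma to this two-sample problem: the densities of interest are $p_1(\vy)\propto\exp(-\|\vy-\vx_0\|_2^2/(2\sigma^2))$ and $p_2(\vy)\propto\exp(-\|\vy-\vx_0-\vdelta\|_2^2/(2\sigma^2))$. The likelihood ratio $p_2/p_1$ is a strictly monotone function of $\vdelta^\top(\vy-\vx_0)$, so the Neyman--Pearson-optimal test has the form $f^\star(\vy)=\1[\vdelta^\top(\vy-\vx_0)\le c]$ for a threshold $c$ chosen to match the $P_A$ constraint. This replaces the intractable infinite-dimensional minimization by a one-parameter halfspace problem.

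Then I would evaluate both probabilities using the fact that the projection of $\vepsilon\sim\gN(\sigma)$ onto the unit vector $\vdelta/\|\vdelta\|_2$ is $\gN(0,\sigma^2)$. The constraint $\Pr_{\vepsilon\sim\gN(\sigma)}[\vdelta^\top\vepsilon\le c]=P_A$ fixes $c=\sigma\|\vdelta\|_2\,\Phi^{-1}(P_A)$. Substituting into the shifted probability gives
\begin{equation*}
\E_{\vepsilon\sim\gN(\sigma)}[f^\star(\vx_0+\vdelta+\vepsilon)]=\Phi\!\left(\Phi^{-1}(P_A)-\frac{\|\vdelta\|_2}{\sigma}\right),
\end{equation*}
which is strictly greater than $1/2$ exactly when $\|\vdelta\|_2<\sigma\Phi^{-1}(P_A)$. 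Taking the supremum of admissible $r$ yields $\rNP=\sigma\Phi^{-1}(P_A)$, and tightness follows by exhibiting a base classifier whose decision region is a halfspace perpendicular to $\vdelta$, which saturates the bound.

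The main obstacle is really only bookkeeping: justifying that the $\{0,1\}$-to-$[0,1]$ relaxation preserves the optimum (so that the Neyman--Pearson conclusion applies to actual classifiers $F$) and handling the boundary case where the likelihood ratio equals the threshold on a set of positive measure, which requires the standard randomization step in Neyman--Pearson. Everything else reduces to one-dimensional Gaussian tail arithmetic.
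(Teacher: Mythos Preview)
The paper does not prove this proposition; it is stated with a citation to \cite{cohen2019certified} and no argument is given in the present paper. Your proposal is correct and is precisely the standard Neyman--Pearson argument from that reference: reduce to a worst-case functional minimization, observe that the likelihood ratio between two shifted isotropic Gaussians is monotone in $\vdelta^\top(\vy-\vx_0)$, conclude that the worst-case classifier is a halfspace, and finish with a one-dimensional Gaussian computation. There is nothing to compare against here beyond noting that your write-up matches the cited source.
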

    
    For other smoothing distributions, the concretization of the Neyman-Pearson certification method can be found in \citep{yang2020randomized}.
    
    \begin{remark}
        In practice, the routine is to use Monte-Carlo sampling to obtain a high-confidence interval of $f^\gP(\vx_0)_{y_0}$, which implies a high-confidence certification~($\rNP$) of robust radius.
        A tighter radius can be obtained when the runner-up prediction probability is known: $\rNP' = \frac{\sigma}{2}\left(\Phi^{-1}(f^\gP(\vx_0)_{y_0}) - \max_{y\in [C]: y\neq y_0} \Phi^{-1}(f^\gP(\vx_0)_y)\right)$. However, due to efficiency concern~(for $C$-way classification the sampling number needs to be more than $C$ times if using $\rNP'$ for certification instead of $\rNP$), the standard routine is to only use top-class probability and $\rNP$~\citep[Section 3.2.2]{cohen2019certified}.
        \shortApproach follows this routine.
    \end{remark}
    
    \vspace{-1em}
\section{Illustration of Concentration Assumption on ImageNet}

    \label{newadx:illustration-concentration-fig}
        \vspace{-1em}
        \begin{figure}[H]
            \centering
            \includegraphics[width=0.9\linewidth]{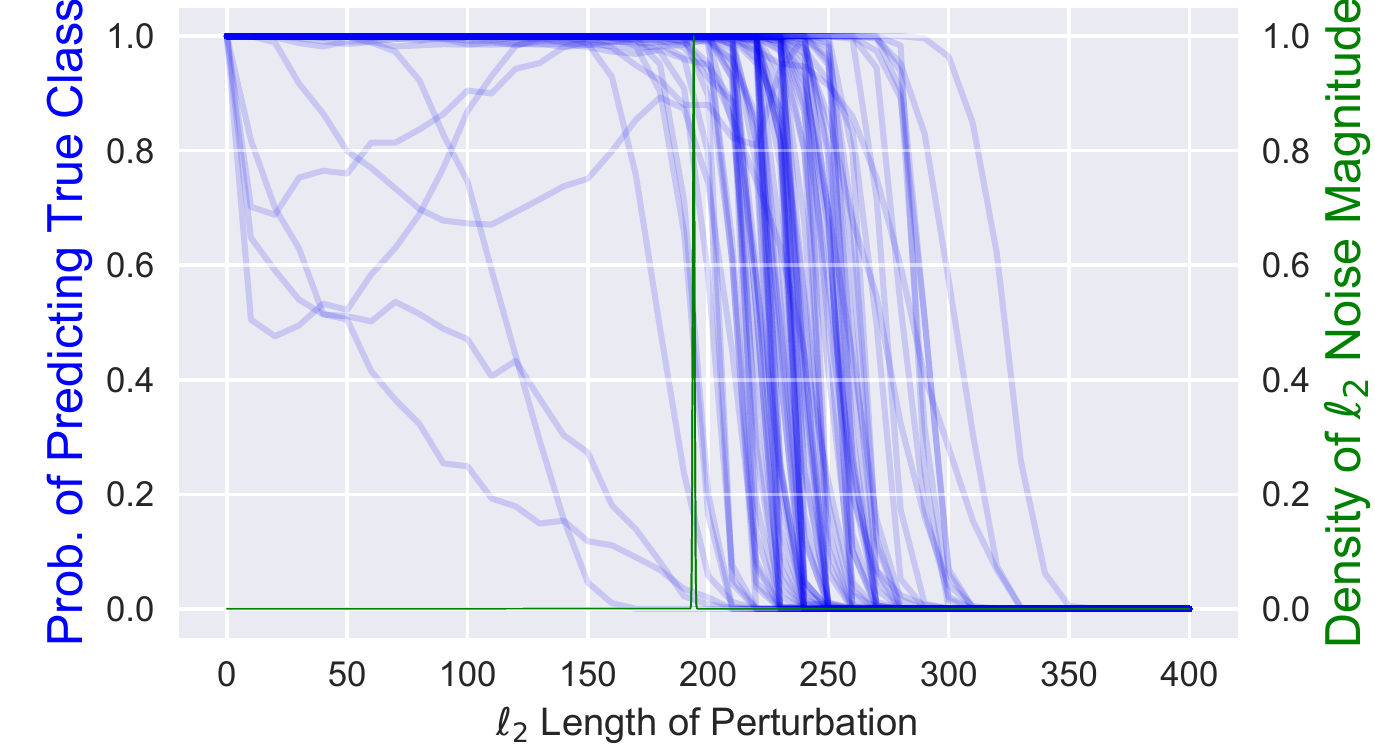}
            \vspace{-1em}
            \caption{\small 
            \textcolor{blue}{Blue curves}: Probability of true-prediction w.r.t. $\ell_2$ length of perturbations for a base classifier from \cite{salman2019provably} on ImageNet.
            Each line corresponds to one of $100$ uniformly drawn samples from test set~(detailed setup in \Cref{newadx:sub:study-concentration}).
            \textcolor{darkgreen}{Green curve}: Normalized density of $\ell_2$ noise magnitude for ImageNet standard Gaussian $\gN(\sigma)$ with $\sigma = 0.5$, which highly concentrates on $\sigma\sqrt{d}$. 
            \textit{Thus, for constant $\Pcon$, ($\sigma, \Pcon$)-concentration can be satisfied for a significant portion of input samples}.}
            \label{fig:smooth-model-landscape-and-magnitude-density}
            \vspace{1.0em}
        \end{figure}

\section{Illustration of Unchanged \texorpdfstring{$P_A$}{Pa} with Increasing \texorpdfstring{$d$}{d}}

    \label{newadx:unchanged-pa}
    
    In the first remark of \Cref{thm:concentration-sqrt-d}, we mention that $P_A = f^\gP(\vx_0)_{y_0}$ does not grow simultaneously along with the increase of input dimension $d$.
    In \Cref{fig:average-p-a}, to illustrate this phenomenon, we plot $P_A$ histograms for $1,000$ test samples from TinyImageNet and ImageNet.
    Note that TinyImageNet images are downscaled ImageNet images, so the data distribution only differs in the input dimension $d$.
    As we can observe, though $d$ varies, the $P_A$ distribution is highly similar, so $P_A$ is roughly unchanged along with the increase of input dimension $d$.
    
    \begin{figure}[!h]
        \centering
        \includegraphics[width=0.9\linewidth]{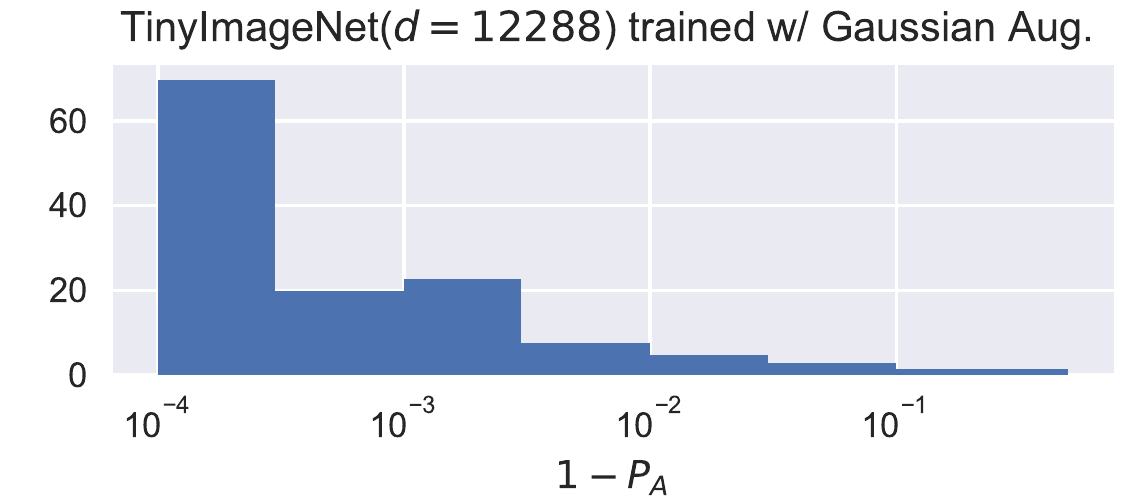}\\
        \includegraphics[width=0.9\linewidth]{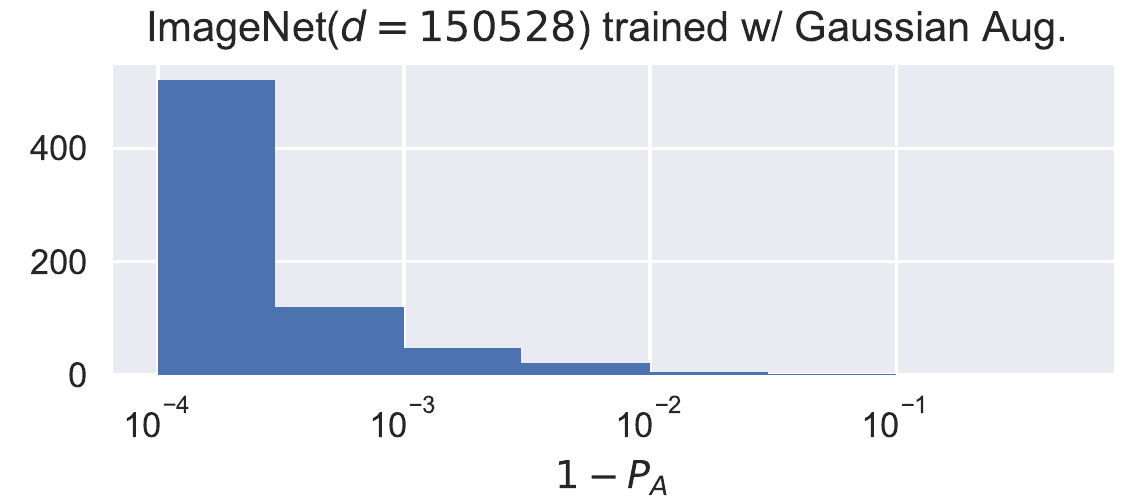} \\
        \caption{$P_A$ histograms for models trained on TinyImageNet~(left) and ImageNet~(right) with same $\sigma=0.50$.}
        \label{fig:average-p-a}
    \end{figure}
    
\section{\shortApproach under Relaxed Concentration Assumption}

    \label{newadx:dsrs-relaxed-concentration}
    
    In main text~(\Cref{sec:theory}), we generalize the concentration assumption by replacing the holding probability $1$ in \Cref{eq:concentration} by probability considering sampling confidence.
    In this appendix, we replace the holding probability in \Cref{eq:concentration} by $\exp(-d^\alpha)$ for $\alpha\in \{0.1, 0.2, 0.3, 0.4, 0,5\}$.
    
    With this relaxation,  we conduct numerical simulations using the same settings as in the main text, and the corresponding results are shown in \Cref{fig:simulation-a}.
    Note that some solid curves terminate when $d$ is large, which is due to the limitation of floating-point precision in numerical simulations, and we use dashed lines of the same color to plot the projected radius when $d$ is large.
    
    \begin{remarkbox}
        When the concentration property holds with probability $\exp(-d^\alpha)$~($0 < \alpha \le 0.5$) other than $1$, from \Cref{fig:simulation-a}, we observe that $\rNP d^{\alpha / 1.18}$ predicts the certified radius of \shortApproach well where $\rNP$ is Neyman-Pearson certified radius.
        Therefore, although the $\sqrt d$ growth rate of $\ell_2$ certified radius does not hold, the radius still increases along with the dimension $d$.
        Interestingly, along with the increase of dimension $d$, the vanishing probability $\exp(-d^\alpha)$ still implies the increasing volume of adversarial examples, and smoothed classifier is still certifiably robust with increasing radius reflected by \shortApproach despite the increasing adversarial volume.
    \end{remarkbox}

    \begin{figure}[t]
        \centering
         \includegraphics[width=\linewidth]{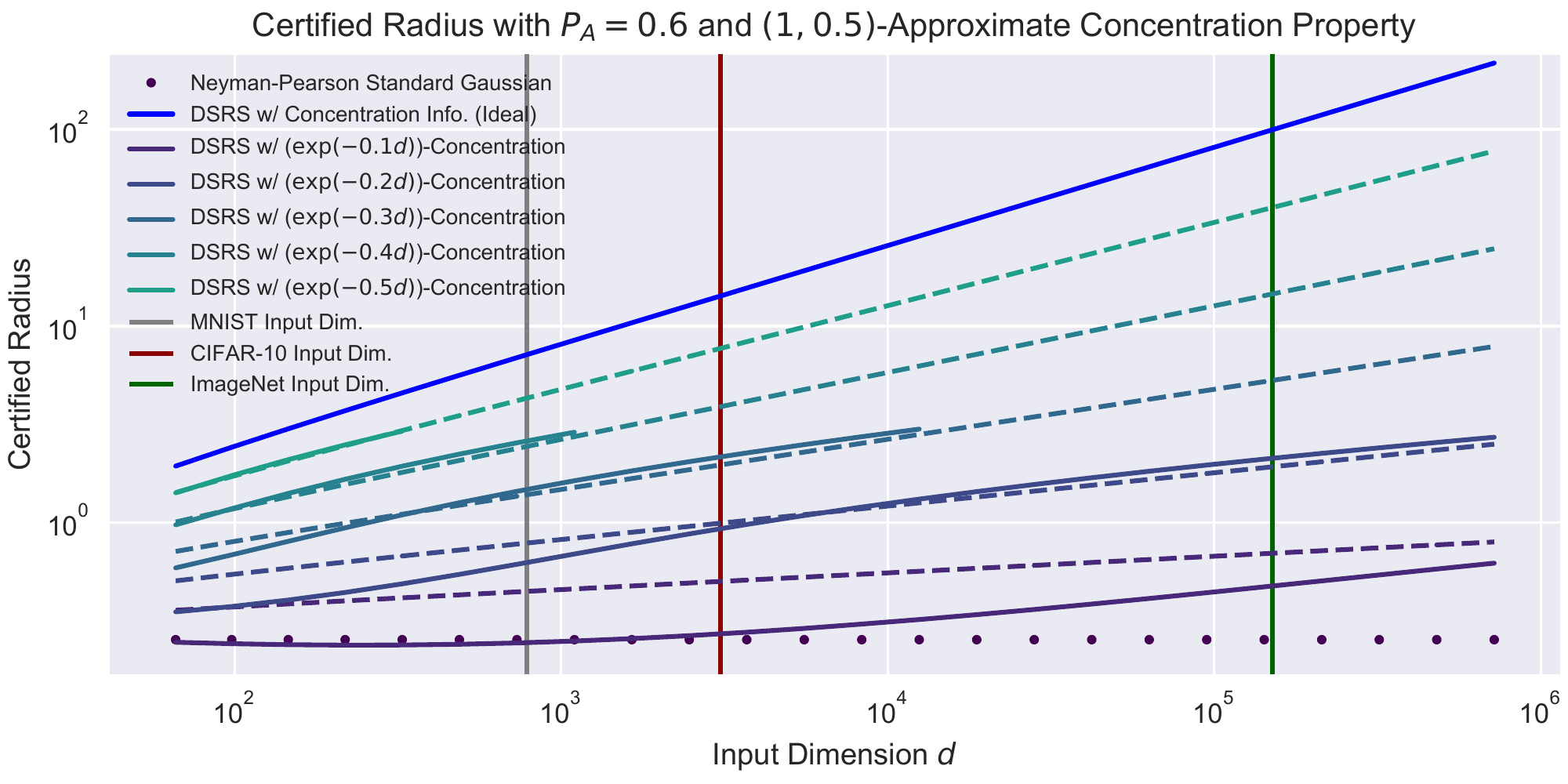}
         \vspace{-6mm}
         \caption{\small Tendency of \shortApproach certified robust radius with different input dimensions $d$ under relaxed concentration assumption:
         when holding probability in \Cref{eq:concentration} is $\exp(-d^\alpha)$ with $\alpha$ from $0.1$ to $0.5$;.
        \textcolor{blue}{Blue line}: \shortApproach when holding probability in \Cref{eq:concentration} is $1$. 
         Dotted line: Neyman-Pearson certification.
         Other solid lines: \shortApproach when holding probability in \Cref{eq:concentration} is $\exp(-d^\alpha)$.
         Other dashed lines: \shortApproach projected radius by $\rDSRS^{\text{proj}} = \rNP d^{\alpha / 1.18}$.
         $\alpha \in \{0.1,0.2,0.3,0.4,0.5\}$.
         Both $x$- and $y$-axes are logarithmic.}
         \label{fig:simulation-a}
    \end{figure}

\section{Omitted Details of \shortApproach Computational Method}

    \subsection{Algorithm Description}
        \label{newadx:sub:dsrs-core-algorithm}
        
        \begin{center}
        \begin{minipage}{0.9\linewidth}
        \begin{algorithm}[H]
            \setcounter{AlgoLine}{0}
            \SetCommentSty{footnotesize}
            \footnotesize\SetAlgoLined
                \KwData{Distributions $\gP$ and $\gQ$; $\vdelta$; $[\underline{P_A},\,\overline{P_A}]$ and $[\underline{Q_A},\,\overline{Q_A}]$}
                \KwResult{$P_A$ and $Q_A$ satisfying \Cref{eq:preprocess-q-a-q-a}}
                Compute $\underline{q} \gets \argmin_y \tC_\vdelta(\underline{P_A},\,y)$\;
                
                \uIf{$\underline{q} > \underline{Q_A}$}
                {\Return{$(\underline{P_A},\,\min\{\underline{q},\,\overline{Q_A}\})$}}
                \uElse{
                    Compute $\underline{p} \gets \argmin_x \tC_\vdelta(x,\,\underline{Q_A})$\;
                    
                    \Return{$(\max\{\min\{\underline{p},\,\overline{P_A}\}, \underline{P_A}\},\,\underline{Q_A})$}\;
                }
            \caption{Determining $P_A$ and $Q_A$ from confidence intervals~(see \S\ref{subsec:preprocessing}).}
            \label{alg:determine-p-a-q-a}
        \end{algorithm}
        \end{minipage}
        \end{center}
        
        \Cref{alg:determine-p-a-q-a} is a subroutine~(Line $7$) of \Cref{alg:DSRS-pipeline}.
        Note that Lines $1$ and $5$ of \Cref{alg:determine-p-a-q-a} solve the constrained optimization with only one constraint~(either one of \Cref{eq:primal-pa-qa}), reducing to the well-studied and solvable Neyman-Pearson-based certification.
        
        Note that we do not need to evaluate any value of $\tC_\vdelta$ in \Cref{alg:determine-p-a-q-a}.
        Although $\underline{q}$ and $\underline{p}$ in the algorithm are ``$\argmin_{x \text{ or } y}$'' over $\tC_\vdelta$, the free choices of $x$ or $y$ leave $\tC_\vdelta$'s constrained optimization with only one constraint and then $\underline{q}$ and $\underline{p}$ can be solved by Neyman-Pearson instead of evaluating $\tC_\vdelta$ directly.

        \begin{algorithm}[H]
            \footnotesize\setcounter{AlgoLine}{0}
            \SetCommentSty{footnotesize}
            \footnotesize
                \KwData{clean input $\vx_0$, base classifier $F_0$; distributions $\gP$ and $\gQ$; norm type $p$; confidence level $\alpha$; numerical integration error bound $\Delta$}
                \KwResult{Certified radius $r$}
                Query prediction $y_0 \gets \wF_0^\gP(\vx_0)$\;
                
                Sample and estimate the intervals of smoothed confidence $[\underline{P_A}, \overline{P_A}]$ under $\gP$ and $[\underline{Q_A}, \overline{Q_A}]$ under $\gQ$ with confidence $(1-\alpha)$ following \citep{cohen2019certified}\;
                
                Initialize: $r_l \gets 0, r_u \gets r_{\max}$\;
                
                \While(\tcc*[f]{Binary search on radius $r$}){$r_u - r_l > \mathsf{eps}$}{
                    $r_m \gets (r_l + r_u) / 2$\;
                    
                    $\vdelta \gets (r_m,\,0,\,\dots,\,0)^\T$
                    \tcc*[r]{for $\ell_2$ certification with $\ell_2$ symmetric $\gP$ and $\gQ$; for $\ell_\infty$ or $\ell_1$, can be adjusted following \citep{zhang2020black}}
                    Determine $P_A \in [\underline{P_A}, \overline{P_A}]$ and $Q_A \in [\underline{Q_A}, \overline{Q_A}]$
                    \tcc*[r]{See \Cref{subsec:preprocessing} and \Cref{alg:determine-p-a-q-a}}
                    $(\lambda_1,\,\lambda_2) \gets$ \textsc{DualBinarySearch}($P_A,\,Q_A$)
                    \tcc*[r]{See \Cref{subsec:joint-binary-search} and \Cref{alg:dual-binary-search}}
                    $v \gets R(\lambda_1,\,\lambda_2) - \Delta$
                    \tcc*[r]{Using \Cref{thm:concrete-equations}}
                    \eIf{$v> 0.5$}{
                        $r_l\gets r_m$
                    }{
                        $r_u\gets r_m$\;
                    }
                }
                \Return{$r_l$}\;
            \caption{\shortApproach computational method.}
            \label{alg:DSRS-pipeline}
        \end{algorithm}

        \Cref{alg:DSRS-pipeline} is the pseudocode of the whole \shortApproach computational method as introduced in \Cref{sec:dsrs}.
        
        \begin{center}
        \begin{minipage}{.9\linewidth}
        \begin{algorithm}[H]
            \SetCommentSty{footnotesize}
            \footnotesize\setcounter{AlgoLine}{0}
            \SetAlgoLined
                \KwData{Query access to $P(\cdot,\,\cdot)$ and $Q(\cdot,\,\cdot)$; $P_A$; $Q_A$; $\nu$; precision parameter $\epsilon$; numerical integration error bound $\Delta$}
                \KwResult{$\lambda_1$ and $\lambda_2$ satisfying constraints $P(\lambda_1,\lambda_2) = P_A, Q(\lambda_1,\lambda_2) = Q_A$~(see \Cref{eq:dual-2})}
                
                $a^L \gets 0$, $a^U \gets M$
                \tcc*[r]{search for $a = \lambda_1 + \nu\lambda_2$, $M$ is a large positive number}
                
                \While{$a^U - a^L > \epsilon$}{
                    $a^m \gets (a^L + a^U) / 2$\;
                    
                    \uIf{$Q(a^m,0) < Q_A$}
                    {
                        $a^L \gets a^m$
                    }
                    \uElse
                    {
                        $a^U \gets a^m$
                    }
                }
                
                \tcc{Following while-loop enlarges $a^L$ and $a^U$ until $[a^L, a^U]$ covers $a^*$ such that $Q(a^*,0) = Q_A$ under numerical integration error}
                \While{$(Q(a^L,0) + \Delta > Q_A)$ \text{ or } $(Q(a^U,0) - \Delta < Q_A)$}{
                    $t \gets a^U - a^L$\;
                    
                    $a^L \gets a^L - t/2$\;
                    
                    $a^U \gets a^U + t/2$\;
                } 
                
                $\lambda_1^L \gets 0$, $\lambda_1^U \gets M$
                \tcc*[r]{search for $\lambda_1$, $M$ is a large positive number}
                
                \While{$\lambda_1^U - \lambda_1^L > \epsilon$}{
                    $\lambda_1^m \gets (\lambda_1^L + \lambda_1^U) / 2$\;
                    
                    \uIf{$h(\lambda_1^m) - \Delta < P_A - Q_A / \nu$}
                    {
                        $\lambda_1^L \gets \lambda_1^m$
                    }
                    \uElse
                    {
                        $\lambda_1^U \gets \lambda_1^m$
                    }
                }

                \tcc{Following while-loop enlarges $\lambda_1^L$ and $\lambda_1^U$ until $[\lambda_1^L, \lambda_1^U]$ covers $\lambda_1^*$ such that $h(\lambda_1^*) = P_A-Q_A/\nu$ under numerical integration error}
                \While{$(h(\lambda_1^L) + \Delta > P_A - Q_A/\nu)$ \text{ or } $(h(\lambda_1^U) - \Delta < P_A - Q_A/\nu)$}{
                    $t \gets \lambda_1^U - \lambda_1^L$\;
                    
                    $\lambda_1^L \gets \lambda_1^L - t/2$\;
                    
                    $\lambda_1^U \gets \lambda_1^U + t/2$\;
                } 
                
                \Return{$(\lambda_1^L,\,(a^L - \lambda_1^U)/\nu)$} \tcc*[r]{for soundness, choose the left endpoint of $\lambda_1$ and $\lambda_2$ range}
            \caption{\textsc{DualBinarySearch} for $\lambda_1$ and $\lambda_2$.}
            \label{alg:dual-binary-search}
        \end{algorithm}
        \end{minipage}
        \end{center}
        
        \Cref{alg:dual-binary-search} is the dual variable search algorithm described in \Cref{subsec:joint-binary-search}.
        From Line $1$ to $8$, we conduct binary search for quantity $\lambda_1 + \nu\lambda_2$;
        from Line $14$ to $21$, we conduct binary search for quantity $\lambda_1$.
        Notice that our binary search interval is initialized to be the non-negative interval.
        This is because $Q(a^m,0) =0$ and $h(\lambda_1^m) = 0$ if $a^m$ and $\lambda_1^m$ are non-positive observed from \Cref{thm:concrete-equations}.
        
    \subsection{Guaranteeing Numerical Soundness}
        \label{newadxsec:guaranteeing-soundness}
        
        In \shortApproach computational method, to compute a practically sound robustness guarantee, we take the binary search error and numerical integration error into consideration.
        
        Specifically, in \Cref{alg:dual-binary-search}, we return a pair $(\lambda_1',\lambda_2')$ whose $R(\lambda_1',\lambda_2')$ lower bounds $R(\lambda_1, \lambda_2)$ where $(\lambda_1,\lambda_2)$ is the precise feasible pair.
        We achieve so by returning $(\lambda_1^L,\,(a^L - \lambda_1^U)/\nu)$.
        Specifically,
        as we can see $a^L$ is an underestimation of actual $(\lambda_1 + \nu\lambda_2)$ in the presence of binary search error and numerical integration error.
        We bound the numerical integration error by setting absolute error bound $\Delta = 1.5\times 10^{-8}$ in \texttt{scipy.integrad.quad} function.
        Then, $\lambda_1^L$ and $\lambda_1^U$ are underestimation and overestimation of the actual $\lambda_1$ in the presence of errors respectively.
        As a result, $(a^L - \lambda_1^U)/\nu$ is an underestimation of $\lambda_2$.
        Therefore, both $\lambda_1$ and $\lambda_2$ are underestimated and by the monotonicity of $R(\cdot,\cdot)$, the actual $R(\lambda_1,\lambda_2)$ would be underestimated to guarantee the soundness.
        
        Then, since the evaluation of $R$ involves numerical integration, we compare the lower bound of $R$: $R(\lambda_1,\lambda_2) - \Delta$ in \Cref{alg:DSRS-pipeline}~(line 9) with $0.5$ to determine whether current robustness radius can be certified or not.
    
    \subsection{Complexity and Efficiency Analysis}
    
        \label{newadx:complexity-efficiency}

            In this appendix, we briefly analyze the computational complexity of \shortApproach computational method introduced in \Cref{sec:dsrs}.
            Suppose the binary search precision is $\epsilon$, and each numerical integration costs $C$ time.
            First, the search of certified robust radius costs $O(\log (\sqrt{d}/\epsilon))$.
            For each searched radius, we first determine $P_A$ and $Q_A$ by running Neyman-Pearson-based certification, which has cost $O(\log(1/\epsilon)C)$.
            Then, solving dual variables takes two binary search rounds, which has cost $O(\log(1/\epsilon)C)$.
            The final one-time integration of $R(\lambda_1,\lambda_2)$ has cost $O(C)$.
            Thus, overall time complexity is $O(\log (\sqrt{d}/\epsilon) \log(1/\epsilon) C)$, which is the same as classical Neyman-Pearson certification and grows slowly~(in logarithmic factor) w.r.t. input dimension $d$.
            
            In practice, the certification time is on average $\SI{5}{s}$ to $\SI{10}{s}$ per sample across different datasets.
            For example, with $\sigma=0.50$ as the smoothing variance parameter, the certification time, as an overhead over Neyman-Pearson-based certification, is \SI{10.53}{s}, \SI{4.53}{s}, and \SI{3.21}{s} on MNIST, CIFAR-10, and ImageNet respectively.
            This overhead is almost negligible compared with the sampling time for estimating $P_A$ and $Q_A$ which is around $\SI{200}{s}$ on ImageNet and is the shared cost of all randomized smoothing certification methods.
            In summary, compared with standard Neyman-Pearson-based certification, the running time of \shortApproach is roughly the same.

\section{Extensions and Proofs in \texorpdfstring{\Cref{sec:theory}}{Tightness Analysis}}

    \label{newadx:pro-tightness-analysis}
    
    In this appendix, we provide formal proofs and theoretical extensions for the results in \Cref{sec:theory}.
    
    \subsection{Proof of \texorpdfstring{\lowercase{\Cref{thm:suffices-binary-classification}}}{Theorem 1}}
    
        \label{newadx:sub:proof-suffices-binary-classification}
        
        \begin{proof}[Proof of \Cref{thm:suffices-binary-classification}]
            We let $D_c$ denote the decision region of $F_0$ for class $c$, i.e., $D_c := \{\vx:\,F_0(\vx) = c\}$.
            Since $\gQ$ is supported on the decision region shifted by $\vx_0$, $f_0^\gQ(\vx_0)_c = 1$.
            Thus, from $f_0^\gQ(\vx_0)_c$, we know $(\supp(\gQ) + \vx_0) \setminus S \subseteq D_c$, where $S$ is some set with zero measure under $\gQ + \vx_0$.
            Since $0<q(\vx)/p(\vx)<+\infty$, $S$ also has zero measure under $\gP + \vx_0$.
            On the other hand, by $0 < q(\vx)/p(\vx) < +\infty$, we can determine the probability mass of $\supp(\gQ)$ on $\gP$, i.e., $\Pr_{\vepsilon\sim\gP} [\vepsilon \in \supp(\gQ)]$.
            Then, we observe that 
            \begin{align*}
                &  
                f_0^\gP(\vx_0)_c =
                \Pr_{\vepsilon\sim\gP} [F_0(\vx_0 + \vepsilon) = c] \\
                = & \Pr_{\vepsilon\sim\gP} [\vepsilon \in \sR^d \setminus \supp(\gQ)] 
                \\
                & \hspace{2em} 
                \cdot \Pr_{\vepsilon\sim\gP} [F_0(\vx_0+\vepsilon)=c \,|\, \vepsilon\in \sR^d\setminus \supp(\gQ)] \\
                & \vspace{2em} + \Pr_{\vepsilon\sim\gP} [\vepsilon \in \supp(\gQ)] \cdot \Pr_{\vepsilon\sim\gP} [F_0(\vx_0 + \vepsilon)=c \,|\, \vepsilon \in \supp(\gQ)] \\
                = & \Pr_{\vepsilon\sim\gP} [\vepsilon \in \sR^d \setminus \supp(\gQ)] 
                \\
                & \hspace{2em} 
                \cdot \Pr_{\vepsilon\sim\gP} [F_0(\vx_0+\vepsilon)=c \,|\, \vepsilon\in \sR^d\setminus \supp(\gQ)] \\
                &  + \Pr_{\vepsilon\sim\gP} [\vepsilon \in \supp(\gQ)].
            \end{align*}
            By the definition of $\supp(\gQ)$, we observe that $\Pr_{\vepsilon\sim\gP} [F_0(\vx_0+\vepsilon)=c \,|\, \vepsilon\in \sR^d\setminus \supp(\gQ)] = 0$.
            As a result, we will find that $f_0^\gP(\vx_0)_c = \Pr_{\vepsilon\sim\gP} [\vepsilon \in \supp(\gQ)]$.
            Then the \shortApproach certification method can know $\left(\left(\sR^d \setminus \supp(\gQ)\right) + \vx_0\right) \cap D_c$ has zero measure under $\gP + \vx_0$, i.e.,
            In summary, the certification method can determine that $\supp(\gQ) + \vx_0$ differs from $D_c$ on some set $\Delta$ with zero measure under $\gP+\vx_0$.
            Because $\gP$ has positive density everywhere, $\Delta$ also has zero measure under $\gP + \vx_0 + \vdelta$ for arbitrary $\vdelta\in\sR^d$.
            Thus, for arbitrary $\vdelta\in\sR^d$, the certification method can compute out
            \begin{align}
                f_0^\gP(\vx_0+\vdelta)_c & = \Pr_{\vepsilon\sim\gP} [F_0(\vx_0+\vdelta+\vepsilon)=c] \\ & \hspace{-2em} = \Pr_{\vepsilon\sim\gP+\vdelta} [\vx_0 + \vepsilon\in D_c] = \Pr_{\vepsilon\sim\gP+\vdelta}[\supp(\gQ)]. \nonumber
            \end{align}
            Under the binary classification setting, it implies that for any $\vdelta\in\sR^d$, the $f_0^\gP(\vx_0+\vdelta)$ can be uniquely determined by \shortApproach certification method.
            Since the smoothed classifier's decision at any $\vx_0+\vdelta$ is uniquely determiend by $f_0^\gP(\vx_0+\vdelta)$~(see \Cref{eq:F}), the certification method can exactly know $\wF_0^\gP(\vx+\vdelta)$ for any $\vdelta$ and thus determine tightest possible certified robust radius $\rtight$.
        \end{proof}
    
    \subsection{Extending \texorpdfstring{\lowercase{\Cref{thm:suffices-binary-classification}}}{Theorem 1} to Multiclass Setting}
    
        \label{newadx:sub:cor-suffices-multiclass-classification}
        
        For the multiclass setting, we define a variant of \shortApproach as follows.
        
        \begin{definition}[$\rDSRSmulti$]
            \label{def:r-dsrs-multi}
            Given $P_A \in [0,\,1]$ and $Q_A^\multi \in \sR^{C-1}$, 
            \vspace{-2mm}
            \begin{equation}
                \small \label{eq:r-dsrs-multi}
                \begin{aligned}
                    & \rDSRSmulti := \max r \quad \mathrm{s.t.} \\
                    & \forall F: \sR \to [C], f^\gP(\vx_0)_{y_0} = P_A, \\
                    & f^{\gQ_c}(\vx_0)_{c} = (Q_A^\multi)_c, c\in [C-1] \\
                    \Rightarrow & \forall \vx, \|\vx - \vx_0\|_p < r, \wF^\gP(\vx) = y_0.
                \end{aligned}
            \end{equation}
            \vspace{-6mm}
        \end{definition}
        
        In the above definition, $\rDSRSmulti$ is the tightest possible certified radius with prediction probability $Q_A^\multi$, where each component of $Q_A^\multi$, namely $(Q_A^\multi)_c$, corresponds to the prediction probability for label $c$ under additional smoothing distribution $\gQ_c$.
        Note that there are $(C-1)$ additional smoothing distributions $\{Q_c\}_{c=1}^{C-1}$ in this generalization for multiclass setting.
        
        With this \shortApproach generation, the following corollary extends the tightness analysis in \Cref{thm:suffices-binary-classification} from binary to the multiclass setting.
        
        \begin{corollary}
            \label{cor:suffices-multiclass-classification}
            Suppose the original smoothing distribution $\gP$ has positive density everywhere, i.e., $p(\cdot) > 0$.
            For multiclass classification with base classifier $F_0$, at point $\vx_0\in\sR^d$, for each class $c \in [C-1]$, let $\gQ_c$ be a distribution that satisfies: (1)~its support is the decision region of $c$ shifted by $\vx_0$: $\supp(\gQ_c) = \{\vx - \vx_0: F_0(\vx) = c\}$; (2)~for any $\vx\in\supp(\gQ)$, $0 < q_c(\vx)/p(\vx) < +\infty$.
            Then, plugging $P_A = f_0^\gP(\vx_0)_c$ and $Q_A^\multi$ where $(Q_A^\multi)_c = f_0^{\gQ_c}(\vx_0)_c$ for $c\in [C-1]$ into \Cref{def:r-dsrs-multi}, we have $\rDSRSmulti = \rtight$ under any $\ell_p$~($p \ge 1$).
            
        \end{corollary}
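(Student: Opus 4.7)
The plan is to reduce the multiclass case to $(C-1)$ parallel applications of the argument already used for \Cref{thm:suffices-binary-classification}, and then close the argument by observing that the last decision region is determined by complementation.

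First, I would fix an arbitrary $c\in [C-1]$ and apply the reasoning of \Cref{thm:suffices-binary-classification} to the additional distribution $\gQ_c$. Since $\supp(\gQ_c) = \{\vx - \vx_0 : F_0(\vx) = c\}$, sampling $\vepsilon \sim \gQ_c$ always yields $F_0(\vx_0+\vepsilon) = c$, so $(Q_A^\multi)_c = f_0^{\gQ_c}(\vx_0)_c = 1$. The hypothesis $0 < q_c/p < +\infty$ on $\supp(\gQ_c)$ lets one transfer measures between $\gQ_c$ and $\gP$: the \shortApproach certifier can compute $\Pr_{\vepsilon\sim\gP}[\vepsilon\in \supp(\gQ_c)]$, and the observed equality with $P_A$-style information forces $\supp(\gQ_c)+\vx_0$ to agree with the decision region $D_c := \{\vx : F_0(\vx) = c\}$ up to a set $\Delta_c$ of $\gP$-measure zero, exactly as in the binary proof.

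Next, I would combine these $C-1$ conclusions. Because $\{D_1,\dots,D_C\}$ partitions $\sR^d$, the remaining region $D_C = \sR^d \setminus \bigcup_{c=1}^{C-1} D_c$ is also determined up to the zero-$\gP$-measure set $\Delta_C := \bigcup_{c=1}^{C-1}\Delta_c$. Since $p(\cdot)>0$ everywhere, each $\Delta_c$ also has zero measure under $\gP + \vdelta$ for every $\vdelta\in\sR^d$. Consequently, for any perturbation $\vdelta$ and any class $c\in [C]$, the certifier can compute
\[
    f_0^\gP(\vx_0+\vdelta)_c \;=\; \Pr_{\vepsilon\sim\gP+\vdelta}[\vx_0+\vepsilon \in D_c] \;=\; \Pr_{\vepsilon\sim\gP+\vdelta}[\vx_0+\vepsilon \in \supp(\gQ_c)\,\text{or its complement for }c=C]
\]
exactly, and the smoothed prediction $\wF_0^\gP(\vx_0+\vdelta) = \argmax_{c} f_0^\gP(\vx_0+\vdelta)_c$ is therefore uniquely determined. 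Hence the certifier sees exactly the same smoothed classifier as $\wF_0^\gP$ and can certify the largest ball on which $\wF_0^\gP$ is constant at $y_0$, namely $\rtight$; on the other hand $\rDSRSmulti \le \rtight$ by definition, which gives the claimed equality.

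The only non-routine step is the combination step: one must verify that the collection of $(C-1)$ zero-$\gP$-measure discrepancies $\Delta_c$ still yields a zero-$\gP$-measure discrepancy after union, and that this remains zero measure under arbitrary translation. Both are immediate from countable subadditivity and the strict positivity of $p(\cdot)$, so no genuinely new difficulty arises beyond the binary proof. I therefore do not expect any substantive obstacle; the work is purely bookkeeping on top of \Cref{thm:suffices-binary-classification}.
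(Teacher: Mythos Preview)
Your proposal is correct and follows essentially the same approach as the paper: apply the binary argument of \Cref{thm:suffices-binary-classification} in parallel to each $\gQ_c$ for $c\in[C-1]$ to pin down $D_c$ up to $\gP$-null sets, then recover the last class by complementation (the paper phrases this equivalently via the simplex constraint $f^\gP(\vx_0+\vdelta)\in\vDelta^C$). Your explicit remark that finite unions of $\gP$-null sets remain null under arbitrary translates is exactly the bookkeeping the paper leaves implicit.
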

        
        \begin{proof}[Proof of \Cref{cor:suffices-multiclass-classification}]
            Similar as the proof of \Cref{thm:suffices-binary-classification}, the certification method can observe that for any $c \in [C-1]$, $f_0^{\gQ_c}(\vx_0)_c = 1$.
            Thus, the method knows $\supp(\gQ_c) + \vx_0 \approx D_c$ for arbitrary $c \in [C-1]$.
            Here, the ``$\approx$'' means that the difference between the two sets has zero measure under $\gP+\vx_0$.
            Thus, for arbitrary $\vdelta\in\sR^d$ the certification method can precisely compute out $f^\gP(\vx_0)_c$ for any $c \in [C-1]$.
            Since $f^\gP(\vx_0) \in \Delta^C$, we also know $f^\gP(\vx_0)_C$ and the smoothed classifier's prediction on $\vx_0+\vdelta$ can be uniquely determined.
            Then, following the same argument in \Cref{thm:suffices-binary-classification}'s proof, we can determine the tightest possible certified robust radius $\rDSRSmulti$.
        \end{proof}
        
        \begin{remark}
            To achieve the tightest possible certified radius $\rtight$, for binary classification, we only need one extra scalar as the additional information~($Q_A$), while for multiclass classification, we need $(C-1)$ extra scalars as the additional information~($Q_A^\multi \in \sR^{C-1}$).
            Following the convention as discussed in \Cref{newadx:detail-n-p-cert}, we are interested in tight certification under the binary classification for sampling efficiency concerns.
            Therefore, we focus on using only one extra scalar~($Q_A$) to additional information in \shortApproach.
            In both \Cref{thm:suffices-binary-classification} and \Cref{cor:suffices-multiclass-classification}, we only need finite quantities to achieve tight certification for \emph{any} smoothed classifier.
            In contrast, other existing work requires infinite quantities to achieve such optimal tightness~\citep[Section 3.1]{mohapatra2020higherorder}.
        \end{remark}
    
    \subsection{Proof of \texorpdfstring{\lowercase{\Cref{thm:concentration-sqrt-d}}}{Theorem 2}}
    
        \label{newadx:sub:concentration-sqrt-d}
        
        The proof of \Cref{thm:concentration-sqrt-d} is a bit complicated, which relies on several propositions and lemmas along with theoretical results in \Cref{sec:dsrs}.
        At high level, 
        based on the standard Gaussian distribution's property~(\Cref{prop:thm-2-1}), we find $Q_A = 1$ under concentration property~(\Cref{lem:thm-2-2}).
        With $Q_A = 1$, we derive a lower bound of $\rDSRS$ in \Cref{lem:thm-2-3}.
        We then use: (1)~the concentration of beta distribution $\Beta(\frac{d-1}{2},\frac{d-1}{2})$~(see \Cref{lem:thm-2-4}) for large $d$;
        (2)~the relative concentration of gamma $\Gamma(d/2,1)$ distribution around mean for large $d$~(see \Cref{prop:thm-2-5} and resulting \Cref{fact:thm-2-7});
        and (3)~the misalignment of gamma distribution $\Gamma(d/2-k,1)$'s mean and median for small $(d/2-k)$~(see \Cref{prop:thm-2-6}) to lower bound the quantity in \Cref{lem:thm-2-3} and show it is large or equal to $0.5$.
        Then, using the conclusion in \Cref{sec:dsrs} we conclude that $\rDSRS \ge 0.02\sigma\sqrt{d}$.
        
        \begin{adxproposition}
            \label{prop:thm-2-1}
            If random vector $\vepsilon \in \sR^d$ follows standard Gaussian distribution $\gN(\sigma)$, 
            then 
            \begin{equation}
                \Pr [\|\vepsilon\|_2 \le T] =  \GammaCDF_{d/2}\left(\frac{T^2}{2\sigma^2}\right),
            \end{equation}
            where $\GammaCDF_{d/2}$ is the CDF of gamma distribution $\Gamma(d/2,1)$.
        \end{adxproposition}
        
        \begin{proof}[Proof of \Cref{prop:thm-2-1}]
            According to \citep[Eqn. 5.19.4]{lozier2003nist}, he volume of a $d$-dimensional ball, i.e., $d$-ball, with radius $r$ is 
            $
                V_d(r) = \frac{\pi^{d/2}}{\Gamma(\frac{d}{2}+1)} r^d
            $.
            Thus,
            \begin{equation}
                \Vol(\{\vepsilon: \|\vepsilon\|_2=r\}) = V_d(r)' = \dfrac{d\pi^{d/2}}{\Gamma(\frac{d}{2}+1)} r^{d-1}.
            \end{equation}
        
            \begin{align*}
                & \Pr[\|\vepsilon\|_2 \le T] \\
                = & \int_0^T
                \dfrac{1}{(2\pi\sigma^2)^{d/2}} \cdot \exp\left(-\frac{r^2}{2\sigma^2}\right) \cdot \mathrm{Vol}(\{\vepsilon: \|\vepsilon\|_2=r\}) \dif r \\
                = & \int_0^T 
                \dfrac{1}{(2\pi\sigma^2)^{d/2}} \cdot \exp\left(-\frac{r^2}{2\sigma^2}\right) \cdot
                \dfrac{d\pi^{d/2}}{\Gamma(\frac d 2 + 1)} r^{d-1} \dif r \\
                = & \int_0^{T^2} \dfrac{1}{(2\sigma^2)^{d/2} \Gamma(\frac d 2)} \exp\left(-\frac{r}{2\sigma^2}\right) r^{d/2-1} \dif r \\
                = & \dfrac{1}{\Gamma(\frac d 2)} \int_0^{\frac{T^2}{2\sigma^2}} \exp(-r) r^{d/2-1} \dif r = \GammaCDF_{d/2} \left( \frac{T^2}{2\sigma^2} \right).
            \end{align*}
        \end{proof}
        
        With \Cref{prop:thm-2-1}, now we can show that $Q_A = 1$ under the condition of \Cref{thm:concentration-sqrt-d} as stated in the following lemma.
        
        \begin{lemma}
            \label{lem:thm-2-2}
            Suppose $F_0$ satisfies $(\sigma,\Pcon)$-concentration property at input point $\vx_0 \in \sR^d$,
            with additional smoothing distribution $\gQ = \gNgtrunc(k,T,\sigma)$ where $T^2 = 2\sigma^2\GammaCDF_{d/2}^{-1}(\Pcon)$ and $d/2 - 15 \le k < d/2$, we have
            \begin{equation}
                Q_A = \Pr_{\vepsilon\sim\gQ} [F_0(\vx_0+\vepsilon)=y_0] = 1.
            \end{equation}
        \end{lemma}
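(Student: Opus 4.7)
}
The plan is to reduce the claim to a measure-theoretic statement: the ``bad set'' of noises in the truncation ball that cause misclassification has Lebesgue measure zero, and $\gQ$ is absolutely continuous with respect to Lebesgue measure on that ball. First, I would verify that the truncation radius $T$ in the lemma matches the threshold used in the $(\sigma,\Pcon)$-concentration definition (\Cref{def:concentration}). Applying \Cref{prop:thm-2-1} to $\vepsilon \sim \gN(\sigma)$ gives $\Pr[\|\vepsilon\|_2 \le T] = \GammaCDF_{d/2}(T^2/(2\sigma^2))$, so choosing $T^2 = 2\sigma^2\GammaCDF_{d/2}^{-1}(\Pcon)$ is precisely the $T$ satisfying $\Pr_{\vepsilon\sim\gN(\sigma)}[\|\vepsilon\|_2\le T] = \Pcon$ required by \Cref{def:concentration}.

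Next, I would invoke the concentration hypothesis at this $T$. Concretely, let
\[
B := \{\vepsilon \in \sR^d : \|\vepsilon\|_2 \le T\}, \qquad A := \{\vepsilon \in B : F_0(\vx_0 + \vepsilon) \ne y_0\}.
\]
The conditional probability in \Cref{eq:concentration} being $1$, together with $\Pr_{\gN(\sigma)}[B] = \Pcon > 0$, implies $\Pr_{\vepsilon\sim\gN(\sigma)}[\vepsilon \in A] = 0$. Since the standard Gaussian density is strictly positive on $\sR^d$, this forces the Lebesgue measure of $A$ to vanish.

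Then I would transfer this zero-measure conclusion to $\gQ = \gNgtrunc(k,T,\sigma)$. The density of $\gQ$ is proportional to $\|\vepsilon\|_2^{-2k}\exp(-\|\vepsilon\|_2^2/(2\sigma'^2))\,\mathbf{1}[\|\vepsilon\|_2\le T]$; the condition $k < d/2$ guarantees integrability near the origin (since $\int_0^\delta r^{d-1-2k}\,\dif r < \infty$ iff $2k < d$), so $\gQ$ is a well-defined probability measure whose density is finite and positive almost everywhere on $B$. In particular, $\gQ$ is absolutely continuous with respect to Lebesgue measure restricted to $B$, and therefore $\gQ(A) = 0$. Since $\supp(\gQ) \subseteq B$, this yields
\[
\Pr_{\vepsilon\sim\gQ}[F_0(\vx_0+\vepsilon) \ne y_0] = \gQ(A) = 0,
\]
which is exactly $Q_A = 1$.

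The main obstacle I anticipate is purely notational rather than mathematical: making sure the $T$ from the lemma statement aligns with the $T$ defined implicitly through $\Pcon$ in \Cref{def:concentration}, and carefully noting that the $\|\vepsilon\|_2^{-2k}$ singularity of $\gQ$ at the origin does not obstruct the zero-measure argument (the singular point is a single point, which is Lebesgue null, and $\gQ$ assigns it zero mass by integrability). The lower bound $d/2 - 15 \le k$ plays no role here; it is only needed later when deriving the $\Omega(\sqrt{d})$ radius in \Cref{thm:concentration-sqrt-d}.
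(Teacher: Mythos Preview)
Your proposal is correct and follows essentially the same route as the paper: identify $T$ with the concentration threshold via \Cref{prop:thm-2-1}, use strict positivity of the Gaussian density to conclude the bad set in $B$ is Lebesgue-null, and then transfer this to $\gQ$ by absolute continuity, handling the origin singularity through the integrability condition $k<d/2$. The paper's proof is organized slightly differently (it writes out the conditional probability chain and explicitly computes $\Pr_{\gQ}[\vepsilon=\mathbf{0}]=0$ via the radial integral $\int_0^r x^{d-2k-1}e^{-x^2/(2\sigma'^2)}\,\dif x\to 0$), but the substance is identical, and your observation that the lower bound $d/2-15\le k$ is irrelevant here matches the paper's usage.
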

        
        \begin{proof}[Proof of \Cref{lem:thm-2-2}]
            According to \Cref{def:concentration}, for $T'$ that satisfies
            \begin{equation}
                \Pr_{\vepsilon\sim\gN(\sigma)} [\|\vepsilon\|_2 \le T'] = \Pcon
                \label{eq:pf-lem-thm-2-2-1}
            \end{equation}
            we have 
            \begin{equation}
                \Pr_{\vepsilon\sim\gN(\sigma)} [F_0(\vx_0 + \vepsilon) = y_0 \,|\, \|\vepsilon\|_2 \le T'] = 1.
                \label{eq:pf-lem-thm-2-2-2}
            \end{equation}
            With \Cref{eq:pf-lem-thm-2-2-1},
            from \Cref{prop:thm-2-1}, we have
            \begin{equation}
                T'^2 = 2\sigma^2 \GammaCDF_{d/2}^{-1}(\Pcon).
            \end{equation}
            Thus, \Cref{eq:pf-lem-thm-2-2-2} implies
            \begin{equation}
                \small
                \Pr_{\vepsilon\sim\gN(\sigma)} [F_0(\vx_0 + \vepsilon) = y_0 \,|\, \|\vepsilon\|_2 \le \sqrt{2\sigma^2\GammaCDF_{d/2}^{-1}(\Pcon)}] = 1.
            \end{equation}
            Notice that $\gN(\sigma)$ has finite and positive density anywhere within $\{\vepsilon: \|\vepsilon\|_2 \le T'\}$.
            Thus, $F_0(\vx_0+\vepsilon)=y_0$ for any $\vepsilon$ with $\|\vepsilon\|_2 \le T'$ unless a zero-measure set.
            
            Now, we consider $\gQ$.
            $\gQ = \gNgtrunc(k,T,\sigma)$ where $T = T'$, and $\gNtrunc$ has finite and positive density anywhere within $\{\vepsilon: \|\vepsilon\|_2 \le T'\} \setminus \{\mathbf{0}\}$.
            Thus,
            \begin{align*}
                Q_A = & \Pr_{\vepsilon\sim\gNgtrunc(k,T,\sigma)} [F_0(\vx_0 + \vepsilon) = y_0] \\
                = & \Pr_{\vepsilon\sim \gNg(k,\sigma)} [F_0(\vx_0+\vepsilon)=y_0 \,|\, \|\vepsilon\|_2 \le T] \\
                \overset{(*)}{=} & \Pr_{\vepsilon\sim \gNg(k,\sigma)}
                [F_0(\vx_0+\vepsilon)=y_0 \,|\, \|\vepsilon\|_2 \le T, \vepsilon \neq \mathbf{0}] \\
                = & 1.
            \end{align*}
            In the above equations, $(*)$ is because
            \begin{align*}
                & \Pr_{\vepsilon\sim\gNg(k,\sigma)} [\vepsilon = \mathbf{0} \,|\, \|\vepsilon\|_2 \le T] \\
                \le & \lim_{r\to 0} \Pr_{\vepsilon\sim\gNg(k,\sigma)} [\|\vepsilon\|_2 \le r \,|\, \|\vepsilon\|_2 \le T] \\
                = & \lim_{r\to 0}
                C \int_0^r
                x^{-2k} \exp\left(-\frac{x^2}{2\sigma'^2}\right) d x^{d-1} \dif x \\
                = & C\lim_{r\to 0} \int_0^r d x^{d-2k-1} \exp\left(-\frac{x^2}{2\sigma'^2}\right) \dif x 
                = 0
            \end{align*}
            where $C$ is a constant, and the last equality is due to $d /2 > k \Rightarrow d - 2k - 1 \ge 0$.
        \end{proof}
        
        With $Q_A = 1$, we can have a lower bound of $\rDSRS$ as stated in the following lemma.
        
        \begin{lemma}
            \label{lem:thm-2-3}
            Under the same condition as in \Cref{lem:thm-2-2},
            we let $\BetaCDF_{\frac{d-1}{2}}$ be the CDF of distributon $\Beta(\frac{d-1}{2},\frac{d-1}{2})$,
            let
            \begin{equation}
                \begin{aligned}
                    & r_0 = \max u \\
                    \mathrm{s.t.} & \E_{t\sim\Gamma(\frac{d}{2}-k)}
                    \BetaCDF_{\frac{d-1}{2}} \left(
                    \dfrac{T^2 - (\sigma' \sqrt{2t} - u)^2}{4u\sigma' \sqrt{2t}} \right) \ge 0.5,
                \end{aligned}
                \label{eq:lem-thm-2-3-main}
            \end{equation}
            and let $\rDSRS$ be the tightest possible certified radius in \shortApproach under $\ell_2$ when smoothing distribution $\gP = \gNg(k,\sigma)$, then 
            \begin{equation}
                \rDSRS \ge r_0.
            \end{equation}
        \end{lemma}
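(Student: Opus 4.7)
The plan is to combine the $Q_A = 1$ conclusion from \Cref{lem:thm-2-2} with a worst-case argument over base classifiers, and then match the resulting quantity either to a direct radial-angular integration or to a limiting case of the dual expression in \Cref{thm:concrete-equations}.

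First, I would apply \Cref{lem:thm-2-2} so that the DSRS constraints in \Cref{def:r-dsrs} force $Q_A = f^\gQ(\vx_0)_{y_0} = 1$. Since $\gQ = \gNgtrunc(k,T,\sigma)$ has positive density on the punctured ball $\{\vepsilon : 0 < \|\vepsilon\|_2 \le T\}$ and shares the same null sets as $\gP$ on that set, every feasible base classifier $F$ must satisfy $F(\vx_0+\vepsilon) = y_0$ for all $\vepsilon$ with $\|\vepsilon\|_2 \le T$ outside a $\gP$-null set. Consequently, for any shift $\vdelta$,
\begin{equation*}
f^\gP(\vx_0+\vdelta)_{y_0} \;\ge\; \Pr_{\vepsilon\sim\gP}\bigl[\|\vepsilon+\vdelta\|_2 \le T\bigr].
\end{equation*}

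Next, I would take $\vdelta = (u,0,\dots,0)^\T$ as is standard for isotropic smoothing and decompose $\vepsilon\sim\gNg(k,\sigma)$ into its squared radius $t := \|\vepsilon\|_2^2/(2\sigma'^2) \sim \Gamma(d/2-k,1)$ and its angular projection onto the first axis (whose squared cosine follows $\Beta(\frac{d-1}{2},\frac{d-1}{2})$), exactly as in the proof of \Cref{thm:concrete-equations}. This rewrites the right-hand side above as precisely the integral appearing in \Cref{eq:lem-thm-2-3-main}. Equivalently, the same value arises as the limit of $R(\lambda_1,\lambda_2)$ in \Cref{thm:concrete-equations} as $\lambda_1 + \nu\lambda_2 \to \infty$: in that regime the Lambert-$W$ inside $u_3$ grows without bound, so $u_3 \to 1$ on $\{t \le T^2/(2\sigma'^2)\}$ and $Q(\lambda_1,\lambda_2) \to 1 = Q_A$; inside $u_1$ the term $\min\{T^2, 2\sigma'^2 k W(\cdots)\}$ saturates at $T^2$; and $u_2$ is non-negative by its outer $\max\{\cdot,0\}$, so discarding it yields a valid lower bound. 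Strong duality (\Cref{thm:strong-duality}) then gives $\tC_\vdelta(P_A,Q_A)$ bounded below by this limit.

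Finally, for every $u < r_0$ the defining inequality of $r_0$ is strict by monotonicity of the integrand in $u$, so $\tC_\vdelta(P_A,Q_A) > 0.5$ for all $\vdelta$ with $\|\vdelta\|_2 < r_0$, and \Cref{eq:check-cond} immediately yields $\rDSRS \ge r_0$. The main obstacle I anticipate is the limiting argument in the dual: since the supremum in $\tD$ need not be attained at $\lambda_2 = \infty$, I would construct an explicit sequence of feasible dual pairs whose $R$-values converge to the claimed integral while maintaining $Q(\lambda_1,\lambda_2) = Q_A$ and $P(\lambda_1,\lambda_2) = P_A$ along the way (feasibility of the $P$-constraint in the limit is automatic from $P_A \ge \nu$, which itself follows from $Q_A = 1$ and the containment of $\supp(\gQ)$ in the support of $\gP$). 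Controlling the Lambert-$W$ asymptotics and the endpoint behavior of $u_3$ is the only delicate piece; all remaining ingredients are already available from the proof of \Cref{thm:concrete-equations}.
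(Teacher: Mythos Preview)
Your core argument is essentially the paper's own proof: drop the $P_A$ constraint, keep only $Q_A=1$, observe that this forces $f(\vepsilon)=1$ on $\{\|\vepsilon\|_2\le T\}$ up to a null set, and then compute $\Pr_{\vepsilon\sim\gP}[\|\vepsilon+\vdelta\|_2\le T]$ via the radial--angular decomposition to obtain exactly the integral in \Cref{eq:lem-thm-2-3-main}. The paper phrases this as a relaxation of the primal $(\tC)$ rather than as a worst-case over base classifiers, but the content is identical.

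Your alternative dual route through \Cref{thm:concrete-equations} with $\lambda_1+\nu\lambda_2\to\infty$ is a genuine detour the paper does not take, and it is where your proposal becomes shaky. Maintaining $P(\lambda_1,\lambda_2)=P_A$ along a sequence with $Q\to 1$ is not automatic (and your remark ``$P_A\ge\nu$'' is backwards: $\nu>1\ge P_A$; presumably you mean $P_A\ge 1/\nu$), and the edge-case analysis in \Cref{adxsubsec:main-approach-uniqueness} for $Q_A=1$ is needed to make sense of the limiting dual. Since the primal relaxation already delivers the bound cleanly, I would drop the dual argument entirely.

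One small gap in both your write-up and the paper's: the passage from ``integral $\ge 0.5$ at $u=r_0$'' to ``$\tC_\vdelta>0.5$ for all $\|\vdelta\|_2<r_0$'' needs the map $u\mapsto \Pr_{\vepsilon\sim\gP}[\|\vepsilon+\vdelta\|_2\le T]$ to be continuous and strictly decreasing near $r_0$, not ``monotonicity of the integrand in $u$'' (the integrand is not pointwise monotone in $u$). The geometric fact that shifting the center of a fixed ball away from the origin strictly decreases its $\gP$-mass is what you actually need; it is routine but worth stating.
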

        
        \begin{proof}[Proof of \Cref{lem:thm-2-3}]
            The proof shares the same core methodology as \shortApproach computational method introduced in \Cref{sec:dsrs}.
            Basically, according to \Cref{eq:check-cond}, for any radius $r$, 
            let $\vdelta = (r,0,\dots,0)^\T$, if $\tC_{\vdelta}(P_A, Q_A) > 0.5$, then $\rDSRS \ge r$, where $Q_A = 1$ according to \Cref{lem:thm-2-2}, and by the $(\sigma,\Pcon)$-concentration property
            \begin{equation}
                P_A \ge \Pr_{\vepsilon\sim\gNg(k,\sigma)} [\|\vepsilon\|_2 \le T].
            \end{equation}
            Therefore, to prove the lemma, we only need to show that when
            \begin{equation}
                \E_{t\sim\Gamma(\frac{d}{2}-k)}
                    \BetaCDF_{\frac{d-1}{2}} \left(
                    \dfrac{T^2 - (\sigma' \sqrt{2t} - u)^2}{4u\sigma' \sqrt{2t}} \right) \ge 0.5,
            \end{equation}
            for any $\vdelta = (u,0,\dots,0)^\T$, $\tC_{\vdelta}(P_A, Q_A) > 0.5$.
            
            By definition~(\Cref{eq:primal}),
            \begin{align*}
                    & \tC_{\vdelta} (P_A,Q_A) \\
                    = & \min_{f} \E_{\vepsilon\sim\gP} [f(\vepsilon+\vdelta)] \quad\mathrm{s.t.} \\
                    & \hspace{1em} \E_{\vepsilon\sim\gP} [f(\vepsilon)] = P_A,
                    \E_{\vepsilon\sim\gQ} [f(\vepsilon)] = Q_A, \\
                    & \hspace{1em}
                    0 \le f(\vepsilon) \le 1 \quad \forall \epsilon\in\sR^d \\
                    \ge & \min_{f} \E_{\vepsilon\sim\gP}
                    [f(\vepsilon+\vdelta)] \quad\mathrm{s.t.} \\
                    & \hspace{1em} 
                    \E_{\vepsilon\sim\gQ} [f(\vepsilon)] = 1, \\
                    & \hspace{1em}
                    0 \le f(\vepsilon) \le 1 \quad \forall \epsilon\in\sR^d \\
                    = & \E_{\vepsilon\sim\gP} [f(\vepsilon+\vdelta)] \quad\mathrm{where}\quad  
                    f(\vepsilon) = \left\{
                    \begin{array}{lr}
                        1, & \|\vepsilon\|_2 \le T \\
                        0. & \|\vepsilon\|_2 > T
                    \end{array}
                    \right. \\
                    =: & V.
            \end{align*}
            
            We now compute $V$:
            \begin{align*}
                V = & \E_{\vepsilon\sim\gP} [\|\vepsilon + \vdelta\|_2 \le T] \\
                = & \int_{\sR^d} p(\vx) \1[\|\vx + \vdelta\|_2 \le T] \dif \vx \\
                \overset{(1)}{=} & \int_0^{\infty} y \dif y \int_{\substack{p(\vx) = y\\ \1[\|\vx + \vdelta\|_2 \le T}} 
                \dfrac{\dif \vx}{\|\nabla p(\vx)\|_2} \\
                \overset{(2)}{=} & \int_0^\infty y\dif y \int_{\substack{p(\vx) = y\\ \1[\|\vx + \vdelta\|_2 \le T}}
                - \dfrac{\dif \vx}{r_p'(r_p^{-1}(y))} \\
                \overset{(3)}{=} & \int_0^\infty y \dif y
                \dfrac{2\pi^{d/2}}{\Gamma(\frac d 2)} r_p^{-1}(y)^{d-1}
                \cdot
                \left(- \dfrac{1}{r_p'(r_p^{-1}(y))} \right)
                \cdot \\
                & \hspace{2em}
                \Pr[\|\vx + \vdelta\|_2 \le T \,|\, p(\vx) = y] \\
                \overset{(4)}{=} & \int_0^\infty
                r_p(t)\dif t \dfrac{2\pi^{d/2}}{\Gamma(\frac d 2)} t^{d-1} 
                \Pr[\|\vx + \vdelta\|_2 \le T \,|\, \|\vx\|_2 = t] \\
                \overset{(5)}{=} & \int_0^\infty 
                \dfrac{1}{(2\sigma'^2)^{d/2-k}\pi^{d/2}}\cdot
                \dfrac{\Gamma(d/2)}{\Gamma(d/2-k)}
                t^{-2k} \exp\left(-\frac{t^2}{2\sigma'^2}\right) \cdot\\ 
                & \hspace{2em} \dfrac{2\pi^{d/2}}{\Gamma(\frac d 2)} t^{d-1} \Pr[\|\vx + \vdelta\|_2 \le T \,|\, \|\vx\|_2 = t] \dif t \\
                = & \dfrac{1}{\Gamma(\frac d 2 - k)} \int_0^\infty 
                t^{d/2-k-1} \exp(-t) \cdot \\
                & \hspace{2em} \Pr[\|\vx + \vdelta\|_2 \le T \,|\, \|\vx\|_2 = \sigma'\sqrt{2t}] \dif t \\
                = & \E_{t\sim\Gamma(\frac d 2 -k)} \Pr[\|\vx + \vdelta\|_2 \le T \,|\, \|\vx\|_2 = \sigma'\sqrt{2t}].
            \end{align*}
            In above equations, $(1)$ follows from level-set sliced integration extended from \citep{yang2020randomized} and $p(\vx)$ is the density of distribution $\gP=\gNg(k,\sigma)$ at point $\vx$;
            in $(2)$ we define $r_p(\|\vx\|_2) := p(\vx)$ noting that $\gP$ is $\ell_2$ symmetric and all $\vx$ with same $\ell_2$ length having the same $p(\vx)$, and we have $\|\nabla p(\vx)\|_2 = -r'_p(r_p^{-1}(y))$ since $y = r_p(\|\vx\|_2)$ and $r_p$ is monotonically decreasing;
            $(3)$ uses 
            \begin{equation}
                \begin{aligned}
                    \mathrm{Vol}(\{\vx: p(\vx)=y\}) & = \mathrm{Vol}(\{\vx: \|\vx\|_2 = r_p^{-1}(y)\}) \\
                    & = \dfrac{2\pi^{d/2}}{\Gamma(\frac d 2)} r_p^{-1}(y)^{d-1};
                \end{aligned}
            \end{equation}
            $(4)$ changes the integration variable from $y$ to $t = r_p^{-1}(y)$;
            and $(5)$ injects the concrete expression of $r_p$.
            
            Now, we inspect $\Pr[\|\vx + \vdelta\|_2 \le T \,|\, \|\vx\|_2 = \sigma'\sqrt{2t}]$:
            When $\|\vx\|_2 = \sigma'\sqrt{2t}$, 
            $\sum_{i=1}^d x_i^2 = 2t\sigma'^2$.
            Meanwhile, $\|\vx + \vdelta\|_2 \le T$ means $(x_1+u)^2 + \sum_{i=2}^d x_i^2 \le T^2$.
            Thus, when $\|\vx\|_2 = \sigma'\sqrt{2t}$,
            \begin{equation}
                \|\vx + \vdelta\|_2 \le T
                \iff
                \dfrac{x_1}{\sigma'\sqrt{2t}} \le \dfrac{T^2-u^2-2t\sigma'^2}{2u\sigma'\sqrt{2t}}.
            \end{equation}
            According to \citep[Lemma I.23]{yang2020randomized}, for $\vx$ uniformly sampled from sphere with radius $\sigma'\sqrt{2t}$, the component coordinate $\dfrac{1 + \frac{x_1}{\sigma'\sqrt{2t}}}{2} \sim \Beta(\frac{d-1}{2}, \frac{d-1}{2})$.
            Thus,
                \begin{align}
                    & \Pr[\|\vx + \vdelta\|_2 \le T \,|\, \|\vx\|_2 = \sigma'\sqrt{2t}] \nonumber \\
                    = & \BetaCDF_{\frac{d-1}{2}} \left(
                        \dfrac{1 + \frac{T^2-u^2-2t\sigma'^2}{2u\sigma'\sqrt{2t}}}{2}
                    \right) \nonumber \\
                    = & \BetaCDF_{\frac{d-1}{2}} \left(
                    \dfrac{T^2 - (\sigma'\sqrt{2t} - u)^2}{4u\sigma'\sqrt{2t}}
                    \right).
                \end{align}
            Finally, we get
            \begin{equation}
                V = \E_{t\sim\Gamma(\frac{d}{2}-k,1)}
                    \BetaCDF_{\frac{d-1}{2}} \left(
                    \dfrac{T^2 - (\sigma' \sqrt{2t} - u)^2}{4u\sigma' \sqrt{2t}} \right).
                \label{eq:V}
            \end{equation}
            In other words,
            when
            \begin{equation}
                \E_{t\sim\Gamma(\frac{d}{2}-k,1)}
                    \BetaCDF_{\frac{d-1}{2}} \left(
                    \dfrac{T^2 - (\sigma' \sqrt{2t} - u)^2}{4u\sigma' \sqrt{2t}} \right) \ge 0.5,
            \end{equation}
            we have $V \ge 0.5$, and thus $\tC_{\delta}(P_A,Q_A) \ge V \ge 0.5$, $\rDSRS \ge u$, which concludes the proof.
        \end{proof}
        
        We require the following property of $\BetaCDF$.
        
        \begin{lemma}
            \label{lem:thm-2-4}
            There exists $d_0 \in \sN_+$, for any $d \ge d_0$, 
            \begin{equation}
                \BetaCDF_{\frac{d-1}{2}}(0.6) \ge 0.999.
            \end{equation}
        \end{lemma}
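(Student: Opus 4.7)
The plan is to exploit the fact that $\Beta(\tfrac{d-1}{2},\tfrac{d-1}{2})$ is symmetric about $1/2$ and concentrates tightly around its mean as $d$ grows, so that $0.6$ lies deep in the upper tail. Concretely, for $X\sim\Beta(\tfrac{d-1}{2},\tfrac{d-1}{2})$, the mean is $1/2$ and the variance is $\tfrac{1}{4d}$ (from the standard formula $\mathrm{Var}(X)=\tfrac{\alpha\beta}{(\alpha+\beta)^2(\alpha+\beta+1)}$ with $\alpha=\beta=\tfrac{d-1}{2}$). The first step is to record these two moment identities.

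Next, I would apply Chebyshev's inequality symmetrically. Since the density of $\Beta(\tfrac{d-1}{2},\tfrac{d-1}{2})$ is symmetric around $1/2$, we have
\begin{equation*}
\Pr[X \ge 0.6] \;=\; \tfrac{1}{2}\Pr[|X-\tfrac{1}{2}|\ge 0.1] \;\le\; \tfrac{1}{2}\cdot\tfrac{\mathrm{Var}(X)}{0.01} \;=\; \tfrac{25}{2d}.
\end{equation*}
Hence $\BetaCDF_{(d-1)/2}(0.6)\ge 1-\tfrac{25}{2d}$, and choosing any $d_0 \ge 25000$ (for safety) gives $\BetaCDF_{(d-1)/2}(0.6)\ge 0.999$ whenever $d\ge d_0$. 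This is the only step that could be tightened: if one wanted a smaller $d_0$, one would instead use a Chernoff/Hoeffding-type bound for the Beta distribution (e.g., viewing $X$ as the limit of empirical quantiles from a uniform sample and invoking Bernstein's inequality, or using Marchal--Arbel's log-concavity sub-Gaussian constant $\tfrac{1}{4(d+1)}$), which would yield exponential decay in $d$ and give a much smaller threshold. However, for the purposes of \Cref{thm:concentration-sqrt-d} any explicit finite $d_0$ suffices, so Chebyshev is more than adequate.

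The only mild subtlety is justifying the symmetry step $\Pr[X\ge 0.6]=\tfrac12\Pr[|X-\tfrac12|\ge 0.1]$, which follows immediately from the fact that the density $f(x)\propto x^{(d-3)/2}(1-x)^{(d-3)/2}$ satisfies $f(x)=f(1-x)$, so the statement $|X-\tfrac12|\ge 0.1$ decomposes into the two disjoint events $\{X\le 0.4\}$ and $\{X\ge 0.6\}$ of equal probability. No other obstacles arise; the lemma is a straightforward concentration statement and the proof is essentially one line of Chebyshev once the variance is written down.
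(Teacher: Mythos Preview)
Your proposal is correct and follows essentially the same approach as the paper: compute $\E[X]=\tfrac12$ and $\mathrm{Var}(X)=\tfrac{1}{4d}$ for $X\sim\Beta(\tfrac{d-1}{2},\tfrac{d-1}{2})$, then apply Chebyshev's inequality to bound $\Pr[X\ge 0.6]$, arriving at the threshold $d_0=25000$. The only cosmetic difference is that you additionally invoke the symmetry of the Beta density to sharpen the Chebyshev bound by a factor of $2$ (the paper just uses $\Pr[X\ge 0.6]\le\Pr[|X-\tfrac12|\ge 0.1]$ directly), but since you then pick the same $d_0$ anyway this refinement is not actually exploited.
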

        
        \begin{proof}[Proof of \Cref{lem:thm-2-4}]
            We let $\rv \sim \Beta(\frac{d-1}{2},\frac{d-2}{2})$, then
            \begin{align}
                & \E [\rv] = 1/2, \\
                & \Var [\rv] = \dfrac{(\frac{d-1}{2})^2}{(\frac{d-1}{2} + \frac{d-1}{2})^2(\frac{d-1}{2} + \frac{d-1}{2} + 1)} = \frac{1}{4d}.
            \end{align}
            Now, applying Chebyshev's inequality, we have
            \begin{equation}
                \Pr[|\rv-0.5| \ge 0.1] \le \dfrac{1}{0.04d}.
            \end{equation}
            Therefore, 
            \begin{align}
                & \BetaCDF_{\frac{d-1}{2}}(0.6) \\
                = & \Pr[\rv < 0.6] \\
                = & 1 - \Pr[\rv \ge 0.6] \\
                \ge & 1 - \Pr[|\rv - 0.5| \ge 0.1] 
                \ge 1 - \dfrac{1}{0.04d}.
            \end{align}
            Thus, when $d \ge 25000$, $\BetaCDF_{\frac{d-1}{2}}(0.6) \ge 0.999$.
        \end{proof}
        
        We also require the following properties of the gamma distribution.
        
        \begin{adxproposition}
            \label{prop:thm-2-5}
            For any $\Pcon \in (0,1)$, there exists $d_0 \in \sN_+$, for any $d \ge d_0$,
            \begin{equation}
                \GammaCDF^{-1}_{d/2}(\Pcon) \ge 0.99 \cdot \dfrac{d}{2}.
            \end{equation}
        \end{adxproposition}
        
        \begin{proof}[Proof of \Cref{prop:thm-2-5}]
            We let $\rv \sim \Gamma(d/2)$, then
            \begin{align}
                & \E[\rv] = d/2, \\
                & \Var[\rv] = d/2.
            \end{align}
            We now apply Chebyshev's inequality and get
            \begin{align*}
                & \Pr[\rv < 0.99 \cdot d/2] \\
                \le & \Pr[|\rv - d/2| > 0.01 \cdot d/2] \\
                \le & \dfrac{20000}{d}.
            \end{align*}
            Thus, for any $\Pcon \in (0,1)$, when $d \ge \dfrac{20000}{\Pcon}$, 
            \begin{equation}
                \GammaCDF_{d/2}\left(0.99\cdot \frac d 2\right) \le \dfrac{20000}{d} \le \Pcon,
            \end{equation}
            i.e.,
            $
                \GammaCDF^{-1}_{d/2}(\Pcon) \ge 0.99\cdot \frac d 2.
            $
        \end{proof}
        
        \begin{adxproposition}
            \label{prop:thm-2-6}
            When $d/2 - 15 \le k < d/2$, $k, d \in \sN_+$,
            \begin{equation}
                \Pr_{t\sim\Gamma(d/2-k)} \left[t \le 0.98 \left(\frac d 2 - k\right)\right] \ge \frac{0.5}{0.999}.
            \end{equation}
        \end{adxproposition}
        
        \begin{proof}[Proof of \Cref{prop:thm-2-6}]
            We prove the proposition by enumeration.
            Notice that $d/2 - k \in \{0.5, 1.0, \cdots, 14.5, 15.0\}$, we enumerate $\GammaCDF_{d/2-k}(0.98(d/2-k))$ for each $(d/2-k)$ and get the following table.
            \resizebox{\linewidth}{!}{
            \begin{tabular}{cc|cc}
                \toprule
                $\frac{d}{2}-k$ & $\GammaCDF_{d/2-k}(0.98(d/2-k))$ & 
                $\frac{d}{2}-k$ & $\GammaCDF_{d/2-k}(0.98(d/2-k))$ \\
                \midrule
                0.5 & 0.6778 &
                  8.0 & 0.5245 \\
                1.0 & 0.6247 &
                8.5 & 0.5224 \\
                1.5 & 0.5990 &
                9.0 & 0.5204 \\
                2.0 & 0.5831 &
                9.5 & 0.5186 \\
                2.5 & 0.5718 &
                10.0 & 0.5168 \\
                3.0 & 0.5632 &
                10.5 & 0.5152 \\
                3.5 & 0.5564 &
                11.0 & 0.5136 \\
                4.0 & 0.5507 &
                11.5 & 0.5121 \\
                4.5 & 0.5459 &
                12.0 & 0.5107 \\
                5.0 & 0.5418 &
                12.5 & 0.5093 \\
                5.5 & 0.5381 &
                13.0 & 0.5080 \\
                6.0 & 0.5349 &
                13.5 & 0.5068 \\
                6.5 & 0.5319 &
                14.0 & 0.5056 \\
                7.0 & 0.5292 &
                14.5 & 0.5044 \\
                7.5 & 0.5268 &
                15.0 & 0.5033 \\
                \bottomrule
            \end{tabular}
            }
            On the other hand, $\frac{0.5}{0.999} \le 0.5001$, which concludes the proof.
        \end{proof}
        
        Now we are ready to prove the main theorem.
        
        \begin{proof}[Proof of \Cref{thm:concentration-sqrt-d}]
             According to \Cref{lem:thm-2-3},
             we only need to show that for $u = 0.02\sigma\sqrt d$,
             \Cref{eq:lem-thm-2-3-main} holds.
             For sufficiently large $d$, indeed,
             \begin{align*}
                 & \E_{t\sim\Gamma(d/2-k)} \BetaCDF_{(d-1)/2} \left(
                 \dfrac{T^2 - (\sigma'\sqrt{2t} - u)^2}{4u\sigma'\sqrt{2t}} \right) \\
                 & \overset{\text{\Cref{lem:thm-2-4}}}{\ge} 0.999 \E_{t\sim\Gamma(d/2-k)} \1\left[
                 \dfrac{T^2 - (\sigma'\sqrt{2t} - u)^2}{4u\sigma'\sqrt{2t}} \ge 0.6
                 \right] \\
                 & \overset{(*)}{\ge} 0.999 \E_{t\sim\Gamma(d/2-k)} \1\left[
                 t \le 0.98 \left(\frac d 2 - k\right)
                 \right] \\
                 & \overset{\text{\Cref{prop:thm-2-6}}}{\ge} 0.999 \cdot \dfrac{0.5}{0.999} = 0.5.
             \end{align*}
            Thus, from \Cref{lem:thm-2-3} we have $\rDSRS \ge u = 0.02\sqrt d$.
            
            The inequality $(*)$ follows from \Cref{fact:thm-2-7}.
        \end{proof}
        
        \begin{fact}
            \label{fact:thm-2-7}
            Under the condition of \Cref{thm:concentration-sqrt-d}, 
            for sufficiently large $d$,
            \begin{equation}
                t \le 0.98 \left(\frac d 2 - k\right) \Rightarrow
                \dfrac{T^2 - (\sigma'\sqrt{2t} - u)^2}{4u\sigma'\sqrt{2t}} \ge 0.6.
            \end{equation}
        \end{fact}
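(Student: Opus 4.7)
The plan is to do a direct algebraic manipulation, substituting the explicit expressions for $T$, $\sigma'$, and $u$, and then controlling all three terms of a quadratic in $\sigma'\sqrt{2t}$ by $T^{2}$, which \Cref{prop:thm-2-5} shows is of order $\sigma^{2}d$.

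First, I would clear the denominator and rearrange the target inequality as
\begin{equation*}
    T^{2}\ \ge\ (\sigma'\sqrt{2t})^{2}\ +\ 0.4\,u\,\sigma'\sqrt{2t}\ +\ u^{2},
\end{equation*}
which is valid regardless of the sign of $\sigma'\sqrt{2t}-u$ since we simply expanded the square and moved $2.4\,u\,\sigma'\sqrt{2t}$ to the right. This reduces the problem to bounding three explicit terms.

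Next, I would plug in $\sigma'^{2}=\sigma^{2}d/(d-2k)$ and use the hypothesis $t\le 0.98(d/2-k)$ to obtain $(\sigma'\sqrt{2t})^{2}=\sigma'^{2}\cdot 2t\le \sigma^{2}d\cdot \tfrac{2\cdot 0.98(d/2-k)}{d-2k}=0.98\,\sigma^{2}d$. Using $u=0.02\sigma\sqrt{d}$, this immediately gives $u^{2}=4\times 10^{-4}\sigma^{2}d$ and $u\,\sigma'\sqrt{2t}\le 0.02\sigma\sqrt{d}\cdot\sigma\sqrt{0.98\,d}\le 0.02\,\sigma^{2}d$, so the right-hand side of the rearranged inequality is at most
\begin{equation*}
    \bigl(0.98+0.4\cdot 0.02+4\times 10^{-4}\bigr)\sigma^{2}d\ =\ 0.9884\,\sigma^{2}d.
\end{equation*}
On the left-hand side, $T^{2}=2\sigma^{2}\GammaCDF_{d/2}^{-1}(\Pcon)$, and \Cref{prop:thm-2-5} gives $\GammaCDF_{d/2}^{-1}(\Pcon)\ge 0.99\cdot d/2$ for sufficiently large $d$, so $T^{2}\ge 0.99\,\sigma^{2}d>0.9884\,\sigma^{2}d$, completing the proof.

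There is no real obstacle: all three bounds are elementary and the constants were chosen in \Cref{thm:concentration-sqrt-d} precisely so that the sum $0.98+0.008+0.0004$ lies strictly below $0.99$. The only thing to be careful about is that the hypothesis $d/2-15\le k<d/2$ plays no direct role here; it enters only through \Cref{prop:thm-2-6} at the outer step of \Cref{thm:concentration-sqrt-d}'s proof, while here the key identity is simply $\sigma'^{2}(d-2k)=\sigma^{2}d$, so the bound on $(\sigma'\sqrt{2t})^{2}$ is independent of how close $k$ is to $d/2$.
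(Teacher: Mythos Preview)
Your proposal is correct and uses the same core ingredients as the paper: both rearrange the target to the quadratic inequality $x^{2}+0.4ux+u^{2}\le T^{2}$ with $x=\sigma'\sqrt{2t}$, exploit the identity $\sigma'^{2}(d-2k)=\sigma^{2}d$ to get $x^{2}\le 0.98\,\sigma^{2}d$, and invoke \Cref{prop:thm-2-5} for $T^{2}\ge 0.99\,\sigma^{2}d$. The only difference is that the paper solves the quadratic in $x$ via the quadratic formula and then shows the root exceeds $\sigma\sqrt{0.98d}$, whereas you bound the three terms $x^{2},\,0.4ux,\,u^{2}$ directly and sum to $0.9884\,\sigma^{2}d<0.99\,\sigma^{2}d$; your route is a modest streamlining of the same argument.
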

        
        \begin{proof}[Proof of \Cref{fact:thm-2-7}]
            \begin{align*}
                & \dfrac{T^2 - (\sigma'\sqrt{2t}-u)^2}{4u\sigma'\sqrt{2t}} \ge 0.6 \\
                \iff & T^2 - (\sigma'\sqrt{2t} - u)^2 \ge 2.4u\sigma'\sqrt{2t} \\
                \overset{\text{$x:=\sigma'\sqrt{2t}$}}{\iff} &
                T^2 - (x-u)^2 \ge 2.4 ux \\
                \iff & x^2 + 0.4ux + u^2 - T^2 \le 0.
            \end{align*}
            From \Cref{prop:thm-2-5}, we have
            \begin{equation}
                \begin{aligned}
                    & u^2 - T^2 \\
                    = & 0.0004 d\sigma^2 - 2\sigma^2 \GammaCDF^{-1}_{d/2}(\Pcon) \\
                    \le & 0.0004 d\sigma^2 - 0.99d\sigma^2 < 0.
                \end{aligned}
            \end{equation}
            Thus,
            \begin{align*}
                & x^2 + 0.4ux + u^2 - T^2 \le 0 \\
                \iff & x \le \dfrac{-0.4u + \sqrt{0.16u^2 - 4(u^2-T^2)}}{2} \\
                & \hspace{3em} = -0.2u + \sqrt{T^2 - 0.96u^2} \\
                \iff & t \le \dfrac{(-0.2u + \sqrt{T^2 - 0.96u^2})^2}{2\sigma'^2}.
            \end{align*}
            Again, from \Cref{prop:thm-2-5},
            \begin{equation}
                \begin{aligned}
                    & T^2 - 0.96u^2 \\
                    = & 2\sigma^2\GammaCDF_{d/2}^{-1}(\Pcon) - 0.96 \times 0.0004 d\sigma^2 \\
                    \ge & (0.99 - 0.96 \times 0.0004) d\sigma^2,
                \end{aligned}
            \end{equation}
            and therefore
            \begin{equation}
                \begin{aligned}
                    & (-0.2u + \sqrt{T^2 - 0.96u^2})^2 \\
                    \ge & d\sigma^2 \left(-0.004 + \sqrt{0.99 - 0.96 \times 0.0004}\right)^2 \approx 0.9816 d\sigma^2  \\
                    \ge & 0.98 d\sigma^2.
                \end{aligned}
            \end{equation}
            Then,
            \begin{align*}
                & t \le \dfrac{(-0.2u + \sqrt{T^2 - 0.96u^2})^2}{2\sigma'^2} \\
                \Longleftarrow & t \le \dfrac{0.98d\sigma^2}{2\sigma^2} \cdot \dfrac{d-2k}{d} \\
                \iff & t \le 0.98 \left( \frac d 2 - k \right).
            \end{align*}
        \end{proof}
    
    \subsection{\texorpdfstring{\Cref{thm:concentration-constant-k-forbid-sqrt-d}}{Theorem 6}}
    
        \label{adxsec:concentration-constant-k-forbid-sqrt-d}
    
        \begin{theorem}
            \label{thm:concentration-constant-k-forbid-sqrt-d}
            Let $d$ be the input dimension and $F_0$ be the base classifier.
            For an input point $\vx_0\in \sR^d$ with true class $y_0$, suppose $F_0$ satisfies $(\sigma,\Pcon)$-Concentration property and $\Pr_{\vepsilon \sim \gN(\sigma)} [F_0(\vx_0 + \vepsilon) = y_0] = \Pcon$ where $\Pcon < 1$.
            The smoothed classifier $\wF_0^{\gP'}$ is constructed from $F_0$ and smoothed by generalized Gaussian $\gP'=\gNg(k_0,\sigma)$ where $k_0$ is a constant independent of input dimension $d$.
            Then,
            for any constant $c > 0$, there exists $d_0$, such that when input dimension $d \ge d_0$, \textbf{any method cannot certify $\ell_2$ radius $c\sqrt{d}$},
            where $T = \sigma\sqrt{2\GammaCDF_{d/2}^{-1}(\Pcon)}$ and $\GammaCDF_{d/2}$ is the CDF of gamma distribution $\Gamma(d/2,1)$.
        \end{theorem}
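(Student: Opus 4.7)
The plan is to show that the smoothed classifier $\wF_0^{\gP'}$ is itself not robust at radius $c\sqrt d$, which immediately rules out any sound certification of that radius.

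First I would pin down the correct-prediction region of $F_0$. Combining $(\sigma,\Pcon)$-concentration with the strengthened hypothesis $\Pr_{\vepsilon\sim\gN(\sigma)}[F_0(\vx_0+\vepsilon)=y_0]=\Pcon$ forces $\Pr_{\vepsilon\sim\gN(\sigma)}[F_0(\vx_0+\vepsilon)=y_0,\,\|\vepsilon\|_2>T]=0$, so the exceptional set $S=\{\vepsilon: F_0(\vx_0+\vepsilon)=y_0,\,\|\vepsilon\|_2>T\}$ has zero $\gN(\sigma)$-mass, hence zero Lebesgue measure, hence zero $\gP'$-mass (since $\gP'$ is absolutely continuous w.r.t.\ Lebesgue). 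Therefore for any shift $\vdelta$,
\[
f_0^{\gP'}(\vx_0+\vdelta)_{y_0} \;\le\; \Pr_{\vepsilon\sim\gP'}[\|\vepsilon+\vdelta\|_2 \le T].
\]

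Next I would fix any $c'\in(0,c)$, set $\vdelta$ with $\|\vdelta\|_2=c'\sqrt d$, and bound the right-hand side via the polar decomposition of $\gP'=\gNg(k_0,\sigma)$. Expanding $\|\vepsilon+\vdelta\|_2^2 = \|\vepsilon\|_2^2 + c'^2 d + 2\vdelta\cdot\vepsilon$, I would control the three pieces separately. The radial magnitude $\|\vepsilon\|_2^2/(2\sigma'^2)\sim\Gamma(d/2-k_0,1)$ concentrates around $(d-2k_0)/2$ with $O(\sqrt d)$ fluctuations (standard Chebyshev, as in \Cref{prop:thm-2-5}), giving $\|\vepsilon\|_2^2 = \sigma^2 d + O(\sqrt d)$ w.h.p.\ because $\sigma'^2(d-2k_0)=\sigma^2 d$. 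The angular coordinate $Z=\vdelta\cdot\vepsilon/(\|\vdelta\|_2\,\|\vepsilon\|_2)$ satisfies $(1+Z)/2\sim\Beta((d-1)/2,(d-1)/2)$ with variance $1/(4d)$ (as used in the proof of \Cref{lem:thm-2-4}), so $|\vdelta\cdot\vepsilon|=O(c'\sigma\sqrt d)$ w.h.p. Meanwhile $T^2=2\sigma^2\GammaCDF_{d/2}^{-1}(\Pcon)=\sigma^2 d+O(\sqrt d)$ for fixed $\Pcon<1$ by the same gamma CLT. Because $k_0$ is constant in $d$, every implicit constant is $O(1)$, so
\[
\|\vepsilon+\vdelta\|_2^2 - T^2 \;\ge\; c'^2 d - O(\sqrt d)
\]
with probability tending to $1$, which forces $\Pr_{\vepsilon\sim\gP'}[\|\vepsilon+\vdelta\|_2\le T]\to 0$.

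Combining the two steps, for all sufficiently large $d$ one has $f_0^{\gP'}(\vx_0+\vdelta)_{y_0}<1/2$, and so $\wF_0^{\gP'}(\vx_0+\vdelta)\ne y_0$ at a point with $\|\vdelta\|_2=c'\sqrt d<c\sqrt d$. This single adversarial example falsifies any sound certification of radius $c\sqrt d$, proving the theorem. The main technical obstacle I anticipate is the clean quantitative control of the cross term $\vdelta\cdot\vepsilon$: it couples the $O(\sqrt d)$ radial fluctuation with an $O(1/\sqrt d)$ angular coordinate, so naively multiplying tail bounds can lose the cancellation. I plan to handle this by conditioning on $\|\vepsilon\|_2$ and applying the beta variance bound uniformly on each conditional sphere before integrating out the radial tail; this decoupling is exactly where $k_0$ being constant (so $\sigma'\to\sigma$ and $d/2-k_0=\Theta(d)$) becomes essential to the argument.
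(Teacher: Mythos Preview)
Your proposal is correct and follows essentially the same strategy as the paper: first pin down that the correct-prediction region coincides (up to null sets) with the ball $\{\|\vepsilon\|_2\le T\}$, then show $\Pr_{\vepsilon\sim\gP'}[\|\vepsilon+\vdelta\|_2\le T]<1/2$ for a perturbation of magnitude $\Theta(\sqrt d)$ using concentration of the radial part ($\Gamma(d/2-k_0,1)$) and of the angular part ($\Beta(\tfrac{d-1}{2},\tfrac{d-1}{2})$). The only stylistic difference is that the paper routes the computation through the explicit level-set formula $\E_{t}\BetaCDF_{(d-1)/2}(\cdot)$ and then does a parabola/case analysis with fixed numerical constants ($0.49$, $0.01$, etc.), whereas you expand $\|\vepsilon+\vdelta\|_2^2$ directly and argue with $O(\sqrt d)$ slack, obtaining the slightly stronger conclusion that the probability tends to $0$; both arguments rest on the same Chebyshev bounds and both crucially use that $k_0$ is constant so that $d/2-k_0=\Theta(d)$ and $\sigma'\to\sigma$.
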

        
        We defer the proof to \Cref{newadx:sub:concentration-constant-k-forbid-sqrt-d}.
        This theorem suggests that, if we use generalized Gaussian whose $k$ is a constant with respect to input dimension $d$ or use standard Gaussian~(whose $k=0$ is a constant) for smoothing, we cannot achieve $\Omega(\sqrt{d})$ certified radius rate from \shortApproach and any other certification method. 
    
    \subsection{Proof of \texorpdfstring{\Cref{thm:concentration-constant-k-forbid-sqrt-d}}{Theorem 3}}
        \label{newadx:sub:concentration-constant-k-forbid-sqrt-d}
    
        The proof of \Cref{thm:concentration-constant-k-forbid-sqrt-d} is based on three lemmas listed below.
        
        \begin{lemma}
            Given $k_0 \in \sN$, for any $\epsilon > 0$, there exists $d_0$, such that when $d > d_0$, 
            \begin{equation}
                \Pr_{\rt\sim\Gamma\left(\frac d 2 - k_0,1\right)} \left[\rt\le (1-\epsilon)\left(\frac d 2 - k_0\right)\right] \le \frac{0.48}{0.99}.
                \label{eq:thm-3-pf-1}
            \end{equation}
            \label{lem:thm-3-pf-1}
        \end{lemma}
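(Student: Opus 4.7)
\medskip

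\noindent\textbf{Proof plan for \Cref{lem:thm-3-pf-1}.}
The plan is to exploit the fact that the Gamma distribution $\Gamma(n,1)$ with shape parameter $n := d/2 - k_0$ concentrates around its mean $n$ as $n$ grows, and then observe that since $k_0$ is a fixed constant, sending $d \to \infty$ is equivalent to sending $n \to \infty$. Once $n$ is large enough, the left-tail probability $\Pr[\rt \le (1-\epsilon)n]$ becomes arbitrarily small, which is certainly less than the target constant $0.48/0.99 \approx 0.4848$.

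First I would record the two basic moment facts for $\rt \sim \Gamma(n,1)$: $\E[\rt] = n$ and $\Var[\rt] = n$. Then I would apply Chebyshev's inequality to the one-sided deviation event
\begin{equation*}
    \Pr\bigl[\rt \le (1-\epsilon)n\bigr] \;\le\; \Pr\bigl[|\rt - n| \ge \epsilon n\bigr] \;\le\; \frac{\Var[\rt]}{(\epsilon n)^2} \;=\; \frac{1}{\epsilon^2 n}.
\end{equation*}
Substituting $n = d/2 - k_0$ gives a bound of $1/(\epsilon^2 (d/2 - k_0))$, which tends to $0$ as $d \to \infty$ for any fixed $\epsilon > 0$ and fixed $k_0$. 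It then suffices to choose $d_0$ large enough that $1/(\epsilon^2(d_0/2 - k_0)) \le 0.48/0.99$, for instance any $d_0 \ge 2k_0 + 2 \cdot 0.99/(0.48\,\epsilon^2)$ works; this mirrors the style of the explicit choices already used in \Cref{lem:thm-2-4} and \Cref{prop:thm-2-5}.

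There is essentially no hard step here: the argument is a direct one-sided Chebyshev bound, and the only thing to be careful about is making the dependence of $d_0$ on $\epsilon$ and $k_0$ explicit so that the statement reads uniformly in $d \ge d_0$. If a tighter constant than $0.48/0.99$ were required, one could sharpen by replacing Chebyshev with a Chernoff/Cramer bound for the Gamma distribution (which would yield an exponentially small left tail $\exp(-n \cdot I(\epsilon))$ with $I(\epsilon) = \epsilon + \log(1-\epsilon) > 0$), but for the stated constant this is unnecessary, so I would stick with Chebyshev for brevity.
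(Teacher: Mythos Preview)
Your proposal is correct and follows essentially the same approach as the paper: both compute $\E[\rt]=\Var[\rt]=d/2-k_0$, bound the one-sided tail by the two-sided Chebyshev deviation $\Pr[|\rt-n|\ge \epsilon n]\le 1/(\epsilon^2 n)$, and then pick $d_0 = 2\bigl(\tfrac{0.99}{0.48\,\epsilon^2}+k_0\bigr)$, which is exactly the threshold you wrote. Your additional remark about a Chernoff refinement is sound but, as you note, unnecessary here.
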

        
        \begin{lemma}
            Given $\Pcon \ge 0$, for any $\epsilon > 0$, there exists $d_0$, such that when $d > d_0$, 
            \begin{equation}
                T := \sigma\sqrt{2\GammaCDF^{-1}_{d/2}(\Pcon)} \le \sigma\sqrt{(1+\epsilon)d}.
                \label{eq:thm-3-pf-2}
            \end{equation}
            \label{lem:thm-3-pf-2}
        \end{lemma}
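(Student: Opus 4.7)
The plan is to prove this as a standard concentration statement for the gamma distribution, mirroring the strategy used for \Cref{prop:thm-2-5} earlier in the paper. Concretely, I want to show that the $\Pcon$-quantile of $\Gamma(d/2,1)$ is at most $(1+\epsilon)d/2$ once $d$ is large enough; squaring through the definition of $T$ then gives $T \le \sigma\sqrt{(1+\epsilon)d}$ immediately.

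First I would recall that if $\rt \sim \Gamma(d/2,1)$ then $\E[\rt] = d/2$ and $\Var[\rt] = d/2$. By Chebyshev's inequality,
\begin{equation*}
    \Pr\left[\rt - \tfrac{d}{2} > \epsilon \cdot \tfrac{d}{2}\right]
    \le \Pr\left[|\rt - \tfrac{d}{2}| > \epsilon \cdot \tfrac{d}{2}\right]
    \le \frac{d/2}{(\epsilon d/2)^2}
    = \frac{2}{\epsilon^2 d}.
\end{equation*}
Equivalently,
\begin{equation*}
    \GammaCDF_{d/2}\left((1+\epsilon)\tfrac{d}{2}\right) \ge 1 - \frac{2}{\epsilon^2 d}.
\end{equation*}
Since $\Pcon < 1$ is fixed, choose $d_0$ so that $d \ge d_0$ implies $\frac{2}{\epsilon^2 d} \le 1 - \Pcon$; for instance $d_0 = \lceil 2 / (\epsilon^2 (1-\Pcon)) \rceil$ works. (If $\Pcon = 1$ the lemma is vacuous or trivially false, but the ambient use-case in \Cref{thm:concentration-constant-k-forbid-sqrt-d} assumes $\Pcon < 1$.) Then $\GammaCDF_{d/2}((1+\epsilon)d/2) \ge \Pcon$, and since $\GammaCDF_{d/2}$ is monotonically non-decreasing, its inverse at $\Pcon$ satisfies
\begin{equation*}
    \GammaCDF^{-1}_{d/2}(\Pcon) \le (1+\epsilon)\frac{d}{2}.
\end{equation*}

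Finally I would plug this bound into the definition of $T$:
\begin{equation*}
    T = \sigma\sqrt{2\GammaCDF^{-1}_{d/2}(\Pcon)} \le \sigma\sqrt{2 \cdot (1+\epsilon)\tfrac{d}{2}} = \sigma\sqrt{(1+\epsilon)d},
\end{equation*}
which is exactly \Cref{eq:thm-3-pf-2}.

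There is essentially no obstacle here; the only subtlety is ensuring that the Chebyshev tail bound is really what we want (upper-tail vs.\ two-sided), which costs nothing since $\{\rt - d/2 > t\} \subseteq \{|\rt - d/2| > t\}$. A sharper Chernoff/Bernstein bound on gamma tails would give an exponentially small probability rather than $O(1/d)$, but Chebyshev already suffices for the qualitative ``$\exists d_0$'' statement and keeps the argument parallel to \Cref{prop:thm-2-5}.
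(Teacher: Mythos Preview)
Your proof is correct and essentially identical to the paper's: both compute $\E[\rt]=d/2$, $\Var[\rt]=d/2$, apply Chebyshev to get $\GammaCDF_{d/2}((1+\epsilon)d/2)\ge 1-\tfrac{2}{\epsilon^2 d}$, pick $d_0=\tfrac{2}{\epsilon^2(1-\Pcon)}$, and invert the CDF to bound $T$. Your remark that the case $\Pcon=1$ is excluded by the ambient hypotheses of \Cref{thm:concentration-constant-k-forbid-sqrt-d} is also appropriate.
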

        
        \begin{lemma}
            For any $\epsilon > 0$, there exists $d_0$, such that when $d > d_0$,
            \begin{equation}
                \BetaCDF_{\frac{d-1}{2}} (0.5-\epsilon) \le 0.01.
                \label{eq:thm-3-pf-3}
            \end{equation}
            \label{lem:thm-3-pf-3}
        \end{lemma}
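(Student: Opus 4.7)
The plan is to mirror the short Chebyshev argument already used in \Cref{lem:thm-2-4}, but applied to the lower tail of the Beta distribution instead of the upper tail. Let $\rv \sim \Beta(\frac{d-1}{2}, \frac{d-1}{2})$, so that $\BetaCDF_{\frac{d-1}{2}}(x) = \Pr[\rv < x]$ by definition.

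First I would record the first two moments. For a symmetric Beta with both shape parameters equal to $\alpha = \frac{d-1}{2}$, the mean is $\E[\rv] = \frac{\alpha}{2\alpha} = \frac{1}{2}$ and the variance is $\Var[\rv] = \frac{\alpha^2}{(2\alpha)^2 (2\alpha + 1)} = \frac{1}{4(2\alpha + 1)} = \frac{1}{4d}$. These are the same quantities computed in the proof of \Cref{lem:thm-2-4} and can be quoted directly.

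Next I would apply Chebyshev's inequality to bound the lower tail. Since $\{\rv < 0.5 - \epsilon\} \subseteq \{|\rv - 0.5| \ge \epsilon\}$, Chebyshev gives
\begin{equation*}
\BetaCDF_{\frac{d-1}{2}}(0.5 - \epsilon) = \Pr[\rv < 0.5 - \epsilon] \le \Pr[|\rv - 0.5| \ge \epsilon] \le \frac{\Var[\rv]}{\epsilon^2} = \frac{1}{4d\epsilon^2}.
\end{equation*}
Choosing $d_0 = \lceil 25/\epsilon^2 \rceil$ then forces the right-hand side to be at most $0.01$ whenever $d > d_0$, which is exactly the claim. There is no real obstacle here: the only subtlety is remembering to use the symmetric-Beta variance formula $\frac{1}{4d}$ (rather than the general $\frac{\alpha\beta}{(\alpha+\beta)^2(\alpha+\beta+1)}$ written out), and the bound $d_0 = 25/\epsilon^2$ is loose but perfectly adequate for the asymptotic statement required. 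The result is strictly a weaker version of the Chebyshev bound already deployed for \Cref{lem:thm-2-4}, so it can reasonably be proved in one display.
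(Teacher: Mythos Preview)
Your proposal is correct and follows essentially the same approach as the paper: both set $\rv\sim\Beta(\tfrac{d-1}{2},\tfrac{d-1}{2})$, quote $\E[\rv]=\tfrac12$ and $\Var[\rv]=\tfrac{1}{4d}$, and apply Chebyshev to get $\Pr[\rv<0.5-\epsilon]\le \tfrac{1}{4d\epsilon^2}$, then take $d_0$ of order $25/\epsilon^2$. Your choice $d_0=\lceil 25/\epsilon^2\rceil$ is in fact cleaner than the paper's stated $d_0=\tfrac{25}{d\epsilon^2}$, which is a typo for $\tfrac{25}{\epsilon^2}$.
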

        
        Proofs of these lemmas are based on Chebyshev's inequality.
        
        \begin{proof}[Proof of \Cref{lem:thm-3-pf-1}]
            For $\rt\sim\Gamma(d/2 - k_0,1)$, we have
            \begin{equation}
                \E[\rt] = d/2 - k_0, \, \Var[\rt] = d/2 - k_0.
            \end{equation}
            By Chebyshev's inequality,
                \begin{align}
                    \hspace{-2em}\Pr \left[ \rt \le (1 - \epsilon)\left(\frac d 2 - k_0\right) \right] \le & \Pr \left[ |\rt - \E[\rt]| \ge \epsilon\left(\frac d 2 - k_0\right) \right] \nonumber \\
                    \le & \dfrac{1}{\epsilon^2 (\frac d 2 - k_0)}.
                \end{align}
            Picking $d_0 = 2\left( \frac{0.99}{0.48\epsilon^2} + k_0 \right)$ concludes the proof.
        \end{proof}
        
        \begin{proof}[Proof of \Cref{lem:thm-3-pf-2}]
            We define random variable $\rv \sim \Gamma(d/2,1)$, so $\E[\rv]=d/2, \Var[\rv]=d/2$.
            By Chebyshev's inequality,
            \begin{equation}
                \begin{aligned}
                    & \Pr[\rv \le (1+\epsilon)d/2] \\
                    \ge & 1 - \Pr[\rv \ge (1+\epsilon)d/2] \\
                    \ge & 1 - \Pr[|\rv - \E[\rv]| \ge \epsilon d/2] \\
                    \ge & 1 - \dfrac{2}{d\epsilon^2}.
                \end{aligned}
            \end{equation}
            Let $d_0 = \frac{2}{\epsilon^2 (1 - \Pcon)}$.
            Thus, when $d > d_0$,
            $\Pr[\rv \le (1+\epsilon)d/2] \ge 1 - \dfrac{2}{d_0\epsilon^2} = \Pcon$, which implies that $\GammaCDF_{d/2}((1+\epsilon)d/2) \ge \Pcon$ and $\GammaCDF_{d/2}^{-1}(\Pcon) \le (1+\epsilon)d/2$ and concludes the proof.
        \end{proof}
        
        \begin{proof}[Proof of \Cref{lem:thm-3-pf-3}]
            We define random variable $\rv\sim\Beta(\frac{d-1}{2},\frac{d-1}{2})$, and we have $\E[\rv] = 1/2, \Var[\rv] = \frac{1}{4d}$.
            By Chebyshev's inequality,
            \begin{equation}
                \Pr[\rv\le 0.5-\epsilon] \le \Pr[|\rv-\E[\rv]|\ge \epsilon] 
                \le \dfrac{1}{4d\epsilon^2}.
            \end{equation}
            Let $d_0 = \frac{25}{d\epsilon^2}$, when $d > d_0$, $\Pr[\rv \le 0.5-\epsilon] \le 0.01$ and hence $\BetaCDF_{\frac{d-1}{2}} (0.5-\epsilon) \le 0.01$.
        \end{proof}
        
        Now we are ready to prove the main theorem.
        
        \begin{proof}[Proof of \Cref{thm:concentration-constant-k-forbid-sqrt-d}]
            According to the above three lemmas, we pick $d_0$, such that when $d > d_0$, the followings hold simultaneously.
            \begin{align}
                & \Pr_{\rt\sim\Gamma\left(\frac d 2 - k_0,1\right)} \left[\rt\le \left(1-\frac{c^2}{8\sigma^2}\right)\left(\frac d 2 - k_0\right)\right] \le \frac{0.48}{0.99}, \label{eq:thm-3-pf-4} \\
                & T = \sigma\sqrt{2\GammaCDF^{-1}_{d/2}(\Pcon)} \le \sigma\sqrt{\left(1+\frac{c^2}{8\sigma^2}\right)d}, \label{eq:thm-3-pf-5} \\
                & \BetaCDF_{\frac{d-1}{2}} \left(0.5-\frac{c}{8\sigma}\right) \le 0.01. \label{eq:thm-3-pf-6}
            \end{align}
        
            We define vector $\vdelta = (c\sqrt d, 0, 0, \dots, 0)^\T$.
            Since $F_0$ satisfies $(\sigma,\Pcon)$-concentration property and $\Pr_{\vepsilon\sim\gN(\sigma)} [F_0(\vx_0+\vepsilon)=y_0] = \Pcon$, up to a set of zero measure, the region $\{\vepsilon: F_0(\vx_0+\vepsilon)=y_0\}$ and region $\{\vepsilon: \|\vepsilon\|_2\le T\}$ coincide.
            
            We now show that $\E_{\vepsilon\sim\gP'=\gNg(k_0,\sigma)}[F_0(\vx_0+\vdelta+\vepsilon)=y_0] < 0.5$ when $c \le \sigma \sqrt{8/7}$.
                \begin{align}
                    & \E_{\vepsilon\sim\gP'} [F_0(\vx_0+\vdelta+\vepsilon)=y_0] \nonumber \\
                    = & \Pr_{\vepsilon\sim\gP'} [\|\vdelta + \vepsilon\|_2 \le T] \nonumber \\
                    = & \E_{\rt\sim\Gamma(\frac d 2 - k_0,1)} \BetaCDF_{\frac{d-1}{2}} \left( \dfrac{T^2 - (\sigma'\sqrt{2\rt} - c\sqrt{d})^2}{4\sigma'\sqrt{2\rt}\cdot c\sqrt{d}} \right) \nonumber \\
                    & \hspace{5em} \text{(from \Cref{eq:V})} \nonumber \\
                    \le & 0.99 \E_{\rt\sim\Gamma(\frac d 2 - k_0,1)} \1\left[ \dfrac{T^2 - (\sigma'\sqrt{2\rt} - c\sqrt{d})^2}{4\sigma'\sqrt{2\rt}\cdot c\sqrt{d}} \ge 0.5 - \frac{c}{8\sigma} \right] \nonumber \\
                    & + 0.01 \hspace{3em} \text{(by \Cref{eq:thm-3-pf-6} and $\BetaCDF_{\cdot}(\cdot) \le 1$)} \nonumber \\
                    = & 0.99 \Pr_{\rt\sim\Gamma(\frac d 2 - k_0,1)} \left[ \dfrac{T^2 - (\sigma'\sqrt{2\rt} - c\sqrt{d})^2}{4\sigma'\sqrt{2\rt}\cdot c\sqrt{d}} \ge 0.5 - \frac{c}{8\sigma} \right] \nonumber \\
                    & + 0.01. \label{eq:thm-3-pf-7}
                \end{align}
                
            Since
            \begin{align}
                & \dfrac{T^2 - (\sigma'\sqrt{2\rt} - c\sqrt{d})^2}{4\sigma'\sqrt{2\rt}\cdot c\sqrt{d}} \ge 0.5 - \frac{c}{8\sigma} \nonumber \\
                \iff & T^2 - 2\rt \sigma^2 \frac{d}{d-2k_0} - dc^2 + \frac{c^2d}{2}\sqrt{\frac{2\rt}{d-2k_0}} \ge 0 \\
                \overset{\text{\Cref{eq:thm-3-pf-5}}}{\Longrightarrow} & \left(1+\frac{c^2}{8\sigma^2}\right)d\sigma^2 - 2\rt \sigma^2 \frac{d}{d-2k_0} - dc^2 \label{eq:thm-3-pf-8} \\
                & \hspace{3em} + \frac{c^2d}{2}\sqrt{\frac{2\rt}{d-2k_0}} \ge 0.  \nonumber
            \end{align}
            We now inject $\rt=0$ and $\rt=\left(1-\frac{c^2}{8\sigma^2}\right)\left(\frac{d}{2} - k_0\right)$ to the LHS of \Cref{eq:thm-3-pf-8}.
            \begin{itemize}
                \item When $\rt=0$, 
                    \begin{align*}
                        \text{LHS of \Cref{eq:thm-3-pf-8}} & = \left(1+\frac{c^2}{8\sigma^2}\right) d\sigma^2 - dc^2 \\
                        & = d\left(\sigma^2 - \frac{7}{8}c^2\right) \ge 0.
                    \end{align*}
                
                \item When $\rt=\left(1-\frac{c^2}{8\sigma^2}\right)\left(\frac{d}{2} - k_0\right)$,
                    \begin{align*}
                        & \text{LHS of \Cref{eq:thm-3-pf-8}} \\
                        = & \left( 1 + \frac{c^2}{8\sigma^2} \right) d\sigma^2 - \frac{2d\sigma^2}{d-2k_0} \left(1-\frac{c^2}{8\sigma^2}\right) \left(\frac{d}{2} - k_0\right) \\
                        & - dc^2 + \frac{dc^2}{2} \sqrt{ 1 - \frac{c^2}{8\sigma^2} } \\
                        = & d\sigma^2 + \frac{dc^2}{8} - d\sigma^2 + \dfrac{dc^2}{8} - dc^2 + \frac{dc^2}{2} \sqrt{ 1 - \frac{c^2}{8\sigma^2} } \\
                        \le & \frac{dc^2}{4} - dc^2 + \dfrac{dc^2}{2} < 0.
                    \end{align*}
            \end{itemize}
            Notice that the LHS of \Cref{eq:thm-3-pf-8} is a parabola with negative second-order coefficient.
            Thus,
            \begin{equation}
                \text{\Cref{eq:thm-3-pf-8}} \Longrightarrow 
                \rt \in \left[0, \left(1-\frac{c^2}{8\sigma^2}\right)\left(\frac{d}{2} - k_0\right)\right]
            \end{equation}
            and hence
            \begin{equation}
                \begin{aligned}
                    & \Pr_{\rt\sim\Gamma(\frac d 2 - k_0,1)} \left[ \dfrac{T^2 - (\sigma'\sqrt{2\rt} - c\sqrt{d})^2}{4\sigma'\sqrt{2\rt}\cdot c\sqrt{d}} \ge 0.5 - \frac{c}{8\sigma} \right] \\
                    \le & \Pr_{\rt\sim\Gamma(\frac d 2 - k_0,1)} \left[\rt \le  \left(1-\frac{c^2}{8\sigma^2}\right)\left(\frac{d}{2} - k_0\right) \right] \\
                    \le & \dfrac{0.48}{0.99}. \hspace{5em} \text{(by \Cref{eq:thm-3-pf-4})}
                \end{aligned}
            \end{equation}
            Plugging this inequality to \Cref{eq:thm-3-pf-7}, we get
            \begin{equation}
                \E_{\vepsilon\sim\gP'} [F_0(\vx_0+\vdelta+\vepsilon) = y_0] \le 0.99 \cdot \frac{0.48}{0.99} + 0.01 = 0.49.
            \end{equation}
            As a result, when $c \le \sigma\sqrt{8/7}$, the smoothed classifier $\wF_0^{\gP'}$ is not robust given the perturbation $\vdelta=(c\sqrt{d},0,0,\cdots,0)^\T$, since there may exist another $y'\neq y_0$ with $\E_{\vepsilon\sim\gP'} [F_0(\vx_0+\vdelta+\vepsilon) = y'] \ge 0.51$ so $\wF_0^{\gP'}(\vx_0+\vdelta) =y' \neq y_0$.
            
            When $c > \sigma\sqrt{8/7}$, within the $\ell_2$ radius ball $c\sqrt{d}$, there exists perturbation vector $\vdelta=(c'\sqrt{d},0,0,\cdots,0)^\T$ fooling smoothed classifier $\wF_0^{\gP'}$ where $c'=\sigma\sqrt{8/7}$.
            Hence, for any $c > 0$, there exists a perturbation within $\ell_2$ ball with radius $c\sqrt{d}$, such that smoothed classifier $\wF_0^{\gP'}$ can be fooled, and then any robustness certification method cannot certify $\ell_2$ radius $c\sqrt{d}$ since the smoothed classifier itself is not robust.
        \end{proof}

\section{Proofs of \shortApproach Computational Method}

    \label{newadx:det-computional-method}
    
    \subsection{Proof of Strong Duality~(\texorpdfstring{\lowercase{\Cref{thm:strong-duality}}}{Theorem 3})}
        \label{newadx:sub:strong-duality-proof}
        
        \begin{proof}[Proof of \Cref{thm:strong-duality}]
            We write down the Lagrangian dual function of  \Cref{eq:primal-opt}:
            \begin{equation}
                \begin{aligned}
                    \Lambda(f,\,\lambda_1,\,\lambda_2) 
                    := & \E_{\vepsilon\sim\gP} [f(\vdelta + \vepsilon)] 
                    - \lambda_1 \left( \E_{\vepsilon\sim\gP} [f(\vepsilon)] - P_A\right) \\
                    &
                    - \lambda_2 \left( \E_{\vepsilon\sim\gQ} [f(\vepsilon)] - Q_A\right).
                \end{aligned}
                \label{eq:new-proof-thm-3-lambda-def}
            \end{equation}
            Then, from $\tC$'s expression~(\Cref{eq:primal}), we have
                \begin{align}
                    & \tC_{\vdelta}(P_A, Q_A) \nonumber \\
                    = & \min_f \E_{\vepsilon\sim\gP} [f(\vepsilon + \vdelta)] \,\mathrm{s.t.}\, 0\le f(\vepsilon) \le 1 \, \forall \vepsilon\in\sR^d, \nonumber \\
                    & \hspace{3em}  \E_{\vepsilon\sim\gP} [f(\vepsilon)] = P_A, \E_{\vepsilon\sim\gQ} [f(\vepsilon)] = Q_A  \nonumber\\
                    = & \min_{f: \sR^d \to [0,1]} \max_{\lambda_1,\lambda_2\in \sR} \Lambda(f,\,\lambda_1,\,\lambda_2) \nonumber \\
                    \overset{(i)}{\ge} & \max_{\lambda_1,\lambda_2\in \sR} \min_{f: \sR^d \to [0,1]}
                    \Lambda(f,\,\lambda_1,\,\lambda_2) \nonumber \\
                    \overset{(ii)}{=} & \max_{\lambda_1,\lambda_2\in\sR}
                    \Pr_{\vepsilon\sim\gP} [p(\vepsilon) < \lambda_1 p(\vepsilon+\vdelta) + \lambda_2 q(\vepsilon+\vdelta)] \nonumber \\
                    & \hspace{2em} - \lambda_1 \Pr_{\vepsilon\sim\gP} [p(\vepsilon-\vdelta) < \lambda_1 p(\vepsilon) + \lambda_2 q(\vepsilon)] \nonumber  \\
                    & \hspace{2em} - \lambda_2 \Pr_{\vepsilon\sim\gQ} [p(\vepsilon-\vdelta) < \lambda_1 p(\vepsilon) + \lambda_2 q(\vepsilon)] \nonumber  \\
                    & \hspace{2em} + \lambda_1 P_A + \lambda_2 Q_A. \label{eq:new-proof-thm-3-ineq}
                \end{align}
                %
            In the above equation, $(i)$ is from the min-max inequality.
            For completeness, we provide the proof as such:
            Define $g: \sR^2 \to \sR$ such that $g(\lambda_1,\lambda_2) := \min_{f:\sR^d \to [0,1]} \Lambda(f,\,\lambda_1,\lambda_2)$.
            As a result, for any $\lambda_1,\lambda_2\in\sR$ and any $f:\sR^d \to [0,1]$,
            $g(\lambda_1,\lambda_2) \le \Lambda(f,\lambda_1,\lambda_2)$.
            So for any $f: \sR^d \to [0,1]$, $\max_{\lambda_1,\lambda_2\in\sR} g(\lambda_1,\lambda_2) \le \max_{\lambda_1,\lambda_2\in \sR} \Lambda(f,\lambda_1,\lambda_2)$, which implies
            \begin{equation}
                \max_{\lambda_1,\lambda_2\in\sR} g(\lambda_1,\lambda_2) \le \min_{f:\sR\to[0,1]} \max_{\lambda_1,\lambda_2\in\sR} \Lambda(f,\lambda_1,\lambda_2),
                \label{eq:new-proof-thm-3-ineq-i}
            \end{equation}
            where LHS is the RHS of $(i)$ and RHS is the LHS of $(i)$.
            
            In above equation, $(ii)$ comes from a closed-form solution of $f$ for $\Lambda(f,\lambda_1,\lambda_2)$ given $(\lambda_1,\lambda_2)\in \sR^2$.
            Notice that we can rewrite $\Lambda(f,\lambda_1,\lambda_2)$ as an integral over $\sR^d$:
            \begin{align}
                & \Lambda(f,\,\lambda_1,\,\lambda_2) \nonumber \\
                = & \E_{\vepsilon\sim\gP} [f(\vdelta+\vepsilon)] - \lambda_1 \E_{\vepsilon\sim\gP} [f(\vepsilon)] - \lambda_2 \E_{\vepsilon\sim\gQ} [f(\vepsilon)] \nonumber \\
                & \hspace{2em} 
                + \lambda_1 P_A + \lambda_2 Q_A \nonumber \\
                = & \int_{\sR^d} f(\vx) \cdot \left( p(\vx - \vdelta) - \lambda_1 p(\vx) - \lambda_2 q(\vx) \right) \dif\vx \nonumber \\
                & \hspace{2em} 
                + \lambda_1 P_A + \lambda_2 Q_A.
                \label{eq:new-proof-them-3-ineq-ii}
            \end{align}
            
            We would like to minimize over $f: \sR^d \to [0,\,1]$ in \Cref{eq:new-proof-them-3-ineq-ii} and simple greedy solution reveals that we should choose
            \begin{equation}
                f(\vx) = \left\{
                \begin{aligned}
                    1, \quad & p(\vx - \vdelta) - \lambda_1p(\vx) - \lambda_2q(\vx) < 0 \\
                    0. \quad & p(\vx - \vdelta) - \lambda_1p(\vx) - \lambda_2q(\vx) \ge 0
                \end{aligned}
                \right.
                \label{eq:proof-thm-3-new-optimal-f}
            \end{equation}
            We inject this $f$ into \Cref{eq:new-proof-them-3-ineq-ii} and get
            \begin{equation}
                \begin{aligned}
                    & \min_{f:\sR^d\to[0,1]} \Lambda(f,\,\lambda_1,\,\lambda_2) \\
                    = & \Pr_{\vepsilon\sim\gP} [p(\vepsilon) < \lambda_1 p(\vepsilon + \vdelta) + \lambda_2 q(\vepsilon + \vdelta)] \\
                    & \hspace{1em} + \lambda_1 \left( P_A - \Pr_{\vepsilon\sim\gP} [p(\vepsilon - \vdelta) < \lambda_1p(\vepsilon) + \lambda_2q(\vepsilon)] \right) \\
                    & \hspace{1em} + \lambda_2 \left( Q_A - \Pr_{\vepsilon\sim\gQ} [p(\vepsilon - \vdelta) < \lambda_1p(\vepsilon) + \lambda_2q(\vepsilon)] \right). \\
                \end{aligned}
                \label{eq:new-proof-thm-3-ineq-ii-2}
            \end{equation}
            Hence $(ii)$ holds.
            
            On the other hand, we know that $\tD_\vdelta(P_A,Q_A)$~(defined by \Cref{eq:dual}) is feasible by theorem statement.
            Denote $(\lambda_1^*, \lambda_2^*)\in \sR^2$ to a feasible solution to $\tD_\vdelta(P_A,Q_A)$ and $d^*$ to the objective value, then from the constraints of $(\tD)$ we know
            \begin{equation}
                \begin{aligned}
                    \Pr_{\vepsilon\sim\gP} [p(\vepsilon-\vdelta) < \lambda_1^* p(\vepsilon) + \lambda_2^* q(\vepsilon)] = P_A, \\
                    \Pr_{\vepsilon\sim\gQ} [p(\vepsilon-\vdelta) < \lambda_1^* p(\vepsilon) + \lambda_2^* q(\vepsilon)] = Q_A, \\
                    \Pr_{\vepsilon\sim\gP} [p(\vepsilon) < \lambda_1^* p(\vepsilon+\vdelta) + \lambda_2^* q(\vepsilon+\vdelta)] = d^*.
                \end{aligned}
                \label{eq:new-proof-thm-3-equalities}
            \end{equation}
            Plugging in these equalities into \Cref{eq:new-proof-thm-3-ineq}, we have
            \begin{equation}
                \tC_\vdelta(P_A,Q_A) 
                \ge d^* - \lambda_1P_A - \lambda_2Q_A + \lambda_1P_A + \lambda_2 Q_A 
                = d^*.
                \label{eq:new-proof-thm-3-oneside}
            \end{equation}
            At the same time, we define function $f^*: \sR^d \to [0,1]$ such that 
            \begin{equation}
                f^*(\vx) = \1[p(\vx-\vdelta) - \lambda_1^* p(\vx) - \lambda_2^* q(\vx) < 0].
            \end{equation}
            From \Cref{eq:new-proof-thm-3-equalities}, $f^*$ satisfies the constraints of ($\tC$)~(\Cref{eq:primal-pa-qa,eq:primal-f-cons}).
            Since $(\tC)$ minimizes over all possible functions $f: \sR^d\to[0,1]$,
            we have
            \begin{equation}
                \tC_\vdelta(P_A,Q_A) \le \E_{\vepsilon\sim\gP} [f^*(\vepsilon+\vdelta)] = d^*.
                \label{eq:new-proof-thm-3-anotherside}
            \end{equation}
            Combining  \Cref{eq:new-proof-thm-3-oneside,eq:new-proof-thm-3-anotherside}, we get $\tC_{\vdelta}(P_A,Q_A) = d^*$ and hence the strong duality holds.
        \end{proof}

    \subsection{Proofs of \texorpdfstring{\lowercase{\Cref{prop:convex,prop:mono,thm:alg-2-good}}}{Results in Section 5.2}}
    
        \label{newadx:sub:preprocessing}
    
        
        \begin{proof}[Proof of \Cref{prop:convex}]
            Suppose $f_1$ is the optimal solution to $\tC_\vdelta(P_A^1,\,Q_A^1)$ and $f_2$ is the optimal solution to $\tC_\vdelta(P_A^2,\,Q_A^2)$.
            Due to the linearity of expectation, $(f_1 + f_2)/2$ satisfies all the constraints of \Cref{eq:primal} for $P_A = (P_A^1 + P_A^2) / 2$ and $Q_A = (Q_A^1 + Q_A^2) / 2$, i.e., $(f_1 + f_2)/2$ is feasible for $P_A = (P_A^1 + P_A^2) / 2$ and $Q_A = (Q_A^1 + Q_A^2) / 2$ with objective value $\left( \tC_\vdelta(P_A^1,\,Q_A^1) + \tC_\vdelta(P_A^2,\,Q_A^2) \right) / 2$.
            Thus, we have 
            \begin{equation}
                \begin{aligned}
                    & \tC_\vdelta\left( \frac{P_A^1 + P_A^2}{2},\, \frac{Q_A^1 + Q_A^2}{2} \right) \le \\
                    & \hspace{2em} 
                    \dfrac{1}{2} \left( \tC_\vdelta(P_A^1,\,Q_A^1) + \tC_\vdelta(P_A^2,\,Q_A^2) \right)
                \end{aligned}
            \end{equation}
            since $\tC$ is a minimization problem.
            By definition, $\tC_\vdelta (P_A,\,Q_A)$ is convex.
        \end{proof}
        
        \begin{remark}
        Since $\tC_\vdelta(P_A,\,Q_A)$ is defined on a compact $\sR^2$ subspace, the convexity implies continuity.
        The continuity property is used in the following proof of \Cref{thm:alg-2-good}.
        \end{remark}
        
        \begin{proof}[Proof of \Cref{prop:mono}]
            Here, we only prove the monotonicity for functions $x \mapsto \min_y \tC_\vdelta (x,\,y)$ and $x \mapsto \argmin_{y} \tC_\vdelta (x,\,y)$.
            The same statement for $y \mapsto \min_x \tC_\vdelta(x,\,y)$ and $y \mapsto \min_x \tC_\vdelta(x,\,y)$ is then straightforward due to the symmetry.
            
            For simplification, we define $\tC'_\vdelta: x \mapsto \min_y \tC_\vdelta(x,\,y)$ and let $\tilde\tC_\vdelta: x\mapsto \argmin_y \tC_\vdelta(x,\,y)$.
            We notice that both functions can be exactly mapped to the constrained optimization problem ($\tC'$) which removes the second constraint in \Cref{eq:primal-pa-qa} in ($\tC$):
            \begin{subequations}
                \begin{align}
                    \underset{f}{\mathrm{minimize}} \quad & \E_{\vepsilon\sim\gP} [f(\vdelta + \vepsilon)] \\
                    \mathrm{s.t.} \quad & \E_{\vepsilon\sim\gP}[f(\vepsilon)] = x,\,\\
                    & 0 \le f(\vepsilon) \le 1 \quad \forall \vepsilon\sim\sR^d.
                \end{align}
            \end{subequations}
            $\tC'_\vdelta(x)$ is the optimal objective to ($\tC'$) and $\bar\tC_\vdelta(x)$ is $\E_{\vepsilon\sim\gQ}[f^*(\vepsilon)]$ where $f^*$ is the optimal solution.
            
            Either based on Neyman-Pearson lemma~[\citeyear{neyman1933ix}] or strong duality, $(\tC')$ is equivalent to $(\tD')$ defined as such:
            \begin{subequations}
                \begin{align}
                    & \Pr_{\vepsilon\sim\gP} [p(\vepsilon) < \lambda p(\vepsilon + \vdelta)] \\
                    \mathrm{s.t.} \quad & \Pr_{\vepsilon\sim\gP} [p(\vepsilon - \vdelta) < \lambda p(\vepsilon)] = x. \label{eq:proof-prop-2-d-prime-cons}
                \end{align}
            \end{subequations}
            For a given $x$, we only need to find $\lambda$ satisfying \Cref{eq:proof-prop-2-d-prime-cons}.
            Then, 
            \begin{align}
                \tC'_\vdelta(x) & = \Pr_{\vepsilon\sim\gP} [p(\vepsilon) < \lambda p(\vepsilon + \vdelta)], \label{eq:proof-prop-2-alter-C-prime} \\
                \bar\tC_\vdelta(x) & = \Pr_{\vepsilon\sim\gQ} [p(\vepsilon - \vdelta) < \lambda p(\vepsilon)]. \label{eq:proof-prop-2-alter-C-bar}
            \end{align}
            
            Now the monotonicity~(what we would like to prove) is apparent.
            For $x_1 < x_2$, from \Cref{eq:proof-prop-2-d-prime-cons}, we have $\lambda_1 < \lambda_2$, since the probability density function $p$ is non-negative.
            Thus, we inject $\lambda_1$ and $\lambda_2$ into \Cref{eq:proof-prop-2-alter-C-prime} and \Cref{eq:proof-prop-2-alter-C-bar}, and yield
            \begin{equation}
                \tC'_\vdelta(x_1) \le \tC'_\vdelta(x_2),\, 
                \bar \tC_\vdelta(x_1) \le \bar \tC_\vdelta(x_2),
            \end{equation}
            which concludes the proof.
        \end{proof}
        
        \begin{proof}[Proof of \Cref{thm:alg-2-good}]
            We discuss the cases according to the branching statement in the algorithm~(\Cref{alg:determine-p-a-q-a}).
            
            If $\underline{q} > \underline{Q_A}$,
            \begin{itemize}
                \item if $\underline{q} \le \overline{Q_A}$, by definition we have $\tC_\vdelta(\underline{P_A},\,\underline{q}) \le \tC_\vdelta(\underline{P_A},\,y)$ for arbitrary $y$.
                According to \Cref{prop:mono}, we also have $\tC_\vdelta(\underline{P_A},\,\underline{q}) \le \tC_\vdelta(x,\,y)$ for arbitrary $x \ge \underline{P_A}$ and arbitrary $y$.
                Given that $(\underline{P_A},\,\underline{q}) \in [\underline{P_A},\,\overline{P_A}] \times [\underline{Q_A},\,\overline{Q_A}]$,
                $(\underline{P_A},\,\underline{q})$ solves \Cref{eq:preprocess-q-a-q-a};
                
                \item if $\underline{q} > \overline{Q_A}$, by convexity, $\tC_\vdelta(\underline{P_A},\,\overline{Q_A}) \le \tC_\vdelta(\underline{P_A},\,y)$ for $y \in [\underline{Q_A},\,\overline{Q_A}]$.
                
                We further show that $\tC_\vdelta(\underline{P_A},\,\overline{Q_A}) \le \tC_\vdelta(x,\,\overline{Q_A})$ for $x \in [\underline{P_A},\,\overline{P_A}]$:
                assume that this is not true, by \Cref{prop:convex}, the function $y \mapsto \argmin_{x} \tC_\vdelta(x,\,y)$ has function value larger than $\underline{P_A}$ at $y = \overline{Q_A}$.
                Since $\tC_\vdelta(0,\,0) = 0$ is the global minimum of $\tC_\vdelta$, the function value at $y = 0$ is $x = 0$.
                By \Cref{prop:mono}, there exists $y_0 \in [0,\,\overline{Q_A}]$ such that $\underline{P_A} = \argmin_x \tC_\vdelta(x,\,y_0)$.
                Then, we get 
                \begin{equation}
                    \begin{aligned}
                        & \tC_\vdelta(\underline{P_A},\,y_0) \\
                        \overset{(i.)}{\le} & \tC_\vdelta(\argmin_x \tC_\vdelta(x,\,\overline{Q_A}),\,\overline{Q_A}) \\
                        \overset{(ii.)}{\le} & \tC_\vdelta(\underline{P_A},\,\overline{Q_A}),
                    \end{aligned}
                    \label{eq:proof-thm-4-1}
                \end{equation}
                where $(i.)$ follows from \Cref{prop:mono} for $y \mapsto \argmin_x \tC_\vdelta(x,\,y)$; $(ii.)$ is implied in the meaning of $\argmin_x \tC_\vdelta(x,\,\overline{Q_A})$.
                Since $y_0 \in [0,\,\overline{Q_A}]$, \Cref{eq:proof-thm-4-1} implies that $\underline{q}$ should be in $[0,\,\overline{Q_A}]$ as well, which violates the branching condition.
                Thus, $\tC_\vdelta(\underline{P_A},\,\overline{Q_A}) \le \tC_\vdelta(x,\,\overline{Q_A})$ for $x\in [\underline{P_A},\,\overline{P_A}]$.
                
                Using \Cref{prop:mono} for function $x \mapsto \argmin_y \tC_\vdelta(x,\,y)$ in interval $[\underline{P_A},\,\overline{P_A}]$ together with \Cref{prop:convex}, we get $\tC_\vdelta(x,\,\overline{Q_A}) \le \tC_\vdelta(x,\,y)$ for $x \in [\underline{P_A},\,\overline{P_A}]$ and $y \in [\underline{Q_A},\,\overline{Q_A}]$.
                Thus, $(\underline{P_A},\,\overline{Q_A})$ solves \Cref{eq:preprocess-q-a-q-a}.
            \end{itemize}
            
            If $\underline{q} \le \underline{Q_A}$,
            \begin{itemize}
                \item if $\underline{p} \le \overline{P_A}$,
                by definition we have $\tC_\vdelta(\max\{\underline{p},\underline{P_A}\},\,\underline{Q_A}) \le \tC_\vdelta(x,\,\underline{Q_A})$ for $x\in[\underline{P_A},\,\overline{P_A}]$.
                According to \Cref{prop:mono} and condition $\underline{q} \le \underline{Q_A}$, we further have $\tC_\vdelta(\max\{\underline{p},\underline{P_A}\},\,\underline{Q_A}) \le \tC_\vdelta(\max\{\underline{p},\underline{P_A}\},y) \le \tC_\vdelta(x,\,y)$ for arbitrary $x \in [\underline{P_A}, \overline{P_A}]$ and $y \in [\underline{Q_A},\overline{Q_A}]$.
                Given that $(\max\{\underline{p},\underline{P_A}\},\,\underline{Q_A}) \in [\underline{P_A},\,\overline{P_A}] \times [\underline{Q_A},\,\overline{Q_A}]$,
                $(\underline{p},\,\underline{Q_A})$ solves \Cref{eq:preprocess-q-a-q-a};
                
                \item if $\underline{p} > \overline{P_A}$,
                according to \Cref{prop:convex}, $\tC_\vdelta(\overline{P_A},\,\underline{Q_A}) \le \tC_\vdelta(x,\,\underline{Q_A})$ for $x\in [\underline{P_A},\,\overline{P_A}]$.
                
                We further show that $\tC_\vdelta(\overline{P_A},\, \underline{Q_A}) \le \tC_\vdelta(\overline{P_A},\, y)$ for $y \in [\underline{Q_A},\,\overline{Q_A}]$:
                assume that this is not true, by \Cref{prop:convex}, the function $x \mapsto \argmin_y \tC_\vdelta(x,\,y)$ has function value larger than $\underline{Q_A}$ at $x = \overline{P_A}$.
                Since $\tC_\vdelta(0,\,0)$ is the global minimum, by \Cref{prop:mono} on $x \mapsto \argmin_y \tC_\vdelta(x,\,y)$, there exists $x_0 \in [0,\,\overline{P_A}]$ such that $\underline{Q_A} = \argmin_y \tC_\vdelta(x_0,\,y)$.
                Then, we get
                \begin{equation}
                    \begin{aligned}
                         \tC_\vdelta(x_0,\,\underline{Q_A}) \\
                        \le & \tC_\vdelta(\overline{P_A},\,\argmin_y \tC_\vdelta(\overline{P_A},\,y)) \\
                        \le & \tC_\vdelta(\overline{P_A},\,\underline{Q_A})
                    \end{aligned}
                    \label{eq:proof-thm-4-2}
                \end{equation}
                following the similar deduction as in \Cref{eq:proof-thm-4-1}.
                Since $x_0 \in [0,\,\overline{P_A}]$, \Cref{eq:proof-thm-4-2} implies that $\underline{p}$ should be in $[0,\,\overline{P_A}]$ as well, which violates the branching condition.
                Thus, $\tC_\vdelta(\overline{P_A},\,\underline{Q_A}) \le \tC_\vdelta(\overline{P_A},\,y)$ for $y \in [\underline{Q_A},\,\overline{Q_A}]$.
                
                Using \Cref{prop:mono} for function $y \mapsto \argmin_x \tC_\vdelta(x,\,y)$ in interval $[\underline{Q_A},\,\overline{Q_A}]$ together with \Cref{prop:convex}, we get $\tC_\vdelta(\overline{P_A},\,y) \le \tC_\vdelta(x,\,y)$ for $y\in [\underline{Q_A},\,\overline{Q_A}]$ and $x\in [\underline{P_A},\,\overline{P_A}]$.
                Thus, $(\overline{P_A},\,\underline{Q_A})$ solves \Cref{eq:preprocess-q-a-q-a}.
            \end{itemize}
        \end{proof}
    
    \subsection{Proof of \texorpdfstring{\lowercase{\Cref{thm:concrete-equations}}}{Theorem 5}}
    
        \label{newadx:sub:proof-concrete-equations}
        
        \begin{proof}[Proof of \Cref{thm:concrete-equations}]
            We first define $r_p(\|\vepsilon\|_2) = p(\vepsilon)$ and $r_q(\|\vepsilon\|_2) = q(\vepsilon)$, then easily seen the concrete expressions of $r_p$ and $r_q$ are:
            \begin{align}
                r_p(t) & = \dfrac{1}{(2\sigma'^2)^{d/2-k} \pi^{d/2}} \cdot \dfrac{\Gamma(d/2)}{\Gamma(d/2-k)}, \label{eq:pf-thm5-1} \\
                r_q(t) & = \dfrac{\nu}{(2\sigma'^2)^{d/2-k} \pi^{d/2}} \cdot \dfrac{\Gamma(d/2)}{\Gamma(d/2-k)},
                \label{eq:pf-thm5-2}
            \end{align}
            where
            \begin{equation}
                \nu := \dfrac{\Gamma(d/2-k)}{\gamma(d/2-k, \frac{T^2}{2\sigma'^2})} > 1
            \end{equation}
            and $\gamma$ is the lower incomplete Gamma function.
        
            Now we use level-set integration similar as the proof in \Cref{lem:thm-2-3} to get the expressions of $P$, $Q$, and $R$ respectively.
            Since $\gP$ and $\gQ$ are $\ell_2$-symmetric, without loss of generality, we let $\vdelta = (r,0,\dots,0)^\T$.
            
            \textbf{(P)}.
            
            Suppose $P_T = r_p(T)$.
            \begin{align*}
                & P(\lambda_1,\lambda_2) \\
                = & \Pr_{\vepsilon\sim\gP = \gNg(k,\sigma)} [p(\vepsilon-\vdelta) < \lambda_1 p(\vepsilon) + \lambda_2 q(\vepsilon)] \\
                = & \int_{\sR^d} \1[p(\vx - \vdelta) < \lambda_1 p(\vx) + \lambda_2 q(\vx)] p(\vx) \dif \vx \\
                = & \int_0^{P_T} y \dif y \int_{\substack{p(\vx)=y\\ p(\vx-\vdelta) < \lambda_1p(\vx)}} \dfrac{\dif \vx}{\|\nabla p(\vx)\|_2} 
                + \\
                & \hspace{1em}
                \int_{P_T}^\infty y \dif y \int_{\substack{p(\vx)=y\\ p(\vx-\vdelta) < (\lambda_1 + \lambda_2\nu) p(\vx)}} \dfrac{\dif \vx}{\|\nabla p(\vx)\|_2} \\
                = & \int_0^{P_T} y\dif y \dfrac{2\pi^{d/2}}{\Gamma(d/2)} r_p^{-1}(y)^{d-1} \left( - \dfrac{1}{r_p'(r_p^{-1}(y))} \right) \cdot \\
                & \hspace{1em} \Pr[ p(\vx-\vdelta) \le \lambda_1 p(\vx) \,|\, p(\vx)=y ] + \\
                & \hspace{1em} \int_{P_T}^{\infty} y\dif y \dfrac{2\pi^{d/2}}{\Gamma(d/2)} r_p^{-1}(y)^{d-1} \left( - \dfrac{1}{r_p'(r_p^{-1}(y))} \right) \cdot \\
                & \hspace{1em} \Pr[p(\vx-\vdelta) < (\lambda_1+\lambda_2\nu) p(\vx) \,|\, p(\vx) = y] \\
                \overset{y=r_p(t)}{=} &
                \int_T^\infty r_p(t)\dif t \dfrac{2\pi^{d/2}}{\Gamma(d/2)} t^{d-1} \cdot \\
                & \hspace{1em} \Pr[p(\vx-\vdelta) < \lambda_1p(\vx) \,|\, \|\vx\|_2 = t] + \\
                & \hspace{1em}
                \int_0^T r_p(t)\dif t \dfrac{2\pi^{d/2}}{\Gamma(d/2)} t^{d-1} \cdot \\
                & \hspace{1em} \Pr[p(\vx-\vdelta) < (\lambda_1 + \lambda_2\nu) p(\vx) \,|\, \|\vx\|_2 = t] \\
                = & \dfrac{1}{\Gamma(d/2-k)} \int_{T^2/(2\sigma'^2)}^\infty t^{d/2-k-1} \exp(-t) \dif t \cdot \\
                & \hspace{1em} \Pr[p(\vx-\vdelta) < \lambda_1p(\vx) \,|\, \|\vx\|_2 = \sigma'\sqrt{2t}] + \\
                & \hspace{1em} \dfrac{1}{\Gamma(d/2-k)} \int_{0}^{T^2/(2\sigma'^2)} t^{d/2-k-1} \exp(-t) \dif t \cdot \\
                & \hspace{1em} \Pr[p(\vx-\vdelta) < (\lambda_1 + \lambda_2\nu) p(\vx) \,|\, \|\vx\|_2 = \sigma'\sqrt{2t}] \\
                = & \E_{t\sim\Gamma(d/2-k,1)}
                \left\{
                \begin{array}{lr}
                    u_3(t,\lambda_1), & t \ge T^2 / (2\sigma'^2) \\
                    u_3(t,\lambda_1+\lambda_2\nu). & t < T^2 / (2\sigma'^2)
                \end{array}
                \right.
            \end{align*}
            Here, $u_3(t,\lambda) = \Pr[p(\vx-\vdelta) < \lambda p(\vx) \,|\, \|\vx\|_2 = \sigma'\sqrt{2t}]$.
            
            \textbf{(Q)}.
            
            Similarly,
            \begin{align*}
                & Q(\lambda_1,\lambda_2) \\
                = & \Pr_{\vepsilon\sim\gQ = \gNgtrunc(k,T,\sigma)} [p(\vepsilon-\vdelta) < \lambda_1p(\vepsilon) + \lambda_2q(\vepsilon)] \\
                = & \int_{\|\vx\|_2\le T} \1[p(\vx-\vdelta) < \lambda_1p(\vx) + \lambda_2 q(\vx)] q(\vx) \dif \vx \\
                = & \int_{\nu P_T}^\infty y \dif y \int_{\substack{q(\vx) = y\\ p(\vx-\vdelta) < (\lambda_1 + \lambda_2\nu) p(\vx)}} \dfrac{\dif \vx}{\|\nabla q(\vx)\|_2}  \\
                \overset{y = r_q(t)}{=} & \int_0^T
                r_q(t) \dif t \dfrac{2\pi^{d/2}}{\Gamma(d/2)} t^{d-1} \cdot \\
                & \hspace{1em}
                \Pr[p(\vx-\vdelta) < (\lambda_1 + \lambda_2 \nu) p(\vx) \,|\, \|\vx\|_2=t] \\
                = & \nu \E_{t\sim \Gamma(d/2-k,1)} u_3(t,\lambda_1 + \lambda_2\nu) \cdot \1\left[ t \le \frac{T^2}{2\sigma'^2} \right].
            \end{align*}
            
            \textbf{(R)}.
            
            Now, for $R$:
            \begin{align*}
                & R(\lambda_1,\lambda_2) \\
                = & \Pr_{\vepsilon\sim\gP=\gNg(k,\sigma)} [p(\vepsilon) < \lambda_1 p(\vepsilon + \vdelta) + \lambda_2 q(\vepsilon + \vdelta)] \\
                = & \int_{\sR^d} \1[p(\vx) < \lambda_1 p(\vx + \vdelta) + \lambda_2 q(\vx + \vdelta)] p(\vx) \dif \vx \\
                = & \E_{t\sim\Gamma(d/2-k,1)} u_4(t,\lambda_1,\lambda_2).
            \end{align*}
            Here, $u_4(t,\lambda_1,\lambda_2)=\Pr[\lambda_1 p(\vx + \vdelta) + \lambda_2 q(\vx + \vdelta) > r_p(\sigma'\sqrt{2t}) \,|\, \|\vx\|_2 = \sigma'\sqrt{2t}]$.
            
            Plugging \Cref{lem:thm-5-1} into $P(\lambda_1,\lambda_2)$ and $Q(\lambda_1,\lambda_2)$, and then plugging \Cref{lem:thm-5-2} into $R(\lambda_1,\lambda_2)$, we yield the desired expressions in theorem statement.
        \end{proof}
        
        \begin{lemma}
            Under the condition of \Cref{thm:concrete-equations},
            let $\vdelta = (r,0,\dots,0)^\T$,
            \begin{equation}
                \begin{aligned}
                    u_3(t,\lambda) & := \Pr[p(\vx-\vdelta) < \lambda p(\vx) \,|\, \|\vx\|_2 = \sigma'\sqrt{2t}] \\
                    & \hspace{-3em} = \BetaCDF_{\frac{d-1}{2}} \left(
                        \dfrac{(r + \sigma'\sqrt{2t})^2}{4r\sigma'\sqrt{2t}} - 
                        \dfrac{2k\sigma'^2 W(\frac t k e^{\frac t k} \lambda^{-\frac 1 k})}{4r\sigma'\sqrt{2t}}
                    \right),
                \end{aligned}
            \end{equation}
            where $W$ is the principal branch of Lambert $W$ function.
            \label{lem:thm-5-1}
        \end{lemma}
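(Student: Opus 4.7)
The plan is to reduce the density-ratio inequality $p(\vx-\vdelta) < \lambda p(\vx)$ to a one-dimensional threshold on the coordinate $x_1$, and then evaluate the probability using the known marginal distribution of $x_1$ on the sphere $\|\vx\|_2 = \sigma'\sqrt{2t}$. First, I would observe that if $\lambda \le 0$ the event is empty (both densities are strictly positive), so we may assume $\lambda > 0$. Writing $p(\vy) \propto \|\vy\|_2^{-2k}\exp(-\|\vy\|_2^2/(2\sigma'^2))$ and letting $s := \|\vx-\vdelta\|_2^2$ and $a := \|\vx\|_2^2 = 2\sigma'^2 t$, the condition $p(\vx-\vdelta)<\lambda p(\vx)$ becomes, after taking logs and rearranging,
\begin{equation*}
-k\log s - \frac{s}{2\sigma'^2} \;<\; \log\lambda - k\log a - t.
\end{equation*}

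Next, I would isolate $s$ in a form suitable for the Lambert $W$ function. Dividing by $-k$ and exponentiating gives
\begin{equation*}
s\cdot\exp\!\left(\frac{s}{2\sigma'^2 k}\right) \;>\; \lambda^{-1/k}\cdot a\cdot e^{t/k} \;=\; 2\sigma'^2 t\, e^{t/k}\lambda^{-1/k}.
\end{equation*}
Substituting $w := s/(2\sigma'^2 k)$ converts the left side to the canonical form $w e^w$, yielding $w e^w > (t/k)\,e^{t/k}\lambda^{-1/k}$. By monotonicity of the principal branch of the Lambert $W$ function on $[0,\infty)$, this is equivalent to $w > W\bigl(\tfrac{t}{k} e^{t/k}\lambda^{-1/k}\bigr)$, i.e., $s > 2\sigma'^2 k\, W\bigl(\tfrac{t}{k} e^{t/k}\lambda^{-1/k}\bigr) =: \tau$.

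Then, using the geometric identity $s = a + r^2 - 2r x_1$, the condition $s > \tau$ becomes the linear threshold $x_1 < (a + r^2 - \tau)/(2r)$. Normalizing, $(1 + x_1/(\sigma'\sqrt{2t}))/2 < ((r+\sigma'\sqrt{2t})^2 - \tau)/(4r\sigma'\sqrt{2t})$, where I used $a + r^2 + 2r\sigma'\sqrt{2t} = (r + \sigma'\sqrt{2t})^2$. Finally, invoking the fact (already cited in the proof of \Cref{lem:thm-2-3} from \citep{yang2020randomized}) that when $\vx$ is uniform on a sphere of radius $\sigma'\sqrt{2t}$, the normalized first coordinate satisfies $(1+x_1/(\sigma'\sqrt{2t}))/2 \sim \Beta((d-1)/2,(d-1)/2)$, the probability of the event is exactly the BetaCDF of the stated threshold, which matches the claimed formula.

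The main obstacle is the algebraic manipulation in the second step: one has to recognize that the mixed log-plus-linear inequality in $s$ admits an exact inversion via Lambert $W$ once one substitutes $w = s/(2\sigma'^2 k)$, and one must carefully track the direction of each inequality through the division by $-k$ and the monotone substitutions. The case analysis for the sign of $\lambda$ is straightforward but must be mentioned for completeness; the Beta-distribution step is then mechanical.
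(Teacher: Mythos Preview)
Your proposal is correct and follows essentially the same approach as the paper: reduce $p(\vx-\vdelta)<\lambda p(\vx)$ to a lower bound on $\|\vx-\vdelta\|_2^2$ via the Lambert $W$ inversion, convert this to a linear threshold on $x_1$ using $\|\vx-\vdelta\|_2^2 = \|\vx\|_2^2 + r^2 - 2r x_1$, and then apply the $\Beta((d-1)/2,(d-1)/2)$ marginal for $x_1$ on the sphere. The paper phrases the inversion step as computing $r_p^{-1}(\lambda r_p(\sigma'\sqrt{2t}))^2$ and states the Lambert $W$ identity directly, whereas you carry out the log-and-substitute manipulation explicitly; the content is the same, and your explicit handling of the $\lambda\le 0$ case is a harmless addition.
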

        
        \begin{proof}[Proof of \Cref{lem:thm-5-1}]
            $p(\vx-\vdelta) < \lambda p(\vx)$ means that
            $r_p(\|\vx-\vdelta\|_2) < \lambda r_p(\|\vx\|_2)$ and therefore
            $\|\vx-\vdelta\|_2 > r_p^{-1}(\lambda r_p(\|\vx\|_2))$.
            Given that $\|\vx\|_2 = \sigma'\sqrt{2t}$,
            we have
            \begin{equation}
                \left\{
                \begin{aligned}
                    & x_1^2 + \sum_{i=2}^d x_i^2 = 2t\sigma'^2, \\
                    & (x_1-r)^2 + \sum_{i=2}^d x_i^2 \ge r_p^{-1}(\lambda r_p(\sigma'\sqrt{2t}))^2.
                \end{aligned}
                \right.
            \end{equation}
            This is equivalent to
            \begin{equation}
                x_1 \le \dfrac{2t\sigma'^2 + r^2 - r_p^{-1}(\lambda r_p(\sigma' \sqrt{2t}))^2}{2r}.
            \end{equation}
            From the expression of $r_p$~(\Cref{eq:pf-thm5-1}),
            we have
            \begin{equation}
                r_p^{-1}(\lambda r_p(\sigma'\sqrt{2t}))^2 = 2\sigma'^2 k W\left( \frac t k e^{\frac t k} \lambda^{-\frac 1 k} \right).
                \label{eq:pf-lem-5-1-2}
            \end{equation}
            Thus,
            when $\|\vx\|_2 = \sigma'\sqrt{2t}$ and $\vx$ uniformly sampled from this sphere,
            \begin{equation}
                \begin{aligned}
                    & p(\vx-\vdelta) < \lambda p(\vx) \\
                    \iff & x_1 \le \dfrac{2t\sigma'^2 + r^2 - 2\sigma'^2k W(\frac t k e^{\frac t k} \lambda^{-\frac 1 k})}{2r} \\
                    \iff & 
                    \dfrac{1 + \frac{x_1}{\sigma'\sqrt{2t}}}{2} \le \dfrac{(r + \sigma'\sqrt{2t})^2 - 2\sigma'^2 k W\left( \frac t k e^{\frac t k} \lambda^{-\frac 1 k} \right)}{4r\sigma'\sqrt{2t}}.
                \end{aligned}
                \label{eq:pf-lem-5-1-1}
            \end{equation}
             According to \citep[Lemma I.23]{yang2020randomized}, for $\vx$ uniformly sampled from sphere with radius $\sigma'\sqrt{2t}$, the component coordinate $\dfrac{1 + \frac{x_1}{\sigma'\sqrt{2t}}}{2} \sim \Beta(\frac{d-1}{2}, \frac{d-1}{2})$.
             Combining \Cref{eq:pf-lem-5-1-1} with this result concludes the proof.
        \end{proof}
        
        \begin{lemma}
            \label{lem:thm-5-2}
            Under the condition of \Cref{thm:concrete-equations},
            let $\vdelta = (r,0,\dots,0)^\T$,
            \begin{equation}
                \begin{aligned}
                    u_4(t,\lambda_1,\lambda_2) & :=\Pr[\lambda_1 p(\vx + \vdelta) + \lambda_2 q(\vx + \vdelta) > r_p(\sigma'\sqrt{2t}) \\
                    & \hspace{2em} \,|\, \|\vx\|_2 = \sigma'\sqrt{2t}] \\
                    = & \left\{ \begin{array}{lr}
                u_1(t), & \lambda_1 \le 0 \\
                u_1(t) + u_2(t), & \lambda_1 > 0
                \end{array} \right.
                \end{aligned}
            \end{equation}
            where
            \begin{equation*}
                \scriptsize
                \begin{aligned}
                u_1(t) = & \BetaCDF_{\frac{d-1}{2}} \left(
                    \dfrac{\min\{T^2, 2\sigma'^2 k W(\frac{t}{k}e^{\frac t k} (\lambda_1 + \nu \lambda_2)^{\frac 1 k}) \}}{4r\sigma'\sqrt{2t}} \right. \\
                    & \hspace{3em} \left. - 
                    \dfrac{(\sigma'\sqrt{2t} - r)^2}
                    {4r\sigma'\sqrt{2t}}
                \right), \\
                u_2(t) = & \max\left\{ 0,
                    \BetaCDF_{\frac{d-1}{2}} \left(
                    \dfrac{2\sigma'^2k W(\frac{t}{k} e^{\frac t k} \lambda_1^{\frac 1 k}) - (\sigma'\sqrt{2t} - r)^2}{4r\sigma'\sqrt{2t}} 
                    \right) 
                \right. \\
                & \hspace{3em} \left. -\BetaCDF_{\frac{d-1}{2}} \left(
                    \dfrac{T^2 - (\sigma'\sqrt{2t} - r)^2}{4r\sigma'\sqrt{2t}}
                \right) \right\}, \\
                \end{aligned}
            \end{equation*}
        \end{lemma}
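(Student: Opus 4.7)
The plan is to decompose the event defining $u_4(t,\lambda_1,\lambda_2)$ into a disjoint union of events of the form $\{a^2 < \|\vx+\vdelta\|_2^2 \le b^2\}$, translate each into a linear constraint on the first coordinate $x_1$ of a uniformly sampled point on the sphere $\{\|\vx\|_2 = \sigma'\sqrt{2t}\}$, and then invoke the same beta-distribution identity used in \Cref{lem:thm-5-1}. The whole argument mirrors the level-set slicing in the proof of that lemma, but with a two-region split caused by the truncation in $\gQ$.

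The first step is to observe that $q(\vy) = \nu\, p(\vy)\cdot \1[\|\vy\|_2 \le T]$, so the condition $\lambda_1 p(\vx+\vdelta) + \lambda_2 q(\vx+\vdelta) > r_p(\sigma'\sqrt{2t})$ splits according to whether $\|\vx+\vdelta\|_2 \le T$ or $\|\vx+\vdelta\|_2 > T$. In the inner region the condition becomes $(\lambda_1+\nu\lambda_2)\, r_p(\|\vx+\vdelta\|_2) > r_p(\sigma'\sqrt{2t})$, and in the outer region it becomes $\lambda_1\, r_p(\|\vx+\vdelta\|_2) > r_p(\sigma'\sqrt{2t})$. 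Since $r_p(s) \propto s^{-2k}\exp(-s^2/(2\sigma'^2))$ is strictly decreasing on $(0,\infty)$, each inequality (whenever the coefficient on the left is positive) is equivalent to an upper bound on $\|\vx+\vdelta\|_2^2$. Using the Lambert $W$ inversion already derived in \Cref{eq:pf-lem-5-1-2} with $\lambda = 1/(\lambda_1+\nu\lambda_2)$ and $\lambda = 1/\lambda_1$ respectively, these upper bounds are exactly the quantities $A := 2\sigma'^2 k\, W\!\left(\tfrac{t}{k} e^{t/k}(\lambda_1+\nu\lambda_2)^{1/k}\right)$ and $B := 2\sigma'^2 k\, W\!\left(\tfrac{t}{k} e^{t/k}\lambda_1^{1/k}\right)$. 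Consequently, up to a set of zero surface measure, the $u_4$ event is the disjoint union $\{\|\vx+\vdelta\|_2^2 \le \min(T^2, A)\} \cup \{T^2 < \|\vx+\vdelta\|_2^2 \le B\}$, with the outer piece vacuous when $\lambda_1 \le 0$ (both sides have opposite signs) or when $B \le T^2$.

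The second step is to translate each $\|\vx+\vdelta\|_2^2 \le M^2$ into a condition on $x_1$. Expanding $\|\vx+\vdelta\|_2^2 = \|\vx\|_2^2 + 2rx_1 + r^2 = 2t\sigma'^2 + 2rx_1 + r^2$, a direct computation shows the bound is equivalent to $\tfrac{1}{2}\bigl(1 + x_1/(\sigma'\sqrt{2t})\bigr) \le \bigl(M^2 - (\sigma'\sqrt{2t}-r)^2\bigr) / \bigl(4r\sigma'\sqrt{2t}\bigr)$. Applying \citep[Lemma~I.23]{yang2020randomized}, which states that this normalized first-coordinate variable follows $\Beta(\tfrac{d-1}{2},\tfrac{d-1}{2})$ when $\vx$ is uniform on the sphere of radius $\sigma'\sqrt{2t}$, converts the probability of each region into a $\BetaCDF_{(d-1)/2}$ value. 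Taking $M^2 = \min(T^2, A)$ for the inner region yields exactly $u_1(t)$; for the outer region, the probability of $\{T^2 < \|\vx+\vdelta\|_2^2 \le B\}$ becomes the difference of two $\BetaCDF_{(d-1)/2}$ values, which equals the inner expression of $u_2(t)$. Summing the two contributions recovers the claimed piecewise formula, with the $\max\{\cdot, 0\}$ in $u_2$ cleanly handling both the $B \le T^2$ case and the $\lambda_1 \le 0$ case.

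The main obstacle is the sign bookkeeping in the Lambert $W$ inversion. The identity is natural when the relevant coefficient ($\lambda_1 + \nu\lambda_2$ or $\lambda_1$) is strictly positive, but boundary/negative cases need to be argued by showing that the corresponding event is empty (so its probability is zero and can be absorbed into either the $\min$ with $T^2$ or the $\max$ with $0$). Once this casework is dispatched, the rest is a direct application of the machinery already used in \Cref{lem:thm-5-1}, and no new analytic tools are required.
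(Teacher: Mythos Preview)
Your proposal is correct and follows essentially the same approach as the paper: split according to whether $\|\vx+\vdelta\|_2 \le T$ (so $q(\vx+\vdelta)=\nu\,p(\vx+\vdelta)$) or $\|\vx+\vdelta\|_2 > T$ (so $q(\vx+\vdelta)=0$), invert each resulting inequality on $r_p$ via \Cref{eq:pf-lem-5-1-2}, convert to a linear constraint on $x_1$, and apply the $\Beta(\tfrac{d-1}{2},\tfrac{d-1}{2})$ identity. One small imprecision: the $\lambda_1\le 0$ case is not really absorbed by the $\max\{\cdot,0\}$ in $u_2$ (since $\lambda_1^{1/k}$ is undefined there) but, as you already note, by observing the outer-region event is empty and using the piecewise definition---exactly as the paper does.
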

        
        \begin{proof}[Proof of \Cref{lem:thm-5-2}]
            Under the condition that $\|\vx\|_2 = \sigma'\sqrt{2t}$, we separate two cases: $q(\vx + \vdelta) > 0$ and $q(\vx + \vdelta) = 0$, which corresponds to $\|\vx + \vdelta\|_2 \le T$ and $\|\vx + \vdelta\|_2 > T$.
            
            \textbf{(1)} $q(\vx + \vdelta) > 0$:
            
            Notice that
            \begin{equation}
                \begin{aligned}
                    & q(\vx + \vdelta) > 0 \\
                    \iff & \|\vx + \vdelta\|_2^2 \le T^2 \\
                    \iff & x_1 \le \dfrac{T^2 - 2t\sigma'^2 - r^2}{2r}.
                \end{aligned}
            \end{equation}
            From \Cref{eq:pf-thm5-2}, $q(\vx + \vdelta) = \nu p(\vx + \vdelta)$.
            Thus,
            \begin{equation}
                \begin{aligned}
                    & \lambda_1p(\vx + \vdelta ) + \lambda_2 q(\vx + \vdelta) > r_p(\sigma'\sqrt{2t}) \\
                    \iff & (\lambda_1 + \nu\lambda_2) p(\vx + \vdelta) \ge r_p(\sigma'\sqrt{2t}) \\
                    \iff & \|\vx + \vdelta\|_2^2 \le r_p^{-1}\left(
                    \dfrac{r_p(\sigma'\sqrt{2t})}{\lambda_1 + \nu\lambda_2} \right)^2 \\
                    \iff & x_1 \le
                    \dfrac{
                    r_p^{-1}\left(
                    \frac{r_p(\sigma'\sqrt{2t})}{\lambda_1 + \nu\lambda_2} \right)^2
                    - 2t\sigma'^2 - r^2}{2r}.
                \end{aligned}
            \end{equation}
            
            Therefore,
            \begin{equation}
                \scriptsize
                \begin{aligned}
                    & \Pr[\lambda_1 p(\vx + \vdelta) + \lambda_2 q(\vx + \vdelta) > r_p(\sigma'\sqrt{2t})  \wedge q(\vx + \vdelta) > 0 \,|\ \|\vx\|_2 = \sigma'\sqrt{2t} ] \\
                    & \hspace{-2em} = \Pr\left[ x_1 \le \dfrac{
                    \min\left\{r_p^{-1}\left(
                    \frac{r_p(\sigma'\sqrt{2t})}{\lambda_1 + \nu\lambda_2} \right)^2, T^2\right\}
                    - 2t\sigma'^2 - r^2}{2r} \,\Big|\, \|\vx\|_2 = \sigma'\sqrt{2t} \right].
                \end{aligned}
            \end{equation}
            By \citep[Lemma I.23]{yang2020randomized} and \Cref{eq:pf-lem-5-1-2}, we thus have
            \begin{equation}
                \begin{aligned}
                    & \Pr[\lambda_1 p(\vx + \vdelta) + \lambda_2 q(\vx + \vdelta) > r_p(\sigma'\sqrt{2t}) \\
                    & \hspace{2em} \wedge q(\vx + \vdelta) > 0 \,|\ \|\vx\|_2 = \sigma'\sqrt{2t} ] \\
                    = & \BetaCDF_{\frac{d-1}{2}} \left(
                    \dfrac{\min\{T^2, 2\sigma'^2 k W(\frac{t}{k}e^{\frac t k} (\lambda_1 + \nu \lambda_2)^{\frac 1 k}) \}}{4r\sigma'\sqrt{2t}} \right. \\
                    & \hspace{3em} \left. - 
                    \dfrac{(\sigma'\sqrt{2t} - r)^2}
                    {4r\sigma'\sqrt{2t}} \right) = u_1(t).
                \end{aligned}
            \end{equation}
            
            \textbf{(2)} $q(\vx + \vdelta) = 0$:
            
            Similarly,
            \begin{equation}
                q(\vx + \vdelta) = 0 \iff x_1 > \dfrac{T^2 - 2t\sigma'^2 - r^2}{2r}.
            \end{equation}
            When $\lambda_1 \le 0$, the condition $\lambda_1 p(\vx + \vdelta) + \lambda_2 q(\vx + \vdelta) = \lambda_1 p(\vx + \vdelta) > r_p(\sigma'\sqrt{2t})$ can never be satisfied.
            When $\lambda_1 > 0$, we have
            \begin{equation}
                \begin{aligned}
                    & \lambda_1 p(\vx + \vdelta) + \lambda_2 q(\vx + \vdelta) > r_p(\sigma'\sqrt{2t}) \\
                    \iff & \|\vx + \vdelta\|_2^2 \le r_p^{-1}\left(
                    \dfrac{r_p(\sigma'\sqrt{2t})}{\lambda_1}\right)^2 \\
                    \iff & x_1 \le 
                    \dfrac{
                    r_p^{-1}\left(
                    \frac{r_p(\sigma'\sqrt{2t})}{\lambda_1} \right)^2
                    - 2t\sigma'^2 - r^2}{2r}.
                \end{aligned}
            \end{equation}
            Therefore, when $\lambda_1 \le 0$,
            \begin{equation}
                \begin{aligned}
                    & \Pr[\lambda_1 p(\vx + \vdelta) + \lambda_2 q(\vx + \vdelta) > r_p(\sigma'\sqrt{2t}) \\
                    & \hspace{2em} \wedge q(\vx + \vdelta) = 0 \,|\ \|\vx\|_2 = \sigma'\sqrt{2t} ] = 0. \\
                \end{aligned}
            \end{equation}
            When $\lambda_1 > 0$, the condition that $lambda_1 p(\vx + \vdelta) + \lambda_2 q(\vx + \vdelta) > r_p(\sigma'\sqrt{2t})$ is equivalent to
            \begin{equation}
                x_1 \in \left( \dfrac{T^2 - 2t\sigma'^2 - r^2}{2r},
                \dfrac{
                    r_p^{-1}\left(
                    \frac{r_p(\sigma'\sqrt{2t})}{\lambda_1} \right)^2
                    - 2t\sigma'^2 - r^2}{2r}
                \right].
            \end{equation}
            Again, by \citep[Lemma I.23]{yang2020randomized} and \Cref{eq:pf-lem-5-1-2}, we have
            \begin{equation}
                \small
                \begin{aligned}
                    & \Pr[\lambda_1 p(\vx + \vdelta) + \lambda_2 q(\vx + \vdelta) > r_p(\sigma'\sqrt{2t}) \\
                    & \hspace{2em} \wedge q(\vx + \vdelta) = 0 \,|\ \|\vx\|_2 = \sigma'\sqrt{2t} ] \\
                    = & \max\left\{ 0,
                    \BetaCDF_{\frac{d-1}{2}} \left(
                    \dfrac{2\sigma'^2k W(\frac{t}{k} e^{\frac t k} \lambda_1^{\frac 1 k}) - (\sigma'\sqrt{2t} - r)^2}{4r\sigma'\sqrt{2t}} 
                    \right) 
                    \right. \\
                    & \hspace{3em} \left. -\BetaCDF_{\frac{d-1}{2}} \left(
                        \dfrac{T^2 - (\sigma'\sqrt{2t} - r)^2}{4r\sigma'\sqrt{2t}}
                    \right) \right\} \\
                    = & u_2(t).
                \end{aligned}
            \end{equation}
            
            \textbf{(3)} Combining the two cases:
            
            Now we are ready to combine the two cases.
            \begin{equation}
                \begin{aligned}
                    & \Pr[\lambda_1 p(\vx + \vdelta) + \lambda_2 q(\vx + \vdelta) > r_p(\sigma'\sqrt{2t}) \\
                        & \hspace{2em} \,|\ \|\vx\|_2 = \sigma'\sqrt{2t} ] \\
                    = & \Pr[\lambda_1 p(\vx + \vdelta) + \lambda_2 q(\vx + \vdelta) > r_p(\sigma'\sqrt{2t}) \\
                    & \hspace{2em} \wedge q(\vx + \vdelta) > 0 \,|\ \|\vx\|_2 = \sigma'\sqrt{2t} ] + \\
                    & \Pr[\lambda_1 p(\vx + \vdelta) + \lambda_2 q(\vx + \vdelta) > r_p(\sigma'\sqrt{2t}) \\
                    & \hspace{2em} \wedge q(\vx + \vdelta) = 0 \,|\ \|\vx\|_2 = \sigma'\sqrt{2t} ] \\
                    = & \left\{ \begin{array}{lr}
                    u_1(t), & \lambda_1 \le 0 \\
                    u_1(t) + u_2(t). & \lambda_1 > 0
                    \end{array} \right.
                \end{aligned}
            \end{equation}
        \end{proof}
        
    \subsection{Discussion on Uniqueness of Feasible Pair}
        \label{adxsubsec:main-approach-uniqueness}
        
        As we sketched in \Cref{subsec:joint-binary-search}, in general cases, the pair that satisfies constraints in \Cref{eq:dual-2} is unique.
        We formally state this finding and prove it in \Cref{thm:truncated-q-uniqueness}.
        
        \begin{theorem}[Uniqueness of Feasible Solution in \Cref{eq:dual-2}]
            Under the same setting of \Cref{thm:concrete-equations}, if $Q_A \in (0, 1)$ and $P_A \in (Q_A/\nu, 1-(1-Q_A)/\nu)$, then there is the pair $(\lambda_1, \lambda_2)$ that satisfies both $P(\lambda_1,\lambda_2)=P_A$ and $Q(\lambda_1,\lambda_2)=Q_A$ is unique.
            \label{thm:truncated-q-uniqueness}
        \end{theorem}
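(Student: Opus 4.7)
The plan is to exploit the structural observation from \Cref{thm:concrete-equations} that $Q(\lambda_1,\lambda_2)$ depends only on the sum $a := \lambda_1 + \nu\lambda_2$, which lets me decouple the two simultaneous equality constraints into two one-dimensional monotone equations and apply the intermediate value theorem twice.

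First, I would prove that $a \mapsto Q$ is a strictly increasing continuous bijection from $(0,\infty)$ onto $(0,1)$. The dependence on $a$ alone is immediate from the formula $Q(\lambda_1,\lambda_2)=\nu\,\E_{t\sim\Gamma(d/2-k,1)}\, u_3(t,a)\cdot\1[t\le T^2/(2\sigma'^2)]$. For strict monotonicity, I write $u_3(t,\lambda)=\BetaCDF_{(d-1)/2}\bigl(A(t)-B(t)\,W(c(t)\,\lambda^{-1/k})\bigr)$ with $A(t), B(t), c(t)>0$: the map $\lambda\mapsto\lambda^{-1/k}$ is strictly decreasing, the principal Lambert $W$ branch is strictly increasing on $(0,\infty)$, and $\BetaCDF_{(d-1)/2}$ is strictly increasing on $(0,1)$, so $u_3(t,\cdot)$ is strictly increasing in $\lambda$ wherever the BetaCDF argument lies in $(0,1)$. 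Integration against the positive-measure set $\{t\le T^2/(2\sigma'^2)\}$ preserves strict monotonicity. The boundary analysis gives $Q\to0$ as $a\to0^+$ and $Q\to\nu\cdot\GammaCDF_{d/2-k}(T^2/(2\sigma'^2))=1$ as $a\to\infty$ (using the definition of $\nu$ from \Cref{newadx:sub:proof-concrete-equations}). Hence $Q_A\in(0,1)$ determines a unique $a^*>0$.

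Second, fixing $a=a^*$, I compute
\begin{equation*}
P(\lambda_1,\lambda_2)-\frac{Q(\lambda_1,\lambda_2)}{\nu}=\E_{t\sim\Gamma(d/2-k,1)}\bigl[u_3(t,\lambda_1)\cdot\1[t\ge T^2/(2\sigma'^2)]\bigr]=:h(\lambda_1),
\end{equation*}
which depends only on $\lambda_1$. By the same monotonicity argument, $h$ is continuous and strictly increasing on $(0,\infty)$ with $h(0^+)=0$ and $\lim_{\lambda_1\to\infty}h(\lambda_1)=1-1/\nu=(\nu-1)/\nu$. The hypothesis $P_A\in(Q_A/\nu,\,1-(1-Q_A)/\nu)$ is equivalent to $P_A-Q_A/\nu\in(0,(\nu-1)/\nu)$, so the intermediate value theorem yields a unique $\lambda_1^*>0$ with $h(\lambda_1^*)=P_A-Q_A/\nu$. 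Setting $\lambda_2^*:=(a^*-\lambda_1^*)/\nu$ recovers the unique feasible pair.

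The main obstacle will be rigorously verifying \emph{strict} (rather than merely non-decreasing) monotonicity of $u_3(t,\lambda)$ on the interior range of interest, since one must exclude the saturated regimes where the BetaCDF argument leaves $(0,1)$. Handling the $\lambda\le 0$ regime separately (where $u_3\equiv0$ because the event $p(\vx-\vdelta)<\lambda p(\vx)$ is empty under $p\ge 0$) and the $\lambda\to\infty$ asymptote, then checking that the strict inclusions in the hypothesis place both reduced equations inside the strictly monotone interior, is what ultimately forces uniqueness. A minor technicality is ruling out spurious solutions with $a^*\le 0$ or $\lambda_1^*\le 0$, which is immediate from $Q_A>0$ and $P_A-Q_A/\nu>0$ respectively combined with the saturation-at-zero behavior just noted.
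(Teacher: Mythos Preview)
Your proposal is correct and follows essentially the same route as the paper: decouple the two constraints via the change of variables $a=\lambda_1+\nu\lambda_2$ (so that $Q$ depends only on $a$) and the auxiliary function $h(\lambda_1)=P-Q/\nu$ (which depends only on $\lambda_1$), then argue strict monotonicity of each one-dimensional map to force uniqueness. The only cosmetic difference is that the paper frames the key step as \emph{local} strict monotonicity at the solution---by exhibiting a single $t_0$ with $u_3(t_0,\cdot)\in(0,1)$ and extending to a $t$-neighborhood---whereas you aim for a global strict-monotonicity statement and correctly flag the BetaCDF saturation regime as the obstacle; the paper's $t_0$ argument is precisely the clean way to resolve that obstacle.
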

        
        We prove the theorem in the end of this section, which is based on the strict monotonicity of two auxiliary functions: $g(\lambda_1 + \nu\lambda_2) := Q(\lambda_1,\lambda_2)$ and $h(\lambda_1)$~(defined in \Cref{subsec:joint-binary-search}).
        For other types of smoothing distributions $\gP$ and $\gQ$, in \Cref{thm:uniqueness} we characterize and prove a sufficient condition that guarantees the uniqueness of feasible pair.
        
        We observe that the feasible region of $(P_A, Q_A)$ is
        \begin{equation}
            \gR = \{(x,y): y/\nu \le x \le 1 - (1-y)/\nu, 0 \le y \le 1\}.
        \end{equation}
        Therefore,
        the theorem states that when $(P_A, Q_A)$ is an internal point of $\gR$, the feasible solution is unique and we can use our proposed method to find out such a feasible solution and thus solve the dual problem $(\tD)$.
        Now, the edge cases are that $(P_A,Q_A)$ lies on the boundary of $\gR$.
        We discuss all these cases and show that these boundary cases correspond to degenerate problems that are easy to solve respectively:
        \begin{itemize}
            \item $Q_A = 0$:\\
            When $Q_A = 0$, and $P_A \in (0,1)$~(otherwise, trivially $R(\lambda_1, \lambda_2) = P_A \in \{0,1\}$ solves $(\tD)$), we have
            $\lambda_1 + \nu\lambda_2 \to 0^+$ and thus $R(\lambda_1, \lambda_2) = \E_{t\sim\Gamma(d/2-k,1)} u_2(t)$.
            Since $u_2(t)$ only involves $\lambda_1$, we only require $\lambda_1$ to be unique to deploy the method.
            Since $P_A = P(\lambda_1,\lambda_2) = \E_{t\sim\Gamma(d/2-k,1)} u_3(t,\lambda_1) \cdot \1[t \ge T^2/(2\sigma'^2)]$ and $P_A \in (0,1)$, by similar arguments as in \Cref{thm:truncated-q-uniqueness}, we know $\lambda_1$ is unique.
            Hence, all feasible pairs give the same $R(\lambda_1,\lambda_2)$, i.e., have the same objective value and the proposed method that computes a feasible one is sufficient for solving $(\tD)$.
            
            \item $Q_A = 1$:\\
            When $Q_A = 1$ and $P_A \in (0,1)$, we observe that $u_1(t) \le \BetaCDF_{\frac{d-1}{2}} \left( \frac{T^2}{4r\sigma'\sqrt{2t}} - \frac{(\sigma'\sqrt{2t} - r)^2}{4r\sigma'\sqrt{2t}} \right)$ where equality is feasible with the selected $(\lambda_1+\nu\lambda_2) \to +\infty$ and hence the maximum of $\E_{t\sim\Gamma(d/2-k,1)} u_1(t)$ among all feasible $(\lambda_1,\lambda_2)$ is a constant.
            On the other hand, since $u_2(t)$ only involves $\lambda_1$ that is unique as discussed in ``$Q_A=0$'' case, all feasible pairs give the same value of $\E_{t\sim\Gamma(d/2-k,1)} u_2(t)$.
            As a result, the maximum of $R(\lambda_1,\lambda_2)$ can be computed by adding the unique value of $\E_{t\sim\Gamma(d/2-k,1)} u_2(t)$ and the constant corresponding to the maximum of $\E_{t\sim\Gamma(d/2-k,1)} u_1(t)$ among all feasible $(\lambda_1,\lambda_2)$, which solves the dual problem $(\tD)$.
            
            \item $P_A = Q_A / \nu$: \\
            We assume $P_A, Q_A \in (0,1)$~(otherwise covered by former cases).
            In this case, $\lambda_1$ satisfies that $h(\lambda_1) = P_A - Q_A/\nu = 0$, so $\lambda_1 \to 0^+$.
            As a result, $u_2(t) = 0$ for all $t > 0$ and $R(\lambda_1,\lambda_2) = \E_{t\sim\Gamma(d/2-k,1)} u_1(t)$.
            We observe that $u_1(t)$ is only related to $(\lambda_1 + \nu\lambda_2)$ where $(\lambda_1 + \nu\lambda_2)$ satisfying $Q(\lambda_1,\lambda_2) = Q_A$ is unique since $Q_A \in (0,1)$.
            Thus, any feasible $(\lambda_1,\lambda_2)$ would have the same $(\lambda_1 + \nu\lambda_2)$ and hence leads to the same $R(\lambda_1,\lambda_2)$.
            So the proposed method that finds one feasible $(\lambda_1,\lambda_2)$ suffices for solving $(\tD)$.
            
            \item $P_A = 1 - \frac{1-Q_A}{\nu}$:\\
            We again assume $P_A, Q_A \in (0,1)$ (otherwise covered by former cases).
            In this case, $\lambda_1$ satisfies that $h(\lambda_1) = 1 - 1/\nu$.
            Since $\E_{t\sim\Gamma(d/2-k,1)} \1[t < T^2/(2\sigma'^2)] = 1/\nu$, we know $u_3(t,\lambda_1)=1$ for $t \ge T^2 / (2\sigma'^2)$ except a zero-measure set, and thus $\lambda_1 \to +\infty$.
            As a result, $\E_{t\sim\Gamma(d/2-k,1)} u_2(t) = 1 - \E_{t\sim\Gamma(d/2-k,1)} \BetaCDF_{\frac{d-1}{2}} \left( \frac{T^2 - (\sigma'\sqrt{2t} - r)^2}{4r\sigma'\sqrt{2t}} \right)$ is a constant.
            Similar as ``$P_A = Q_A / \nu$'' case, feasible $(\lambda_1 + \nu\lambda_2)$ is unique.
            Therefore, feasible $(\lambda_1,\lambda_2)$ leads to a unique $\E_{t\sim\Gamma(d/2-k,1)} u_1(t)$.
            We compute out these two quantities $\E_{t\sim\Gamma(d/2-k,1)} u_2(t)$ and $\E_{t\sim\Gamma(d/2-k,1)} u_1(t)$ so as to obtain the unique $R(\lambda_1,\lambda_2)$ that solves $(\tD)$.
        \end{itemize}
        
        We remark that in practice, we never observe any instances that correspond to these edge cases though we implemented these techniques for solving them.

        \begin{proof}[Proof of \Cref{thm:truncated-q-uniqueness}]
            The high-level proof sketch is implied in the derivation of our feasible $(\lambda_1,\lambda_2)$ finding method introduced in \Cref{subsec:joint-binary-search}.
            We first show $h(\lambda_1)$ is monotonically strictly increasing in a neighborhood of $\lambda_1$ where $h(\lambda_1) = P_A - Q_A / \nu$, so the $\lambda_1$ that satisfies $P_A - Q_A / \nu$ is unique.
            We then define $g(\gamma) = \nu\E_{t\sim\Gamma(d/2-k,1)} u_3(t,\gamma) \cdot \1[t \le T^2 / (2\sigma'^2)]$, and show its strict monotonicity around the neighborhood of $\lambda^*$ that satisfies $g(\gamma^*) = Q_A$, which indicates that $(\lambda_1 + \nu\lambda_2)$ that satisfies $Q_A$ is unique.
            Combining this two arguments, we know feasible $(\lambda_1,\lambda_2)$ is unique.
            Note that the key in this proof is the local \emph{strict} monotonicity, and the global (nonstrict) monotonicity for both $h(\cdot)$ and $g(\cdot)$ is apparent from their expressions.
            We now present the proofs for the local strict monotonicity of these two functions $h$ and $g$.
            
            (1)~$h(\lambda_1)$ is monotonically strictly increasing in a neighborhood of $\lambda_1$ where $h(\lambda_1) = P_A - Q_A / \nu$.
            
            From the theorem condition, we know that $h(\lambda_1) \in (0, 1 - 1/\nu)$.
            Thus, there exists $t_0 \ge T^2 / (2\sigma'^2)$, such that $u_3(t_0,\lambda_1) \in (0, 1)$~(otherwise $h(\lambda_1) = 0 \text{ or } 1 - 1/\nu$).
            From the closed-form equation of $u_3$, for any neighboring $\lambda_1' \neq \lambda_1$, we will have $W(t_0/k e^{t_0/k} \lambda_1^{-1/k}) \neq W(t_0/k e^{t_0/k} \lambda_1'^{-1/k})$ by the monotonicity of Lambert W function.
            On the other hand, since $u_3(t_0,\lambda_1) \in (0, 1)$ and $\BetaCDF_{\frac{d-1}{2}}(\cdot)$ is also strict monotonic in the neighborhood, we have $u_3(t_0,\lambda_1') \neq u_3(t_0,\lambda_1)$.
            Same happens to $t_0$'s neighborhood, i.e., $\exists \delta > 0$, s.t., $\forall t \in (t_0-\delta, t_0+\delta)$, $u_3(t,\lambda_1') \neq u_3(t,\lambda_1)$ and $\sgn(u_3(t,\lambda_1')-u_3(t,\lambda_1))$ is consistent for any $t\in(t_0-\delta, t_0 + \delta)$.
            As a result, $h(\lambda_1) \neq h(\lambda_1')$. By definition and the fact that $h(\cdot)$ is monotonically non-decreasing, the argument is proved.
            
            (2)~$g(\gamma)$ is monotonically strictly increasing in a neighborhood of $\gamma^*$ where $g(\gamma^*) = Q_A$.
            
            Since $Q_A \in (0,1)$ by the theorem condition, we know that there exists $t_0 \in (0, T^2/(2\sigma'^2))$, such that $u_3(t_0,\gamma^*) \in (0,1)$~(otherwise $g(\gamma^*) = 0 \text{ or } 1$, which contradicts the theorem condition).
            Following the same reasoning as in (1)'s proof, for any $\gamma' \neq \gamma^*$ that lies in a sufficiently small neighborhood of $\gamma^*$, we have $u_3(t_0, \gamma^*) \neq u_3(t_0, \gamma')$, and $\exists \delta > 0$, s.t., $\forall t \in (t_0 - \delta, t_0 + \delta)$, $u_3(t,\gamma^*) \neq u_3(t,\gamma')$ and $\sgn(u_3(t,\gamma^*) - u_3(t,\gamma'))$ is consistent for any $t \in (t_0 - \delta, t_0 + \delta)$.
            As a result, $g(\gamma^*) \neq g(\gamma')$.
            By definition and the fact that $g(\cdot)$ is monotonically non-decreasing, the argument is proved.
        \end{proof}

\section{Extensions of \shortApproach Computational Methods}
    
    In this appendix, we exemplify a few extensions of \shortApproach computational method.
    
    \subsection{Certification with Standard and Truncated Standard Gaussian}
    
        \label{newadx:sub:cert-std-truncated-gaussian}
        
        In main text and \Cref{thm:concrete-equations}, we focus on \shortApproach certification with generalized Gaussian as $\gP$ and truncated generalized Gaussian as $\gQ$, which has theoretical advantages~(\Cref{thm:concentration-sqrt-d}).
        On the other hand, \shortApproach can also be applied to other distributions.
        Concretely, to certify robustness with standard Gaussian as $\gP$ and truncated standard Gaussian as $\gQ$, we can directly plug the following theorem's numerical integration expressions into the described \shortApproach algorithm~(\Cref{alg:DSRS-pipeline}).
    
        \begin{theorem}
            \label{thm:concrete-equations-std-gaussian}
            In $\tD_{\vdelta}(P_A,Q_A)$, let $r = \|\vdelta\|_2$,    
            when $\gP = \gN(\sigma)$ and $\gQ = \gNtrunc(T,\sigma)$,
            let $\nu := \GammaCDF_{d/2}(T^2/(2\sigma^2))^{-1}$,

            \begin{subequations}
                \begin{equation*}
                    \scriptsize
                    \begin{aligned}
                        R(\lambda_1,\lambda_2) := & \Pr_{\vepsilon\sim\gP} [p(\vepsilon) < \lambda_1 p(\vepsilon + \vdelta) + \lambda_2 q(v\epsilon + \vdelta)] \\
                        = & \left\{ \begin{array}{lr}
                        \E_{t\sim\Gamma(d/2,1)} u_1(t), & \lambda_1 \le 0 \\
                        \E_{t\sim\Gamma(d/2,1)} u_1(t) + u_2(t), & \lambda_1 > 0
                        \end{array} \right. \mathrm{where} \\
                        u_1(t) = & \BetaCDF_{\frac{d-1}{2}} \left( \dfrac
                        {\min\{T^2, 2t\sigma^2 + 2\sigma^2\ln(\lambda_1 + \nu\lambda_2)\}}{4r\sigma\sqrt{2t}} \right. \\
                        & \hspace{5em} \left. - \dfrac{(\sigma\sqrt{2t} - r)^2}{4r\sigma\sqrt{2t}}
                         \right), \\
                        u_2(t) = & \max\left\{0, \BetaCDF_{\frac{d-1}{2}} \left( \dfrac{2t\sigma^2 + 2\sigma^2 \ln \lambda_1 - (\sigma\sqrt{2t}-r)^2}{4r\sigma\sqrt{2t}} \right) \right. \\
                        & \hspace{5em} \left. - \BetaCDF_{\frac{d-1}{2}} \left( \dfrac{T^2 - (\sigma\sqrt{2t}-r)^2}{4r\sigma\sqrt{2t}}\right) \right\}.
                    \end{aligned}
                \end{equation*}
                \begin{equation*}
                    \scriptsize
                    \begin{aligned}
                        P(\lambda_1,\lambda_2) & := \Pr_{\vepsilon\sim\gP} [p(\vepsilon-\vdelta) < \lambda_1 p(\vepsilon) + \lambda_2 q(\vepsilon)] \\
                        & = \E_{t\sim\Gamma(d/2,1)} \left\{ \begin{array}{lr}
                        u_3(t,\lambda_1), & t \ge T^2 / (2\sigma^2) \\
                        u_3(t,\lambda_1 + \nu\lambda_2), & t < T^2 / (2\sigma^2).
                        \end{array} \right. \mathrm{where} \\
                        u_3(t,\lambda) & = \BetaCDF_{\frac{d-1}{2}} \left( \dfrac{1}{2} + \dfrac{r^2 + 2\sigma^2\ln\lambda}{4r\sigma\sqrt{2t}} \right).
                    \end{aligned}
                \end{equation*}
                \begin{equation*}
                    \scriptsize
                    \begin{aligned}
                        Q(\lambda_1,\lambda_2) & := \Pr_{\vepsilon\sim\gQ} [p(\vepsilon-\vdelta) < \lambda_1 p(\vepsilon) + \lambda_2 q(\vepsilon)] \\
                        & = \nu \E_{t\sim\Gamma(d/2,1)} u_3(t,\lambda_1 + \nu\lambda_2) \cdot \1[t \le T^2/(2\sigma^2)].
                    \end{aligned}
                \end{equation*}
                In above equations, $\Gamma(d/2,1)$ is gamma distribution and $\GammaCDF_{d/2}$ is its CDF, and $\BetaCDF_{\frac{d-1}{2}}$ is the CDF of distribution $\Beta(\frac{d-1}{2}, \frac{d-1}{2})$.
            \end{subequations}
        \end{theorem}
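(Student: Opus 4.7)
The plan is to follow the template of the proof of \Cref{thm:concrete-equations} in \Cref{newadx:sub:proof-concrete-equations}, specialized to $k=0$ so that the generalized Gaussian collapses to the standard Gaussian with $\sigma' = \sigma$. The structural simplification compared to the generalized case is that the log-density of $\gN(\sigma)$ is purely quadratic, so the level set $\{\vx : p(\vx-\vdelta) < \lambda p(\vx)\}$ is an affine half-space in $\vx$, and the inverse relation becomes $r_p^{-1}(\lambda r_p(t))^2 = t^2 - 2\sigma^2\ln\lambda$ (meaningful only when $\lambda > 0$). This replaces the Lambert-$W$ expression $2\sigma'^2 k\,W(\tfrac{t}{k}e^{t/k}\lambda^{-1/k})$ that appears in the generalized-Gaussian case, so every occurrence of that expression in \Cref{thm:concrete-equations} should turn into $2t\sigma^2 + 2\sigma^2\ln\lambda$ in the standard-Gaussian case.

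First I would write out the densities $p(\vx) = (2\pi\sigma^2)^{-d/2}\exp(-\|\vx\|_2^2/(2\sigma^2))$ and $q(\vx) = \nu\, p(\vx)\cdot\1[\|\vx\|_2\le T]$ with $\nu = \GammaCDF_{d/2}(T^2/(2\sigma^2))^{-1}$. Exploiting $\ell_2$-symmetry, I set $\vdelta = (r,0,\dots,0)^\T$ and perform level-set sliced integration over $\|\vx\|_2$, reusing the chain of substitutions from the generalized Gaussian proof. Under the substitution $t = \|\vx\|_2^2/(2\sigma^2)$ the radial integral becomes an expectation under $\rt\sim\Gamma(d/2,1)$, recovering the $\Gamma(d/2-k,1)$ expectation of \Cref{thm:concrete-equations} at $k=0$. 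For each fixed sphere of radius $\sigma\sqrt{2t}$, I invoke \citep[Lemma I.23]{yang2020randomized}, which says that $(1+x_1/(\sigma\sqrt{2t}))/2\sim\Beta(\tfrac{d-1}{2},\tfrac{d-1}{2})$ when $\vx$ is uniform on that sphere, so every conditional probability on the sphere reduces to $\BetaCDF_{(d-1)/2}$ evaluated at an explicit threshold.

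For $P$ and $Q$, unfolding $p(\vx-\vdelta)<\lambda p(\vx)$ in the standard-Gaussian case gives $2 x_1 r - r^2 > 2\sigma^2\ln\lambda$, a single half-space cut on $x_1$ that is vacuously false when $\lambda\le 0$. Partitioning $\sR^d$ by $\1[\|\vx\|_2\le T]$ then plugs in the piecewise effective multiplier $\lambda_1$ on $\|\vx\|_2>T$ and $\lambda_1+\nu\lambda_2$ on $\|\vx\|_2\le T$, just as in the generalized-Gaussian proof. Converting the half-space event into a beta CDF produces $u_3(t,\lambda) = \BetaCDF_{(d-1)/2}\bigl(\tfrac{1}{2} + (r^2+2\sigma^2\ln\lambda)/(4r\sigma\sqrt{2t})\bigr)$, matching the claim; the $Q$ expression additionally picks up the normalization factor $\nu$ and restricts support to $t\le T^2/(2\sigma^2)$.

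The main obstacle, and where I would spend the most care, is $R$, because the truncation indicator now depends on $\|\vx+\vdelta\|_2$ rather than $\|\vx\|_2$, so it couples with the density-ratio cut. I would split into two regimes. On $\{\|\vx+\vdelta\|_2\le T\}$ the constraint collapses to $p(\vx)<(\lambda_1+\nu\lambda_2)p(\vx+\vdelta)$, and intersecting with the truncation bound replaces the upper cut on $x_1$ by the minimum of the two bounds; this leads to the $\min\{T^2,\,2t\sigma^2+2\sigma^2\ln(\lambda_1+\nu\lambda_2)\}$ inside $u_1(t)$ after translating to a $\BetaCDF_{(d-1)/2}$ argument. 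On $\{\|\vx+\vdelta\|_2>T\}$ the constraint becomes $p(\vx)<\lambda_1 p(\vx+\vdelta)$, which is vacuous when $\lambda_1\le 0$ (so $u_2\equiv 0$ there) and otherwise carves out an interval on $x_1$; writing its probability as the difference of two $\BetaCDF_{(d-1)/2}$ values and clamping below by $0$ to guard against the interval being empty produces $u_2(t)$. Summing the two regime contributions and absorbing the $\lambda_1\le 0$ branch into the piecewise formula completes the derivation.
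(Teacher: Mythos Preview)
Your proposal is correct and follows essentially the same route as the paper: the paper's proof simply observes that for $\gP=\gN(\sigma)$ one has $r_p^{-1}(\lambda\,r_p(\sigma\sqrt{2t}))^2 = 2t\sigma^2 - 2\sigma^2\ln\lambda$, and then plugs this in place of the Lambert-$W$ expression throughout the proof of \Cref{thm:concrete-equations}. Your outline is in fact a more detailed unpacking of that same substitution, including the two-regime split for $R$ (which mirrors \Cref{lem:thm-5-2}) and the half-space reduction for $P,Q$ (which mirrors \Cref{lem:thm-5-1}).
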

        
        \begin{proof}[Proof of \Cref{thm:concrete-equations-std-gaussian}]
            The proof largely follows the same procedure as the proof of \Cref{thm:concrete-equations}.
            The only difference is that, since $\gP = \gN(\sigma)$, let $r_p(\|\vepsilon\|_2) = p(\vepsilon)$, different from \Cref{eq:pf-lem-5-1-2},
            now
            \begin{equation}
            \begin{aligned}
                & r_p^{-1}(\lambda r_p(\sigma\sqrt{2t}))^2 \\
                = & -2\sigma^2 \ln(\lambda r_p(\sigma\sqrt{2t}) \cdot (2\pi\sigma^2)^{d/2}) \\
                = & -2\sigma^2 \ln\left( \dfrac{\lambda}{(2\pi\sigma^2)^{d/2}} \exp\left(-\dfrac{2t\sigma^2}{2\sigma^2}\right) \cdot (2\pi\sigma^2)^{d/2} \right) \\
                = & -2\sigma^2 (\ln\lambda - t) = 2t\sigma^2 - 2\sigma^2 \ln\lambda.
            \end{aligned}
            \end{equation}
            By plugging this equation into the proof of \Cref{thm:concrete-equations}, we prove \Cref{thm:concrete-equations-std-gaussian}.
        \end{proof}
    
        In practice, \shortApproach with standard Gaussian and truncated standard Gaussian as smoothing distributions gives marginal improvements over Neyman-Pearson-based certification.
        This is because, for standard Gaussian distribution, the noise magnitude is particularly concentrated on a thin shell as reflected by the green curve in \Cref{fig:smooth-model-landscape-and-magnitude-density}.
        As a result, the truncated standard Gaussian as $\gQ$ either has a tiny density overlap with $\gP$ or provides highly similar information~(i.e., $Q_A \approx P_A$).
        In either case, $\gQ$ provides little additional information.
        Therefore, in practice, we do not use standard Gaussian and truncated standard Gaussian as $\gP$ and $\gQ$, which is also justified by \Cref{thm:concentration-sqrt-d}, though \shortApproach can provide certification for this setting.
    
    \subsection{Certification with Generalized Gaussian with Different Variances}
    
        \label{newadx:sub:cert-gaussian-diff-var}

\begin{table*}[!ht]
    \centering
    \caption{The numerical integration expression for $P$, $Q$, and $R$~(see definition in \Cref{thm:concrete-equations}).
    See \Cref{newadx:sub:cert-gaussian-diff-var} for notation description.}
    \resizebox{\linewidth}{!}{
    \begin{tabular}{cl}
        \toprule
        $P(\lambda_1,\,\lambda_2) =$ & 
        $\underset{x\sim\Gamma(\frac{d}{2}-k)}{\E}
    \mathrm{BetaCDF}_{\frac{d-1}{2}} \left( \dfrac{(r+\sigma'\sqrt{2x})^2 - g^{-1}(\lambda_1 g(\sigma'\sqrt{2x}) + \lambda_2 h(\sigma'\sqrt{2x}))^2}{4r\sigma'\sqrt{2x}} \right) 
    $
    \\                    
    $Q(\lambda_1,\,\lambda_2) =$ &                    $\underset{x\sim\Gamma(\frac{d}{2}-k)}{\E}                \mathrm{BetaCDF}_{\frac{d-1}{2}} \left( \dfrac{(r+\beta'\sqrt{2x})^2 - g^{-1}(\lambda_1 g(\beta'\sqrt{2x}) + \lambda_2 h(\beta'\sqrt{2x}))^2}{4r\beta'\sqrt{2x}} \right)$
        \\
        $R(\lambda_1,\,\lambda_2) =$ & 
        $\underset{x\sim\Gamma(\frac{d}{2}-k)}{\E}
    \mathrm{BetaCDF}_{\frac{d-1}{2}} \left( \dfrac{1}{2} + \dfrac{\overline{(\lambda_1 g + \lambda_2 h)^{-1}} (g(\sigma'\sqrt{2x}))^2 - r^2 - 2x\sigma'^2}{4r\sigma'\sqrt{2x}} \right) - 
    \mathrm{BetaCDF}_{\frac{d-1}{2}} \left( \dfrac{1}{2} + \dfrac{\underline{(\lambda_1 g + \lambda_2 h)^{-1}} (g(\sigma'\sqrt{2x}))^2 - r^2 - 2x\sigma'^2}{4r\sigma'\sqrt{2x}} \right)$ \\
        \bottomrule
    \end{tabular}
    }
    \label{tab:ell-2-numerical-integration}
\end{table*}

\begin{table*}[!t]
    \centering
    \caption{Simplified terms in numerical integration for $P$ and $Q$, where $W$ is the real-valued branch of the Lambert $W$ function.}
    \resizebox{\linewidth}{!}{
    \begin{tabular}{c|c|cc}
        \toprule
        \multicolumn{2}{c|}{Functions} & $P(\lambda_1,\,\lambda_2)$ & $Q(\lambda_1,\,\lambda_2)$ \\
        \hline\hline
        \multicolumn{2}{c|}{Term to Simplify} &
        $g^{-1}\left(\lambda_1 g(\sigma'\sqrt{2x}) + \lambda_2 h(\sigma'\sqrt{2x})\right)^2$ &
        $g^{-1}\left(\lambda_1 g(\beta'\sqrt{2x}) + \lambda_2 h(\beta'\sqrt{2x})\right)^2$ \\
        \hline
        Simplified & $k = 0$ &
        $
        -2\sigma'^2 \ln\left( \lambda_1\exp(-x) + \lambda_2 \left(\frac{\sigma'}{\beta'}\right)^d\exp\left(-\frac{\sigma'^2}{\beta'^2}x\right) \right)
        $
        &
        $
        -2\sigma'^2 \ln\left( \lambda_1\exp\left(-\frac{\beta'^2}{\sigma'^2}x\right) + \lambda_2 \left(\frac{\sigma'}{\beta'}\right)^d \exp(-x)\right)
        $
        \\
        \cline{2-4}
        Terms & $k > 0$ & 
        $
        2k\sigma'^2 W\left(
            \dfrac{x}{k}
            \left( 
                \lambda_1\exp(-x) + \lambda_2 \left(\frac{\sigma'}{\beta'}\right)^{d-2k} \exp\left( -\frac{\sigma'^2}{\beta'^2}x\right)
            \right)^{-1/k}
        \right)
        $
        &
        $
        2k\sigma'^2 W\left(
            \dfrac{x}{k} \cdot \dfrac{\beta'^2}{\sigma'^2}
            \left(
                \lambda_1 \exp\left( -\frac{\beta'^2}{\sigma'^2}\right)
                +
                \lambda_2 \left(\frac{\sigma'}{\beta'}\right)^{d-2k} \exp(-x)
            \right)^{-1/k}
        \right)
        $
        \\
        \midrule
    \end{tabular}
    }
    \label{tab:numerical-integration-simplified}
\end{table*}
        
        We now consider the robustness certification with smoothing distribution $\gP = \gNg(k,\sigma)$ and additional smoothing distribution $\gQ = \gNg(k,\beta)$ where $\sigma$ and $\beta$ are different~(i.e., different variance).
        
        \subsubsection{Computational Method Description}
        
        Hereinafter, for this $\gP$ and $\gQ$
        we define the radial density function $g(r) := p(\vx)$ and $h(r) := q(\vx)$ for any $\|\vx\|_2=r$, where $p$ and $q$ are the density functions of $\gP$ and $\gQ$ respectively.
        $\overline{(\lambda_1 g+\lambda_2 h)^{-1}}(x) := \max y \quad \mathrm{s.t.} \, \lambda_1 g(y) + \lambda_2 h(y) = x$
        and similarly
        $\underline{(\lambda_1 g+\lambda_2 h)^{-1}}(x) := \min y \quad \mathrm{s.t.} \, \lambda_1 g(y) + \lambda_2 h(y) = x$.
        
        In this case, we can still have the numerical expression for $P(\lambda_1,\lambda_2)$, $Q(\lambda_1,\lambda_2)$, and $R(\lambda_1,\lambda_2)$ as shown in \Cref{thm:numerical-integration-dif-var}.

        \begin{theorem}
            When the smoothing distributions 
            $\gP = \gNg(k,\sigma)$ and additional smoothing distribution
            $\gQ = \gNg(k,\beta)$,
            let $P(\lambda_1,\lambda_2)$, $Q(\lambda_1,\lambda_2)$, and $R(\lambda_1,\lambda_2)$ be as defined in \Cref{thm:concrete-equations}, then $P$, $Q$, and $R$ can be computed by expressions in \Cref{tab:ell-2-numerical-integration}.
            \label{thm:numerical-integration-dif-var}
        \end{theorem}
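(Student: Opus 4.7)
The plan is to adapt the level-set sliced integration argument used in the proof of \Cref{thm:concrete-equations} to the setting where $\gP$ and $\gQ$ share the same exponent $k$ but have different scale parameters $\sigma$ and $\beta$. By isotropy I would fix $\vdelta = (r, 0, \ldots, 0)^\top$, introduce the radial densities $g(\|\vx\|_2) = p(\vx)$ and $h(\|\vx\|_2) = q(\vx)$ as the excerpt already does, and slice integrals by level sets of $p$. Each such level set is a sphere of radius $\sigma'\sqrt{2x}$, and the change of variable $t = \sigma'\sqrt{2x}$ with $x \sim \Gamma(d/2 - k, 1)$ converts the outer integral to an expectation, exactly as in the proof of \Cref{thm:concrete-equations}.

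For $P(\lambda_1,\lambda_2)$, I would condition on $\|\vepsilon\|_2 = \sigma'\sqrt{2x}$, so that the right-hand side $\lambda_1 p(\vepsilon) + \lambda_2 q(\vepsilon)$ collapses to the constant $c = \lambda_1 g(\sigma'\sqrt{2x}) + \lambda_2 h(\sigma'\sqrt{2x})$. Since $g$ is strictly decreasing, the event $p(\vepsilon - \vdelta) < c$ is equivalent to $\|\vepsilon - \vdelta\|_2 > g^{-1}(c)$; expanding $\|\vepsilon - \vdelta\|_2^2 = 2x\sigma'^2 + r^2 - 2r\varepsilon_1$ gives a linear threshold on the first coordinate $\varepsilon_1$, and \citep[Lemma I.23]{yang2020randomized} then produces the Beta-CDF form claimed in \Cref{tab:ell-2-numerical-integration}. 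The formula for $Q$ follows identically after replacing $\sigma'\sqrt{2x}$ by $\beta'\sqrt{2x}$, since the outer sampling is now from $\gQ$ but the condition still compares $p(\vepsilon - \vdelta)$ against a constant determined by $\lambda_1 g + \lambda_2 h$ evaluated on the sphere of $\gQ$.

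For $R(\lambda_1,\lambda_2)$ I would again condition on $\|\vepsilon\|_2 = \sigma'\sqrt{2x}$, which now makes the left-hand side $p(\vepsilon) = g(\sigma'\sqrt{2x})$ constant. The event $p(\vepsilon) < \lambda_1 p(\vepsilon+\vdelta) + \lambda_2 q(\vepsilon+\vdelta)$ reduces to $\lambda_1 g(s) + \lambda_2 h(s) > g(\sigma'\sqrt{2x})$ where $s := \|\vepsilon + \vdelta\|_2$. Unlike the truncated case in \Cref{thm:concrete-equations} where the analogous map was monotone up to an indicator jump, here $\lambda_1 g(s) + \lambda_2 h(s)$ need not be monotone in $s$, so its super-level set is an interval $[\underline{s}, \overline{s}]$ with endpoints $\underline{(\lambda_1 g + \lambda_2 h)^{-1}}(g(\sigma'\sqrt{2x}))$ and $\overline{(\lambda_1 g + \lambda_2 h)^{-1}}(g(\sigma'\sqrt{2x}))$. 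Translating $s \in [\underline{s}, \overline{s}]$ into bounds on $\varepsilon_1$ via $s^2 = 2x\sigma'^2 + r^2 + 2r\varepsilon_1$ and applying Lemma I.23 yields the difference-of-two-Beta-CDFs expression in \Cref{tab:ell-2-numerical-integration}.

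The main obstacle will be rigorously verifying that the super-level set $\{s \ge 0 : \lambda_1 g(s) + \lambda_2 h(s) > g(\sigma'\sqrt{2x})\}$ is a single interval, so that the two-Beta-CDFs formula indeed captures the entire probability. A derivative analysis of $\lambda_1 g + \lambda_2 h$, using $g(s) \propto s^{-2k}\exp(-s^2/(2\sigma'^2))$ and $h(s) \propto s^{-2k}\exp(-s^2/(2\beta'^2))$, should establish unimodality for the $(\lambda_1,\lambda_2)$ that arise from the dual. Once this is settled, the closed forms in \Cref{tab:numerical-integration-simplified} follow from explicit inversion of the term $\lambda_1 g + \lambda_2 h$, which reduces to a logarithm when $k = 0$ and to the Lambert $W$ function when $k > 0$, completing the derivation.
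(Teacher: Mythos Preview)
Your proposal is correct and follows essentially the same route as the paper: level-set slicing to reduce each of $P$, $Q$, $R$ to a Gamma expectation over the sphere radius, monotonicity of $g$ to handle $P$ and $Q$ via a single threshold on $\varepsilon_1$, and for $R$ the key unimodality of $\lambda_1 g + \lambda_2 h$ (proved in the paper as \Cref{lem:R-unimodal} via exactly the derivative analysis you outline) to reduce the super-level set to a single interval and hence to a difference of two Beta CDFs. The only minor overreach is your last sentence: the explicit Lambert-$W$/logarithm inversions you mention are the content of the separate \Cref{thm:numerical-integration-simplified}, not of \Cref{thm:numerical-integration-dif-var} itself.
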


        In \Cref{tab:ell-2-numerical-integration}, the numerical integration requires the computation of several inverse functions.
        In this subsection, we simplify the numerical integration expressions for $P$ and $Q$ by deriving the closed forms of these inverse functions, as shown in \Cref{tab:numerical-integration-simplified}.
        In the actual implementation of numerical integration, for $P$ and $Q$, we use these simplified expressions to compute;
        for $R$, benefited from the unimodality~(\Cref{lem:R-unimodal}), we deploy a simple binary search to compute.
        
        \begin{theorem}
            When the smoothing distributions $\gP=\gNg(k,\sigma)$ and $\gQ=\gNg(k,\beta)$,
            the terms $g^{-1}(\lambda_1 g(\sigma'\sqrt{2x}) + \lambda_2h(\sigma'\sqrt{2x}))^2$ and $g^{-1}(\lambda_1 g(\beta'\sqrt{2x}) + \lambda_2 h(\beta'\sqrt{2x}))^2$ in $P(\lambda_1,\,\lambda_2)$ and $Q(\lambda_1,\,\lambda_2)$'s computational expressions~(see \Cref{tab:ell-2-numerical-integration}) are equivalent to those shown in \Cref{tab:numerical-integration-simplified}.
            \label{thm:numerical-integration-simplified}
        \end{theorem}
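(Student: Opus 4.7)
The plan is to directly substitute the explicit radial density functions of the two generalized Gaussian distributions and invert the resulting transcendental equations in closed form. First I would record that for $\gP = \gNg(k,\sigma)$ and $\gQ = \gNg(k,\beta)$ the radial densities are $g(r) = C_P r^{-2k}\exp(-r^2/(2\sigma'^2))$ and $h(r) = C_Q r^{-2k}\exp(-r^2/(2\beta'^2))$, where $\sigma'^2 = (d/(d-2k))\sigma^2$ and analogously for $\beta'$. Normalizing each density against the sphere surface area $\tfrac{2\pi^{d/2}}{\Gamma(d/2)} r^{d-1}$ reduces to a standard gamma integral, from which $C_P = \tfrac{\Gamma(d/2)}{\pi^{d/2}(2\sigma'^2)^{(d-2k)/2}\Gamma((d-2k)/2)}$ and hence $C_Q / C_P = (\sigma'/\beta')^{d-2k}$. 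This is the only constant ratio that ultimately survives in the simplified expressions.

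Next I would plug $r = \sigma'\sqrt{2x}$ into $\lambda_1 g(r) + \lambda_2 h(r)$ to obtain $(\sigma'\sqrt{2x})^{-2k}\, C_P\, A$, where
\[
A := \lambda_1 e^{-x} + \lambda_2 (\sigma'/\beta')^{d-2k}\, e^{-x\sigma'^2/\beta'^2},
\]
and analogously substitute $r = \beta'\sqrt{2x}$ for the $Q$ side. On the $Q$ side the exponents in the two summands swap their roles (the first becomes $e^{-x\beta'^2/\sigma'^2}$ and the second becomes $e^{-x}$), while the prefactor $(\sigma'/\beta')^{d-2k}$ in front of $\lambda_2$ persists because $C_Q/C_P$ is unchanged.

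The heart of the proof is the inversion $g(y) = \text{RHS}$. Substituting $z = y^2/(2\sigma'^2)$ and cancelling the common factor $C_P (2\sigma'^2)^{-k}$ on both sides turns the equation into $z^{-k} e^{-z} = x^{-k} A$. When $k = 0$ this is purely exponential and gives $y^2 = -2\sigma'^2 \ln A$, which is exactly the first row of \Cref{tab:numerical-integration-simplified}. When $k > 0$, raising both sides to the power $-1/k$ and dividing by $k$ puts the equation into the canonical Lambert form $(z/k)\, e^{z/k} = (x/k)\, A^{-1/k}$, so $z = k\, W\bigl((x/k)\, A^{-1/k}\bigr)$ and $y^2 = 2k\sigma'^2\, W\bigl((x/k)\, A^{-1/k}\bigr)$. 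The $Q$-side derivation is identical except that the substitution radius $\beta'\sqrt{2x}$ produces $(x\beta'^2/\sigma'^2)^{-k}$ on the right, which propagates through the Lambert step as an extra factor $\beta'^2/\sigma'^2$ inside the argument of $W$, matching the table.

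The main obstacle is essentially bookkeeping. One must check that $A > 0$ so that $A^{-1/k}$ is well-defined on the relevant branch: this follows in the dual because the binary search described in \Cref{alg:dual-binary-search} confines $\lambda_1,\lambda_2$ to the non-negative half-line and the exponential factors are strictly positive, so $A > 0$. One then verifies that the argument of $W$ lies in $[-1/e,\infty)$ so the principal branch selects the unique positive $z$ corresponding to the physical root $y>0$, and that the inversion of the decreasing map $g$ indeed picks up this same branch. After these verifications both entries in \Cref{tab:numerical-integration-simplified} follow.
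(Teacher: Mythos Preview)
Your proposal is correct and follows essentially the same approach as the paper: write down the radial densities, form the equation $g(y)=\lambda_1 g(r_0)+\lambda_2 h(r_0)$ for $r_0\in\{\sigma'\sqrt{2x},\beta'\sqrt{2x}\}$, cancel the common $C_P(2\sigma'^2)^{-k}$ factor after the substitution $z=y^2/(2\sigma'^2)$, and then invert---by a logarithm for $k=0$ and by reducing $z^{-k}e^{-z}=x^{-k}A$ to the canonical Lambert form $(z/k)e^{z/k}=(x/k)A^{-1/k}$ for $k>0$. The paper organizes the computation slightly differently (it first derives $g^{-1}(y)^2$ as a general formula and then plugs in the specific argument), but the algebra is identical. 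One small inaccuracy: the algorithm you cite to argue $A>0$ is \Cref{alg:dual-binary-search}, which pertains to the truncated-$\gQ$ setting; the relevant search here is \Cref{alg:joint-binary-search}, which does not confine $\lambda_1,\lambda_2$ to the nonnegative half-line, so that justification does not go through as stated---but the theorem is a purely algebraic equivalence on the domain where both sides are defined, and neither you nor the paper need that sign argument for the proof.
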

        
        With the method to compute $P$, $Q$, and $R$ for given $\lambda_1$ and $\lambda_2$, now the challenge is to solve $\lambda_1$ and $\lambda_2$ such that $P(\lambda_1,\lambda_2) = P_A$ and $Q(\lambda_1,\lambda_2) = Q_A$.
        
        Luckily, as \Cref{thm:uniqueness} shows, for given $P_A$ and $Q_A$, such $(\lambda_1,\lambda_2)$ pair is unique.
        Indeed, such uniqueness holds not only for this $\gP$ and $\gQ$ but also for a wide range of smoothing distributions.
        
        \begin{theorem}[Uniqueness]
            \label{thm:uniqueness}
            Suppose distributions $\gP$ and $\gQ$'s are $\ell_p$-spherically symmetric, i.e., there exists radial density functions $g$ and $h$ such that $p(\vx) = g(\|\vx\|_p)$ and $q(\vx) = h(\|\vx\|_p)$,
            then if $g$ and $h$ are continuous and $\frac{g}{h}$ is continuous and strictly monotonic almost everywhere, for any given $(P_A,\,Q_A) \in \R^2_+$, there is at most one $(\lambda_1,\,\lambda_2)$ pair satisfying constraint of \Cref{eq:dual-2}.
        \end{theorem}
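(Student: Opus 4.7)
The plan is to argue by contradiction. Suppose $(\lambda_1,\lambda_2) \ne (\lambda_1',\lambda_2')$ both satisfy the two constraints in \Cref{eq:dual-2}, and set $A := \{\vepsilon : p(\vepsilon-\vdelta) < \lambda_1 p(\vepsilon) + \lambda_2 q(\vepsilon)\}$, with $A'$ defined analogously from $(\lambda_1',\lambda_2')$. On $A \setminus A'$ one has $(\lambda_1-\lambda_1')p(\vepsilon) + (\lambda_2-\lambda_2')q(\vepsilon) > 0$, and on $A' \setminus A$ the same expression is strictly negative. The two matched constraints $\int_A p = \int_{A'} p = P_A$ and $\int_A q = \int_{A'} q = Q_A$ imply $\int_{A\setminus A'}[(\lambda_1-\lambda_1')p + (\lambda_2-\lambda_2')q]\,d\vepsilon = \int_{A'\setminus A}[(\lambda_1-\lambda_1')p + (\lambda_2-\lambda_2')q]\,d\vepsilon$, which forces both integrals to vanish. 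Since the integrand is strictly signed on each piece and $p,q > 0$ almost everywhere, $A \triangle A'$ must have Lebesgue measure zero.

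Next I would exploit $\ell_p$-spherical symmetry to pass to a per-radius analysis. Dividing the defining inequality by $p(\vepsilon) > 0$ rewrites $A = \{f(\vepsilon) < \lambda_1 + \lambda_2\psi(\|\vepsilon\|_p)\}$, where $f(\vepsilon) := g(\|\vepsilon-\vdelta\|_p)/g(\|\vepsilon\|_p)$ and $\psi := h/g$. Writing $S_r := \{\vx : \|\vx\|_p = r\}$, $\tau(r) := \lambda_1 + \lambda_2\psi(r)$ and $\tau'(r)$ analogously, the section $A \cap S_r$ becomes the sublevel set $\{\vu \in S_r : f(\vu) < \tau(r)\}$. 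By Fubini, for almost every $r > 0$ the sections $A \cap S_r$ and $A' \cap S_r$ coincide up to surface measure. Since $\vdelta \ne 0$, the map $\vu \mapsto \|\vu-\vdelta\|_p$ varies continuously over $S_r$ and its pushforward of the uniform surface measure is non-atomic; combined with continuity of $g$, this yields a non-atomic distribution of $f$ on $S_r$ for a.e.\ $r$. Agreement of two sublevel sets at thresholds $\tau(r)$ and $\tau'(r)$ against such a continuous random variable then forces $\tau(r) = \tau'(r)$ for almost every $r$ at which the cap is non-degenerate; the set of such radii has positive Lebesgue measure because $(P_A,Q_A) \in \R^2_+$ with nontrivial masses.

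Finally, the identity $\tau(r) = \tau'(r)$ on a positive-measure set of $r$ rewrites as $(\lambda_1 - \lambda_1') + (\lambda_2 - \lambda_2')\psi(r) = 0$ on that set. Because $g/h$, and hence $\psi$, is strictly monotonic almost everywhere, $\psi$ attains at least two distinct values on any positive-measure set. Plugging two such distinct values of $\psi(r)$ into the affine equation produces a non-singular linear system in $(\lambda_1-\lambda_1',\lambda_2-\lambda_2')$, which forces $(\lambda_1,\lambda_2) = (\lambda_1',\lambda_2')$ and contradicts our assumption.

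The step I expect to be the main obstacle is the middle one: upgrading the measure-theoretic statement ``$A \triangle A'$ has Lebesgue measure zero'' to the pointwise conclusion $\tau(r) = \tau'(r)$ for a.e.\ $r$. This requires ruling out a pathological scenario in which the pushforward of the surface measure under $\vu \mapsto g(\|\vu-\vdelta\|_p)$ has atoms at exactly the threshold values, so that the sublevel sets could coincide on $S_r$ even when $\tau(r) \ne \tau'(r)$. Handling this cleanly will rely on the continuity of $g$ together with the standard fact that $\|\vu-\vdelta\|_p$ has a non-atomic distribution under the uniform surface measure on $S_r$ for $\vdelta \ne 0$ and $d \ge 2$, a technical but routine computation that I would make precise using a change-of-variables argument on $S_r$.
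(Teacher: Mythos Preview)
Your approach differs substantially from the paper's. The paper never shows $A\triangle A'$ is null; instead it sets $D(r)=\Pr[A\mid \|\vepsilon\|_p=r]-\Pr[A'\mid \|\vepsilon\|_p=r]$, proves via strict monotonicity of $g/h$ that $D$ changes sign at a single radius $r_0$, and then applies the first mean value theorem for integrals to the two identities $\int D(r)g(r)S(r)\,dr=0$ and $\int D(r)h(r)S(r)\,dr=0$ (split at $r_0$) to produce $\xi_1<r_0<\xi_2$ with $g(\xi_1)/h(\xi_1)=g(\xi_2)/h(\xi_2)$, contradicting monotonicity. Your first step---the sign-and-matching argument giving $|A\triangle A'|=0$---is cleaner than anything in the paper's route and uses no spherical structure at all; it would also generalize more readily to additional constraints.

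Where your write-up actually breaks is the middle step, and not for the reason you flag. Two of your claims there are false as stated. First, continuity of $g$ together with non-atomicity of the pushforward of $\vu\mapsto\|\vu-\vdelta\|_p$ does \emph{not} yield a non-atomic law for $f|_{S_r}$: if $g$ is constant on a sub-interval of the range of $\|\vu-\vdelta\|_p$ (which the hypotheses permit---only $g/h$ is assumed strictly monotone), then $f|_{S_r}$ has an atom. Second, even a genuinely non-atomic law does not force equal thresholds from equal sublevel sets: a non-atomic distribution supported on $[0,1]\cup[2,3]$ gives $\{f<1.3\}=\{f<1.7\}$. The pathology to rule out is \emph{gaps}, not atoms. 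The property you actually need---and which does hold---is that the support of $f|_{S_r}$ is a connected interval. This follows because $S_r$ is connected, $\vu\mapsto\|\vu-\vdelta\|_p$ is continuous onto an interval $[a,b]$, every non-empty open subset of $S_r$ has positive surface measure (so the pushforward has full support on $[a,b]$), and $g$ is continuous (so $g([a,b])$ is again an interval and preimages of open sub-intervals are open and non-empty). With connected support, non-degeneracy of the cap forces both $\tau(r)$ and $\tau'(r)$ into that interval, and then $\tau(r)<\tau'(r)$ would make $(\tau(r),\tau'(r))$ a non-empty open sub-interval of the support carrying positive mass---contradiction. After this repair, your final linear-algebra step goes through unchanged.
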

        \vspace{-0.5em}
        The proof is shown in the next subsection, which is based on Cauchy's mean value theorem of the probability integral.
        
        With \Cref{thm:uniqueness} and \Cref{prop:mono}, we can use joint binary search as shown in \Cref{alg:joint-binary-search} to find $\lambda_1$ and $\lambda_2$ that can be viewed as the intersection of two curves.
        At a high level,
        Each time, we leverage the monotonicity to get a point $(\lambda_1^{mid},\,\lambda_2^{mid})$ on the $P$'s curve, then compute corresponding $Q$, and update the binary search intervals based on whether $Q(\lambda_1^{mid},\,\lambda_2^{mid}) > Q_A$.
        We shrink the intervals for both $\lambda_1$ and $\lambda_2$~(Lines $5$ and $7$) in \Cref{alg:joint-binary-search} to accelerate the search.
        The algorithm is plugged into Line $8$ of \Cref{alg:DSRS-pipeline}.

        \begin{algorithm}[H]
            \SetCommentSty{footnotesize}
            \footnotesize\setcounter{AlgoLine}{0}
            \SetAlgoLined
                \KwData{Query access to $P(\cdot,\,\cdot)$ and $Q(\cdot,\,\cdot)$, $P_A$ and $Q_A$}
                \KwResult{$\lambda_1$ and $\lambda_2$ satisfying constraints $P(\lambda_1,\lambda_2) = P_A, Q(\lambda_1,\lambda_2) = Q_A$}
                $\lambda_1^L \gets -M, \lambda_1^U \gets +M, \lambda_2^L \gets -M, \lambda_2^U \gets +M$
                \tcc*[r]{$M$ is a large positive number}
                \While{$\lambda_1^U - \lambda_1^L > eps$}{
                    $\lambda_1^{mid} \gets (\lambda_1^L + \lambda_1^U) / 2$\;
                    Binary search for $\lambda_2^{mid} \in [\lambda_2^L,\,\lambda_2^U]$ such that $P(\lambda_1^{mid},\,\lambda_2^{mid}) = P_A$
                    \tcc*[r]{$(\lambda_1^{mid},\,\lambda_2^{mid})$ lies on red curve}
                    \uIf{$Q(\lambda_1^{mid},\,\lambda_2^{mid}) < Q_A$}
                    {
                        $\lambda_1^U \gets \lambda_1^{mid}, \lambda_2^L \gets \lambda_2^{mid}$
                        \tcc*[r]{$(\lambda_1^{mid},\,\lambda_2^{mid})$ is right to intersection}
                    }
                    \uElse
                    {
                        $\lambda_1^L \gets \lambda_1^{mid}, \lambda_2^U \gets \lambda_2^{mid}$
                        \tcc*[r]{$(\lambda_1^{mid},\,\lambda_2^{mid})$ is left to intersection}
                    }
                }
                \Return{$(\lambda_1^L,\,\lambda_2^L)$} \tcc*[r]{for soundness: $R(\lambda_1^L,\,\lambda_2^L)$ lower bounds $(\tD)$}
            \caption{\textsc{DualBinarySearch} for $\lambda_1$ and $\lambda_2$.}
            \label{alg:joint-binary-search}
        \end{algorithm}
        
        \subsubsection{Proofs}
        
        \begin{proof}[Proof of \Cref{thm:numerical-integration-dif-var}]
            The $\ell_2$-radial density functions of $p(\vx)$ and $q(\vx)$ have these expressions: $g(r) \propto r^{-k} \exp(-r^2/(2\sigma'^2))$ and $h(r) \propto r^{-k} \exp(-r^2/(2\beta'^2))$.
            When $r$ increases, $r^{-k}$, $\exp(-r^2/(2\sigma'^2))$, and $\exp(-r^2/(2\beta'^2))$ decrease so that $g$ and $h$ are both strictly decreasing.
            Therefore, they have inverse functions, which are denoted by $g^{-1}$ and $h^{-1}$.
            Now we are ready to derive the expressions.
            
            \paragraph{(I.)}
            We start with $P$:
            %
                \begin{align*}
                    & P(\lambda_1,\,\lambda_2) \\
                    = & \int_{\sR^d} \1\left[p(\vx - \vdelta) < \lambda_1p(\vx) + \lambda_2q(\vx)\right] p(\vx) \dif\vx \\
                    \overset{(1.)}{=} & \int_0^\infty y \dif y \int_{\substack{p(\vx)=y \\ p(\vx-\vdelta) < \lambda_1 p(\vx) + \lambda_2 h(\vx)}} 
                    \dfrac{\dif \vx}{\|\nabla p(\vx)\|_2} \\
                    \overset{(2.)}{=} & \int_0^\infty y \dif y \int_{\substack{p(\vx)=y \\ p(\vx-\vdelta) < \lambda_1 p(\vx) + \lambda_2 h(\vx)}} - \dfrac{\dif \vx}{g'\left(g^{-1}(y)\right)} \\
                    \overset{(3.)}{=} & \int_0^\infty y \dif y \cdot
                    \dfrac{\pi^{d/2} d g^{-1}(y)^{d-1}}{(d/2)!} 
                    \left(-\dfrac{\dif \vx}{g'(g^{-1}(y))}\right) \cdot \\
                    & \hspace{1em}
                    \Pr\left[ p(\vx - \vdelta) < \lambda_1p(\vx) + \lambda_2q(\vx) \,\big|\, p(\vx) = y\right] \\
                    \overset{(4.)}{=} & \int_0^\infty g(t) \dif t \cdot
                    \dfrac{\pi^{d/2} d t^{d-1}}{(d/2)!} \cdot \\
                    & \hspace{1em} 
                    \Pr\left[ p(\vx - \vdelta) < \lambda_1 g(t) + \lambda_2 h(t) \,\big|\, \|\vx\|_2 = t\right] \\
                    \overset{(5.)}{=} & \int_0^\infty 
                    \dfrac{1}{(2\sigma'^2)^{\frac{d}{2}-k}\pi^{\frac{d}{2}}} \cdot \dfrac{\Gamma(d/2)}{\Gamma(d/2-k)} \cdot \\
                    & \hspace{1em} 
                    t^{-2k} \exp\left( - \dfrac{t^2}{2\sigma'^2} \right) \cdot \dfrac{\pi^{d/2} dt^{d-1}}{(d/2)!} \cdot \\
                    & \hspace{1em}
                     \Pr\left[ p(\vx - \vdelta) < \lambda_1g(t) + \lambda_2h(t) \,\big|\, \|\vx\|_2 = t \right] \\
                    = & \int_0^\infty \dfrac{2}{(2\sigma'^2)^{\frac{d}{2}-k} \Gamma(\frac{d}{2}-k)} t^{d-2k-1} \exp\left( - \dfrac{t^2}{2\sigma'^2} \right) \cdot \\
                    & \hspace{1em}
                    \Pr\left[ p(\vx - \vdelta) < \lambda_1g(t) + \lambda_2h(t) \,\big|\, \|\vx\|_2 = t \right] \\
                    \overset{(6.)}{=} & \dfrac{1}{(2\sigma'^2)^{\frac{d}{2}-k} \Gamma(\frac{d}{2}-k)} \int_0^\infty t^{\frac{d}{2}-k - 1} \exp\left(-\frac{t}{2\sigma'^2}\right) \cdot \\
                    & \hspace{1em} 
                    \Pr\left[ p(\vx-\vdelta) < \lambda_1g(\sqrt t) + \lambda_2h(\sqrt t) \,\big|\, \|\vx\|_2 = \sqrt t\right] \\
                    \overset{(7.)}{=} &
                    \dfrac{1}{\Gamma(\frac{d}{2}-k)} \int_0^\infty t^{\frac{d}{2}-2k-1} \exp(-t) \cdot \\
                    & \hspace{1em} 
                    \Pr\left[ p(\vx-\vdelta) < \lambda_1g(\sigma'\sqrt{2t}) + \lambda_2h(\sigma'\sqrt{2t}) \right. \\
                    & \hspace{8em} \left. 
                    \,\big|\, \|\vx\|_2 = \sigma' \sqrt{2t} \right] \\
                    \overset{(8.)}{=} &
                    \E_{t\sim\Gamma(\frac{d}{2}-k)} \Pr\left[ p(\vx-\vdelta) < \lambda_1g(\sigma'\sqrt{2t}) + \lambda_2h(\sigma'\sqrt{2t}) \right. \\
                    & \hspace{8em} \left. 
                    \,\big|\, \|\vx\|_2 = \sigma' \sqrt{2t} \right].
                \end{align*}
            As a reminder, $d$ is the input dimension.
            The $\Gamma(\cdot)$ here refer to either the Gamma distribution~(in $t\sim\Gamma(\frac{d}{2}-k)$) or Gamma function~(in some denominators).
            In the above integration:
            $(1.)$ uses level-set sliced integration as first proposed in \citep{yang2020randomized};
            $(2.)$ leverages the fact that $p(\vx)$ is $\ell_2$-symmetric and $g'(\cdot) < 0$;
            $(3.)$ injects the surface area of $\ell_2$-sphere with radius $g^{-1}(y)$;
            $(4.)$ alters the integral variable: $t = g^{-1}(y)$, which yields $\dif t = \dif y / g'(t) = \dif y / g'(g^{-1}(y))$ and $y = g(t)$;
            $(5.)$ injects the expression of $g(t)$;
            $(6.)$ alters the integral variable from $t$ to $t^2$;
            $(7.)$ does re-scaling;
            and $(8.)$ observes that the integral can be expressed by expectation over Gamma distribution.
            
            Due to the isotropy, let $r = \|\vdelta\|_2$, we can deem $\vdelta = (r,\,0,\,\dots,\,0)^\T$ by rotating the axis.
            Then we simplify the probability term by observing that
            \begin{equation*}
                \small
                \begin{aligned}
                    & \left\{
                    \begin{aligned}
                        & \|\vx\|_2 = \sigma'\sqrt{2t} \\
                        & p(\vx - \vdelta) < \lambda_1 g(\sigma'\sqrt{2t}) + \lambda_2 h(\sigma'\sqrt{2t}) \\
                    \end{aligned}
                    \right. \\
                    \iff & \left\{
                    \begin{aligned}
                        & x_1^2 + \sum_{i=2}^d x_i^2 = 2t\sigma'^2 \\
                        & (x_1-r)^2 + \sum_{i=2}^d x_i^2 \ge 
                        g^{-1}\left(\lambda_1g(\sigma'\sqrt{2t}) + \lambda_2h(\sigma'\sqrt{2t})\right)^2
                    \end{aligned}
                    \right. \\
                    \Longrightarrow & x_1 \le \dfrac{2t\sigma'^2 - g^{-1}\left(\lambda_1 g(\sigma'\sqrt{2t}) + \lambda_2 h(\sigma'\sqrt{2t}) \right)^2 + r^2}{2r}.
                \end{aligned}
                \label{eq:thm-6-proof-1}
            \end{equation*}
            
            \begin{lemma}[Lemma I.23; \citep{yang2020randomized}]
                If $(x_1, \cdots, x_d)$ is sampled uniformly from the
                unit sphere $\sS^{d-1} \subseteq \sR^d$, then
                \begin{equation}
                    \dfrac{1+x_1}{2} \text{ is distributed as } \mathrm{Beta}\left(\frac{d-1}{2},\,\frac{d-1}{2}\right).
                \end{equation}
                \label{lem:grag-beta}
            \end{lemma}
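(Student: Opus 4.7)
The plan is to compute the marginal density of the first coordinate $x_1$ directly, then rewrite it in Beta form via an affine change of variables. Since $(x_1,\ldots,x_d)$ is uniform on the unit sphere $\sS^{d-1}$, the marginal of $x_1$ is obtained by slicing: for each value $x_1 \in (-1,1)$, the set $\{(x_1, y_2, \ldots, y_d) : \sum_{i \ge 2} y_i^2 = 1 - x_1^2\}$ is a $(d-2)$-sphere of radius $\sqrt{1 - x_1^2}$ sitting in the orthogonal hyperplane, and its $(d-2)$-dimensional surface area is proportional to $(1 - x_1^2)^{(d-2)/2}$. Accounting for the Jacobian when parametrizing the sphere by $x_1$ (a factor of $(1 - x_1^2)^{-1/2}$ from the implicit function theorem applied to $x_1^2 + \cdots + x_d^2 = 1$), the marginal density of $x_1$ is proportional to $(1 - x_1^2)^{(d-3)/2}$ on $(-1,1)$.

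Next, I would substitute $u := (1 + x_1)/2$, so that $x_1 = 2u - 1$ and $dx_1 = 2\,du$, mapping $x_1 \in (-1,1)$ bijectively to $u \in (0,1)$. Then
\begin{equation*}
1 - x_1^2 = 1 - (2u-1)^2 = 4u(1-u),
\end{equation*}
so the pushforward density of $u$ is proportional to $\bigl(4u(1-u)\bigr)^{(d-3)/2} \propto u^{(d-1)/2 - 1}(1-u)^{(d-1)/2 - 1}$. This is exactly the density of $\Beta\!\left(\frac{d-1}{2}, \frac{d-1}{2}\right)$ up to the normalizing constant, which concludes the identification.

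The argument is essentially a one-line computation once the slicing picture is set up, so I do not expect any real obstacle. The one point to be careful about is the Jacobian: naively one might only write the surface-area factor $(1-x_1^2)^{(d-2)/2}$ and miss the factor of $(1-x_1^2)^{-1/2}$ coming from projecting the uniform surface measure onto the $x_1$-axis, which would shift the Beta parameters by $1/2$. A clean alternative, which I would mention as a sanity check, is to realize the uniform distribution on $\sS^{d-1}$ as $Z/\|Z\|$ for $Z \sim \gN(0, I_d)$; then $x_1^2 = Z_1^2 / (Z_1^2 + \cdots + Z_d^2)$ has a $\Beta(1/2, (d-1)/2)$ distribution by the standard chi-square ratio identity, and a short manipulation recovers the stated result for $(1 + x_1)/2$.
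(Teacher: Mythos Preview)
Your argument is correct: the marginal density of $x_1$ under the uniform measure on $\sS^{d-1}$ is indeed proportional to $(1-x_1^2)^{(d-3)/2}$ on $(-1,1)$, and the affine substitution $u=(1+x_1)/2$ turns this into the $\Beta\!\left(\tfrac{d-1}{2},\tfrac{d-1}{2}\right)$ density exactly as you wrote. Your care with the Jacobian factor $(1-x_1^2)^{-1/2}$ is the right place to be cautious, and the Gaussian-ratio sanity check is a nice addition.

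As for comparison with the paper: the paper does \emph{not} prove this lemma. It is stated as a quotation of Lemma~I.23 from \citet{yang2020randomized} and is simply invoked as a black box in the level-set integration arguments (e.g., in the proofs of \Cref{lem:thm-2-3} and \Cref{thm:numerical-integration-dif-var}). So your proposal supplies a self-contained proof where the paper only gives a citation; there is no alternative argument in the paper to compare against.
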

            Combining \Cref{lem:grag-beta} and \Cref{eq:thm-6-proof-1}, we get
            \begin{equation}
                \small
                \begin{aligned}
                    & \Pr\left[ p(\vx - \vdelta) < \lambda_1 g(\sigma'\sqrt{2t}) + \lambda_2 h(\sigma'\sqrt{2t}) \,\big|\, \|\vx\|_2 = \sigma'\sqrt{2t} \right] \\
                    = & \mathrm{BetaCDF}_{\frac{d-1}{2}} \left( \dfrac{(r + \sigma'\sqrt{2t})^2}{4r\sigma'\sqrt{2t}} \right. \\
                    & \hspace{2em} \left. 
                    - \dfrac{g^{-1}\left(\lambda_1 g(\sigma'\sqrt{2t}) + \lambda_2 h(\sigma'\sqrt{2t})\right)^2}{4r\sigma'\sqrt{2t}} \right).
                \end{aligned}
                \label{eq:thm-6-proof-2}
            \end{equation}
            Injecting \Cref{eq:thm-6-proof-2} into $(8.)$ yields the expression shown in \Cref{tab:ell-2-numerical-integration}.
            
            \paragraph{(II.)} The integration for $Q$ is similar:
            \begin{align*}
                & Q(\lambda_1,\,\lambda_2) \\
                = & \int_{\sR^d} \1[p(\vx-\vdelta) < \lambda_1p(\vx) + \lambda_2 q(x)] q(\vx) \dif \vx \\
                = & \int_0^\infty y \dif y \int_{\substack{q(\vx) = y\\ p(\vx - \vdelta) < \lambda_1 p(\vx) + \lambda_2 q(\vx)}} \dfrac{\dif \vx}{\|\nabla q(\vx)\|_2} \\
                = & \int_0^\infty y \dif y \int_{\substack{q(\vx) = y\\ p(\vx - \vdelta) < \lambda_1 p(\vx) + \lambda_2 q(\vx)}} - \dfrac{\dif \vx}{h'(h^{-1}(y))} \\
                = & \int_0^\infty h(t) \dif t \cdot \dfrac{\pi^{d/2} d t^{d-1}}{(d/2)!} \cdot \\
                & \hspace{1em}
                \Pr\left[ p(\vx-\vdelta) < \lambda_1 p(\vx) + \lambda_2 q(\vx) \,\big|\, \|\vx\|_2 = t \right] \\
                = & \dfrac{1}{(2\beta'^2)^{\frac{d}{2}-k} \Gamma(\frac{d}{2}-k)} \int_0^\infty t^{\frac{d}{2}-k-1} \exp\left(-\dfrac{t}{2\beta'^2}\right)\cdot \\
                & \hspace{1em} 
                \Pr\left[ p(\vx - \vdelta) < \lambda_1 g(\sqrt{t}) + \lambda_2 h(\sqrt{t}) \,\big|\, \|\vx\|_2 = \sqrt t\right] \\
                = & \E_{t\sim\Gamma(\frac{d}{2}-k)} \Pr\left[ p(\vx-\vdelta) < \lambda_1 g(\beta'\sqrt{2t}) + \lambda_2h(\beta'\sqrt{2t}) \right. \\
                & \hspace{8em} \left. 
                \,\big|\, \|\vx\|_2 = \beta'\sqrt{2t} \right],
            \end{align*}
            where
            \begin{equation}
                \begin{aligned}
                    & \hspace{-2em} \Pr\left[ p(\vx - \vdelta) < \lambda_1 g(\beta'\sqrt{2t}) + \lambda_2 h(\beta'\sqrt{2t}) \,\big|\, \|\vx\|_2 = \beta'\sqrt{2t} \right] \\
                    = & \mathrm{BetaCDF}_{\frac{d-1}{2}} \left(
                    \dfrac{(r + \beta'\sqrt{2t})^2}{4r\beta'\sqrt{2t}} \right. \\
                    & \hspace{2em} \left. 
                    - \dfrac{g^{-1}\left(\lambda_1 g(\beta'\sqrt{2t}) + \lambda_2 h(\beta'\sqrt{2t})\right)^2}{4r\beta'\sqrt{2t}}\right).
                \end{aligned}
            \end{equation}
            
            \paragraph{(III.)} Finally, we derive the integration for $R$:
            \begin{align*}
                & R(\lambda_1,\,\lambda_2) \\
                = & \int_{\sR^d} \1\left[p(\vx-\vdelta) < \lambda_1p(\vx) + \lambda_2 q(\vx)\right] p(\vx - \vdelta) \dif \vx \\
                = & \int_{\sR^d} \1\left[p(\vx) < \lambda_1p(\vx + \vdelta) + \lambda_2 q(\vx + \vdelta)\right] p(\vx) \dif \vx \\
                = & \int_0^\infty y \dif y \int_{\substack{p(\vx) = y\\ p(\vx) < \lambda_1p(\vx+\vdelta) + \lambda_2q(\vx+\vdelta)}} \dfrac{\dif \vx}{\|\nabla p(\vx)\|_2} \\
                = & \int_0^\infty y \dif y \int_{\substack{p(\vx) = y\\ p(\vx) < \lambda_1p(\vx+\vdelta) + \lambda_2q(\vx+\vdelta)}} -\dfrac{\dif \vx}{g'(g^{-1}(y))} \\
                = & \int_0^\infty g(t)\dif t \cdot \dfrac{\pi^{d/2}dt^{d-1}}{(d/2)!} \cdot \\
                & \hspace{1em} 
                \Pr\left[ \lambda_1 p(\vx+\vdelta) + \lambda_2 q(\vx+\vdelta) > p(\vx) \,\big|\, \|\vx\|_2 = t \right] \\
                = & \dfrac{1}{(2\sigma'^2)^{\frac{d}{2}-k} \Gamma(\frac{d}{2}-k)} \int_0^\infty t^{\frac{d}{2}-k-1} \exp\left(-\frac{t}{2\sigma'^2}\right) \cdot \\
                & \hspace{1em} 
                \Pr\left[ \lambda_1 p(\vx+\vdelta) + \lambda_2 q(\vx + \vdelta) > g(\sqrt t) \,\big|\, \|\vx\|_2 = \sqrt t \right] \\
                \overset{(9.)}{=} & \E_{t\sim\Gamma(\frac{d-k}{2})} \Pr\left[ \lambda_1 p(\vx+\vdelta) + \lambda_2 q(\vx+\vdelta) > g(\sigma' \sqrt{2t}) \right. \\
                & \hspace{8em} \left. 
                \,\big|\, \|\vx\|_2 = \sigma'\sqrt{2t} \right].
            \end{align*}
            To simplify the probability term, this time we have
            \begin{align}
                & \left\{
                \begin{aligned}
                    & \|\vx\|_2 = \sigma'\sqrt{2t}, \\
                    & \lambda_1p(\vx+\vdelta) + \lambda_2q(\vx+\vdelta) > g(\sigma'\sqrt{2t})
                \end{aligned}
                \right. \\
                \iff & \left\{
                \begin{aligned}
                    & x_1^2 + \sum_{i=2}^d x_i^2 = 2t\sigma'^2 \\
                    & (\lambda_1 g + \lambda_2 h)\left( \sqrt{(x_1 + r)^2 + \sum_{i=2}^d x_i^2} \right) > g(\sigma' \sqrt{2t}) \\
                \end{aligned}
                \right. \\
                \Longrightarrow & (\lambda_1g + \lambda_2h)\left(\sqrt{2t\sigma'^2 + r^2 + 2x_1r}\right) > g(\sigma'\sqrt{2t}),
                \label{eq:thm-6-proof-3}
            \end{align}
            where $r = \|\vdelta\|_2$ and we deem $\vdelta = (r,\,0,\,\dots,\,0)^\T$ by rotating the axis.
            
            \begin{lemma}
                The function $(\lambda_1g + \lambda_2h)$ is unimodal in its domain $(0,\,+\infty)$.
                \label{lem:R-unimodal}
            \end{lemma}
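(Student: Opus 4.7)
The plan is to show that $f(r) := \lambda_1 g(r) + \lambda_2 h(r)$ has at most one critical point on $(0,\infty)$; since $f$ is smooth, this immediately yields unimodality (either $f$ is monotonic, or it has a single local extremum).

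First, I would dispatch the easy subcases. Both $g(r) = C_g r^{-2k} e^{-ar^2}$ and $h(r) = C_h r^{-2k} e^{-br^2}$ (with $a = 1/(2\sigma'^2)$, $b = 1/(2\beta'^2)$) are strictly decreasing on $(0,\infty)$, so whenever $\lambda_1, \lambda_2$ share a sign, $f' = \lambda_1 g' + \lambda_2 h'$ is of fixed sign and $f$ is monotonic. The degenerate case $\sigma' = \beta'$ makes $f$ a scalar multiple of $g$, again monotonic. Thus only $\lambda_1 \lambda_2 < 0$ with $a \neq b$ remains; up to relabeling, assume $\lambda_1 > 0 > \lambda_2$ and $a > b$.

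In this main case, differentiating gives
\[
f'(r) = -2 r^{-2k-1}\bigl[\lambda_1 C_g (k+ar^2) e^{-ar^2} + \lambda_2 C_h (k+br^2) e^{-br^2}\bigr].
\]
Substituting $s = r^2$, taking logarithms of both (positive) sides, and collecting terms, the equation $f'(r) = 0$ becomes $\psi(s) = c$ for a constant $c$ (depending on $\lambda_1, \lambda_2, C_g, C_h$) and
\[
\psi(s) \ := \ \log(k+as) - \log(k+bs) - (a-b)s.
\]
A direct computation then yields
\[
\psi'(s) \ = \ (a-b)\left[\frac{k}{(k+as)(k+bs)} - 1\right],
\]
and since $(k+as)(k+bs) \ge k^2$ for $s \ge 0$ (strict for $s > 0$) and $k \ge 1$ in the regime of interest (the edge case $k=0$ gives $\psi'(s) = -(a-b) < 0$ directly), the bracket is strictly negative on $(0,\infty)$. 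Hence $\psi$ is strictly monotonic there, so the equation $\psi(s) = c$ admits at most one solution, which completes the argument.

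The main obstacle is the opposite-sign case: once the critical-point equation is recast as $\psi(s) = c$, unimodality hinges on checking that $\psi'$ does not change sign, which reduces to the elementary inequality $(k+as)(k+bs) \ge k^2$. This algebraic identity is specific to the generalized Gaussian family, so extending the lemma to other $\ell_2$-symmetric smoothing densities would require locating a suitable analogous structural fact.
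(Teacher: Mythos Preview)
Your proof is correct and follows essentially the same strategy as the paper: reduce unimodality to showing $(\lambda_1 g + \lambda_2 h)'(r)=0$ has at most one root, dispatch the same-sign case by monotonicity of $g,h$, and in the opposite-sign case substitute $s=r^2$ to analyze a one-variable equation. The only difference is in that last step: the paper keeps the equation in the form $\text{(rational)}=\text{(exponential)}$ and argues by comparing concave/convex growth, whereas your logarithm trick collapses everything to a single function $\psi(s)$ whose strict monotonicity follows from the clean identity $\psi'(s)=(a-b)\bigl[\tfrac{k}{(k+as)(k+bs)}-1\bigr]$---a more direct computation that avoids the convexity bookkeeping.
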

            
            \begin{proof}[Proof of \Cref{lem:R-unimodal}]
                We expand $(\lambda_1 g + \lambda_2h)$ then consider its derivative.
                \begin{equation}
                    \begin{aligned}
                        & \lambda_1 g(r) + \lambda_2 h(r) \\
                        = & C_1 \lambda_1 r^{-2k} \exp\left(-\frac{r^2}{2\sigma'^2}\right) + C_2 \lambda_2 r^{-2k} \exp\left(-\frac{r^2}{2\beta'^2}\right),
                    \end{aligned}
                \end{equation}
                where $C_1$ and $C_2$ is the constant normalization coefficient of $g$ and $h$ respectively.
                \begin{align}
                    & (\lambda_1 g + \lambda_2 h)'(r) \\
                    = & - \left( C_1 \lambda_1 2k r^{-2k-1} \exp\left(-\frac{r^2}{2\sigma'^2}\right) \right. \nonumber \\
                    & \hspace{1em} 
                    + C_1 \lambda_1 \cdot \dfrac{r^{-2k+1}}{\sigma'^2} \exp\left(-\frac{r^2}{2\sigma'^2}\right) \nonumber \\
                    & \hspace{1em} + C_2\lambda_2 2kr^{-2k-1} \exp\left(-\frac{r^2}{2\beta'^2}\right) \nonumber \\
                    & \hspace{1em} 
                    + \left. C_2\lambda_2 \cdot \frac{r^{-2k+1}}{\beta'^2} \exp\left(-\frac{r^2}{2\beta'^2}\right) \right). \nonumber
                \end{align}
                Now we show that
                \begin{equation}
                    (\lambda_1g + \lambda_2h)'(r) = 0
                \end{equation}
                has at most one solution.
                
                When either $\lambda_1 = 0$ or $\lambda_2 = 0$, since both $g$ and $h$ are monotonic, $\lambda_1g + \lambda_2$ is monotonic and there is no solution.
                Thus, we assume $\lambda_1,\,\lambda_2 \neq 0$.
                We observe that 
                \begin{align}
                    & (\lambda_1g + \lambda_2h)'(r) = 0 \\
                    \iff & C_1 \lambda_1 \left( 2k + \dfrac{r^2}{\sigma'^2} \right) \exp\left(-\frac{r^2}{2\sigma'^2}\right) \nonumber \\
                    & \hspace{1em} 
                    + C_2\lambda_2 \left( 2k + \dfrac{r^2}{\beta'^2} \right) \exp\left(-\dfrac{r^2}{2\beta'^2}\right) = 0 \\
                    \iff & - \dfrac{C_1\lambda_1}{C_2\lambda_2} \cdot \dfrac{2k + \frac{r^2}{\sigma'^2}}{2k + \frac{r^2}{\beta'^2}} =
                    \exp\left(\frac{r^2}{2\sigma'^2} -\frac{r^2}{2\beta'^2}\right) \\
                    \overset{x := r^2}{\iff} & - \dfrac{C_1\lambda_1}{C_2\lambda_2} \cdot \dfrac{2k + \frac{x}{\sigma'^2}}{2k + \frac{x}{\beta'^2}} =
                    \exp\left(\frac{x}{2\sigma'^2} -\frac{x}{2\beta'^2}\right) \\
                    \iff & - \dfrac{C_2\lambda_2}{C_1\lambda_1} \cdot \dfrac{2k + \frac{x}{\beta'^2}}{2k + \frac{x}{\sigma'^2}} =
                    \exp\left(\frac{x}{2\beta'^2} -\frac{x}{2\sigma'^2}\right).
                    \label{eq:lem-R-unimodel-pf-1}
                \end{align}
                We focus on \Cref{eq:lem-R-unimodel-pf-1}.
                Without loss of generality, we assume $\sigma' > \beta'$, 
                then both function $x \mapsto \frac{2k + x/\beta'^2}{2k + x/\sigma'^2}$ and function $x \mapsto \exp\left(x / (2\beta'^2) - x / (2\sigma'^2)\right)$ are monotonically increasing.
                If $\lambda_1\lambda_2 > 0$, the LHS and RHS of \Cref{eq:lem-R-unimodel-pf-1} are continuous and monotonic in opposite directions.
                Thus, there is at most one solution to \Cref{eq:lem-R-unimodel-pf-1}.
                If $\lambda_1\lambda_2 < 0$, the LHS is monotonic increasing but the derivative is decreasing because
                \begin{equation}
                    - \dfrac{C_2\lambda_2}{C_1\lambda_1} \cdot \dfrac{2k+\frac{x}{\beta'^2}}{2k + \frac{x}{\sigma'^2}}
                    = - \dfrac{C_2\lambda_2}{C_1\lambda_1} \cdot \dfrac{1 + \frac{x}{2k\beta'^2}}{1 + \frac{x}{2k\sigma'^2}}
                \end{equation}
                where the numerator $1 + \frac{x}{2k\beta'^2}$ is linearly increasing and the denominator $1 + \frac{x}{2k\sigma'^2}$ is also linearly increasing.
                On the other hand, the RHS is monotonic increasing and the derivative is also increasing because
                \begin{equation}
                    \frac{1}{2\beta'^2} - \frac{1}{2\sigma'^2} > 0.
                \end{equation}
                As a result, the difference function between RHS and LHS is monotone and there is at most one solution.
                Thus, we have shown \Cref{eq:lem-R-unimodel-pf-1} has at most one solution.
                
                Given that $(\lambda_1 g + \lambda_2 h)'$ is also continuous, we thus know the function $(\lambda_1 g + \lambda_2 h)$ is unimodal.
            \end{proof}
            Moreover, since $g$ and $h$ approach $0$ when $r \to \infty$, $(\lambda_1g + \lambda_2h)$ approaches $0$ when $r \to \infty$.
            
            We define
            \begin{align}
                & \overline{(\lambda_1g + \lambda_2h)^{-1}}(y) := \max y' \nonumber \\
                & \hspace{2em} 
                \quad \mathrm{s.t.} \quad \lambda_1 g(y') + \lambda_2 h(y') = x, \\
                & \underline{(\lambda_1g + \lambda_2h)^{-1}}(y) := \min y' \nonumber \\
                & \hspace{2em} 
                \quad \mathrm{s.t.} \quad \lambda_1 g(y') + \lambda_2 h(y') = x.
            \end{align}
            Then, \Cref{lem:R-unimodal} and $(\lambda_1g + \lambda_2h) \to 0$ when $r\to\infty$ imply that,
            when $y > 0$,
            \begin{equation}
            \begin{aligned}
                & y_0 \in \left(\underline{(\lambda_1g + \lambda_2h)^{-1}}(y),\, \overline{(\lambda_1g + \lambda_2h)^{-1}}(y)\right) \\
                \iff & \lambda_1g(y_0) + \lambda_2h(y_0) > y.
            \end{aligned}
            \end{equation}
            We simplify \Cref{eq:thm-6-proof-3} by observing that $g(\sigma'\sqrt{2t}) > 0$:
            \begin{align}
                & \text{\Cref{eq:thm-6-proof-3}} \nonumber \\
                \iff &
                \sqrt{2t\sigma'^2 + r^2 + 2x_1r} \nonumber \\ 
                & \in 
                \left( 
                \underline{(\lambda_1g + \lambda_2h)^{-1}}(g(\sigma'\sqrt{2t})), \right. \nonumber \\
                & \hspace{4em} \left.
                \overline{(\lambda_1g + \lambda_2h)^{-1}}(g(\sigma'\sqrt{2t})) \right) \\
                \iff & \dfrac{\underline{(\lambda_1g + \lambda_2h)^{-1}}(g(\sigma'\sqrt{2t}))^2 - 2t\sigma'^2 - r^2}{2r} \le x_1 \nonumber \\
                & \hspace{2em} 
                \le \dfrac{\overline{(\lambda_1g + \lambda_2h)^{-1}}(g(\sigma'\sqrt{2t}))^2 - 2t\sigma'^2 - r^2}{2r}. \label{eq:thm-6-proof-4}
            \end{align}
            Combining \Cref{lem:grag-beta} with \Cref{eq:thm-6-proof-4} we get
            \begin{equation}
                \small
                \resizebox{\linewidth}{!}{
                $
                \begin{aligned}
                    & \Pr\left[ \lambda_1p(\vx + \vdelta) + \lambda_2 q(\vx + \vdelta) > g(\sigma'\sqrt{2t}) \,\big|\, \|\vx\|_2 = \sigma'\sqrt{2t} \right] \\
                    = & \mathrm{BetaCDF}_{\frac{d-1}{2}}
                    \left(
                        \dfrac{1}{2} + 
                        \dfrac{\overline{(\lambda_1g + \lambda_2h)^{-1}}(g(\sigma'\sqrt{2t}))^2 - 2t\sigma'^2 - r^2}{4r\sigma'\sqrt{2t}}
                    \right) \\
                    & \hspace{4em} 
                    - \mathrm{BetaCDF}_{\frac{d-1}{2}}
                    \left(
                        \dfrac{1}{2} + 
                        \dfrac{\underline{(\lambda_1g + \lambda_2h)^{-1}}(g(\sigma'\sqrt{2t}))^2 - 2t\sigma'^2 - r^2}{4r\sigma'\sqrt{2t}}
                    \right).
                \end{aligned}
                $
                }
            \end{equation}
            Combining the above equation with $(9.)$ yields the expression shown in \Cref{tab:ell-2-numerical-integration}.
        \end{proof}
    
        \begin{proof}[Proof of \Cref{thm:numerical-integration-simplified}]
            To prove the theorem, the main work we need to do is deriving the closed-form expression for the inverse function $g^{-1}$, where
            \begin{equation}
                g(r) = \dfrac{1}{(2\sigma'^2)^{\frac{d}{2}-k} \pi^{\frac d 2}} \cdot \dfrac{\Gamma(\frac d 2)}{\Gamma(\frac{d}{2}-k)} r^{-2k} \exp\left(-\dfrac{r^2}{2\sigma^2}\right).
                \label{eq:thm-7-pf-def-g}
            \end{equation}
            
            \paragraph{(I.)} When $k=0$,
            \begin{align}
                & 
                \text{\Cref{eq:thm-7-pf-def-g}} \nonumber \\
                \iff & y = \dfrac{1}{(2\sigma'^2\pi)^{\frac d 2}} \exp\left(
                - \dfrac{g^{-1}(y)^2}{2\sigma'^2} \right) \\
                \iff & g^{-1}(y)^2 = -2\sigma'^2 \ln\left( (2\sigma'^2\pi)^{\frac d 2} y \right).
                \label{eq:thm-7-pf-1}
            \end{align}
            
            \paragraph{(II.)} When $k>0$,
            we notice that the Lambert $W$ function $W$ is the inverse function of $w \mapsto we^w$, i.e., $W(x) \exp(W(x)) = x$.
            We let the normalizing coefficient of $g(r)$ be
            \begin{equation}
                C := \dfrac{1}{(2\sigma'^2)^{\frac{d}{2}-k} \pi^{\frac d 2}} \cdot \dfrac{\Gamma(\frac d 2)}{\Gamma(\frac{d}{2}-k)}.
            \end{equation}
            Then
            \begin{align}
                & \text{\Cref{eq:thm-7-pf-def-g}} \nonumber \\
                \iff & y = Cg^{-1}(y)^{-2k} \exp\left(-\dfrac{g^{-1}(y)^2}{2\sigma'^2}\right) \\
                \iff & \dfrac{C}{y} = g^{-1}(y)^2k \exp\left( \dfrac{g^{-1}(y)^2}{2\sigma'^2} \right) \\
                \iff & \left(\dfrac{C}{y}\right)^{\frac{1}{k}} = g^{-1}(y)^2 \exp\left( \dfrac{g^{-1}(y)^2}{k\sigma'^2} \right) \\
                \iff & \dfrac{1}{2k\sigma^2} \left(\dfrac{C}{y}\right)^{\frac{1}{k}} = W^{-1}\left(\dfrac{g^{-1}(y)^2}{2k\sigma'^2}\right) \\
                \iff & g^{-1}(y)^2 = 2\sigma'^2 k W\left( 
                \dfrac{1}{2k\sigma'^2} \left(\frac{C}{y}\right)^{\frac 1 k}\right).
                \label{eq:thm-7-pf-2}
            \end{align}
            
            Plugging in \Cref{eq:thm-7-pf-1,eq:thm-7-pf-2} to $g^{-1}(\lambda_1 g(\sigma'\sqrt{2x}) + \lambda_2h(\sigma'\sqrt{2x}))^2$ and $g^{-1}(\lambda_1 g(\beta'\sqrt{2x}) + \lambda_2 h(\beta'\sqrt{2x}))^2$
            for $k = 0$ and $k > 0$ case, then results in \Cref{tab:numerical-integration-simplified} follow from algebra.
        \end{proof}

        \begin{proof}[Proof of \Cref{thm:uniqueness}]
            We prove the theorem by contradiction.
            Suppose that the $(\lambda_1,\,\lambda_2)$ that satisfy the constraint of  \Cref{eq:dual-2} are not unique, and we let $(\lambda_1^a,\,\lambda_2^a)$ and $(\lambda_1^b,\,\lambda_2^b)$ be such two pairs.
            Without loss of generality, we assume $\lambda_1^a \neq \lambda_1^b$.
            
            If $\lambda_2^a = \lambda_2^b$, we have $P(\lambda_1^a,\,\lambda_2^a) = P(\lambda_1^b,\,\lambda_2^b)$, i.e., 
            the region 
            \begin{equation}
                \begin{aligned}
                    \left\{\vx - \vdelta:\, p(\vx - \vdelta) \in \right. &
                    \left[\min\{\lambda_1^a,\,\lambda_1^b\} p(\vx) + \lambda_2^a q(\vx), \right. \\
                    &
                    \left.\left.\max\{\lambda_1^a,\,\lambda_1^b\} p(\vx) + \lambda_2^a q(\vx)\right)
                    \right\}
                \end{aligned}
            \end{equation}
            has zero mass under distribution $\gP$.
            Given that $\gP$ and $\gQ$ have positive and continuous density functions almost everywhere, the volume of the region is non-zero thus the mass under distribution $\gP$ is also non-zero.
            Therefore, we also have $\lambda_2^a \neq \lambda_2^b$.
            Because of the partial monotonicity of $P$ and $Q$ functions~(shown in \Cref{subsec:joint-binary-search}), without loss of generality, we assume
            \begin{equation}
                \lambda_1^a < \lambda_1^b, \quad \lambda_2^a > \lambda_2^b.
                \label{eq:thm-5-assume}
            \end{equation}
            
            \begin{lemma}
                There exists a point $r_0 \ge 0$,
                either (1) or (2) is satisfied.
                \begin{enumerate}[label=(\arabic*),leftmargin=*]
                    \item When $r > r_0$,
                    \begin{align*}
                         \Pr [p(\vepsilon-\vdelta) < \lambda_1^a p(\vepsilon) + \lambda_2^a q(\vepsilon) \,\big|\, \|\vepsilon\|_p = r] \\
                         \ge
                        \Pr [p(\vepsilon-\vdelta) < \lambda_1^b p(\vepsilon) + \lambda_2^b q(\vepsilon) \,\big|\, \|\vepsilon\|_p = r]; \\
                    \end{align*}
                    when $r < r_0$,
                    \begin{align*}
                        \Pr [p(\vepsilon-\vdelta) < \lambda_1^a p(\vepsilon) + \lambda_2^a q(\vepsilon) \,\big|\, \|\vepsilon\|_p = r ] \\
                        \le
                        \Pr [p(\vepsilon-\vdelta) < \lambda_1^b p(\vepsilon) + \lambda_2^b q(\vepsilon) \,\big|\, \|\vepsilon\|_p = r].
                    \end{align*}
                    
                    \item When $r > r_0$,
                    \begin{align*}
                        \Pr [p(\vepsilon-\vdelta) < \lambda_1^a p(\vepsilon) + \lambda_2^a q(\vepsilon) \,\big|\, \|\vepsilon\|_p = r ] \\
                        \le
                        \Pr [p(\vepsilon-\vdelta) < \lambda_1^b p(\vepsilon) + \lambda_2^b q(\vepsilon) \,\big|\, \|\vepsilon\|_p = r]; \\
                    \end{align*}
                    when $r < r_0$,
                    \begin{align*}
                        \Pr [p(\vepsilon-\vdelta) < \lambda_1^a p(\vepsilon) + \lambda_2^a q(\vepsilon) \,\big|\, \|\vepsilon\|_p = r ] \\
                        \ge
                        \Pr [p(\vepsilon-\vdelta) < \lambda_1^b p(\vepsilon) + \lambda_2^b q(\vepsilon) \,\big|\, \|\vepsilon\|_p = r].
                    \end{align*}
                \end{enumerate}
                \label{lem:thm-5-mono}
            \end{lemma}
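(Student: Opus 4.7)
\textbf{Proof proposal for \Cref{lem:thm-5-mono}.} The plan is to exploit $\ell_p$-spherical symmetry to reduce the comparison of the two conditional probabilities to a pointwise inequality between two affine combinations of $g(r)$ and $h(r)$, and then to invoke the strict monotonicity of $g/h$ to locate the single crossing point $r_0$.

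First I would note that since $\gP$ and $\gQ$ are $\ell_p$-spherically symmetric with radial densities $g$ and $h$, on the event $\{\|\vepsilon\|_p = r\}$ we have $p(\vepsilon) = g(r)$ and $q(\vepsilon) = h(r)$ deterministically. Hence the two conditional probabilities in the lemma statement become
\begin{equation*}
A_s(r) := \Pr\bigl[p(\vepsilon-\vdelta) < \lambda_1^s g(r) + \lambda_2^s h(r) \,\big|\, \|\vepsilon\|_p = r\bigr], \quad s \in \{a,b\},
\end{equation*}
and the random threshold has been replaced by a deterministic one. Thus comparing $A_a(r)$ and $A_b(r)$ amounts to comparing the thresholds themselves, because the event $\{p(\vepsilon-\vdelta) < \tau\}$ is monotone in $\tau$: whichever threshold is larger yields the larger conditional probability.

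Next I would introduce the signed gap
\begin{equation*}
\psi(r) := \bigl(\lambda_1^b g(r) + \lambda_2^b h(r)\bigr) - \bigl(\lambda_1^a g(r) + \lambda_2^a h(r)\bigr) = (\lambda_1^b - \lambda_1^a)\, g(r) - (\lambda_2^a - \lambda_2^b)\, h(r).
\end{equation*}
By \Cref{eq:thm-5-assume} both coefficients $\lambda_1^b - \lambda_1^a$ and $\lambda_2^a - \lambda_2^b$ are strictly positive, so $\psi(r) > 0$ iff $g(r)/h(r) > c$ where $c := (\lambda_2^a - \lambda_2^b)/(\lambda_1^b - \lambda_1^a) > 0$. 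Since $g/h$ is continuous and strictly monotonic almost everywhere by hypothesis, the equation $g(r)/h(r) = c$ has at most one solution, which I take as $r_0$ (if no solution exists, $\psi$ never changes sign and the lemma holds trivially with $r_0 = 0$ or $r_0 = \infty$ in the appropriate case).

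Finally, I would split into two cases according to the direction of monotonicity of $g/h$. If $g/h$ is strictly increasing then $\psi(r) > 0$ for $r > r_0$ and $\psi(r) < 0$ for $r < r_0$; translating back through the pointwise comparison of thresholds gives $A_a(r) \le A_b(r)$ for $r > r_0$ and $A_a(r) \ge A_b(r)$ for $r < r_0$, which is case (2). If $g/h$ is strictly decreasing, the signs of $\psi$ flip and we obtain case (1). The main (mild) obstacle is the ``almost everywhere'' clause: strict monotonicity of $g/h$ could fail on a measure-zero set, but since $g/h$ is continuous, any such exceptional set cannot introduce multiple crossings, so the single-crossing point $r_0$ is still well-defined, and the ordering of $A_a$ and $A_b$ on either side of $r_0$ is preserved.
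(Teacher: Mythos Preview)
Your proposal is correct and follows essentially the same route as the paper: use $\ell_p$-spherical symmetry to replace $p(\vepsilon),q(\vepsilon)$ by the deterministic values $g(r),h(r)$, reduce the probability comparison to a threshold comparison, and then invoke strict monotonicity of $g/h$ to locate the single crossing $r_0$. One minor difference worth noting: the paper's proof goes slightly beyond the lemma statement and argues (using the equalities $P(\lambda_1^a,\lambda_2^a)=P(\lambda_1^b,\lambda_2^b)$ and the continuity of $p,q$) that a genuine interior crossing $r_0$ must exist, because this is what the surrounding proof of \Cref{thm:uniqueness} actually needs; your treatment of the no-crossing case as ``trivially satisfied with $r_0=0$'' is adequate for the lemma as stated, but be aware that the downstream application relies on $r_0$ being a true crossing.
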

            Note that in $\Pr[\cdot \,\big|\, \|\vepsilon\|_p = r]$, the vector $\varepsilon\in\sR^d$ is uniformly sampled from the $\ell_p$-sphere of radius $r$.
            \begin{proof}[Proof of \Cref{lem:thm-5-mono}]
                For a given $r$, since $\gP$ and $\gQ$ are both $\ell_p$-spherically symmetric, and the density functions are both positive almost everywhere, as long as 
                \begin{equation}
                    \lambda_1^a g(r) + \lambda_2^a h(r) \le \lambda_1^b g(r) + \lambda_2^b h(r),
                    \label{eq:thm-5-lem-pf-1}
                \end{equation}
                then
                \begin{equation}
                \begin{aligned}
                    & [p(\vepsilon-\vdelta) < \lambda_1^a p(\vepsilon) + \lambda_2^a q(\vepsilon) \,\big|\, \|\vepsilon\|_p = r ] \\
                    \le &
                    [p(\vepsilon-\vdelta) < \lambda_1^b p(\vepsilon) + \lambda_2^b q(\vepsilon) \,\big|\, \|\vepsilon\|_p = r].
                \end{aligned}
                \label{eq:thm-5-lem-pf-2}
                \end{equation}
                It still holds if we change the ``$\le$``'s to ``$\ge$'''s in both \Cref{eq:thm-5-lem-pf-1,eq:thm-5-lem-pf-2}.
                Meanwhile,
                \begin{equation}
                    \lambda_1^a g(r) + \lambda_2^a h(r) \le \lambda_1^b g(r) + \lambda_2^b h(r)
                    \iff \dfrac{g(r)}{h(r)} > \dfrac{\lambda_2^b - \lambda_2^a}{\lambda_1^a - \lambda_1^b}.
                    \label{eq:thm-5-lem-pf-4}
                \end{equation}
                Since $g(r) / h(r)$ is strictly monotonic, there exists \emph{at most} one point $r_0 \ge 0$ that divides 
                \begin{equation}
                    \dfrac{g(r)}{h(r)} > \dfrac{\lambda_2^b - \lambda_2^a}{\lambda_1^a - \lambda_1^b}
                    \quad \text{and} \quad 
                    \dfrac{g(r)}{h(r)} < \dfrac{\lambda_2^b - \lambda_2^a}{\lambda_1^a - \lambda_1^b}.
                    \label{eq:thm-5-lem-pf-3}
                \end{equation}
                If that $r_0$ exists, from \Cref{eq:thm-5-lem-pf-1,eq:thm-5-lem-pf-2,eq:thm-5-lem-pf-4} the lemma statement follows.
                
                Now we only need to show that $r_0$ exists.
                Assume that the point $r_0$ does not exist, it implies that for all $r$, we have either
                \begin{equation}
                    \lambda_1^a g(r) + \lambda_2^a h(r) < \lambda_1^b g(r) + \lambda_2^b h(r)
                \end{equation}
                or
                \begin{equation}
                    \lambda_1^a g(r) + \lambda_2^a h(r) > \lambda_1^b g(r) + \lambda_2^b h(r)
                \end{equation}
                while $P(\lambda_1^a,\,\lambda_2^a) = P(\lambda_1^b,\,\lambda_2^b) > 0$ and $Q(\lambda_1^a,\,\lambda_2^a) = Q(\lambda_1^b,\,\lambda_2^b) > 0$.
                It implies that for almost every $r$, 
                \begin{equation}
                    \Pr [p(\vepsilon - \vdelta) \in (a,\,b) \,\big|\, \|\vepsilon\|_p = r] = 0
                \end{equation}
                where
                \begin{equation}
                    \begin{aligned}
                        a = \min\{ \lambda_1^a g(r) + \lambda_2^a h(r),\, \lambda_1^b g(r) + \lambda_2^b h(r)\}, \\
                        b = \max\{ \lambda_1^a g(r) + \lambda_2^a h(r),\, \lambda_1^b g(r) + \lambda_2^b h(r)\}.
                    \end{aligned}
                \end{equation}
                This violates the continuous assumption on both $\gP$ and $\gQ$.
                Therefore, point $r_0$ exists.
            \end{proof}
            
            With \Cref{lem:thm-5-mono}, we define auxiliary function $D: \sR_+ \to \sR$ such that
            \begin{equation}
                \begin{aligned}
                D(r) = & \Pr[p(\vepsilon-\vdelta) < \lambda_1^a p(\vepsilon) + \lambda_2^a q(\vepsilon) \,|\, \|\vepsilon\|_p = r] \\
                & \hspace{-1em} - \Pr[p(\vepsilon-\vdelta) < \lambda_1^b p(\vepsilon) + \lambda_2^b q(\vepsilon) \,|\, \|\vepsilon\|_p = r].
                \end{aligned}
            \end{equation}
            We let $S(r)$ be the surface area of $\ell_p$ sphere of radius $r$.
            Then the $P$ and $Q$ can be written in integral form:
            \begin{align}
                & P(\lambda_1,\,\lambda_2)
                = \int_0^{\infty} \Pr[p(\varepsilon - \vdelta) < \lambda_1p(\varepsilon) + \lambda_2p(\varepsilon) \,|\, \nonumber \\
                & \hspace{4em}  \|\varepsilon\|_p = r] \cdot g(r) S(r) \dif r, \\
                & Q(\lambda_1,\,\lambda_2)
                = \int_0^{\infty} \Pr[p(\varepsilon - \vdelta) < \lambda_1p(\varepsilon) + \lambda_2p(\varepsilon) \,|\, \nonumber \\
                & \hspace{4em} \|\varepsilon\|_p = r] \cdot h(r) S(r) \dif r.
            \end{align}
            Since $P(\lambda_1^a,\,\lambda_2^a) = P(\lambda_1^b,\,\lambda_2^b)$ and $Q(\lambda_1^a,\,\lambda_2^a) = Q(\lambda_1^b,\,\lambda_2^b)$ by our assumption,
            simple arrangement yields
            \begin{align}
                \int_0^{r_0} D(r)g(r)S(r) \dif r = \int_{r_0}^{\infty} (-D(r))g(r)S(r) \dif r \neq 0, \label{eq:thm-5-proof-1} \\
                \int_0^{r_0} D(r)h(r)S(r) \dif r = \int_{r_0}^{\infty} (-D(r))h(r)S(r) \dif r \neq 0. \label{eq:thm-5-proof-2}
            \end{align}
            As \Cref{lem:thm-5-mono} shows, $D(r)$ where $r\in [0,\,r_0]$ always has the same sign as $-D(r)$ where $r\in [r_0,\,+\infty]$, and the last inequality~($\neq 0$) is again due to the continuity of $p$ and $q$ and the fact that $P(\lambda_1^a,\,\lambda_2^a) > 0$ and $Q(\lambda_1^a,\,\lambda_2^a) > 0$.
            Now we can divide \Cref{eq:thm-5-proof-1} by \Cref{eq:thm-5-proof-2} and apply the Cauchy's mean value theorem, which yields
            \begin{equation}
                \dfrac{D(\xi_1)g(\xi_1)S(\xi_1)}{D(\xi_1)h(\xi_1)S(\xi_1)}
                =
                \dfrac{D(\xi_2)g(\xi_2)S(\xi_2)}{D(\xi_2)h(\xi_2)S(\xi_2)},
            \end{equation}
            where $\xi_1 \in (0,\,r_0)$ and $\xi_2 \in (r_0,\,+\infty)$.
            Apparently, it requires
            \begin{equation}
                \dfrac{g(\xi_1)}{h(\xi_1)} = \dfrac{g(\xi_2)}{h(\xi_2)}.
            \end{equation}
            However, $g/h$ is strictly monotonic.
            By contradiction, there is no distinct pair $(\lambda_1^a,\,\lambda_2^a)$ and $(\lambda_1^b,\,\lambda_2^b)$ satisfying the constraint of \Cref{eq:dual-2} simultaneously.
        \end{proof}
        
        \begin{remark}
            Suppose $\gP$ and $\gQ$ are $\ell_p$-radial extended Gaussian/Laplace distributions, i.e., their density functions $p(\vx)$ and $q(\vx)$ are 
            \begin{equation}
                \begin{aligned}
                    p(\vx) \propto \|\vx\|_p^{-k} \exp\left(-\|\vx\|_p/\sigma\right)^\alpha, \\
                    q(\vx) \propto \|\vx\|_p^{-k} \exp\left(-\|\vx\|_p/\beta\right)^\alpha
                \end{aligned}
            \end{equation}
            with $\alpha > 0$~(for Gaussian $\alpha=2$ and for Laplace $\alpha=1$).
            Note that this is a broader family than the family considered in \shortApproach shown in the main text.
            we have
            \begin{equation}
                \frac{g}{h} \propto
                \exp(r/\beta - r/\sigma)^\alpha
            \end{equation}
            that is strictly monotonic.
            Thus, \Cref{thm:uniqueness} is applicable for this large family of smoothing distributions that are commonly used in the literature~\citep{lecuyer2019certified,cohen2019certified,yang2020randomized,zhang2020black,zhai2019macer,jeong2020consistency}.
        \end{remark}

    \subsection{Discussion on Certification of Other \texorpdfstring{$\ell_p$}{Lp} Norms}
    
        \label{newadx:sub:cert-other-lp}
        
        \shortApproach can also be extended to provide a certified robust radius under $\ell_p$ norm other than $\ell_2$.
        
        Different from the case of $\ell_2$ certification, for other $\ell_p$ norm the challenge is to compute $P(\lambda_1,\lambda_2)$, $Q(\lambda_1,\lambda_2)$, and $R(\lambda_1,\lambda_2)$ as defined in \Cref{thm:concrete-equations}.
        For $\ell_2$ certification, as shown by \Cref{thm:concrete-equations,thm:concrete-equations,thm:numerical-integration-dif-var}, there exist closed-form expressions for these quantities that can be efficiently implemented with numerical integrations.
        However, for other $\ell_p$ norms, finding such closed-form expressions for $P$, $Q$, and $R$ is challenging.
        
        Luckily, we notice that $P$, $Q$, and $R$ are all probability-based definitions, as long as we can effectively sample from $\gP$ and $\gQ$ and effectively compute the density functions $p(\cdot)$ and $q(\cdot)$, we can estimate these function quantities from the empirical means of Monte-Carlo sampling.
        
        Compared to numerical integration based $\ell_2$ certification, Monte-Carlo sampling has \textit{sampling uncertainty} and \textit{efficiency} problems.
        Here we discuss these two problems in detail and how we can alleviate them.
        

        
        \paragraph{Sampling Uncertainty and Mitigations.}
        The empirical means for $P$ and $Q$ are stochastic, which breaks the nice properties of $P$ and $Q$~(shown in \Cref{subsec:joint-binary-search}) with respect to $(\lambda_1,\,\lambda_2)$ as the different queries to $P$ and $Q$ fluctuate around the actual value.
        Therefore, the joint binary search~(\Cref{alg:joint-binary-search}) may fail to return the correct $(\lambda_1,\,\lambda_2)$ pair.
        Thus, we propose a stabilization trick: the \emph{same} set of samples is used when querying $P$ and $Q$ during the joint binary search.
        With this same set of samples, it can be easily verified that the nice properties (\Cref{prop:convex,prop:mono}) still hold even if $P$ and $Q$ are empirical means.
        To guarantee the certification soundness in the context of probabilistic Monte-Carlo sampling,
        we introduce a different set of samples to test whether the solved $(\lambda_1,\,\lambda_2)$ indeed upper bounds the intersection point~(if the test fails we fall back to the classical Neyman-Pearson-based certification though it seldom happens).
        Since the test is also probabilistic, we need to accumulate this additional failing probability and use the lower bound of the confidence interval for soundness, which results in $1-2\alpha = 99.8\%$ certification instead of $1 - \alpha = 99.9\%$ as in classical randomized smoothing certification~\citep{cohen2019certified,yang2020randomized}.
        Note that the existence of additional confidence intervals caused by Monte-Carlo sampling makes \shortApproach based on Monte-Carlo sampling slightly looser than \shortApproach based on numerical integration.
        
        \paragraph{Efficiency Concern and Mitigations.}
        Traditionally
        we need to sample several~(denoted as $M$, in our implementation we set $M = 5\times 10^4$)  high-dimensional vectors for \emph{each} $P$ or $Q$ computation, which induces the efficiency concern.
        With the usage of the aforementioned stabilization trick, now we only sample one set of $M$ samples during the whole joint binary search instead of during each $P$ and $Q$ computation.
        Combining with the testing phase sampling, the whole algorithm needs to sample $2M$ vectors rather than $\gO(M \log^2 M)$ without the stabilization trick, so it greatly solves the efficiency concern.
        Moreover, we notice that for the samples $\{\epsilon_i\}_{i=1}^M$ we only need to care about its densities $\{p(\varepsilon_i)\}_{i=1}^M, \{q(\varepsilon_i)\}_{i=1}^M$ and $\{p(\varepsilon_i-\vdelta)\}_{i=1}^M$.
        Thus, we store only these three quantities instead of the whole $d$-dimensional vectors $\{\epsilon_i\}_{i=1}^M$, reducing the space complexity from $\gO(M\times d)$ to $\gO(M)$.
        
        These techniques significantly improve efficiency in practice.
        Although \shortApproach based on Monte-Carlo sampling is still slightly looser and slower than \shortApproach based on numerical integration,
        \shortApproach based on Monte-Carlo sampling makes certifying robustness under other $\ell_p$ norms feasible.
        In this work, we focus on the $\ell_2$ norm, because additive randomized smoothing is not optimal for other norms~(e.g., $\ell_1$~\citep{levine2021improved}) or the state-of-the-art certification can be directly translated from $\ell_2$ certification~(e.g., $\ell_\infty$~\citep{yang2020randomized} and semantic transformations~\citep{li2020transformation}).
        Moreover, to the best of our knowledge, standard $\ell_2$ certification is the most challenging setting where additive randomized smoothing achieves state-of-the-art and no other work can achieve visibly tighter robustness certification than standard Neyman-Pearson certification~\citep{yang2020randomized,levine2020tight,mohapatra2020higherorder}.

\section{Implementation and Optimizations}

    \label{newadx:impl}
    
    In this appendix, we discuss the implementation tricks and optimizations, along with our simple heuristic for selecting the hyperparameter $T$ in the additional smoothing distribution $\gQ = \gNgtrunc(k,T,\sigma)$.
    
    \subsection{Implementation Details}
        \label{newadxsec:impl-details}
    
        We implement \shortApproach in Python with about two thousand lines of code.
        The tool uses \texttt{PyTorch}\footnote{\url{http://pytorch.org/}} to query a given base classifier with Monte-Carlo sampling in order to derive the confidence intervals $[\underline{P_A},\,\overline{P_A}]$ and $[\underline{Q_A},\,\overline{Q_A}]$.
        Then, the tool builds the whole \shortApproach pipeline on \texttt{SciPy}\footnote{\url{https://scipy.org/scipylib/index.html}} and \texttt{NumPy}\footnote{\url{https://numpy.org/}}.
        Specifically, the numerical integration is implemented with \texttt{scipy.integrate.quad()} method.
        We exploit the full independence across the certification for different input instances and build the tool to be fully parallelizable on CPUs.
        By default, we utilize $10$ processes, and the number of processes can be dynamically adjusted.
        
        The tool is built in a flexible way that adding new smoothing distributions is not only theoretically straightforward but also easy in practice.
        In the future, we plan to extend the tool to 
        1)~provide GPU support; 
        2)~reuse existing certification results from previous instances with similar confidence intervals
        to achieve higher efficiency.
        We will also support more smoothing distributions.
        
        In the implementation, we widely use the logarithmic scale, since many quantities in the computation have varied scales.
        For example, since the input dimension $d$ is typically over $500$ on a real-world dataset, the density functions $p$ and $q$ decay very quickly along with the increase of noise magnitude.
        Another example is the input variable for $\ln(\cdot)$ and $W(\cdot)$ in \Cref{thm:concrete-equations}.
        These variables are exponential with respect to the input dimension $d$ so they could be very large or small.
        To mitigate this, we perform all computations with varied scales in logarithmic scale to improve the precision and floating-point soundness.
        For example, we implement a method $\texttt{lnlogadd}$ to compute $\log(\lambda_1 \exp(x_1) + \lambda_2 \exp(x_2))$ and apply method $\texttt{wlog}$ in \citep{yang2020randomized} to compute $W(\exp(x))$.
        We remark that in the binary search for dual variables~(see \Cref{subsec:joint-binary-search}), we also use the logarithmic scale for $\lambda_1$ and $\lambda_2$.
        
        
        The code, model weights, and all experiment data are publicly available at \url{https://github.com/llylly/DSRS}.

    \subsection{\texorpdfstring{$T$}{T} Heuristics}
    
        As briefly outlined in \Cref{subsec:exp-setup}, we apply a simple yet effective heuristic to determine the hyperparameter $T$ in additional smoothing distribution $\gQ = \gNgtrunc(k,T,\sigma)$.
        
        Concretely,
        we first sample the prediction probability from the original smoothing distribution $\gP$ and get the confidence lower bound $\underline{P_A}$ of $P_A = f^\gP(\vx_0)_{y_0}$.
        Then, we use the following empirical expression to determine $T$ from $\underline{P_A}$:
        \begin{equation}
            \label{eq:t-heuristic}
            T = \sigma \sqrt{\dfrac{2d}{d-2k}\GammaCDF^{-1}_{d/2-k}(p)},
        \end{equation}
        where
        \begin{equation}
            \label{eq:t-heuristic-2}
            p = \max\{-0.08 \ln(1 - \underline{P_A}) + 0.2, 0.5\}.
        \end{equation}
        It can be viewed as we first parameterize $T$ by $p$ and then find a simple heuristic to determine $p$ by $\underline{P_A}$.
        
        The $T$'s parameterization with $p$ is inspired by the probability mass under original smoothing distribution $\gP = \gNg(k,\sigma)$ if true-decision region is concentrated in a $\ell_2$-ball centered at $\vx_0$.
        Concretely,
        from $\gP$'s definition,
        \begin{equation}
            \Pr_{\vepsilon\sim\gP} [\|\vepsilon\|_2 \le T] = p.
        \end{equation}
        Then, we use a randomly sampled CIFAR-10 training set containing $1,000$ points with models trained using Consistency~\citep{jeong2020consistency} under $\sigma=0.50$ to sweep all $p \in \{0.1,0.2,\cdots,0.9\}$.
        We plot the minimum $p$ and maximum $p$ that gives the highest certified robust radius as a segment and find \Cref{eq:t-heuristic-2} fits the general tendency well as shown in \Cref{fig:t-heuristic}.
        Thus, we use this simple heuristic to determine $T$.
        Empirically, this simple heuristic generalizes well and is competitive with more complex methods as shown in \Cref{newadx:add-exp-results}.
        
        \begin{figure}[t]
            \includegraphics[width=\linewidth]{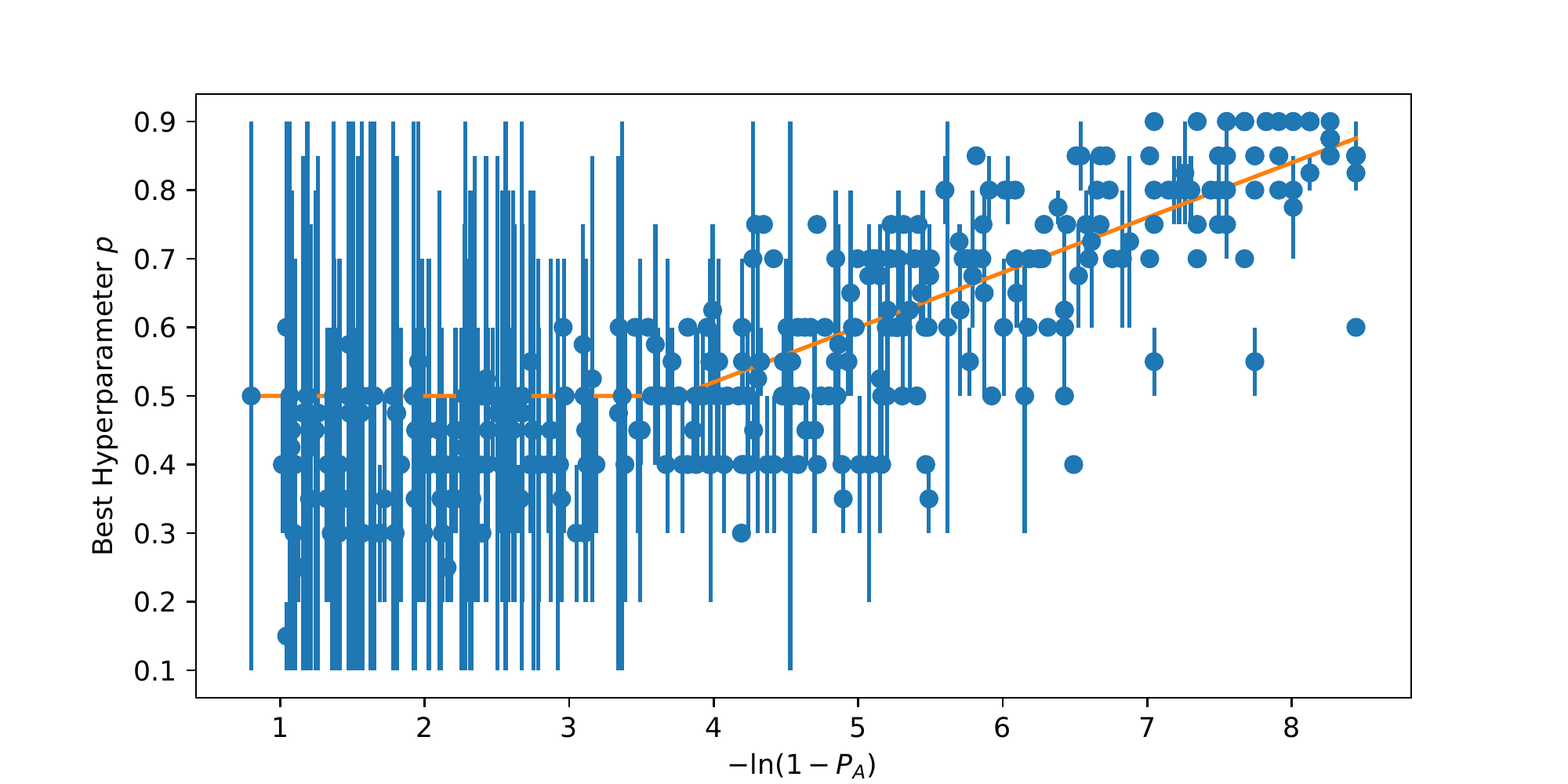}
            \label{fig:t-heuristic}
            \caption{Our $p$ heuristic~(\Cref{eq:t-heuristic-2}) shown as orange curve fits the generalization tendency of optimal $p$ ranges~(shown as blue segments) well.}
        \end{figure}
        
        Another heuristic that we have deployed is the fall-back strategy.
        When the empirical probability $\widetilde{P_A} = 1$, i.e., if for all the sampled $\vepsilon\sim\gP$, $F(\vx_0+\vepsilon)=y_0$, then we fall back to still using $\gP$ instead of using another distribution $\gQ$ for the second round of sampling.
        This strategy is inspired by a finding that, with the fixed sampling budget, if $\underline{P_A}$ is already very high, it is more efficient to use more samples to further increase the confidence interval of $\underline{P_A}$ rather than querying imprecise information under another distribution $\gQ$.
        Notice that such strategy does not break the high-confidence soundness of our certification, because $\Pr[\widetilde{P_A} = n/N \,|\, P_A \le t] \ge \Pr[\widetilde{P_A} = n/N \wedge \widetilde{P_A^{\text{half}}} = 1 \,|\, P_A \le t]$ for any $t < 1$ and Bernoulli distributed sampling~(which is our case), and we use Bernoulli confidence interval that corresponds to $\Pr[\widetilde{P_A} = n/N \,|\, P_A \le t]$.
        Due to the tight experiment time, we only deployed this strategy on ImageNet evaluation but not on MNIST and CIFAR-10 evaluations.

    \subsection{Training Strategy for Generalized Gaussian Smoothing}
    
        We train the base classifiers on each dataset using Gaussian augmentation training~\citep{cohen2019certified}, Consistency training~\citep{jeong2020consistency}, and SmoothMix training~\citep{jeong2021smoothmix}, which are typical training methods for randomized smoothing.
        We do not consider other training approaches such as \citep{zhai2019macer,salman2019provably,li2019certified,carmon2019unlabeled} because:
        (1)~Some training approaches either require additional data~\citep{carmon2019unlabeled,salman2019provably}, or relatively time-consuming~\citep{salman2019provably}, or are reported to be not as effective as later approaches~\citep{li2019certified};
        (2)~Selected training approaches are widely used or achieve state-of-the-art with high training efficiency.
        
        On MNIST, for all training methods, we use a convolutional neural network with 4 convolutional layers and 3 fully connected layers following \citet{cohen2019certified} as the base classifiers' architecture.
        On CIFAR-10, for all training methods, we use ResNet-110~\citep{he2016deep} as the base classifiers' architecture.
        These architecture settings
        follow the prior work on smoothed classifier training~\citep{cohen2019certified,salman2019provably,zhai2019macer}.
        
        On both MNIST and CIFAR-10, for all training methods, 
        we train for $150$ epochs.
        The learning rate is $0.01$ and is decayed by $0.1$ at the $50$th and $100$th epoch.
        For Consistency training, the hyperparameter $\lambda=5$ on MNIST and $\lambda=20$ on CIFAR-10.
        We use two noise vectors per training batch per instance.
        These are the best hyperparameters reported in \citep{jeong2020consistency}.
        The batch size is $256$ on both MNIST and CIFAR-10 following \citep{jeong2020consistency}.
        For SmoothMix training, we directly use the best hyperparameters from their open-source repository:
        \url{https://github.com/jh-jeong/smoothmix/blob/main/EXPERIMENTS.MD}.
        
        On ImageNet, we use ResNet-50~\citep{he2016deep} as the base classifiers' architecture and finetune from the open-source model trained by \citeauthor{cohen2019certified}~\citep{cohen2019certified} with Gaussian smoothing.
        We train for $10$ epochs due to the expensive training cost on ImageNet and we remark that better results can be achieved with a larger training time budget.
        The learning rate is $0.001$ and is decayed at the end of every epoch by $0.1$.
        For Consistency training, the hyperparameter $\lambda=5$ and we use two noise vectors per training batch per instance following the best hyperparameters reported in \citep{jeong2020consistency}.
        For SmoothMix training, since the open-source repository does not contain the best hyperparameters, we select the hyperparameters as suggested in the original paper~\citep{jeong2021smoothmix}.
        
        During training, the training samples are augmented by adding noises following \emph{training smoothing distribution}.
        In typical training approaches~\citep{cohen2019certified,carmon2019unlabeled,salman2019provably,zhai2019macer,jeong2020consistency}, the training smoothing distribution is set to be the same as the original smoothing distribution $\gP$ for constructing the smoothed classifier.
        However, for our generalized Gaussian $\gNg(k,\sigma)$ as $\gP$ with large $k$, we find this strategy gives a poor empirical performance.
        
        To better train the base classifier for our original smoothing distribution $\gP$ with large $k$, we introduce a warm-up stage on the training smoothing distribution.
        Suppose our original smoothing distribution $\gP$ for constructing the smoothed classifier is $\gNg(k_0,\sigma)$
        We let $e_0 = 100$ be the number of warm-up epochs on MNIST and CIFAR-10, or $e_0 = 10000$ be the number of warm-up steps on ImageNet.
        In the first $e_0$ epochs~(on MNIST and CIFAR-10) or steps~(on ImageNet), we use the training smoothing distribution with smaller $k$.
        Formally, in the $e$th epoch/step where $e \le e_0$, the training smoothing distribution $\gP' = \gNg(k,\sigma)$ where
        \begin{equation}
                k = \lceil k_0 - k_0^{1 - e / e_0} \rceil.
        \end{equation}
        For later epochs/steps, we use the original smoothing distribution $\gP$ itself as the training smoothing distribution.
        This strategy gradually increases the $k$ of training smoothing distribution throughout the training, so that the base classifier can be a better fit for the final desired distribution $\gP$.
        Using this strategy, the smoothed classifier constructed from our trained base classifier has similar certified robustness compared to standard Gaussian augmentation under classical Neyman-Pearson-based certification.



\section{Additional Experimental Results}
    
    \label{newadx:add-exp-results}
    In this appendix, we present additional experiment results and studies.

    \subsection{Empirical Study Setup of Concentration Property}
    
        \label{newadx:sub:study-concentration}
        
        
        In \Cref{fig:smooth-model-landscape-and-magnitude-density} in \Cref{newadx:illustration-concentration-fig}, we present our investigation of the decision regions of base classifiers.
        The investigation follows the following protocol:
        (1)~We choose the base classifier from \cite{salman2019provably} on ImageNet trained for $\sigma=0.5$ Gaussian smoothing as the subject.
        The reason is that this base classifier is one of the state-of-the-art certifiably robust classifiers on the large-scale ImageNet dataset and our code uses the same preprocessing parameters so it is easy to adopt their model.
        (2)~We pick every $500$-th image from the test set of the ImageNet dataset to form a subset of $100$ samples.
        (3)~We filter out the samples where the base classifier cannot classify correctly even without adding any noise, which results in $89$ remaining samples.
        (4)~For each of these $89$ samples, for each perturbation magnitude $r$, we uniformly sample $1000$ perturbation vectors from the hypersphere with $\ell_2$-radius $r$  and compute the empirical probability of true-prediction, where the step size of $r$ is $10$.
        
        \Cref{fig:smooth-model-landscape-and-magnitude-density} implies that for a vast majority of samples, the true-prediction samples are highly concentrated on an $\ell_2$ ball around the clean input since there exists apparent $\ell_2$ magnitude thresholds where the true-prediction probability is almost $1$ within the thresholds and almost $0$ beyond the thresholds.
        This implies that the concentration property may be achievable for real-world base classifiers in randomized smoothing.
        
        In \Cref{fig:landscape-for-diff-var}, we follow the same protocol but plot the landscape of base classifiers trained using generalized Gaussian distribution~(instead of standard Gaussian distribution as in \Cref{fig:smooth-model-landscape-and-magnitude-density}).
        By comparing \Cref{fig:landscape-for-diff-var} and \Cref{fig:smooth-model-landscape-and-magnitude-density}, we find that although base classifiers in \Cref{fig:smooth-model-landscape-and-magnitude-density} can achieve better certified robustness using Neyman-Pearson certification and generalized Gaussian smoothing~(compare \Cref{fig:real-data-sampling} and Neyman-Pearson rows in \Cref{tab:variant-compare}), they also sacrifice the concentration property, which can explain why \shortApproach improvements are much smaller on models in \Cref{sec:exp} compared with models in \Cref{fig:real-data-sampling}.
        Thus, as discussed in \Cref{sec:exp}, there may be a large space for exploring training approaches that favor \shortApproach certification by preserving the concentration property.

    \subsection{Effectiveness of \texorpdfstring{$T$}{T}-Heuristics and Attempts on Better Optimization Tricks}
    
        \label{newadx:sub:t-heuristic-attempt-better-opt}
    
    \paragraph{Better Optimization Tricks.} For obtaining a tighter certified radius, we should make the support of $\gNgtrunc(k,T,\sigma)$ more aligned with the decision region while keeping the $Q_A$ large enough. So except using a simple heuristic, we can also turn the search for an appropriate value of $T$ into an optimization problem. In order to make the optimization more stable, here we will construct the optimization based on $\Pcon$ that is more scale-invariant, and then transform it to get the final appropriate $T = \sigma \sqrt{2\GammaCDF_{d/2}^{-1}(\Pcon)}$.
    
    The final optimization objective is now built as $\Pcon = \argmin -Q_A+\frac{\lambda}{2}(\Pcon-P_A)^2$, where $\lambda$ is a hyperparameter that controls the relative weight of the two loss terms. The $Q_A$ here is estimated by sampling from the distribution $\gNgtrunc(k,T,\sigma)$ in which the $T$ is determined by $\Pcon$; however, the actual process of such sampling is implemented by rejecting the sampled noise whose norm is bigger than $T$. Therefore, there will be no gradient obtained for $\Pcon$ through the backward of the loss, namely, the optimized objective. So instead, we attempt to approximate the gradient comes from the first loss term $-Q_A$ with $G_{Q_A} = \E_{\vepsilon\sim\gQ}[\nabla_{||\vepsilon||_2}  \emph{CrossEntropyLoss}(f(\vx_0 + \vepsilon),y_0)]$. Then, we will only have the gradient information, and there is no explicit form of $Q_A$ anymore. The $P_A$ is estimated with the $\underline{P_A}$ which we have already obtained, so the final estimation of the gradient for $\Pcon$ is $G_{Q_A} + \lambda (\Pcon - \underline{P_A})$.
    
    \paragraph{Experiment Setting.} The $\Pcon$ is optimized for different input test images respectively, and the initialized $\Pcon$ is set to $0.7$. For each input, we will update $\Pcon$ $20$ steps on datasets MNIST and CIFAR10 while updating only $10$ steps on dataset ImageNet to reduce time complexity. For each step, we will sample $2,000$ times for estimating the term $G_{Q_A}$, and the learning rate is set to $2,000$. Besides, to avoid the $\Pcon$ being optimized too large or too small, we will clip the final optimized $\Pcon$ within $0.1$ and $0.9$. Since the optimization is a bit time-consuming, we only test it on CIFAR10 with $\sigma = 0.5$ and test it on MNIST and ImageNet with $\sigma = 1.0$. Different $\lambda$ is also tried for different datasets and different training methods for getting better performance. 
    
    \paragraph{Performance.} The final results are shown in~\Cref{tab:t-heuristic-attempt-better-opt}, and the certification approach based on the optimization tricks mentioned above is denoted as ``Opt'' in the table. As we can see, our simple $T$-Heuristics could still be competitive with the method based on complicated optimization but with a cheaper cost.
    
    \begin{table*}[!t]
    \centering
    \caption{\small $\ell_2$ certified robust accuracy w.r.t. different radii $r$ for different certification approaches.}
    
    
        \resizebox{\linewidth}{!}{
    \begin{tabular}{c|c|c|cccccccccccc}
        \toprule
        \multirow{2}{*}{Dataset} & Training & Certification & \multicolumn{12}{c}{Certified Accuracy under Radius $r$} \\
        & Method & Approach &  0.2  &  0.4  &  0.6  &  0.8  &  1.0  &  1.2  &  1.4  &  1.6  &  1.8  &  2.0  &  2.2  &  2.4  \\
        \hline\hline
        \multirow{6}{*}{MNIST} & \multirow{2}{*}{Gaussian Aug.} & Neyman-Pearson & 
         95.5 \% & 93.5 \% & 90.0 \% & 86.1 \% & 80.4 \% & 72.8 \% & 61.4 \% & 50.2 \% & 36.6 \% & 25.2 \% & 14.5 \% & 8.5 \% \\
        &  \multirow{2}{*}{$(\sigma=1.00)$} & DSRS($T$-heuristic) & 
         95.5 \% & 93.5 \% & 90.2 \% & 86.9 \% & 81.4 \% & 74.4 \% & 64.6 \% & 55.2 \% & 42.8 \% & 30.9 \% & 20.3 \% & 11.3 \% \\
        &  &  Opt $(\lambda=7e-05)$ & 
         95.5\% &	 93.5\% &	 90.0\% &	 86.9\% &	 81.7\% &	 74.9\% &	 65.6\% &	 55.8\% &	 43.8\% &	 30.5\% &	 19.1\% &	 10.2\% \\
        \cline{2-15}
        & \multirow{2}{*}{Consistency} & Neyman-Pearson & 
        94.5 \% & 92.6 \% & 89.3 \% & 85.9 \% & 80.7 \% & 74.4 \% & 65.9 \% & 56.9 \% & 44.1 \% & 34.4 \% & 23.3 \% & 12.8 \% \\
        & \multirow{2}{*}{$(\sigma=1.00)$} & DSRS($T$-heuristic) &
        94.5 \% & 92.8 \% & 89.3 \% & 86.3 \% & 81.4 \% & 76.1 \% & 68.3 \% & 59.5 \% & 50.7 \% & 39.8 \% & 30.7 \% & 20.0 \% \\
        &  &  Opt $(\lambda=6e-06)$ & 
         94.5\% &	 92.8\% &	 89.3\% &	 86.2\% &	 81.2\% &	 75.8\% &	 68.2\% &	 59.6\% &	 50.5\% &	 39.6\% &	 30.7\% &	 19.8\% \\
        \hline\hline
        
        \multirow{2}{*}{} &  &  & \multicolumn{12}{c}{Certified Accuracy under Radius $r$} \\
        &  &  &  0.1  &  0.2  &  0.3  &  0.4  &  0.5  &  0.6  &  0.7  &  0.8  &  0.9  &  1.0  &  1.1  &  1.2  \\
        \hline\hline
        
        \multirow{6}{*}{CIFAR-10} 
        &  \multirow{2}{*}{Gaussian Aug.} & Neyman-Pearson & 
        60.4 \% & 55.2 \% & 51.3 \% & 45.9 \% & 40.8 \% & 35.6 \% & 30.1 \% & 24.3 \% & 20.0 \% & 16.7 \% & 13.0 \% & 10.1 \% \\
        & \multirow{2}{*}{$(\sigma=0.50)$} & DSRS($T$-heuristic) & 
        60.6 \% & 55.5 \% & 51.5 \% & 46.8 \% & 42.1 \% & 37.3 \% & 32.5 \% & 27.4 \% & 22.8 \% & 19.3 \% & 16.0 \% & 12.7 \% \\
        &  &  Opt $(\lambda=3e-06)$&
         60.6\% &	 55.5\% &	 51.3\% &	 46.7\% &	 42.0\% &	 37.2\% &	 32.5\% &	 27.4\% &	 23.0\% &	 19.3\% &	 16.0\% &	 12.5\% \\
        \cline{2-15}
        & \multirow{2}{*}{Consistency} & Neyman-Pearson & 
         53.1 \% & 50.5 \% & 48.6 \% & 45.5 \% & 43.6 \% & 41.5 \% & 38.7 \% & 36.7 \% & 35.1 \% & 32.0 \% & 29.1 \% & 25.7 \% \\
        &  \multirow{2}{*}{$(\sigma=0.50)$} & DSRS($T$-heuristic) &
        53.1 \% & 50.7 \% & 48.7 \% & 45.7 \% & 44.0 \% & 41.8 \% & 39.6 \% & 37.8 \% & 36.0 \% & 34.4 \% & 31.3 \% & 28.6 \% \\
        & &  Opt $(\lambda=4e-06)$&
         53.1\% &	 50.7\% &	 48.7\% &	 45.7\% &	 44.0\% &	 41.8\% &	 39.5\% &	 37.8\% &	 36.0\% &	 34.4\% &	 31.4\% &	 28.4\% \\
        \hline\hline
        
        \multirow{6}{*}{ImageNet} 
        &  \multirow{2}{*}{Gaussian Aug.} & Neyman-Pearson & 
        57.5 \% & 55.1 \% & 52.2 \% & 49.7 \% & 47.0 \% & 43.9 \% & 40.8 \% & 38.1 \% & 35.0 \% & 33.2 \% & 29.6 \% & 25.3 \% \\
        & \multirow{2}{*}{$(\sigma=1.00)$} & DSRS($T$-heuristic) &
        57.7 \% & 55.6 \% & 52.7 \% & 51.0 \% & 48.4 \% & 45.5 \% & 43.1 \% & 40.2 \% & 37.9 \% & 35.3 \% & 32.8 \% & 30.5 \% \\
        &  &  Opt $(\lambda=1e-05)$&
         57.7\% &	 55.5\% &	 52.4\% &	 50.5\% &	 48.2\% &	 45.0\% &	 42.9\% &	 40.0\% &	 38.0\% &	 35.0\% &	 32.7\% &	 29.9\% \\
        \cline{2-15}
        & \multirow{2}{*}{Consistency} & Neyman-Pearson & 
        55.9 \% & 54.4 \% & 53.0 \% & 51.2 \% & 48.2 \% & 46.2 \% & 44.2 \% & 41.7 \% & 39.1 \% & 36.4 \% & 34.4 \% & 32.1 \%\\
        &  \multirow{2}{*}{$(\sigma=1.00)$} & DSRS($T$-heuristic) &
        56.0 \% & 54.6 \% & 53.1 \% & 51.8 \% & 49.9 \% & 47.4 \% & 45.7 \% & 44.2 \% & 41.7 \% & 39.3 \% & 37.8 \% & 35.8 \% \\
        & &  Opt $(\lambda=1e-05)$&
         56.0\% &	 54.6\% &	 53.1\% &	 51.8\% &	 49.7\% &	 47.4\% &	 45.3\% &	 44.0\% &	 41.6\% &	 39.3\% &	 37.8\% &	 35.9\% \\
        \bottomrule
    \end{tabular}
    }
    \label{tab:t-heuristic-attempt-better-opt}
\end{table*}
    
    
    \subsection{Full Curves and Separated Tables}
        
        \label{newadx:sub:curve-sepa-tables}
        
        \subsubsection{Separate Tables by Smoothing Variance}

            \begin{table*}[htbp]
                \centering
                \caption{\small MNIST: Certified robust accuracy for models smoothed with different variance $\sigma$ certified with different certification approaches.}
                \resizebox{\linewidth}{!}{
                \begin{tabular}{c|c|c|cccccccccccc}
                    \toprule
                    \multirow{2}{*}{Variance} & Training & Certification & \multicolumn{12}{c}{Certified Accuracy under Radius $r$} \\
                    & Method & Approach & 0.25 & 0.50 & 0.75 & 1.00 & 1.25 & 1.50 & 1.75 & 2.00 & 2.25 & 2.50 & 2.75 & 3.00 \\
                    \hline\hline
                    \multirow{6}{*}{0.25} & Gaussian Aug. & Neyman-Pearson & 97.9\%	& 96.4\%	& 92.1\% \\
                    & \citep{cohen2019certified} & \textbf{DSRS} & 97.9\%	& 96.6\%	& 92.7\% \\
                    \cline{2-15} 
                    & Consistency & Neyman-Pearson & 98.4\%	& 97.5\%	& 94.4\%\\
                    & \citep{jeong2020consistency} & \textbf{DSRS} & 98.4\%	& 97.5\%	& 95.4\%\\
                    \cline{2-15} 
                    & SmoothMix & Neyman-Pearson & 98.6\%	& 97.6\%	& 96.5\%\\
                    & \citep{jeong2021smoothmix} & \textbf{DSRS} & 98.6\%	& 97.7\%	& 96.8\%\\
                    \hline\hline
                    \multirow{6}{*}{0.50} & Gaussian Aug. & Neyman-Pearson & 97.8\%	& 96.9\%	& 94.6\%	& 88.4\%	& 78.7\%	& 52.6\% \\
                    & \citep{cohen2019certified} & \textbf{DSRS} & 97.8\%	& 97.0\%	& 95.0\%	& 89.8\%	& 83.4\%	& 59.1\% \\
                    \cline{2-15} 
                    & Consistency & Neyman-Pearson & 98.4\%	& 97.3\%	& 96.0\%	& 92.3\%	& 83.8\%	& 67.5\% \\
                    & \citep{jeong2020consistency} & \textbf{DSRS} & 98.4\%	& 97.3\%	& 96.0\%	& 93.5\%	& 87.1\%	& 71.8\% \\
                    \cline{2-15} 
                    & SmoothMix & Neyman-Pearson & 98.2\%	& 97.1\%	& 95.4\%	& 91.9\%	& 85.1\%	& 73.0\% \\
                    & \citep{jeong2021smoothmix} & \textbf{DSRS} & 98.1\%	& 97.1\%	& 95.9\%	& 93.4\%	& 87.5\%	& 76.6\% \\
                    \hline\hline
                    \multirow{6}{*}{1.00} & Gaussian Aug. & Neyman-Pearson & 95.2\%	& 91.9\%	& 87.7\%	& 80.6\%	& 71.2\%	& 57.6\%	& 41.0\%	& 25.5\%	& 13.6\%	& 6.2\%	& 2.1\%	& 0.9\% \\
                    & \citep{cohen2019certified} & \textbf{DSRS} & 95.1\%	& 91.8\%	& 88.2\%	& 81.5\%	& 73.6\%	& 61.6\%	& 48.4\%	& 34.1\%	& 21.0\%	& 10.6\%	& 4.4\%	& 1.2\%	 \\
                    \cline{2-15} 
                    & Consistency & Neyman-Pearson & 93.9\%	& 90.9\%	& 86.4\%	& 80.8\%	& 73.0\%	& 61.1\%	& 49.1\%	& 35.6\%	& 21.7\%	& 10.4\%	& 4.1\%	& 1.9\% \\
                    & \citep{jeong2020consistency} & \textbf{DSRS} & 93.9\%	& 91.1\%	& 86.9\%	& 81.7\%	& 75.2\%	& 65.6\%	& 55.8\%	& 41.9\%	& 31.4\%	& 17.8\%	& 8.6\%	& 2.8\% \\
                    \cline{2-15} 
                    & SmoothMix & Neyman-Pearson & 92.0\%	& 88.9\%	& 84.4\%	& 78.6\%	& 69.8\%	& 60.7\%	& 49.9\%	& 40.2\%	& 31.5\%	& 22.2\%	& 12.2\%	& 4.9\% \\
                    & \citep{jeong2021smoothmix} & \textbf{DSRS} & 92.2\%	& 89.0\%	& 84.8\%	& 79.7\%	& 72.0\%	& 63.9\%	& 54.4\%	& 46.2\%	& 37.6\%	& 29.2\%	& 18.5\%	& 7.2\% \\
                    \bottomrule
                \end{tabular}
                }
                \label{tab:mnist-separate-table}
            \end{table*}

            \begin{table*}[htbp]
                \centering
                \caption{\small CIFAR-10: Certified robust accuracy for models smoothed with different variance $\sigma$ certified with different certification approaches.}
                \resizebox{\linewidth}{!}{
                \begin{tabular}{c|c|c|cccccccccccc}
                    \toprule
                    \multirow{2}{*}{Variance} & Training & Certification & \multicolumn{12}{c}{Certified Accuracy under Radius $r$} \\
                    & Method & Approach & 0.25 & 0.50 & 0.75 & 1.00 & 1.25 & 1.50 & 1.75 & 2.00 & 2.25 & 2.50 & 2.75 & 3.00 \\
                    \hline\hline
                    \multirow{6}{*}{0.25} & Gaussian Aug. & Neyman-Pearson & 56.1\%	& 35.7\%	& 13.4\%\\
                    & \citep{cohen2019certified} & \textbf{DSRS} & 57.4\%	& 39.4\%	& 17.3\%	 \\
                    \cline{2-15} 
                    & Consistency & Neyman-Pearson & 61.8\%	& 50.9\%	& 34.7\% \\
                    & \citep{jeong2020consistency} & \textbf{DSRS} & 62.5\%	& 52.5\%	& 37.8\% \\
                    \cline{2-15} 
                    & SmoothMix & Neyman-Pearson & 63.9\%	& 53.3\%	& 38.4\%\\
                    & \citep{jeong2021smoothmix} & \textbf{DSRS} & 64.7\%	& 55.5\%	& 41.1\%\\
                    \hline\hline
                    \multirow{6}{*}{0.50} & Gaussian Aug. & Neyman-Pearson & 53.7\%	& 41.3\%	& 27.7\%	& 17.1\%	& 9.1\%	& 2.8\% \\
                    & \citep{cohen2019certified} & \textbf{DSRS} & 54.1\%	& 42.7\%	& 30.6\%	& 20.3\%	& 12.6\%	& 4.0\% \\
                    \cline{2-15} 
                    & Consistency & Neyman-Pearson & 49.2\%	& 43.9\%	& 38.0\%	& 32.3\%	& 23.8\%	& 18.1\% \\
                    & \citep{jeong2020consistency} & \textbf{DSRS} & 49.6\%	& 44.1\%	& 38.7\%	& 35.2\%	& 28.1\%	& 19.7\% \\
                    \cline{2-15} 
                    & SmoothMix & Neyman-Pearson & 53.2\%	& 47.6\%	& 40.2\%	& 34.2\%	& 26.7\%	& 19.6\% \\
                    & \citep{jeong2021smoothmix} & \textbf{DSRS} & 53.3\%	& 48.5\%	& 42.1\%	& 35.9\%	& 29.4\%	& 21.7\% \\
                    \hline\hline
                    \multirow{6}{*}{1.00} & Gaussian Aug. & Neyman-Pearson & 40.2\%	& 32.6\%	& 24.7\%	& 18.9\%	& 14.9\%	& 10.2\%	& 7.5\%	& 4.1\%	& 2.0\%	& 0.7\%	& 0.1\%	& 0.1\%	 \\
                    & \citep{cohen2019certified} & \textbf{DSRS} & 40.3\%	& 33.1\%	& 25.9\%	& 20.6\%	& 16.1\%	& 12.5\%	& 8.4\%	& 6.4\%	& 3.5\%	& 1.8\%	& 0.7\%	& 0.1\%	\\
                    \cline{2-15} 
                    & Consistency & Neyman-Pearson & 37.2\%	& 32.6\%	& 29.6\%	& 25.9\%	& 22.5\%	& 19.0\%	& 16.4\%	& 13.8\%	& 11.2\%	& 9.0\%	& 7.1\%	& 5.1\% \\
                    & \citep{jeong2020consistency} & \textbf{DSRS} & 37.1\%	& 32.5\%	& 29.8\%	& 27.1\%	& 23.5\%	& 20.9\%	& 17.6\%	& 15.3\%	& 13.1\%	& 10.9\%	& 8.9\%	& 6.5\% \\
                    \cline{2-15} 
                    & SmoothMix & Neyman-Pearson & 43.2\%	& 39.5\%	& 33.9\%	& 29.1\%	& 24.0\%	& 20.4\%	& 17.0\%	& 13.9\%	& 10.3\%	& 7.8\%	& 4.9\%	& 2.3\% \\
                    & \citep{jeong2021smoothmix} & \textbf{DSRS} & 43.2\%	& 39.7\%	& 34.9\%	& 30.0\%	& 25.8\%	& 22.1\%	& 18.7\%	& 16.1\%	& 13.2\%	& 10.2\%	& 7.1\%	& 3.9\% \\
                    \bottomrule
                \end{tabular}
                }
                \label{tab:cifar-separate-table}
            \end{table*}

            \begin{table*}[htbp]
                \centering
                \caption{\small ImageNet: Certified robust accuracy for models smoothed with different variance $\sigma$ certified with different certification approaches.}
                \resizebox{\linewidth}{!}{
                \begin{tabular}{c|c|c|cccccccccccc}
                    \toprule
                    \multirow{2}{*}{Variance} & Training & Certification & \multicolumn{12}{c}{Certified Accuracy under Radius $r$} \\
                    & Method & Approach & 0.25 & 0.50 & 0.75 & 1.00 & 1.25 & 1.50 & 1.75 & 2.00 & 2.25 & 2.50 & 2.75 & 3.00 \\
                    \hline\hline
                    \multirow{6}{*}{0.25} & Gaussian Aug. & Neyman-Pearson & 57.1\%	& 41.6\%	& 17.4\% \\
                    & \citep{cohen2019certified} & \textbf{DSRS} & 58.4\%	& 47.9\%	& 24.4\% \\
                    \cline{2-15} 
                    & Consistency & Neyman-Pearson & 59.8\%	& 49.8\%	& 36.9\% \\
                    & \citep{jeong2020consistency} & \textbf{DSRS} & 60.4\%	& 52.4\%	& 40.4\% \\
                    \cline{2-15} 
                    & SmoothMix & Neyman-Pearson & 46.7\%	& 38.2\%	& 28.2\% \\
                    & \citep{jeong2021smoothmix} & \textbf{DSRS} & 47.4\%	& 40.0\%	& 29.8\% \\
                    \hline\hline
                    \multirow{6}{*}{0.50} & Gaussian Aug. & Neyman-Pearson & 53.3\%	& 47.0\%	& 39.3\%	& 33.2\%	& 24.5\%	& 17.0\% \\
                    & \citep{cohen2019certified} & \textbf{DSRS} & 54.1\%	& 48.4\%	& 41.4\%	& 35.3\%	& 28.8\%	& 19.1\% \\
                    \cline{2-15} 
                    & Consistency & Neyman-Pearson & 53.6\%	& 48.3\%	& 43.3\%	& 36.8\%	& 31.4\%	& 24.5\% \\
                    & \citep{jeong2020consistency} & \textbf{DSRS} & 53.7\%	& 49.9\%	& 44.7\%	& 39.3\%	& 34.8\%	& 27.4\% \\
                    \cline{2-15} 
                    & SmoothMix & Neyman-Pearson & 38.7\%	& 33.5\%	& 28.8\%	& 24.6\%	& 18.1\%	& 13.5\% \\
                    & \citep{jeong2021smoothmix} & \textbf{DSRS} & 39.1\%	& 34.9\%	& 30.3\%	& 26.8\%	& 21.6\%	& 15.6\% \\
                    \hline\hline
                    \multirow{6}{*}{1.00} & Gaussian Aug. & Neyman-Pearson & 42.5\%	& 37.2\%	& 33.0\%	& 29.2\%	& 24.8\%	& 21.4\%	& 17.6\%	& 13.7\%	& 10.2\%	& 7.8\%	& 5.7\%	& 3.6\% \\
                    & \citep{cohen2019certified} & \textbf{DSRS} & 42.5\%	& 38.1\%	& 34.4\%	& 30.2\%	& 27.0\%	& 23.3\%	& 21.3\%	& 18.7\%	& 14.2\%	& 11.0\%	& 9.0\%	& 5.7\% \\
                    \cline{2-15} 
                    & Consistency & Neyman-Pearson & 40.0\%	& 38.3\%	& 34.2\%	& 31.8\%	& 28.7\%	& 25.6\%	& 22.1\%	& 19.1\%	& 16.1\%	& 14.0\%	& 10.6\%	& 8.5\%\\
                    & \citep{jeong2020consistency} & \textbf{DSRS} & 40.2\%	& 38.5\%	& 35.4\%	& 32.6\%	& 30.7\%	& 28.1\%	& 25.4\%	& 22.6\%	& 19.6\%	& 17.4\%	& 14.1\%	& 10.4\%\\
                    \cline{2-15} 
                    & SmoothMix & Neyman-Pearson & 29.8\%	& 25.6\%	& 21.8\%	& 19.2\%	& 17.0\%	& 14.2\%	& 11.8\%	& 10.1\%	& 8.9\%	& 7.2\%	& 6.0\%	& 4.6\%\\
                    & \citep{jeong2021smoothmix} & \textbf{DSRS} & 29.7\%	& 26.2\%	& 23.0\%	& 20.6\%	& 18.0\%	& 15.7\%	& 14.0\%	& 12.1\%	& 9.9\%	& 8.4\%	& 7.2\%	& 5.3\%\\
                    \bottomrule
                \end{tabular}
                }
                \label{tab:imagenet-separate-table}
            \end{table*}
            
            Due to the page limit, in the main text~(\Cref{sec:exp}, \Cref{tab:ell-2}), we aggregate the certified robust accuracy across models with different smoothing variance $\sigma\in\{0.25, 0.50, 1.00\}$.
            To show the full landscape, we present the certified robust accuracy for each model trained with each variance.
            The evaluation protocol is the same as the one in the main text, and the tables for MNIST, CIFAR-10, and ImageNet models are \Cref{tab:mnist-separate-table}, \Cref{tab:cifar-separate-table}, and \Cref{tab:imagenet-separate-table} respectively.
            We observe that \shortApproach outperforms standard Neyman-Pearson certification for a wide range of radii.
        
        \subsubsection{Curves}
        
            Following the convention~\citep{cohen2019certified,salman2019provably}, we plot the certified robust accuracy - radius curve in \Cref{fig:curves}.
            
            The curves correspond to the certified robust accuracy data in \Cref{tab:ell-2}~(in \Cref{sec:exp}), i.e., the certified robust accuracy under each radius $r$ is the maximum certified robust accuracy among models trained with variance $\sigma\in \{0.25,0.50,1.00\}$.
            We observe that among all medium to large radii~(including those not shown in \Cref{tab:ell-2}), \shortApproach provides higher certified robust accuracy than Neyman-Pearson certification.
            The margin of \shortApproach is relatively small on CIFAR-10 but is apparent on MNIST and ImageNet.
            Especially, the margins on ImageNet reflect that \shortApproach is particularly effective on large datasets.

            \begin{figure*}[!t]
                \centering
                \subfigure[]{\includegraphics[width=0.33\textwidth]{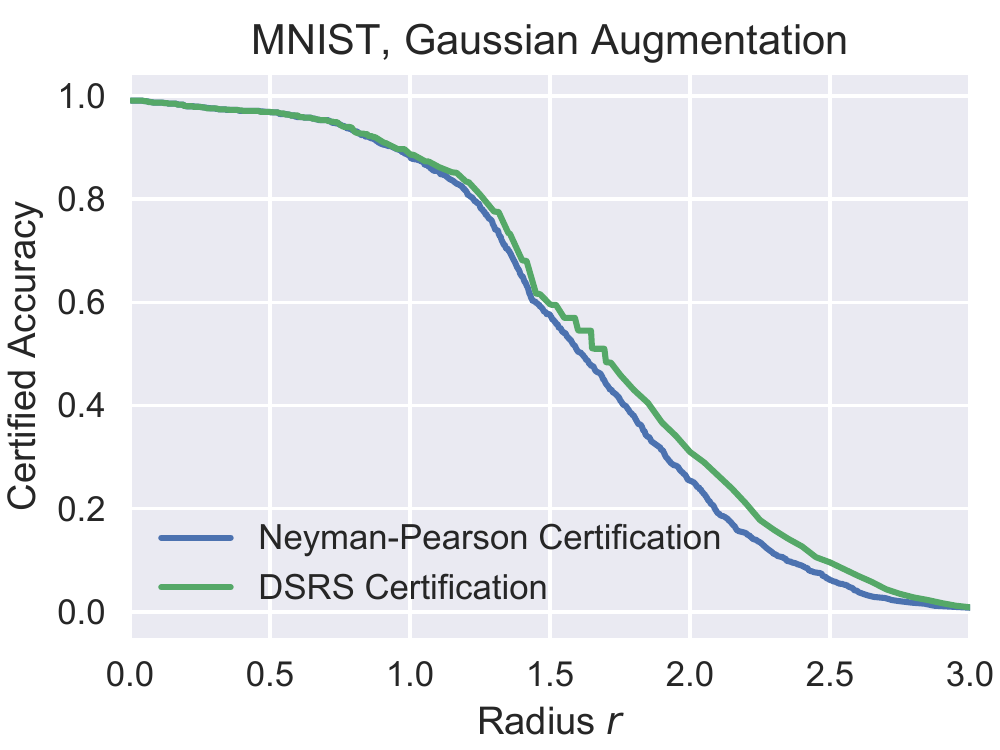}}
                \subfigure[]{\includegraphics[width=0.33\textwidth]{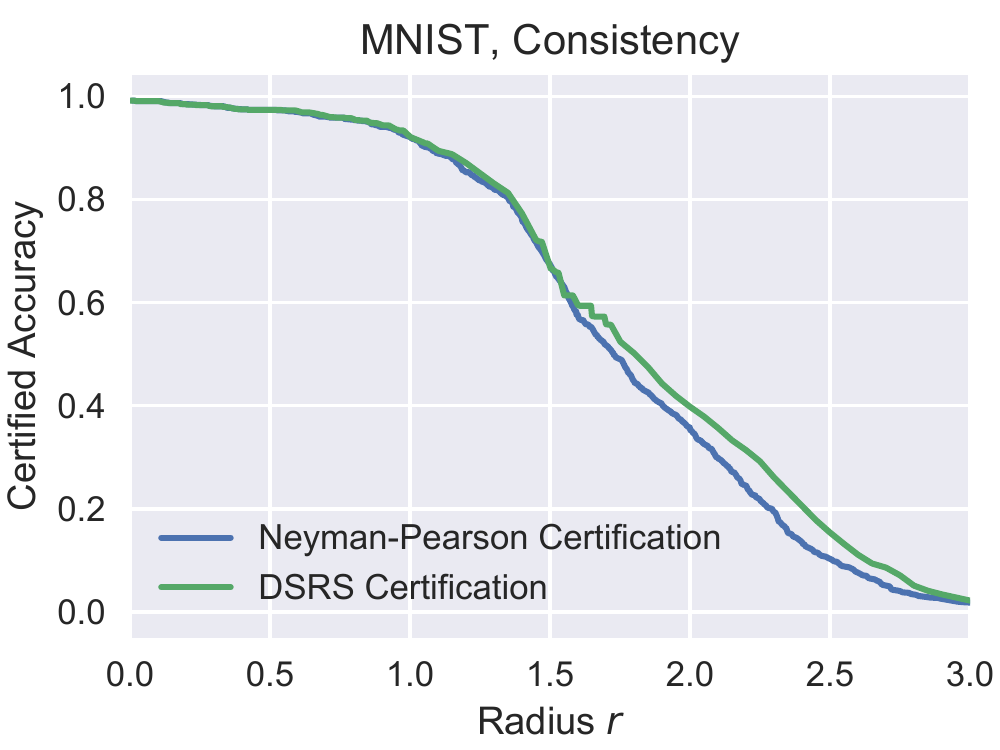}}
                \subfigure[]{\includegraphics[width=0.33\textwidth]{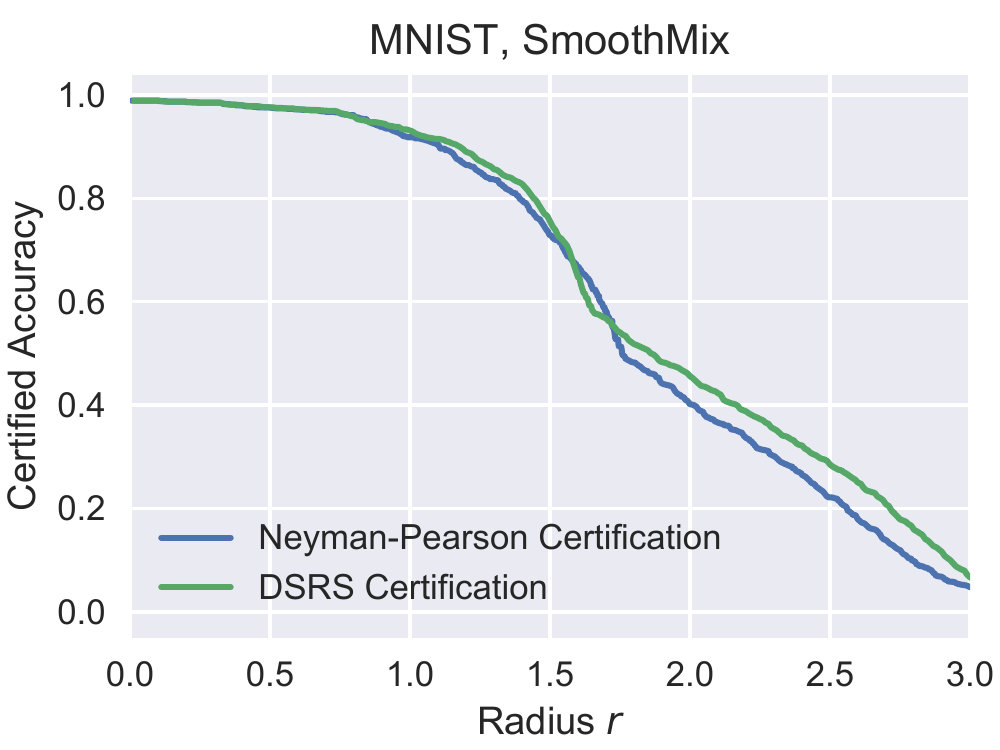}}
                \\
                
                
                \subfigure[]{\includegraphics[width=0.33\textwidth]{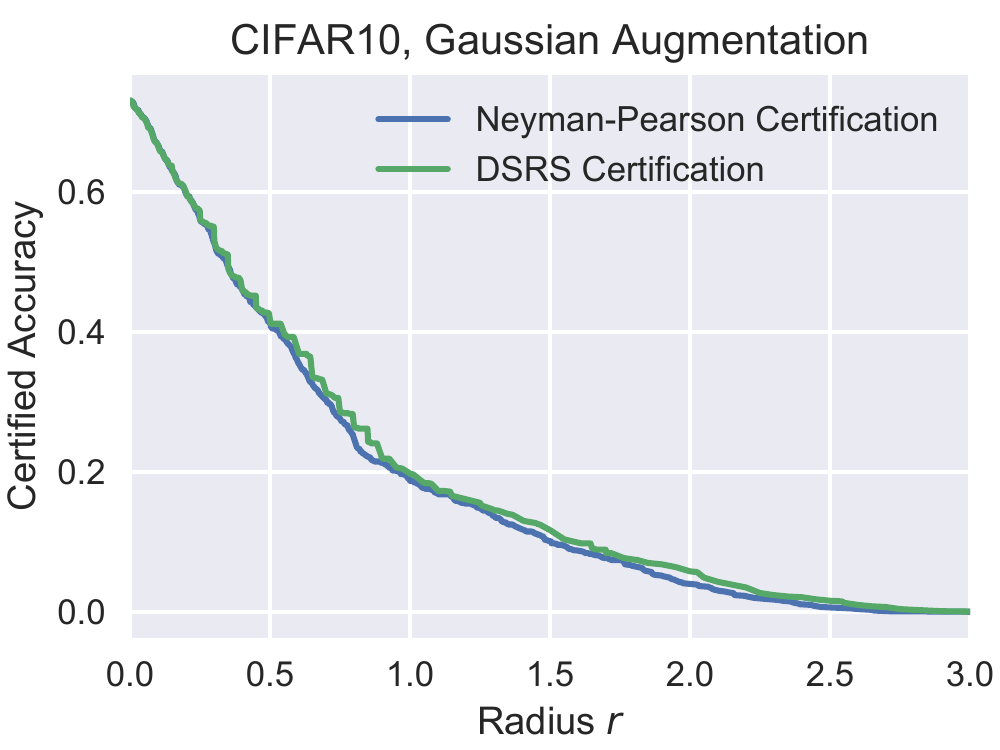}}
                \subfigure[]{\includegraphics[width=0.33\textwidth]{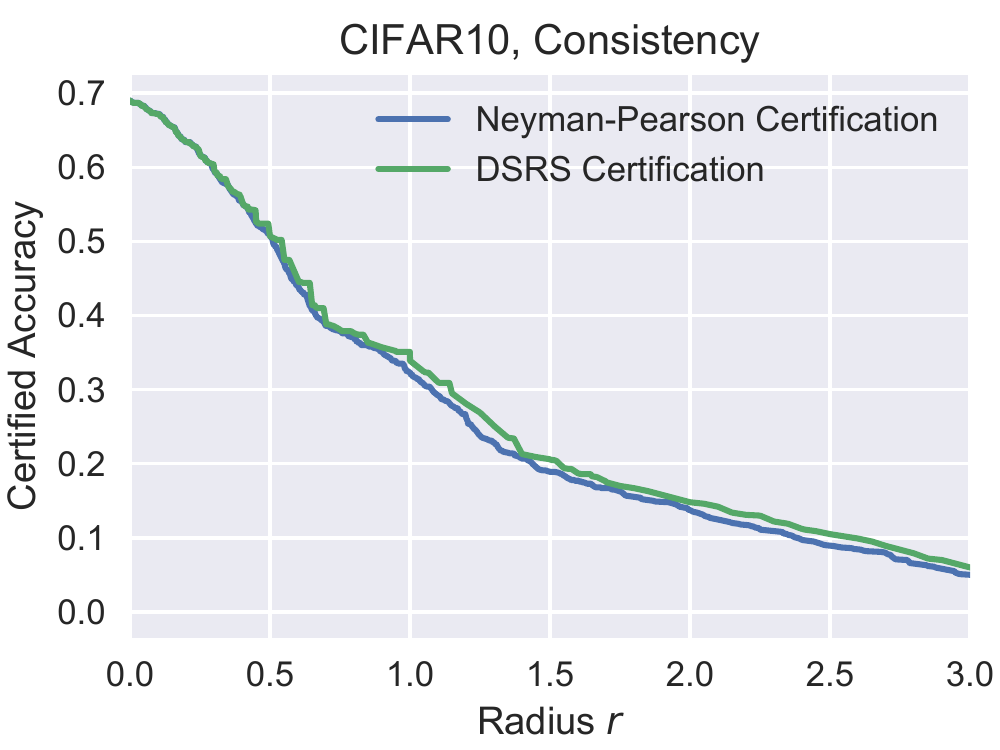}}
                \subfigure[]{\includegraphics[width=0.33\textwidth]{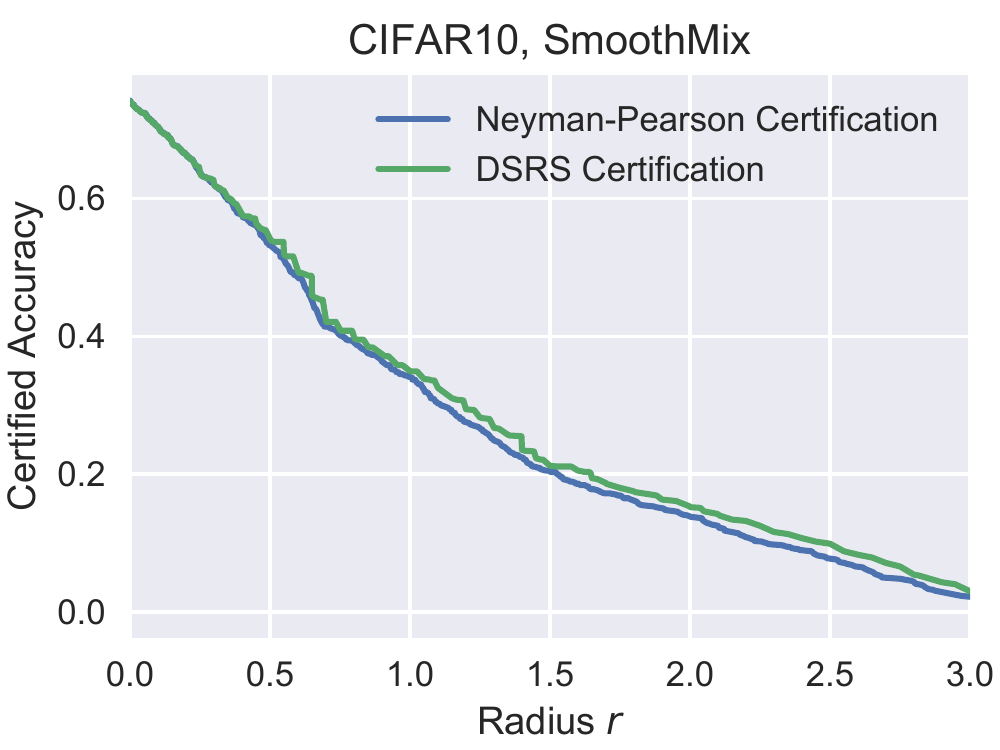}}
                \\
                
                \subfigure[]{\includegraphics[width=0.33\textwidth]{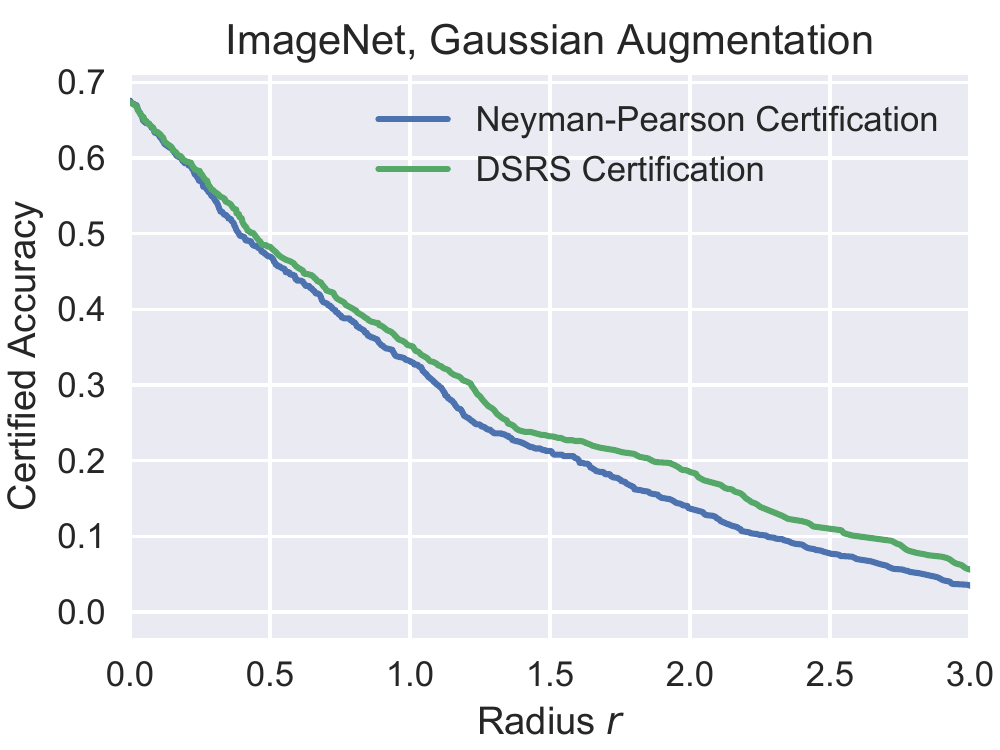}}
                \subfigure[]{\includegraphics[width=0.33\textwidth]{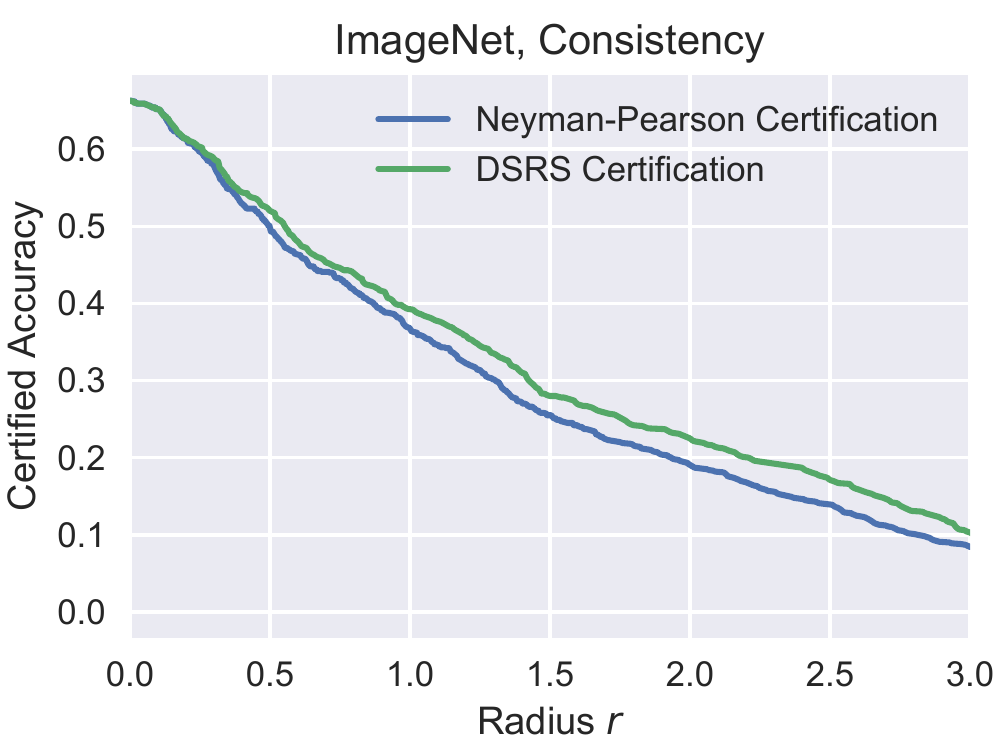}}
                \subfigure[]{\includegraphics[width=0.33\textwidth]{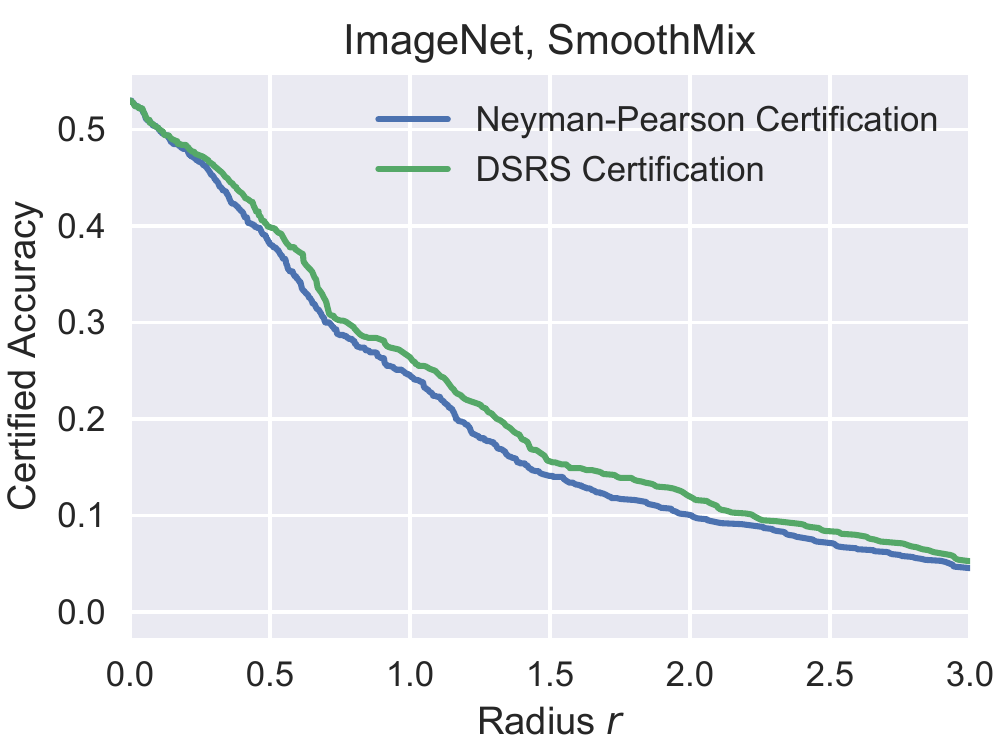}}
                \\
                
                \caption{Certified robust accuracy - radius curve corresponding to \Cref{tab:ell-2}.
                }
                \label{fig:curves}
            \end{figure*}
        
        \subsubsection{ACR Results}
            \label{adxsubsec:acr}
        
            In the literature, another common metric of certified robustness is ACR~(average certified radius)~\cite{zhai2019macer,jeong2020consistency,jeong2021smoothmix}.
            In \Cref{tab:acr}, we report the ACR comparison between Neyman-Pearson-based certification and our \shortApproach certification. 
            Across the three smoothing variance choices $\sigma\in\{0.25,0.50,1.00\}$, we find $\sigma=1.00$ yields the highest ACR, so we only report the ACR for models smoothed with $\sigma=1.00$.
            As we can see, in all cases, \shortApproach significantly improves over Neyman-Pearson-based certification in terms of ACR.

            \begin{table}[!t]
                \caption{Average certified radius~(ACR) statistics. The smoothing variance hyperparameter $\sigma=1.00$. The evaluation protocol is the same as that in the main text.}
                \label{tab:acr}
                \centering
                \resizebox{\linewidth}{!}{
                \begin{tabular}{cc|ccc}
                    \toprule
                    Training Method & Certification & MNIST & CIFAR-10 & ImageNet \\
                    \hline
                     \multirow{3}{*}{\shortstack{Gaussian\\ Augmentation}} & Neyman-Pearson & 1.550 & 0.447 & 0.677 \\
                     & \shortApproach & 1.629 & 0.469 & 0.750 \\
                     & Relative Improvement & \bf +5.10\% & \bf +4.92\% & \bf +10.78\% \\
                    \hline
                     \multirow{3}{*}{Consistency} & Neyman-Pearson & 1.645 & 0.636 & 0.796 \\
                     & \shortApproach & 1.730 & 0.659 & 0.862 \\
                     & Relative Improvement & \bf +5.17\% & \bf +3.62\% & \bf +8.29\% \\
                     \hline
                     \multirow{3}{*}{SmoothMix} & Neyman-Pearson & 1.716 & 0.676 & 0.490 \\
                     & \shortApproach & 1.806 & 0.712 & 0.525 \\
                     & Relative Improvement & \bf +5.24\% & \bf +5.33\% & \bf +7.14\% \\
                    \bottomrule
                \end{tabular}
                }
            \end{table}
    
    \subsection{Using Distribution with Different Variance as \texorpdfstring{$\gQ$}{Q}}
    
        \label{newadx:sub:diff-var-dsrs}
        
        
        We take the models trained with Gaussian augmentation~\citep{cohen2019certified} and variance $\sigma=1.00$ as examples.
        We use ``DSRS-trunc'' to represent \shortApproach using truncated generalized Gaussian as $\gQ$, and ``DSRS-var'' to represent \shortApproach using generalized Gaussian with different variance as $\gQ$, and compare their robustness certification~(i.e., certified robust accuracy) in \Cref{tab:variant-compare}.
        From the table, we find that on MNIST and CIFAR-10, DSRS-var is better than DSRS-trunc, whereas on ImageNet, DSRS-trunc is slightly better than DSRS-var.
        Both DSRS-trunc and DSRS-var are significantly better than Neyman-Pearson-based certification.
        
        To investigate the reason, we follow the protocols for studying the concentration property in \Cref{newadx:sub:study-concentration} to plot the landscape of models on MNIST, CIFAR-10, and ImageNet, as shown in \Cref{fig:landscape-for-diff-var}.
        From the figure, we find that the curves on ImageNet are generally steeper, which corresponds to that the concentration property is better satisfied on ImageNet.
        Therefore, we conjecture that when the concentration property~(see \Cref{def:concentration}) is better satisfied, \shortApproach with truncated Gaussian as $\gQ$ is better than Gaussian with different variance as $\gQ$.
        
        \begin{figure*}
            \centering
            \subfigure[MNIST model.]{
                \includegraphics[width=0.31\textwidth]{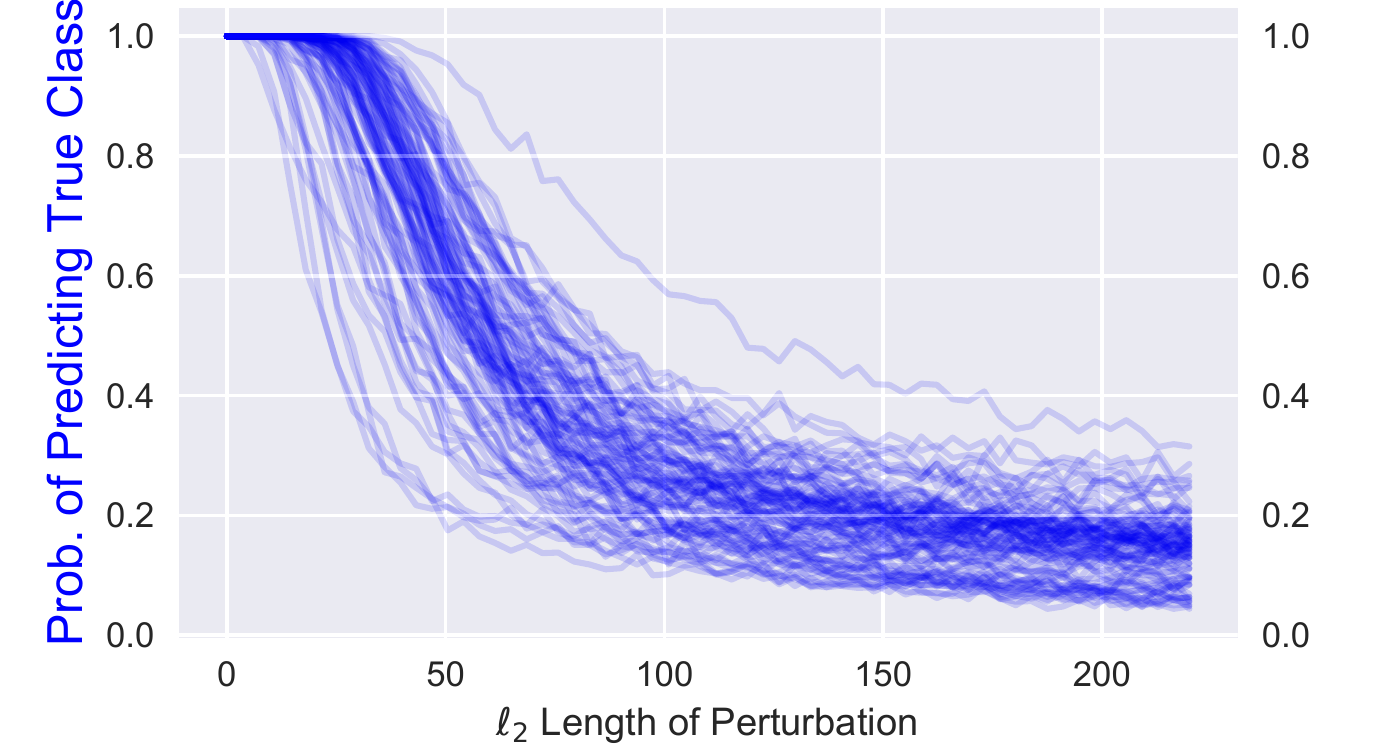}
            }
            \subfigure[CIFAR-10 model.]{
                \includegraphics[width=0.31\textwidth]{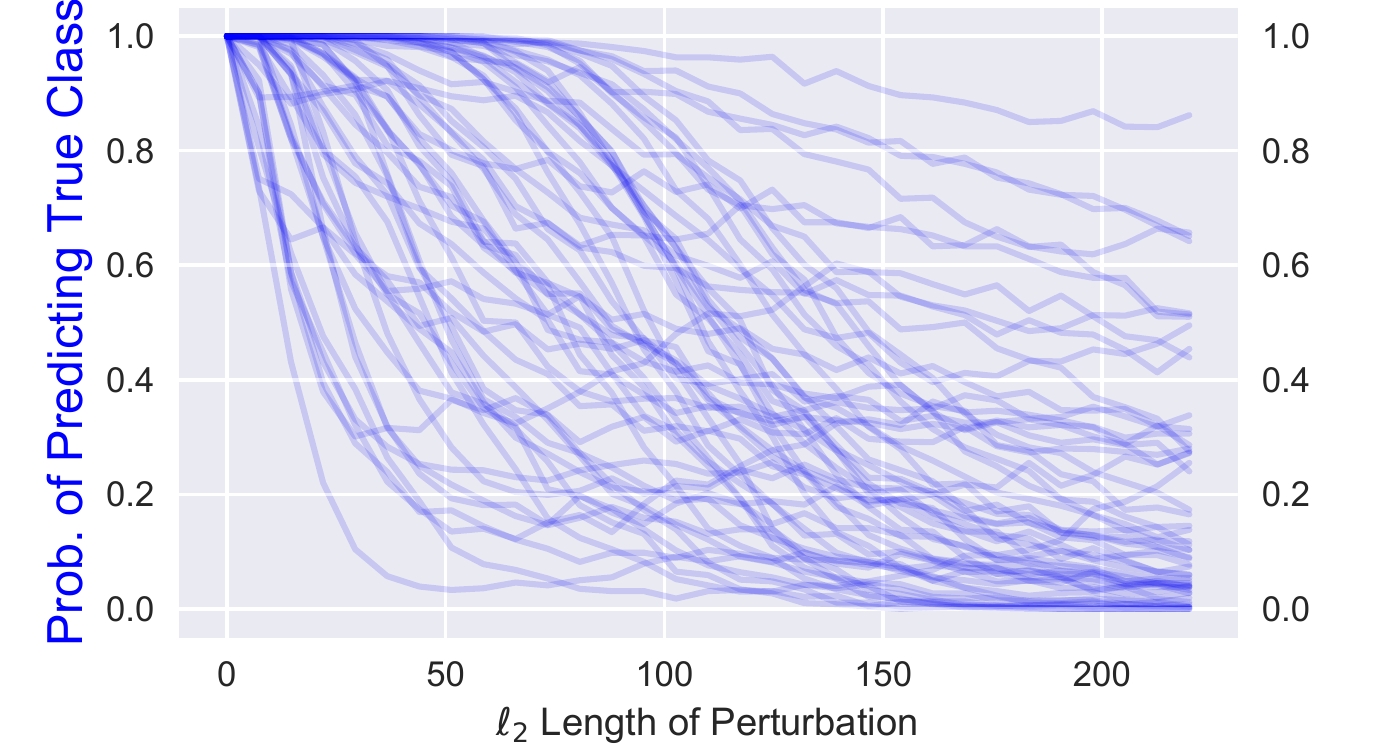}
            }
            \subfigure[ImageNet model.]{
                \includegraphics[width=0.31\textwidth]{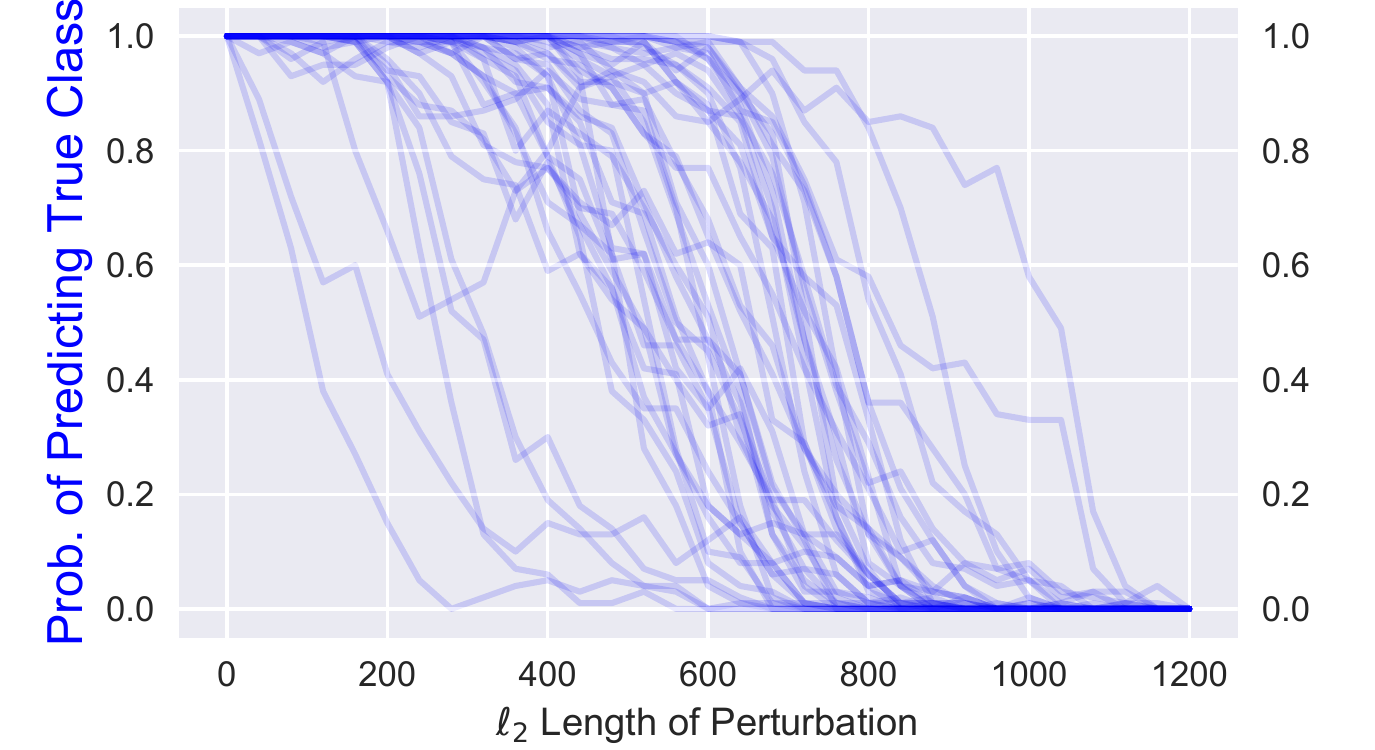}
            }
            \caption{Probability of true-prediction w.r.t. $\ell_2$ length of perturbations for base classifiers from Gaussian augmentation training studied in \Cref{newadx:sub:diff-var-dsrs}. Figures are plotted following the same protocol as in \Cref{newadx:sub:study-concentration}.}
            \label{fig:landscape-for-diff-var}
        \end{figure*}
        
        \begin{table*}[t]
            \centering
            \caption{Comparison of \shortApproach certified robust accuracy with different types of additional smoothing distribution $\gQ$ and Neyman-Pearson-based certification. Detail experiment settings are in \Cref{newadx:sub:diff-var-dsrs}.}
            \resizebox{\linewidth}{!}{
            \begin{tabular}{c|c|cccccccccccc}
                \toprule
                \multirow{2}{*}{Dataset} & \multirow{2}{*}{Certification} & \multicolumn{12}{c}{Certified Accuracy under Radius $r$} \\
                & & 0.25 & 0.50 & 0.75 & 1.00 & 1.25 & 1.50 & 1.75 & 2.00 & 2.25 & 2.50 & 2.75 & 3.00 \\
                \hline\hline
                \multirow{3}{*}{MNIST} & Neyman-Pearson & \bf 95.2\%	& \bf 91.9\%	& 87.7\%	& 80.6\%	& 71.2\%	& 57.6\%	& 41.0\%	& 25.5\%	& 13.6\%	& 6.2\%	& 2.1\%	& 0.9\%	 \\
                & DSRS-trunc & 95.1\%	& 91.8\%	& 87.9\%	& 81.3\%	& 72.9\%	& 60.2\%	& 46.1\%	& 30.9\%	& 17.4\%	& 9.4\%	& 3.6\%	& \bf 1.2\% \\
                & DSRS-var & 95.1\%	& 91.8\%	& \bf 88.2\%	& \bf 81.5\%	& \bf 73.6\%	& \bf 61.6\%	& 4\bf 8.4\%	& \bf 34.1\%	& \bf 21.0\%	& \bf 10.6\%	& \bf 4.4\%	& \bf 1.2\%	 \\
                \hline\hline
                \multirow{3}{*}{CIFAR-10} & Neyman-Pearson & 40.2\%	& 32.6\%	& 24.7\%	& 18.9\%	& 14.9\%	& 10.2\%	& 7.5\%	& 4.1\%	& 2.0\%	& 0.7\%	& 0.1\%	& \bf 0.1\% \\
                & DSRS-trunc & \bf 40.3\%	& 32.9\%	& 25.5\%	& 20.1\%	& 15.7\%	& 11.5\%	& 8.0\%	& 5.5\%	& 2.7\%	& 1.5\%	& 0.6\%	& \bf 0.1\%	 \\
                & DSRS-var & \bf 40.3\%	& \bf 33.1\%	& \bf 25.9\%	& \bf 20.6\%	& \bf 16.1\%	& \bf 12.5\%	& \bf 8.4\%	& \bf 6.4\%	& \bf 3.5\%	& \bf 1.8\%	& \bf 0.7\%	& \bf 0.1\%	 \\
                \hline\hline
                \multirow{3}{*}{ImageNet} & Neyman-Pearson & 42.5\%	& 37.2\%	& 33.0\%	& 29.2\%	& 24.8\%	& 21.4\%	& 17.6\%	& 13.7\%	& 10.2\%	& 7.8\%	& 5.7\%	& 3.6\% \\
                & DSRS-trunc & 42.5\%	& 38.1\%	& 34.4\%	& 30.2\%	& \bf 27.0\%	& \bf 23.3\%	& \bf 21.3\%	& \bf 18.7\%	& 14.2\%	& \bf 11.0\%	& \bf 9.0\%	& \bf 5.7\% \\
                & DSRS-var & \bf 42.9\%	& \bf 38.5\%	& \bf 35.0\%	& \bf 31.0\%	& 26.5\%	& 23.2\%	& 21.0\%	& 18.3\% & \bf 14.6\%	& 10.5\%	& 8.2\%	& 5.3\% \\
                \bottomrule
            \end{tabular}
            }
            \label{tab:variant-compare}
        \end{table*}
    
    \subsection{Comparison with Higher-Order Randomized Smoothing}

        \label{newadx:sub:compare-higher-order}
        
        It is difficult to have a direct comparison with higher-order randomized smoothing~\citep{mohapatra2020higherorder,levine2020tight}, which is the only work to the best of our knowledge that uses additional information beyond $P_A$ to tighten the robustness certification in randomized smoothing.
        This difficulty comes from the following reasons:
        (1)~Higher-order randomized smoothing only supports standard Gaussian smoothing, while \shortApproach is particularly useful with generalized Gaussian smoothing.
        (2)~All experiment evaluations in higher-order randomized smoothing are conducted with large sampling numbers~($N=2\times 10^5$ on CIFAR-10 and $N=1.25\times 10^6$ on ImageNet) that makes the evaluation costly, while \shortApproach is designed for practical sampling numbers~($N=10^5$).
        (3)~The code is not open-source yet~\citep{mohapatra2020higherorder}.
        Nonetheless, we can directly compare with the curves provided by \citet{mohapatra2020higherorder}.
        
        We capture the certified robust accuracy vs. $\ell_2$ radius $r$ curves from \citep{mohapatra2020higherorder} in \Cref{fig:higher-order}.
        As we can see, compared with Neyman-Pearson-based certification, the improvements from higher-order randomized smoothing are small especially on the large ImageNet dataset despite the excessive sampling numbers~($1.25\times 10^6$).
        In contrast, as shown in \Cref{fig:curves}, within only $10^5$ sampling number, \shortApproach is visibly tighter than Neyman-Pearson-based certification.
        In fact, to the best of our knowledge, \shortApproach is the first model-agnostic approach that is visibly tighter than Neyman-Pearson-based certification under $\ell_2$ radius.
        
        \begin{figure*}[t]
            \centering
            \subfigure[{\citep[Figure 2(b)]{mohapatra2020higherorder}: Higher-order randomized smoothing on CIFAR-10.}]{
                \includegraphics[width=0.25\linewidth]{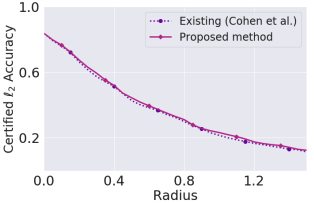}
            }
            \hspace{0.01\linewidth}
            \subfigure[{\citep[Figure 4]{mohapatra2020higherorder}: Higher-order randomized smoothing on ImageNet for models trained with different smoothing variances.}]{
                \includegraphics[width=0.6\linewidth]{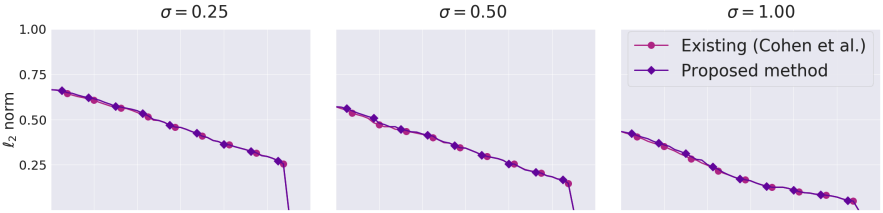}
            }
            
            \caption{Higher-order randomized smoothing certification~(solid curves) compared with standard Neyman-Pearson-based certification~(dotted curves).}
            \label{fig:higher-order}
        \end{figure*}

\section{Extended Related Work Discussion}

    \label{newadx:related}
    
    In this appendix, we discuss related work from two branches: training approaches for randomized smoothing, and data-dependent randomized smoothing.
    Both branches aim to improve the certified robustness of randomized smoothing.
    For tighter certification approaches leveraging additional information, a detailed discussion can be found in \Cref{adxsubsec:more-discussion-related-work}.
    For more related work we refer the interested readers to recent surveys and books~\citep{liu2019algorithms,li2020sok,albarghouthi2021introduction}.
    
    To improve the certified robustness of randomized smoothing, efforts have been made on both the training~\citep{li2019certified,zhai2019macer,salman2019provably,jeong2020consistency} and the certification sides~\citep{lecuyer2019certified,cohen2019certified,li2019certified,yang2020randomized,zhang2020black,yang2021certified}.
    On the training side, data augmentation~\citep{cohen2019certified}, regularization~\citep{li2019certified,zhai2019macer,jeong2020consistency}, and adversarial training~\citep{salman2019provably}  help to train stable base models under noise corruptions so that higher certified robustness for a smoothed classifier can be achieved.
    In this work, we focus on certification, and these training approaches can be used in conjunction with ours to provide higher certified robustness.
    
    Another potential way to improve the certified robustness of randomized smoothing is to dynamically change the smoothing distribution $\gP$ based on the input toward maximizing the certified radius~\citep{alfarra2020data,eiras2021ancer,schuchardt2022localized}.
    In this scenario, the certification needs to take into account that the attacker may adaptively mislead the pipeline to choose a ``bad'' smoothing distribution.
    Therefore, additional costs such as memorizing training data need to be paid to defend such adaptive robustness vulnerabilities.
    A recent work~\citep{sukenik2021intriguing} shows that input-dependent randomized smoothing may not bring substantial improvements in certified robustness.
    In \shortApproach, we select the additional smoothing distribution $\gQ$ dynamically based on the input, which may appear like input-dependent randomized smoothing.
    However, we select such distribution $\gQ$ only for certification purposes, and the original distribution $\gP$ that is used to construct the smoothed classifier remains static.
    Thus, we do not need to consider the existence of adaptive attackers.
    Indeed, for any smoothing distribution $\gQ$, with \shortApproach, we generate valid robustness certification for the static smoothed classifier $\wF^\gP$.
    

\section{Discussions on Generalizing \shortApproach Framework}

    \label{newadx:general}

    In this appendix, we first introduce prior work that leverages additional information for certification in randomized smoothing, then generalize our \shortApproach as a more general framework to theoretically compare with the related work and highlight future directions.
    
    \subsection{Existing Work on Leveraging Additional Information for Certification}

        \label{adxsubsec:more-discussion-related-work}
        We discuss all known work that leverages more information to achieve tighter robustness certification for randomized smoothing prior to this paper to the best of our knowledge.
        
        \paragraph{Additional Black-Box Information.}
        Our \shortApproach leverages additional information to tighten the certification for randomized smoothing.
        We leverage the information from an additional smoothing distribution.
        This information can be obtained from the base classifier that we \emph{only} have query access on the predicted label.
        We call the information from this limited query access ``black-box'' information.
        The certification approaches that only require black-box information can be applied to any classification model regardless of the model structure.
        Thus, they are usually more general and more scalable.
        The classical Neyman-Pearson certification only requires black-box information.
        
        \emph{Besides our \shortApproach, the only other form of additional black-box information that is leveraged is the higher-order information~\citep{levine2020tight,mohapatra2020higherorder}.}
        Formally, our additional black-box information has the form
        \begin{equation}
            \Pr_{\vepsilon\sim\gQ} [F_0(\vx + \vepsilon) = y].
        \end{equation}
        In contrast, the higher-order information, especially second-order information used by \citet{levine2020tight, mohapatra2020higherorder} has the form
        \begin{equation}
            \|\nabla f_0^\gP(\vx_0)_{y_0}\|_p = \|\nabla \Pr_{\vepsilon\sim\gP} [F_0(\vx_0 + \vepsilon) = y_0]\|_p
        \end{equation}
        that is also shown to be estimable given the black-box query access.
        However, the higher-order information has several limitations:
        (1)~It is hard to leverage the information beyond the second order.
        Therefore, only second-order information is used in existing certification approaches yet.
        However, to achieve optimal tightness, one needs to leverage infinite orders of information, which brings an infinite number of constraints and is thus intractable.
        In contrast, we show that extra information from only one additional distribution suffices to derive a strongly tight certification.
        (2)~In practice, the second-order information shows marginal improvements in the widely used $\ell_2$ and $\ell_\infty$ certification settings on real-world datasets~\citep{levine2020tight,mohapatra2020higherorder} and even such improvements require a large number of samples~(usually in million order instead of ours $10^5$).
        
        \citet{dvijotham2020framework} also propose to use additional information to tighten the robustness certification for randomized smoothing~(``full-information'' setting).
        They formalize the tightest possible certification and compare it with Neyman-Pearson-based certifiation~(``information-limited'' setting), but in practice, they do not try to leverage information from distributions other than $\gP$.

        \paragraph{Constraining Model Structure.}
        If we discard the ``black-box information'' constraint, we can obtain tighter robustness certification than classical Neyman-Pearson.
        For example, knowing the model structure can benefit the certification.
        \citet{lee2019tight} show that when the base classifier is a decision tree, we can use dynamic programming to derive a strongly tight certification against $\ell_0$-bounded perturbations.
        \citet{awasthi2020adversarial} show that, if the base classifier first performs a known low-rank projection, then works on the low-rank projected space, for the corresponding smoothed classifier, we can have a tighter certification on both $\ell_2$ and $\ell_\infty$ settings.
        However, it is challenging to find a projection such that the model preserves satisfactory performance while the projection rank is low.
        Indeed, the approach is evaluated on DCT basis to show the improvement on $\ell_\infty$ certification, and there exists a gap between the actual achieved certified robustness and the state of the art.
        We do not compare with these approaches since they impose additional assumptions on the base classifiers so their applicable scenarios are limited, and currently, the state-of-the-art base classifier does not satisfy their imposed constraints under $\ell_2$ and $\ell_\infty$ certification settings.
        Recently, for $\ell_1$ certification, a deterministic and improved smoothing approach~(a type of non-additive smoothing mechanism) is proposed to handle the case where input images are constrained in space $\{0, 1/255, \cdots, 244/255, 1\}^d$~\cite{levine2021improved}.
        This could be viewed as constraining the attack space from another aspect and implies that certified robustness can be improved by better smoothing mechanisms, which is orthogonal to our work that focuses on certification for existing smoothing mechanisms.
        
        \paragraph{Confidence Smoothing.}
        A group of certification approaches assumes that the base classifier outputs normalized confidence on the given input, and the smoothed classifier predicts the class with the highest expectation of normalized confidence under noised input.
        This assumption can be viewed as a special type of  ``Constraining Model Structure''.
        Under this assumption, we can query and approximate the \emph{quantile} of the confidence under noised input: $F_0(\vx_0+\vepsilon)$ where $\vepsilon\sim\gP$.
        With this information, we can leverage the Neyman-Pearson lemma in a more delicate way to provide a tighter~(higher) lower bound of the expected confidence under perturbation, i.e., $\E_{\vepsilon\sim\gP} F_0(\vx+\vdelta+\vepsilon)$.
        
        These certification approaches provide tighter certification than the classical Neyman-Pearson for the smoothed classifier that predicts the class with the highest expected normalized confidence.
        They are also useful for regression tasks such as object detection in computer vision as shown in \citep{chiang2020detection}.
        However, for the classification task, for utilizable base classifiers~(i.e., benign accuracy $> 50\%$ under noise), if we simply set the predicted class to have $100\%$ confidence, we only increase the certified radius of the classifier and the certification for this ``one-hot'' base classifier only requires classical Neyman-Pearson.
        Thus, these certification approaches, e.g., \citep{kumar2020certifying}, may not achieve higher certified robustness on the classification task than Neyman-Pearson and therefore we do not compare with them.
    
    \subsection{General Framework}
    
        \label{adxsubsec:general-framework}
        
        Focusing on the certification with additional black-box information, we generalize the \shortApproach to allow more general additional information.
        
        \begin{definition}[General Additional Black-Box Information]
            For given base classifier $F_0$, suppose the true label at $\vx_0$ is $y_0$, for certifying robustness at $\vx_0$, 
            the general additional black-box information has the form
            \begin{equation}
                \left\{
                \begin{aligned}
                    & \int_{\sR^d} f_1(\vx) \1\left[F_0(\vx) = y_0\right] \dif \vx = c_1, \\
                    & \cdots \\
                    & \int_{\sR^d} f_i(\vx) \1\left[F_0(\vx) = y_0\right] \dif \vx = c_i, \\
                    & \cdots \\
                    & \int_{\sR^d} f_N(\vx) \1\left[F_0(\vx) = y_0\right] \dif \vx = c_N,
                \end{aligned}
                \right.
                \label{eq:def-general-additional-information}
            \end{equation}
            where $f_i$ and $c_i$ are pre-determined; $f_i$ is integrable in $\sR^d$ and $c_i \in \sR$ for $1\le i\le N$.
            \label{def:general-additional-information}
        \end{definition}
        
        \begin{remark}
            Obtaining the information in \Cref{eq:def-general-additional-information} requires only the black-box access to whether $F_0(\vx)$ equals to $y_0$.
            We define the general additional black-box information in this way because:
            (1)~The information from finite points is useless since the smoothed classifier has zero probability mass on finite points, so the useful information is based on integration;
            (2)~To provide a lower bound of $\wF_0^\gP(\vx_0 + \vdelta)_{y_0}$, we only need to care whether $F_0(\vx)$ equals to $y_0$ in region of interest.
        \end{remark}
        
        \paragraph{Examples.}
        (1)~Our \shortApproach, the additional information $\E_{\vepsilon\sim\gQ} [f(\vepsilon)] = Q_A$ instantiates \Cref{def:general-additional-information} by setting $N=1$, $f_1(\cdot) = q(\cdot - \vx_0)$ and $c_1 = Q_A$.
        (2)~In \citep{mohapatra2020higherorder,levine2020tight}, the second-order information $\nabla f_0^\gP(\vx_0)$ instantiates \Cref{def:general-additional-information} by setting $N=d$, $f_i(\vx) = -\nabla p(\vx - \vx_0)_i$, and $c_i = \left(\nabla f_0^\gP(\vx_0)\right)_i$ according to Theorem~1 in \cite{mohapatra2020higherorder}.
        We remark that due to the sampling difficulty, instead of using the whole vector $\nabla f_0^\gP(\vx_0)$ as the information, second-order smoothing~\citep{mohapatra2020higherorder,levine2020tight} uses its $p$-norm in practice.
        However, using the full information only gives tighter certification so we consider this a more ideal variant.
        
        Then, we can extend the constrained optimization problem $(\tC)$ in \Cref{subsec:cons-opt-formulation} to $(\tC^\ext)$ to accommodate the general information as such
        \begin{align}
                \underset{f}{\mathrm{minimize}} \quad & \E_{\vepsilon\sim\gP} [f(\vdelta+\vepsilon)] \\
                \mathrm{s.t.} \quad & \E_{\vepsilon\sim\gP} [f(\vepsilon)] = P_A, \nonumber \\
                & \int_{\sR^d} f_1(\vepsilon) f(\vepsilon) \dif \vepsilon = c_1, \nonumber \\
                & \cdots \nonumber \\
                & \int_{\sR^d} f_N(\vepsilon) f(\vepsilon) \dif \vepsilon = c_N, \nonumber \\
                & 0 \le f(\vepsilon) \le 1 \quad \forall \vepsilon\sim\sR^d. \nonumber
        \end{align}
        Similarly, by the strong duality~(\Cref{thm:strong-duality}), to solve the certification problem
        \begin{equation}
            \forall \vdelta \, \mathrm{s.t.} \, \|\vdelta\|_p \le r,\, \tC_\vdelta^\ext(P_A,\,c_1,\,\dots,\,c_N) > 0.5,
        \end{equation}
        we only need to solve the dual problem $(\tD^\ext)$:
        \begin{equation}
            \underset{\eta,\lambda_1,\dots,\lambda_N\in\sR}{\mathrm{maximize}} 
            \Pr_{\vepsilon\sim\gP} \left[p(\vepsilon) < \eta p(\vepsilon + \vdelta) + \sum_{i=1}^N \lambda_i f_i(\vepsilon+\vdelta)\right]
        \end{equation}
        \begin{align*}
            \mathrm{s.t.} & \Pr_{\vepsilon\sim\gP} \left[p(\vepsilon-\vdelta) < \eta p(\vepsilon) + \sum_{i=1}^N \lambda_i f_i(\vepsilon)\right] = P_A, \nonumber \\
            & \int_{\sR^d} \1\left[p(\vepsilon-\vdelta) < \eta p(\vepsilon) + \sum_{i=1}^N \lambda_i f_i(\vepsilon)\right] f_1(\vepsilon) \dif \vepsilon = c_1, \nonumber \\
            & \cdots \nonumber \\
            & \int_{\sR^d} \1\left[p(\vepsilon-\vdelta) < \eta p(\vepsilon) + \sum_{i=1}^N \lambda_i f_i(\vepsilon)\right] f_N(\vepsilon) \dif \vepsilon = c_N. \nonumber
        \end{align*}
        We remark that this generalization shares a similar spirit as one type of generalization of Neyman-Pearson Lemma~\citep{chernoff1952generalization,mohapatra2020higherorder}.
        Following the same motivation, \citet{dvijotham2020framework} try to generalize the certification by adding more constraints in their ``full-information setting''.
        However, it is unclear whether their constraints in $f$-divergences form have the same expressive power as ours in practice~(i.e., the practicality of theoretically tight Hockey-Stick divergence).
        A study of these different types of generalization  would be our future work.
        
        More importantly, we believe that the pipeline of \shortApproach can be adapted to solve this generalized dual problem.
        We hope that this generalization and the corresponding \shortApproach could enable the exploration and exploitation of more types of additional information for tightening the robustness certification of randomized smoothing.
        
    
    
    \paragraph{Implications.}
    The generalized \shortApproach framework allows us to explicitly compare different types of additional information.
    For example, comparing our additional distribution information and higher-order information, we find that 
    (1)~for additional distribution information, from \Cref{thm:suffices-binary-classification} and \Cref{cor:suffices-multiclass-classification}, the strong tightness can be acheived for $N=C-1$ where $C$ is the number of classes;
    (2)~for higher-order information, from \citep[Asymptotic-Optimality Remark]{mohapatra2020higherorder}, the strong tightness can be achieved when all orders of information are used, i.e., $N \to \infty$.
    This comparison suggests that our additional information from another smoothing distribution should be more efficient.
    
    Another implication is that, from \Cref{cor:suffices-multiclass-classification}, under multiclass setting, with proper choices of the $(C-1)$ additional smoothing distributions, if we base \shortApproach on solving dual problem $(\tD^\ext)$, the \shortApproach can achieve \emph{strong tightness} in multiclass setting.
    
    \subsection{Limitations and Future Directions}
        \label{adxsubsec:limitation-future-directions}
    
        Despite the promising theoretical and empirical results of \shortApproach, \shortApproach still has some limitations that open an avenue for future work.
        We list the following future directions: (1)~tighter certification from a more ideal additional smoothing distribution $\gQ$:
        there may exist better smoothing distribution $\gQ$ or better methods to optimize hyperparameters in $\gQ$ than what we have considered in this work in terms of certifying larger certified radius in practice;
        (2)~better training approach for \shortApproach:
        there may be a large space for exploring training approaches that favor \shortApproach certification since all existing training methods are designed for Neyman-Pearson-based certification.
        We believe that advances in this aspect can boost the robustness certification with \shortApproach to achieve state-of-the-art certified robustness.
        (3)~better additional information:
        more generally, besides the prediction probability from an additional smoothing distribution, there may exist other useful additional information for certification in randomized smoothing.
        We hope our generalization of the \shortApproach framework in this appendix can inspire future work in tighter and more efficient certification for randomized smoothing.
        
        
        



\end{document}